\documentclass{article}

\PassOptionsToPackage{semicolon,round}{natbib}



    \usepackage[final]{neurips_2023}


\usepackage[utf8]{inputenc} 
\usepackage[T1]{fontenc}    
\usepackage{hyperref}       
\usepackage{url}            
\usepackage{booktabs}       
\usepackage{amsfonts}       
\usepackage{nicefrac}       
\usepackage{microtype}      
\usepackage{xcolor}         
\usepackage{natbib}

\usepackage{enumitem}

\definecolor{darkblue}{rgb}{0.0,0.0,0.65}
\definecolor{darkred}{rgb}{0.65,0.0,0.0}
\definecolor{darkgreen}{rgb}{0.0,0.65,0.0}
\hypersetup{
	colorlinks = true,
	citecolor  = darkblue,
	linkcolor  = darkred,
	filecolor  = darkblue,
	urlcolor   = darkblue,
}

\usepackage{xcolor}
\definecolor{darkblue}{rgb}{0.0,0.0,0.65}
\definecolor{darkred}{rgb}{0.68,0.05,0.0}
\definecolor{darkgreen}{rgb}{0.0,0.29,0.29}
\definecolor{darkpurple}{rgb}{0.47,0.09,0.29}

\usepackage[utf8]{inputenc}
\usepackage[T1]{fontenc}
\usepackage{url}
\usepackage{marginnote}

\usepackage{psfrag,epsfig}
\usepackage{color}
\usepackage{afterpage}

\usepackage{color}
\usepackage{colortbl}
\usepackage[flushleft]{threeparttable}
\usepackage{booktabs,caption,subcaption}
\usepackage{multirow}
\usepackage{hyperref}
\hypersetup{
   colorlinks = true,
   citecolor  = darkblue,
   linkcolor  = darkred,
   filecolor  = darkblue,
   urlcolor   = darkblue,
 }
 
\usepackage{comment,algorithm,algorithmic,graphicx, relsize}
\usepackage{amssymb,amsfonts,amsmath,amsthm,amscd,enumitem,dsfont, mathrsfs,mathtools,microtype,nicefrac,pifont}
\usepackage{thmtools, thm-restate}
\usepackage{bm, bbm}

\newtheorem{lemma}{Lemma}[section]
\newtheorem{theorem}[lemma]{Theorem}

\newtheorem{proposition}[lemma]{Proposition}

\newtheorem{definition}{Definition}[section]
\newtheorem{assumption}[definition]{Assumption}

\def\ve{{\bm{e}}}

\def\vu{{\bm{u}}}
\def\vv{{\bm{v}}}
\def\vw{{\bm{w}}}
\def\vx{{\bm{x}}}
\def\vy{{\bm{y}}}
\def\vz{{\bm{z}}}

\def\mA{{\bm{A}}}
\def\mB{{\bm{B}}}

\def\mI{{\bm{I}}}

\def\mU{{\bm{U}}}

\def\mW{{\bm{W}}}
\def\mX{{\bm{X}}}

\def\sR{{\mathbb{R}}}

\newcommand\deq{\coloneqq}

\DeclarePairedDelimiter\abs{\lvert}{\rvert}
\DeclarePairedDelimiter\norm{\lVert}{\rVert}

\newcommand{\RomanNumeralCaps}[1]
    {\MakeUppercase{\romannumeral#1}}


\title{Trajectory Alignment: Understanding the Edge of Stability Phenomenon via Bifurcation Theory}

\author{
  Minhak Song \\
  KAIST ISysE/Math\\
  \texttt{minhaksong@kaist.ac.kr} \\
  \And
  Chulhee Yun \\
  KAIST AI \\
  \texttt{chulhee.yun@kaist.ac.kr} \\
}

\begin{document}

\allowdisplaybreaks

\maketitle

\begin{abstract}
  \citet{cohen2021gradient} empirically study the evolution of the largest eigenvalue of the loss Hessian, also known as sharpness, along the gradient descent (GD) trajectory and observe the \emph{Edge of Stability} (EoS) phenomenon. The sharpness increases at the early phase of training (referred to as \emph{progressive sharpening}), and eventually saturates close to the threshold of $2 / \text{(step size)}$. In this paper, we start by demonstrating through empirical studies that when the EoS phenomenon occurs, different GD trajectories (after a proper reparameterization) align on a specific bifurcation diagram independent of initialization.  We then rigorously prove this \emph{trajectory alignment} phenomenon for a two-layer fully-connected linear network and a single-neuron nonlinear network trained with a single data point. Our trajectory alignment analysis establishes both progressive sharpening and EoS phenomena, encompassing and extending recent findings in the literature.
\end{abstract}

\vspace*{-4pt}
\section{Introduction}
\vspace*{-4pt}
It is widely believed that implicit bias or regularization of gradient-based methods plays a key role in generalization of deep learning~\citep{vardi2022implicit}. There is a growing literature~\citep{gunasekar2017implicit, gunasekar2018implicit, soudry2018implicit,arora2018on,arora2019implicit,ji2018gradient,woodworth2020kernel,chizat2020implicit,yun2021a} studying how the choice of optimization methods induces an \emph{implicit bias} towards specific solutions among the many global minima in overparameterized settings. 

\citet{cohen2021gradient} identify a surprising implicit bias of gradient descent (GD) towards global minima with a certain sharpness\footnote{Throughout this paper, the term ``sharpness'' means the maximum eigenvalue of the training loss Hessian.} value depending on the step size $\eta$. Specifically, for reasonable choices of~$\eta$, (a) the sharpness of the loss at the GD iterate gradually increases throughout training until it reaches the stability threshold\footnote{For quadratic loss, GD becomes unstable if the sharpness is larger than a threshold of $2/\text{(step size)}$.} of $2/\eta$ (known as \emph{progressive sharpening}), and then (b) the sharpness saturates close to or above the threshold for the remainder of training (known as \emph{Edge of Stability} (EoS)).
These findings have sparked a surge of research aimed at developing a theoretical understanding of the progressive sharpening and EoS phenomena~\citep{arora2022eos,lyu2022sharpness,wang2022sharpness,ahn2023learning,chen23beyond,zhu2023understanding}.
In this paper, we study these phenomena through the lens of \emph{bifurcation theory}, both empirically and theoretically.

\vspace*{-5pt}
\paragraph{Motivating observations:}
\begin{figure}[!htb]
  \centering
  \begin{subfigure}{0.25\textwidth}
      \caption{$\log(\cosh(xy))$}
        \label{Figure:toy_a}
        \vspace{-4mm}
      \includegraphics[width=\textwidth]{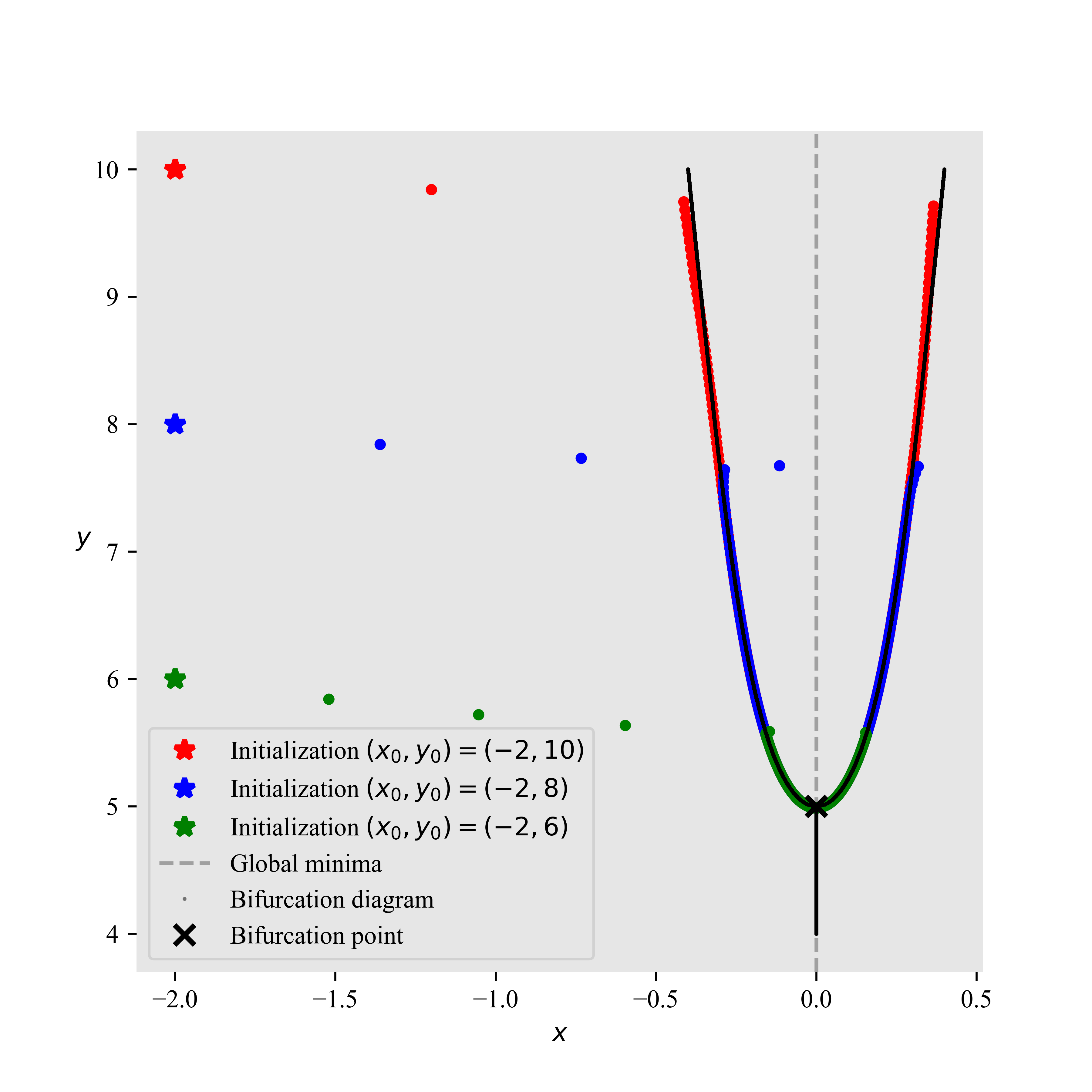}
  
  \end{subfigure}
  \begin{subfigure}{0.25\textwidth}
    \caption{$\frac{1}{2}(\tanh(x)y)^2$}
    \label{Figure:toy_b}
    \vspace{-4mm}
      \includegraphics[width=\textwidth]{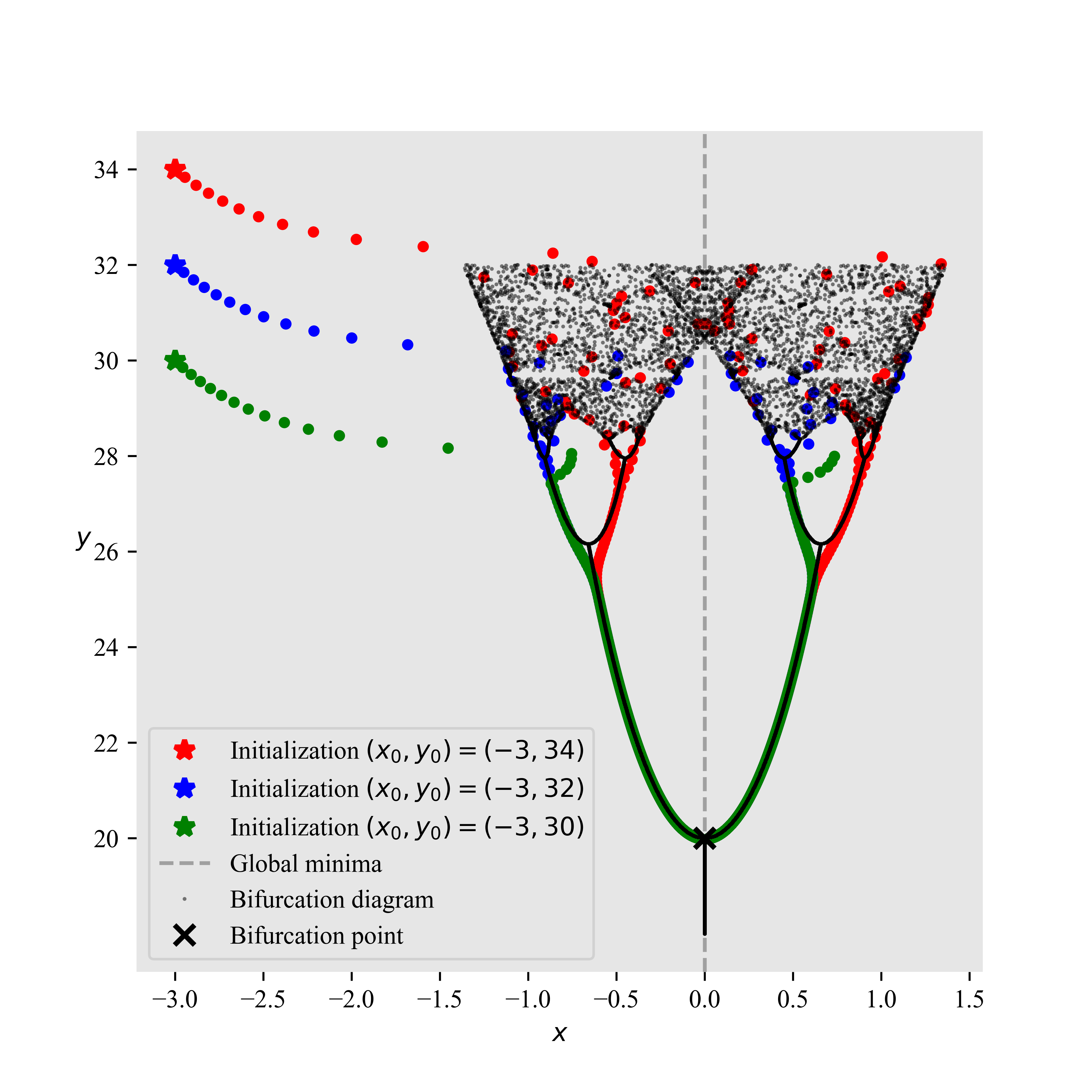}
  
  \end{subfigure}
  \begin{subfigure}{0.25\textwidth}
    \caption{$\frac{1}{2}(\text{ELU}(x)y)^2$}
    \label{Figure:toy_c}
    \vspace{-4mm}
    \includegraphics[width=\textwidth]{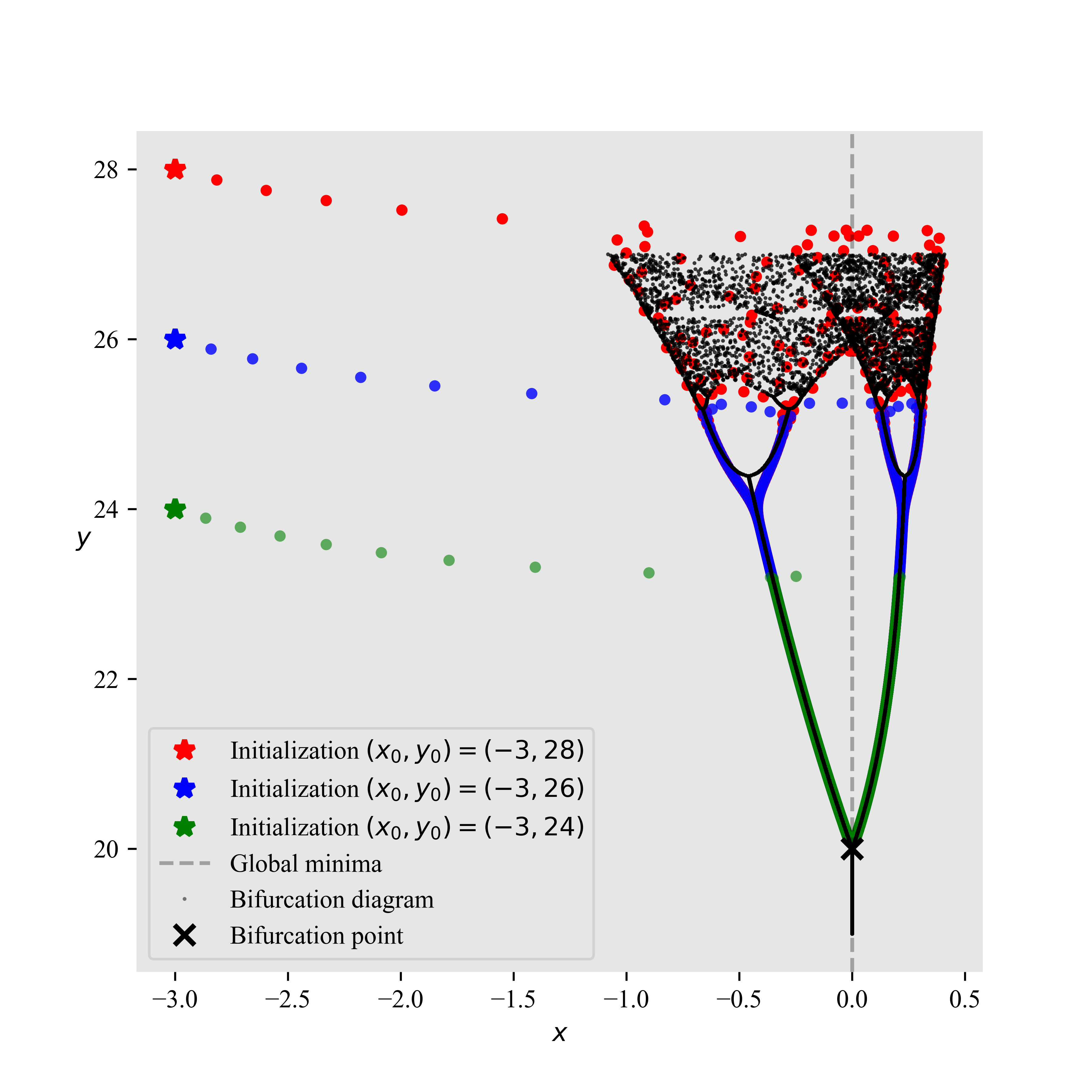}

  \end{subfigure}
  \begin{subfigure}{0.25\textwidth}
      \vspace{-5mm}
    \includegraphics[width=\textwidth]{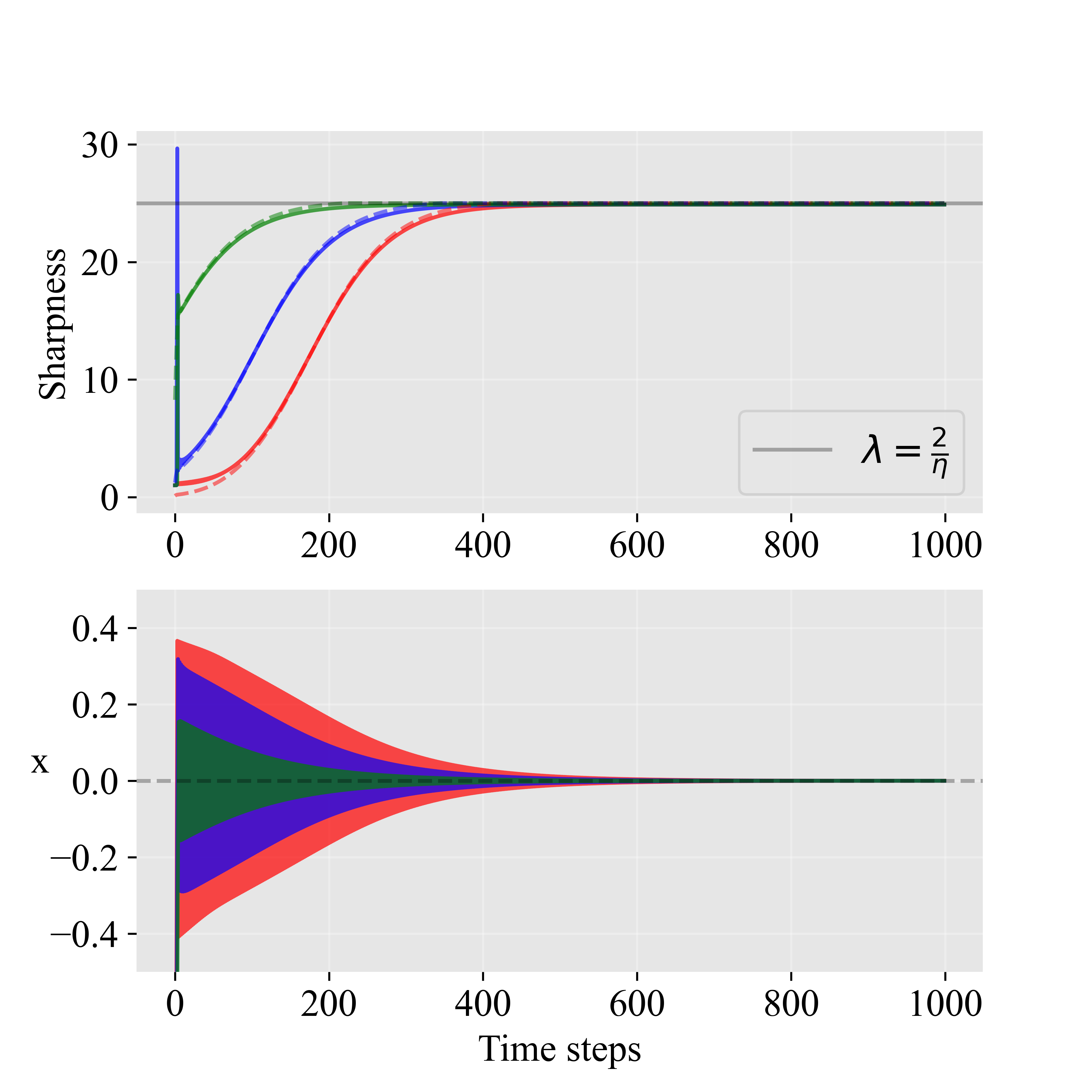}

  \end{subfigure}
  \begin{subfigure}{0.25\textwidth}
      \vspace{-5mm}
    \includegraphics[width=\textwidth]{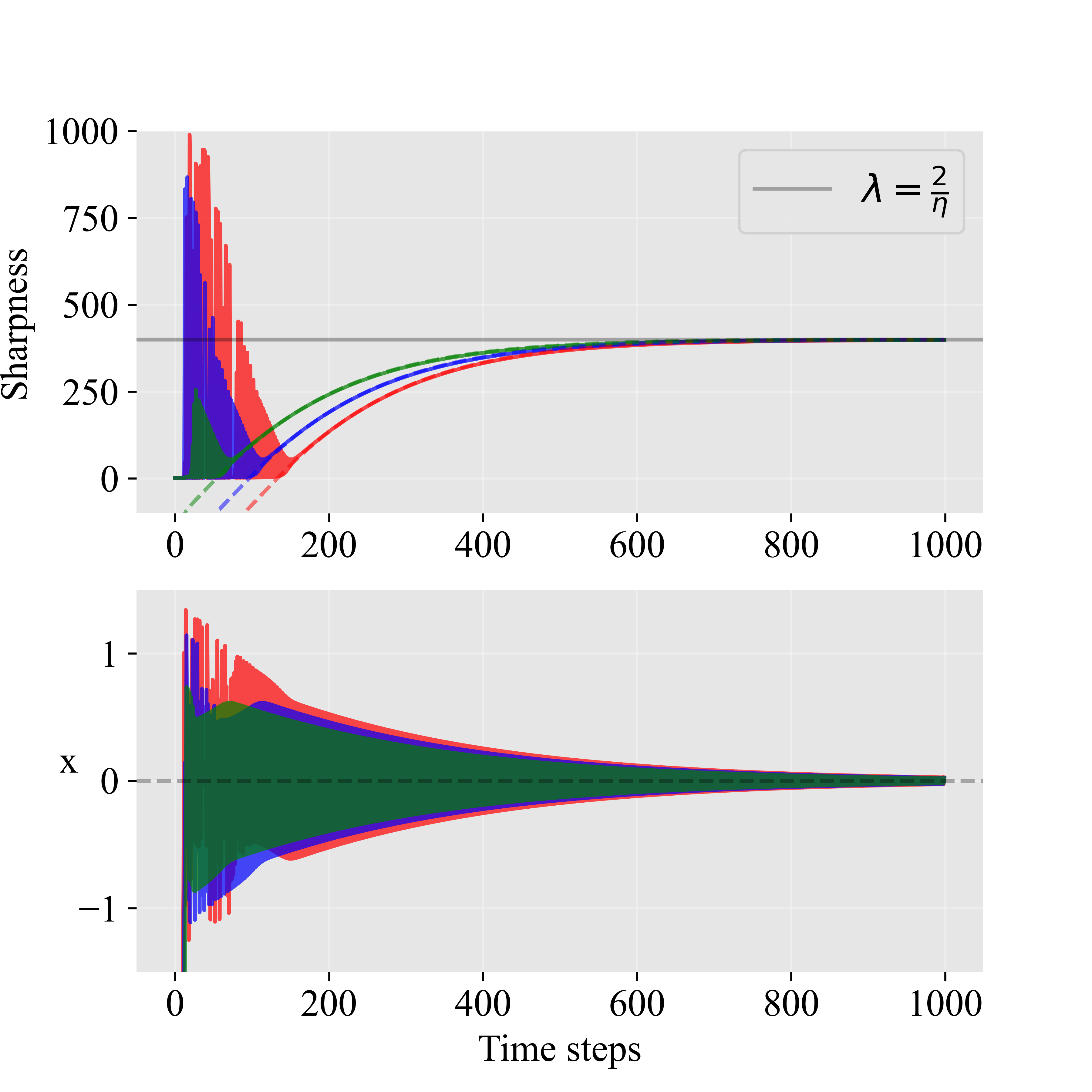}

  \end{subfigure}
  \begin{subfigure}{0.25\textwidth}
      \vspace{-5mm}
    \includegraphics[width=\textwidth]{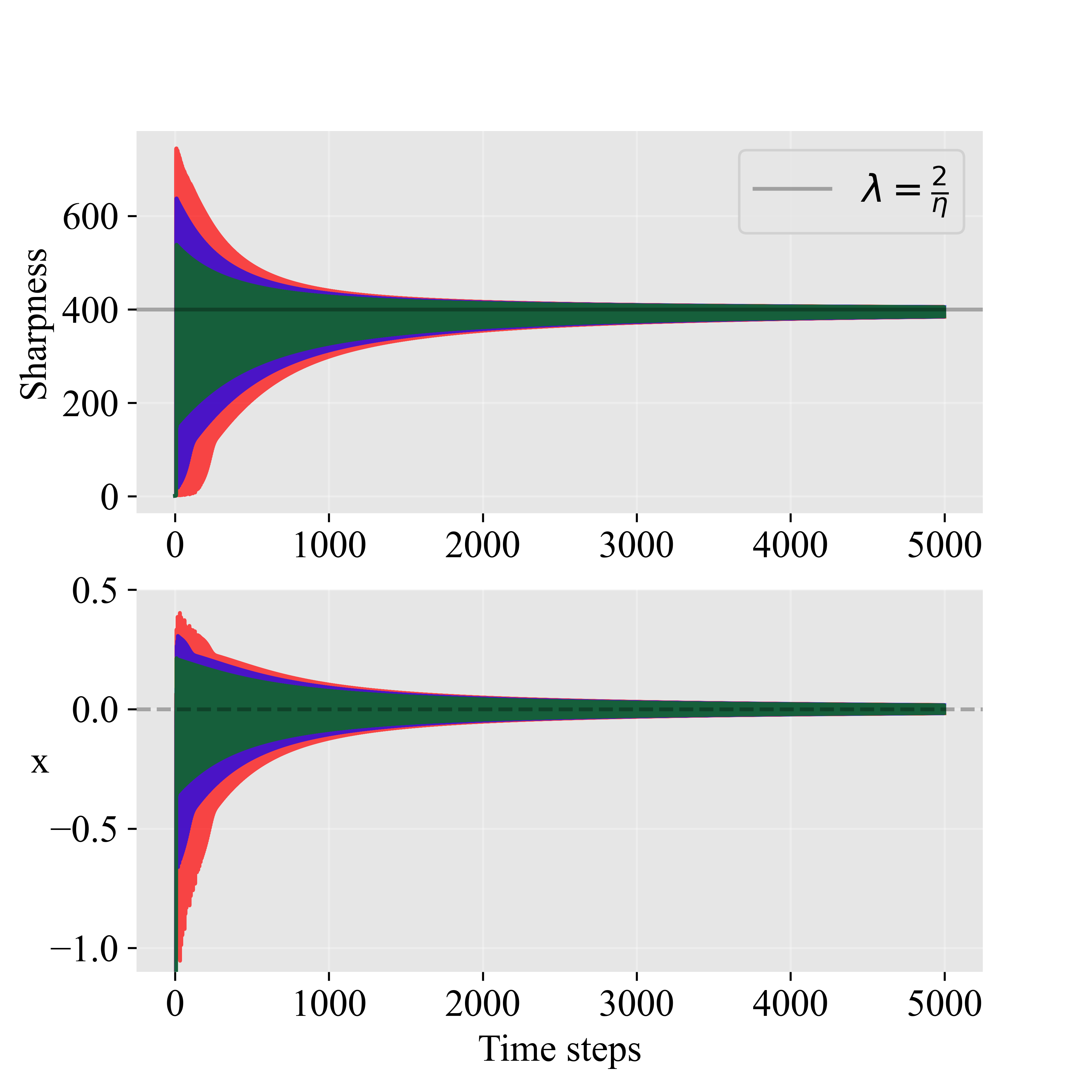}

  \end{subfigure}
  \caption{\textbf{GD trajectories align on the bifurcation diagram at the Edge of Stability.} We run GD on toy models with step size $\eta=2/25$ in (\subref{Figure:toy_a}), and $\eta=2/400$ in (\subref{Figure:toy_b}) and (\subref{Figure:toy_c}). Distinct colors indicate independent runs of GD with varying initializations. \textbf{Top row:} GD trajectories closely follow the bifurcation diagram of the map $x \mapsto x - \eta \frac{\partial}{\partial x}\mathcal{L}(x, y)$ and asymptotically reaches the bifurcation point $(0, \sqrt{2/\eta})$. \textbf{Bottom row:} the sharpness reaches $2/\eta$, and $x_t$ typically shows $2$-period oscillating dynamics. The theoretical prediction $\tilde{\lambda}(q_t)$ (dashed lines, defined in Theorems~\ref{T:PS_linear} and~\ref{T:PS}) in (\subref{Figure:toy_a}) and (\subref{Figure:toy_b}) approximates the sharpness along the GD trajectory and demonstrates progressive sharpening.}
\label{Figure:toy}
\end{figure}

Figure~\ref{Figure:toy} illustrates the GD trajectories with different initializations and fixed step sizes trained on three types of two-dimensional functions: (\subref{Figure:toy_a})~$\log(\cosh(xy))$, (\subref{Figure:toy_b})~$\frac{1}{2}(\tanh(x)y)^2$, and (\subref{Figure:toy_c})~$\frac{1}{2}(\text{ELU}(x)y)^2$, where $x$ and $y$ are scalars. The functions $\mathcal{L}:\sR^2\to \sR$ have sharpness $y^2$ at the global minimum $(0, y)$ for all three models. These toy models can be viewed as examples of single-neuron models, where (\subref{Figure:toy_a}) represents a linear network with $\log$-$\cosh$ loss, while (\subref{Figure:toy_b}) and (\subref{Figure:toy_c}) represent nonlinear networks with squared loss. These simple models can capture some interesting aspects of neural network training in the EoS regime, which are summarized below:

\begin{itemize}[leftmargin=15pt,itemsep=.75pt]
  \item EoS phenomenon: GD converges to a global minimum near the point $(0,\sqrt{2/\eta})$ with sharpness close to $2/\eta$. During the convergence phase, the training dynamics exhibit period-$2$ oscillations.
  \item For different initializations, GD trajectories for a given step size \emph{align} on the same curve. For example, Figure~\ref{Figure:toy_a} shows that GD trajectories with different initializations closely follow a specific U-shaped curve until convergence. We call this phenomenon \emph{trajectory alignment}.
  \item In Figures~\ref{Figure:toy_b} and~\ref{Figure:toy_c}, GD trajectories are aligned on a curve with a fractal structure that qualitatively resembles the bifurcation diagram of a typical polynomial map, such as the logistic map. Particularly, Figure~\ref{Figure:toy_c} demonstrates a period-halving phase transition in the GD dynamics, shifting from period-$4$ oscillation to period-$2$ oscillation.
  \item Surprisingly, the curve that GD trajectories approach and follow coincides with the \emph{bifurcation diagram} of a one-dimensional map $x \mapsto x - \eta \frac{\partial}{\partial x}\mathcal{L}(x, y)$ with a fixed ``control parameter''~$y$. The stability of its fixed point $x=0$ changes at the bifurcation point $(x, y) = (0, \sqrt{2/\eta})$, where period-doubling bifurcation occurs. Note that this point is a global minimum with sharpness $2/\eta$.
\end{itemize}
\vspace*{-3pt}

Interestingly, such striking behaviors can also be observed in more complex models, up to a proper reparameterization, as we outline in the next subsection.

\vspace*{-3pt}
\subsection{Our contributions}
\vspace*{-3pt}

In this paper, we discover and study the \emph{trajectory alignment} behavior of (reparameterized) GD dynamics in the EoS regime. To our best knowledge, we are the first to identify such an alignment with a specific \emph{bifurcation diagram} independent of initialization. Our empirical findings are rigorously proven for both two-layer fully-connected networks and single-neuron nonlinear networks. Our main contributions are summarized below:

\begin{itemize}[leftmargin=15pt,itemsep=.75pt]
  \item In Section~\ref{S:prelim}, we introduce a novel \emph{canonical reparameterization} of training parameters, which incorporates the data, network, and GD step size. 
  This reparameterization allows us to study the trajectory alignment phenomenon in a unified framework. 
  Through empirical study, Section~\ref{S:trajectory_alignment} demonstrates that the alignment property of GD trajectories is not limited to toy models but also occurs in wide and deep networks trained on real-world dataset. Furthermore, we find that the alignment trend becomes more pronounced as the network width increases.
  \item In Section~\ref{S:fc_linear}, we use our canonical reparameterization to establish the trajectory alignment phenomenon for a two-layer fully-connected linear network trained with a single data point. Our theoretical analysis rigorously proves both progressive sharpening and the EoS phenomenon, extending the work of \citet{ahn2023learning} to a much broader class of networks and also providing more accurate bounds on the limiting sharpness.
  \item Our empirical and theoretical analyses up to Section~\ref{S:fc_linear} are applicable to convex Lipschitz losses, hence missing the popular squared loss.
  In Section~\ref{S:nonlinear}, we take a step towards handling the squared loss. 
  Employing an alternative reparameterization, we prove the same set of theorems as Section~\ref{S:fc_linear} for a single-neuron nonlinear network trained with a single data point under squared loss.
\end{itemize}

\vspace*{-3pt}
\subsection{Related works}
\vspace*{-3pt}

The \emph{Edge of Stability} (EoS) phenomenon has been extensively studied in recent years, with many works seeking to provide a deeper understanding of the evolution of sharpness and the oscillating dynamics of GD.
\citet{jastrzebski2018on} and \citet{jastrzebski2020the} observe that step size affects the sharpness along the optimization trajectory.
\citet{cohen2021gradient} first formalize EoS through empirical study, and subsequent works have built on their findings. 
\citet{ahn2022understanding} analyze EoS through experiments and identify the relations between the behavior of loss, iterates, and sharpness. 
\citet{ma2022multiscale} suggest that subquadratic growth of the loss landscape is the key factor of oscillating dynamics. 
\citet{arora2022eos} show that (normalized) GD enters the EoS regime, by verifying the convergence to some limiting flow on the manifold of global minimizers.
\citet{wang2022sharpness} divide GD trajectory into four phases and explain progressive sharpening and EoS by using the norm of output layer weight as an indicator of sharpness.
\citet{lyu2022sharpness} prove that normalization layers encourage GD to reduce sharpness.
\citet{damian2023selfstabilization} use the third-order Taylor approximation of the loss to theoretically analyze EoS, assuming the existence of progressive sharpening. 
\citet{lee2023a} propose a new sharpness measure using batch gradient distribution and characterize EoS for SGD.
Concurrent to our work, \citet{wu2023implicit} study the logistic regression problem with separable dataset and establish that GD exhibits an implicit bias toward the max-margin solution in the EoS regime, extending prior findings in the small step size regime \citep{soudry2018implicit, ji2019implicit}.

Some recent works rigorously analyze the full GD dynamics for some toy cases and prove that the limiting sharpness is close to $2/\eta$. \citet{zhu2023understanding} study the loss $(x,y) \mapsto \frac{1}{4}(x^2y^2-1)^2$ and prove that the sharpness converges close to $2/\eta$ with a local convergence guarantee. Notably, \citet{ahn2023learning} study the function $(x,y) \mapsto \ell(xy)$ where $\ell$ is convex, even, and Lipschitz, and provide a global convergence guarantee. The authors prove that when $\ell$ is $\log$-$\cosh$ loss or square root loss, the limiting sharpness in the EoS regime is between $2/\eta - \mathcal{O}(\eta)$ and $2/\eta$. Our theoretical results extend their results on a single-neuron linear network to a two-layer fully-connected linear network and provide an improved characterization on the limiting sharpness, tightening the gap between upper and lower bounds to only $\mathcal{O}(\eta^3)$.

The trajectory alignment phenomenon is closely related to \citet{zhu2023understanding} which shows empirical evidence of bifurcation-like oscillation in deep neural networks trained on real-world data. However, their empirical results do not show the alignment property of GD trajectory. In comparison, we observe that GD trajectories align on the same bifurcation diagram, independent of initialization.

Very recently, \citet{kreisler2023gradient} observe a similar trajectory alignment phenomenon for scalar linear networks,  employing a reparameterization based on the sharpness of the gradient flow solution. However, their empirical findings on trajectory alignment are confined to scalar linear networks, and do not provide a theoretical explanation. In contrast, our work employs a novel canonical reparameterization and offers empirical evidence for the alignment phenomenon across a wide range of networks. Moreover, we provide theoretical proofs for two-layer linear networks and single-neuron nonlinear networks.

\vspace*{-4pt}
\section{Preliminaries}\label{S:prelim}
\vspace*{-4pt}

\paragraph{Notations.} For vectors $\vu$ and $\vv$, we denote the $\ell_p$ norm of $\vu$ by $\norm{\vu}_p$ , their tensor product as $\vu\otimes \vv$, and $\vu\otimes \vu$ by $\vu^{\otimes 2}$. For a matrix $\mA$, we denote the spectral norm by $\norm{\mA}_2$. Given a function $\mathcal{L}$ and a parameter $\bm{\Theta}$, we use $\lambda_{\max}(\bm{\Theta})\deq \lambda_{\max} (\nabla_{\bm{\Theta}}^2 \mathcal{L}(\bm{\Theta}))$ to denote the sharpness (i.e., the maximum eigenvalue of the loss Hessian) at $\bm{\Theta}$. We use asymptotic notations with subscripts (e.g.,  $\mathcal{O}_{\ell}(\cdot)$, $\mathcal{O}_{\delta, \ell}(\cdot)$) in order to hide constants that depend on the parameters or functions written as subscripts.

\vspace*{-3pt}
\subsection{Problem settings}
\vspace*{-3pt}

We study the optimization of neural network $f(\,\cdot\,; \bm{\Theta}) : \sR^d\to \sR$ parameterized by $\bm{\Theta}$. We focus on a simple over-parameterized setting trained on a single data point $\{(\vx, y)\}$, where $\vx\in \sR^d$ and $y\in \sR$. We consider the problem of minimizing the empirical risk
\begin{align*}
  \mathcal{L}(\bm{\Theta}) = \ell(f(\vx; \bm{\Theta}) - y),
\end{align*}
where $\ell$ is convex, even, and twice-differentiable with $\ell''(0)=1$.
We minimize $\mathcal{L}$ using GD with step size $\eta$: $\bm{\Theta}_{t+1} = \bm{\Theta}_t - \eta \nabla_{\bm{\Theta}} \mathcal{L}(\bm{\Theta}_t)$.
The gradient and the Hessian of the function are given by
\begin{align*}
  \nabla_{\bm{\Theta}} \mathcal{L} (\bm{\Theta}) &= \ell'(f(\vx; \bm{\Theta}) - y) \nabla_{\bm{\Theta}} f(\vx; \bm{\Theta}),\\
  \nabla_{\bm{\Theta}}^2 \mathcal{L} (\bm{\Theta}) &= \ell''(f(\vx; \bm{\Theta}) - y) (\nabla_{\bm{\Theta}} f(\vx; \bm{\Theta}))^{\otimes 2} + \ell'(f(\vx; \bm{\Theta}) - y) \nabla_{\bm{\Theta}}^2f(\vx; \bm{\Theta}).
\end{align*}
Suppose that $\bm{\Theta}^*$ be a global minimum of $\mathcal{L}$, i.e., $f(\vx; \bm{\Theta}^*) = y$. In this case, the loss Hessian and the sharpness at $\bm{\Theta}^*$ are simply characterized as
\begin{align}\label{E:hessian}
  \nabla_{\bm{\Theta}}^2 \mathcal{L} (\bm{\Theta}^*) = (\nabla_{\bm{\Theta}} f(\vx; \bm{\Theta}^*))^{\otimes 2} , \text{ and} \quad \lambda_{\max}(\bm{\Theta}^*) =  \norm{\nabla_{\bm{\Theta}} f(\vx; \bm{\Theta}^*)}_2^2.
\end{align}

\vspace*{-10pt}
\subsection{Canonical reparameterization}
\vspace*{-3pt}

\begin{definition}[canonical reparameterization]
\label{D:canonical}
  For given step size $\eta$, the canonical reparameterization of $\bm{\Theta}$ is defined as
  \vspace*{-5pt}
  \begin{align}
    (p, q) \deq \left( f(\vx; \bm{\Theta}) - y, \, \frac{2}{\eta \norm{ \nabla_{\bm{\Theta}} f(\vx; \bm{\Theta})}_2^2} \right).
    \vspace*{-5pt}
  \end{align}  
\end{definition}

Under the canonical reparameterization, $p=0$ represents global minima, and Eq.~\eqref{E:hessian} implies that the point $(p,q)=(0,1)$ is a global minimum with sharpness $2/\eta$. Note that $(p,q)$ alone does not, in general, uniquely determine the value of $\bm{\Theta}$. Rather, the motivation for this reparameterization technique is to effectively analyze the complex GD dynamics in the high-dimensional parameter space by reducing it to a 2-dimensional representation. The update of $p$ can be written as
\begin{align}\label{E:canonical}
  p_{t+1} &= f(\bm{x}; \bm{\Theta}_{t+1}) - y 
  = f\big(\bm{x}; \bm{\Theta}_t - \eta \ell'(f(\vx; \bm{\Theta}_t) - y) \nabla_{\bm{\Theta}} f(\vx; \bm{\Theta}_t)\big) - y \nonumber\\
  &\approx f(\bm{x}; \bm{\Theta}_t) - \nabla_{\bm{\Theta}} f(\vx; \bm{\Theta}_t)^\top \left(\eta\ell'(f(\vx; \bm{\Theta}_t) - y) \nabla_{\bm{\Theta}} f(\vx; \bm{\Theta}_t)\right) - y \nonumber\\
  &= (f(\bm{x}; \bm{\Theta}_t) - y) - \eta \ell'(f(\vx; \bm{\Theta}_t) - y) \norm{ \nabla_{\bm{\Theta}} f(\vx; \bm{\Theta})}_2^2 \nonumber\\
  &= p_{t} - \frac{2 \ell'(p_t)}{q_t},
\end{align}
which can be obtained by first-order Taylor approximation on $f$ for small step size $\eta$.\footnote{The approximation is used just to motivate Lemma~\ref{L:bifurcation}; in our theorems, we analyze the exact dynamics.}

\vspace*{-3pt}
\subsection{Bifurcation analysis}
\vspace*{-3pt}

Motivated from the approximated $1$-step update rule given by Eq.~\eqref{E:canonical}, we conduct the bifurcation analysis on this one-dimensional map, considering $q_t$ as a control parameter. We first review some basic notions used in bifurcation theory~\citep{strogatz_nonlinear_1994}.

\begin{definition}[stability of fixed point]
  Let $z_0$ be a fixed point of a differentiable map $f:\sR \to \sR$, i.e., $f(z_0)=z_0$. We say $z_0$ is a \emph{stable fixed point} of $f$ if $\abs{f'(z)}<1$, and we say $z_0$ is an \emph{unstable fixed point} of $f$ if $\abs{f'(z)}>1$.
\end{definition}

\begin{definition}[stability of periodic orbit]
  A point $z_0$ is called a period-$p$ point of a map $f:\sR \to \sR$ if $z_0$ is the fixed point of $f^p$ and $f^j(z_0)\ne z_0$ for any $1\le j \le p-1$. The orbit of $z_0$, given by $\{ z_{j}=f^j(z_0) \mid j=0,1,\dots,p-1\}$ is called the period-$p$ orbit of $f$. A period-$p$ orbit is stable (unstable) if its elements are stable (unstable) fixed points of $f^p$, i.e., $\prod_{j=0}^{p-1} \abs{f'(z_j)} < 1$ ($> 1$).
\end{definition}

Now we analyze the bifurcation of the one-parameter family of mappings $f_q: \sR\to \sR$ given by
\begin{align}\label{E:bifurcation}
  f_q(p) \deq p\left(1-\frac{2r(p)}{q}\right),
\end{align}  
where $q$ is a control parameter and $r$ is a differentiable function satisfying Assumption~\ref{A:r} below. 

\begin{assumption}\label{A:r}
  A function $r:\sR\to \sR$ is even, continuously differentiable, $r(0)=1$, $r'(0)=0$, $r'(p)<0$ for any $p>0$, and $\lim_{p\to\infty}r(p) = \lim_{p\to -\infty}r(p)= 0$. In other words, $r$ is a smooth, symmetric bell-shaped function with the maximum value $r(0)=1$.
\end{assumption}
\vspace*{-8pt}
We note that Eq.~\eqref{E:canonical} can be rewritten as $p_{t+1} = f_{q_t}(p_t)$ if we define $r$ by $r(p) \deq \frac{\ell'(p)}{p}$ for $p\ne 0$ and $r(0)\deq 1$. Below are examples of $\ell$ for which the corresponding $r$'s satisfy Assumption~\ref{A:r}. These loss functions were previously studied by \citet{ahn2023learning} to explain EoS for $(x,y)\mapsto \ell(xy)$.
\begin{itemize}[leftmargin=15pt,itemsep=.75pt]
\vspace*{-5pt}
    \item $\log$-$\cosh$ loss: $\ell_{\text{$\log$-$\cosh$}}(p) \deq \log(\cosh(p))$. Note $\ell'_{\text{$\log$-$\cosh$}}(p) = \tanh(p)$.
    \item square-root loss: $\ell_{\text{sqrt}}(p) \deq \sqrt{1 + p^2}$. Note $\ell'_{\text{sqrt}}(p) = \frac{p}{\sqrt{1+p^2}}$.
\vspace*{-5pt}
\end{itemize}

If $r$ satisfies Assumption~\ref{A:r}, then for any $0<q\le 1$, there exists a nonnegative number $p$ such that $r(p)=q$, and the solution is unique which we denote by $\hat{r}(q)$. In particular, $\hat{r}: (0,1]\to \sR_{\ge0}$ is a function satisfying $r(\hat{r}(q))=r(-\hat{r}(q))=q$ for any $q\in (0,1]$.
\begin{restatable}[period-doubling bifurcation of $f_q$]{lemma}{LemmaBifurcation}\label{L:bifurcation}
  Suppose that $r$ is a function satisfying Assumption~\ref{A:r}. Let $p^* = \sup\{p\ge 0 \mid \frac{xr'(x)}{r(x)} > -1 \text{ for any } \abs{x}\le p\}$ and $c = r(p^*)$. If $p^*=\infty$, we choose $c=0$. Then, the one-parameter family of mappings $f_q: \sR\to \sR$ given by Eq.~\eqref{E:bifurcation} satisfies
  \begin{enumerate}[label=(\roman*),leftmargin=20pt,itemsep=.75pt]
    \item If $q>1$, $p=0$ is the stable fixed point.
    \item If $q \in (c,1)$, $p=0$ is the unstable fixed point and $\{\pm \hat{r}(q)\}$ is the stable period-$2$ orbit.
  \end{enumerate}
\end{restatable}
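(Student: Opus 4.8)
The plan is to directly analyze the fixed points and period-$2$ points of $f_q$ and their multipliers, exploiting that $f_q$ is an \emph{odd} function: since $r$ is even, $p\mapsto p\,r(p)$ is odd, so $f_q(p)=p-\tfrac{2p\,r(p)}{q}$ is odd and $f_q'$ is even. First I would locate the fixed points. From $f_q(p)-p=-\tfrac{2p\,r(p)}{q}$ and the fact that Assumption~\ref{A:r} forces $r(p)>0$ for every $p\in\sR$ (on $[0,\infty)$ the function $r$ decreases strictly from $r(0)=1$ toward $0$, and $r$ is even), the unique fixed point is $p=0$. Differentiating gives $f_q'(p)=1-\tfrac{2r(p)}{q}-\tfrac{2p\,r'(p)}{q}$, hence $f_q'(0)=1-\tfrac{2}{q}$. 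Therefore $|f_q'(0)|<1\iff q>1$, while $f_q'(0)=1-2/q<-1\iff q<1$; this already establishes part (i) and the ``$p=0$ is unstable'' half of part (ii).

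For the period-$2$ orbit, fix $q\in(c,1)$ and set $\rho\deq\hat r(q)$, which is well defined and strictly positive by the discussion preceding the lemma since $q\in(0,1)$ and $r(\rho)=q$. A one-line computation using $r(\rho)=q$ gives $f_q(\rho)=\rho\bigl(1-\tfrac{2q}{q}\bigr)=-\rho$, and by oddness $f_q(-\rho)=\rho$; since $\rho\neq0$, $\{\pm\rho\}$ is a genuine period-$2$ orbit. Because $f_q'$ is even, the multiplier of this orbit is $f_q'(\rho)f_q'(-\rho)=f_q'(\rho)^2$, so stability reduces to $|f_q'(\rho)|<1$. Substituting $r(\rho)=q$ into the derivative formula yields
\[
  f_q'(\rho)=-1-\frac{2\rho\,r'(\rho)}{q}=-1-\frac{2\rho\,r'(\rho)}{r(\rho)},
\]
and since $\rho>0$, $r'(\rho)<0$, this quantity is automatically $>-1$. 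Hence stability is equivalent to the single inequality $\tfrac{\rho\,r'(\rho)}{r(\rho)}>-1$.

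The remaining step is to deduce $\tfrac{\rho\,r'(\rho)}{r(\rho)}>-1$ from the hypothesis $q>c$, and this is the only place that needs care. Because $r$ is strictly decreasing on $[0,\infty)$, so is $\hat r$ on $(0,1]$; when $p^*<\infty$, $q>c=r(p^*)$ together with $r(\rho)=q$ gives $\rho=\hat r(q)<p^*$, and when $p^*=\infty$ we have $c=0$ and $\rho<\infty=p^*$ trivially, so in either case $\rho<p^*$. By definition of $p^*$ as a supremum there is then some $p$ in the defining set with $p\geq\rho$, and membership of $p$ in that set gives $\tfrac{xr'(x)}{r(x)}>-1$ for all $|x|\le p$, in particular at $x=\rho$. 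This closes part (ii).

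I expect no serious obstacle here: the argument is elementary one-dimensional dynamics once the oddness of $f_q$ is used to reduce the $2$-orbit multiplier to $f_q'(\rho)^2$. The only mildly delicate points are the bookkeeping around the supremum in the definition of $p^*$ (and the degenerate case $p^*=\infty$), and making sure the strict inequality $q>c$ is genuinely used — at $q=c$ one gets $\tfrac{\rho r'(\rho)}{r(\rho)}=-1$, i.e. $f_q'(\rho)=1$, so the orbit sits exactly at the boundary of stability, which is why the statement excludes that endpoint.
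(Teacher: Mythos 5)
Your proof is correct and follows essentially the same outline as the paper's: identify $p=0$ as the unique fixed point via $r>0$, compute $f_q'(0)=1-\tfrac{2}{q}$ for part (i) and the instability half of (ii), verify $\{\pm\hat r(q)\}$ is a period-$2$ orbit from $r(\hat r(q))=q$, and check the multiplier. The one place you go beyond the paper is that you actually \emph{prove} the key stability inequality $\lvert f_q'(\hat r(q))\rvert<1$, which the paper's proof asserts without elaboration: you observe that $f_q'(\hat r(q))=-1-\tfrac{2\hat r(q)r'(\hat r(q))}{r(\hat r(q))}>-1$ is automatic from $r'(\hat r(q))<0$, reduce stability to the single condition $\tfrac{\hat r(q)r'(\hat r(q))}{r(\hat r(q))}>-1$, and then deduce this from $q>c$ through the monotonicity of $\hat r$ and the definition of $p^*$ as a supremum (including the degenerate case $p^*=\infty$). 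This filled-in step is exactly where the hypothesis $q>c$ is used, so your version is the more complete argument.
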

\begin{proof}
  The map $f_q$ has the unique fixed point $p=0$ for any $q>0$. Since $\abs{f_q'(0)} = \abs{1 - \frac{2}{q}}$, $p=0$ is a stable fixed point if $q>1$ and $p=0$ is an unstable fixed point if $0<q<1$. Now suppose that $q\in (c,1)$. Then, we have $f_q(\hat{r}(q)) = -\hat{r}(q)$ and $f_q(-\hat{r}(q)) = \hat{r}(q)$, which implies that $\{\pm \hat{r}(q)\}$ is a period-$2$ orbit of $f_q$. Then, $\abs{f_q'(\hat{r}(q))} = \abs{f_q'(-\hat{r}(q))} = \left\lvert 1+\frac{2\hat{r}(q)r'(\hat{r}(q))}{q} \right\vert < 1$ implies that $\{\pm \hat{r}(q)\}$ is a stable period-$2$ orbit.
\end{proof}

According to Lemma~\ref{L:bifurcation}, the stability of the fixed point $p=0$ undergoes a change at $q=1$, resulting in the emergence of a stable period-$2$ orbit. The point $(p,q)=(0,1)$ is referred to as the \emph{bifurcation point}, where a \emph{period-doubling} bifurcation occurs. A \emph{bifurcation diagram} illustrates the points asymptotically approached by a system as a function of a control parameter. In the case of the map $f_q$, the corresponding bifurcation diagram is represented by $p=0$ for $q\ge 1$ and $p=\pm \hat{r}(q)$ (or equivalently, $q = r(p)$) for $q\in (c,1)$.

It is worth noting that the period-$2$ orbit $\{\pm\hat{r}(p)\}$ becomes unstable for $q\in(0,c)$. If we choose $r$ to be $r(p) = \frac{\ell'(p)}{p}$ for $p\ne 0$ and $r(0)=1$, then $1 + \frac{pr'(p)}{r(p)} = \frac{\ell''(p)}{r(p)} > 0$ for all $p$, assuming $\ell$ is convex. Consequently, for $\log$-$\cosh$ loss and square root loss we have $c=0$, indicating that the period-$2$ orbit of $f_q$ remains stable for all $q\in (0,1)$. However, in Section~\ref{S:nonlinear}, we will consider $r$ with $c>0$, which may lead to additional bifurcations.

\vspace*{-4pt}
\section{Trajectory Alignment of GD: An Empirical Study}\label{S:trajectory_alignment}
\vspace*{-4pt}

\begin{figure}
    \centering
  \begin{subfigure}{0.25\textwidth}
      \caption{$\alpha = 5$}
      \vspace{-4mm}
      \includegraphics[width=\textwidth]{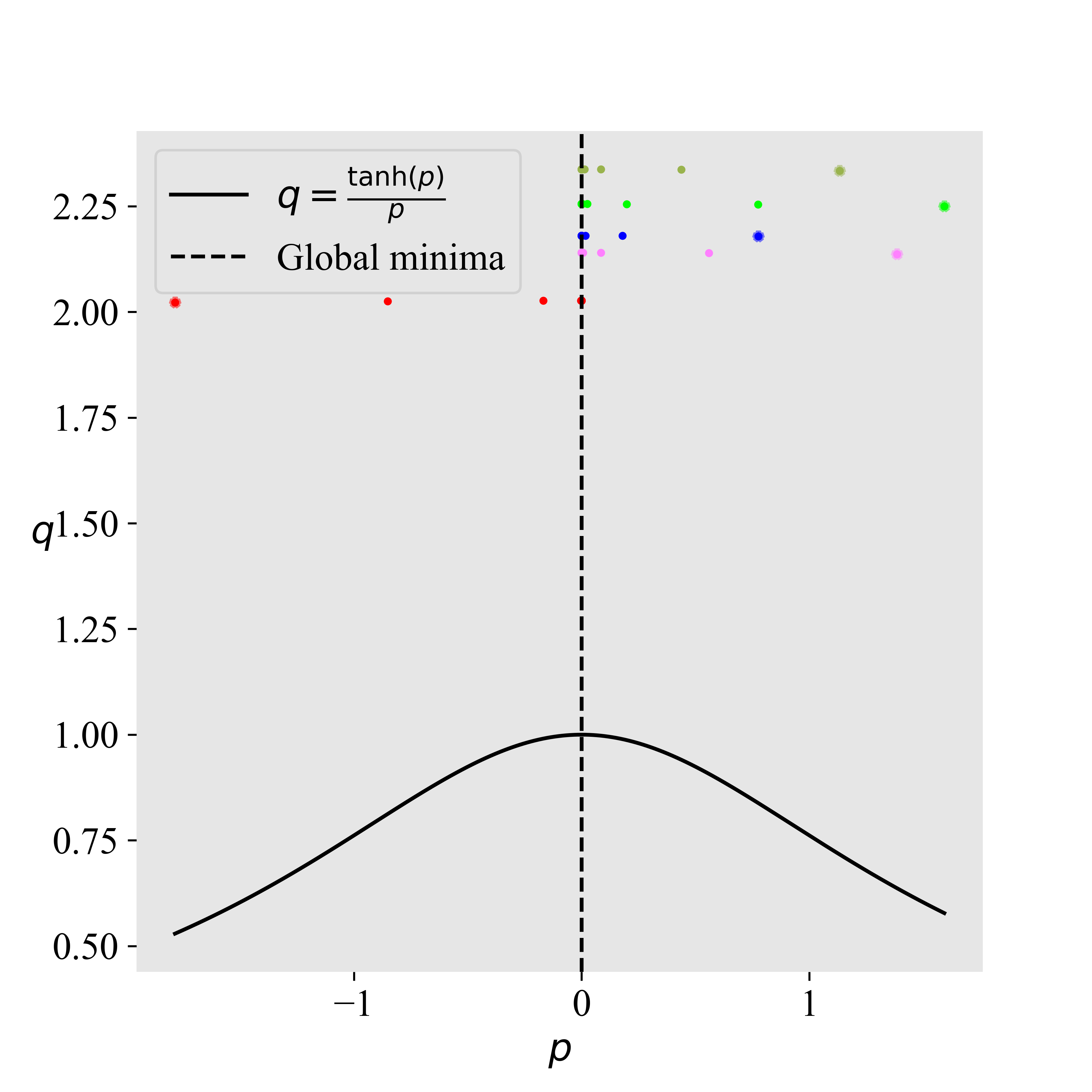}
  
      \label{Figure:init_small}
  \end{subfigure}
  \begin{subfigure}{0.25\textwidth}
      \caption{$\alpha=10$}
      \vspace{-4mm}
      \includegraphics[width=\textwidth]{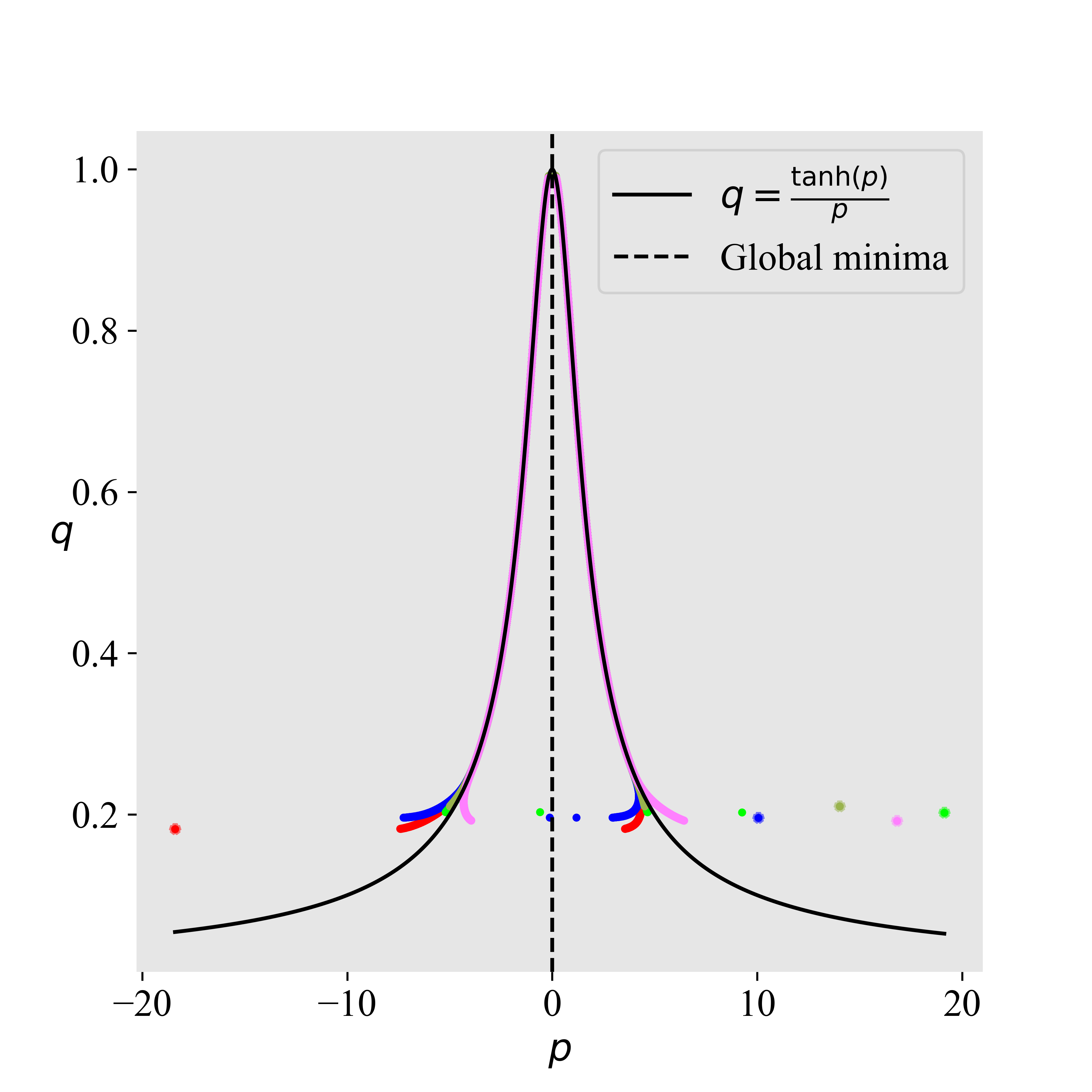}
  
      \label{Figure:init_large}
  \end{subfigure}
  \begin{subfigure}{0.25\textwidth}
    \caption{$\frac{q_t}{r(p_t)}$ and $p$ vs. $t$}
    \vspace{-4mm}
    \includegraphics[width=\textwidth]{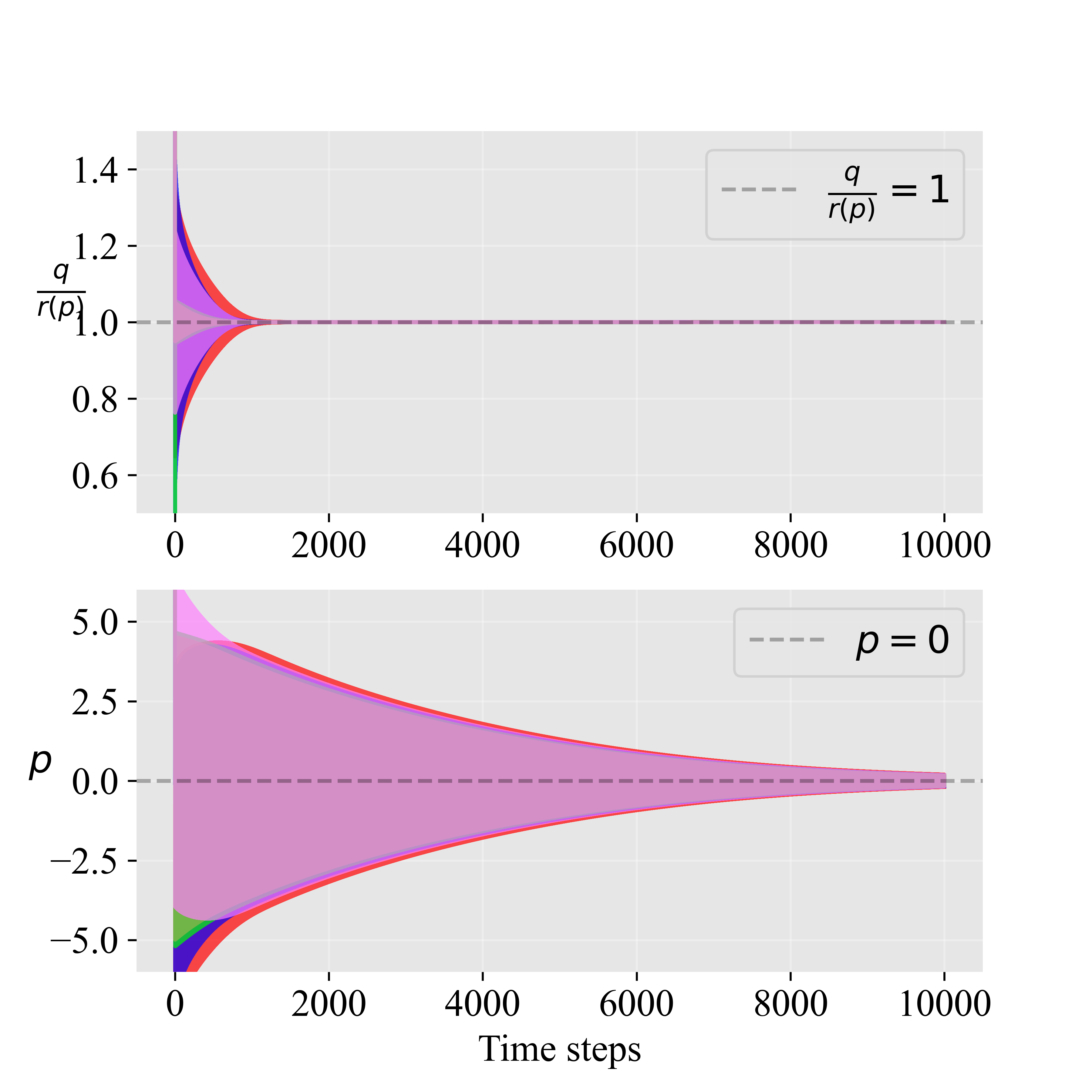}

    \label{Figure:phase}
  \end{subfigure}
  \vspace{-3mm}
  \caption{\textbf{(\subref{Figure:init_small}), (\subref{Figure:init_large})} GD trajectories of two-layer fully-connected linear networks trained with \textbf{different initialization scale} $\alpha$. Each color corresponds to a single run of GD. Smaller initialization scale falls into the gradient flow regime, whereas larger initialization falls into the EoS regime. \textbf{(\subref{Figure:phase})} In the EoS regime, $\frac{q_t}{r(p_t)}$ approaches $1$ in the early phase of training, whereas $p_t$ converges to $0$ relatively slowly.} 
  \label{Figure:exp1}
\end{figure}

\begin{figure}[!htb]
    \centering
  \begin{subfigure}{0.25\textwidth}
      \caption{$m=64$, $L=3$}
      \vspace{-4mm}
      \includegraphics[width=\textwidth]{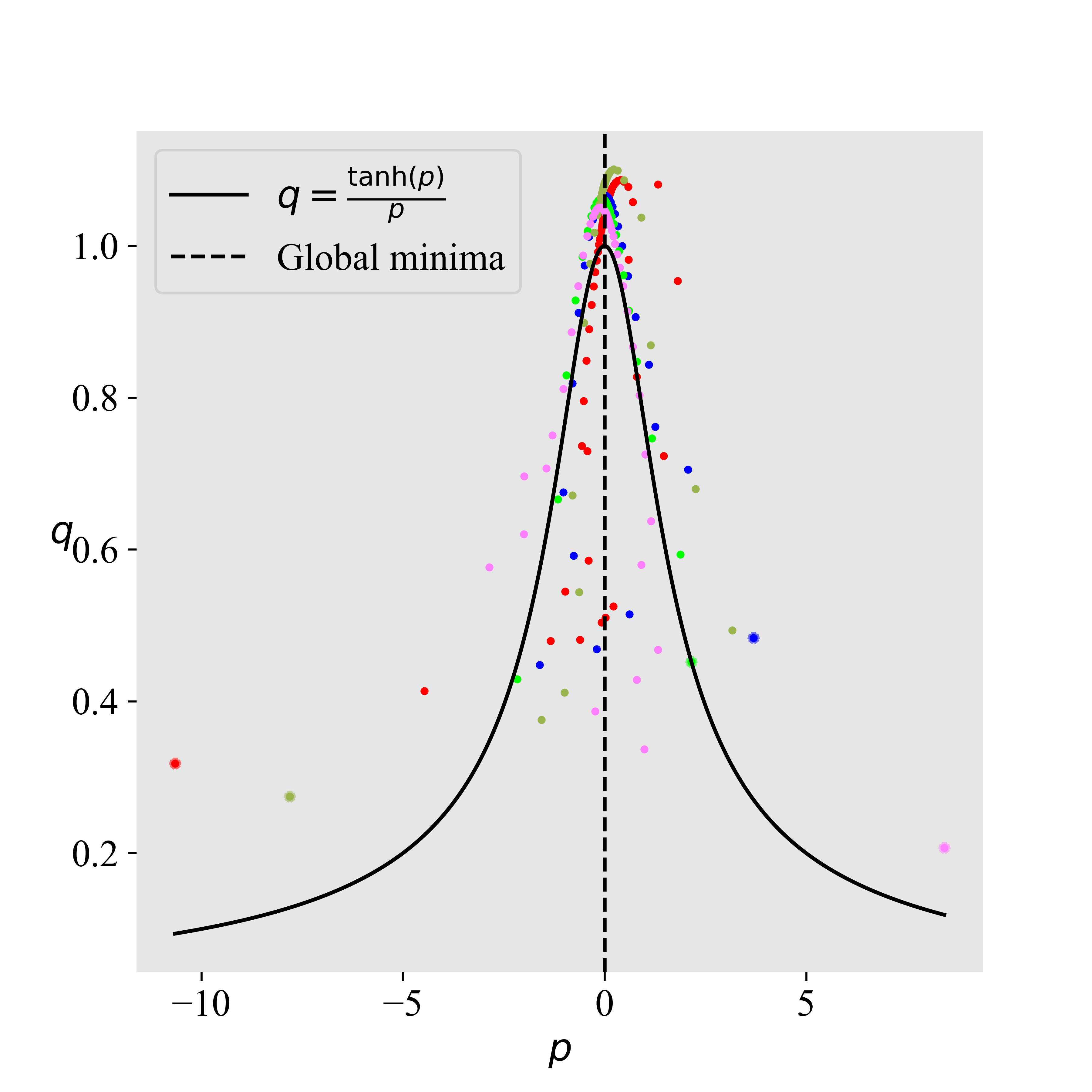}
  
      \label{Figure:width64}
  \end{subfigure}
  \begin{subfigure}{0.25\textwidth}
      \caption{$m=256$, $L=3$}
      \vspace{-4mm}
      \includegraphics[width=\textwidth]{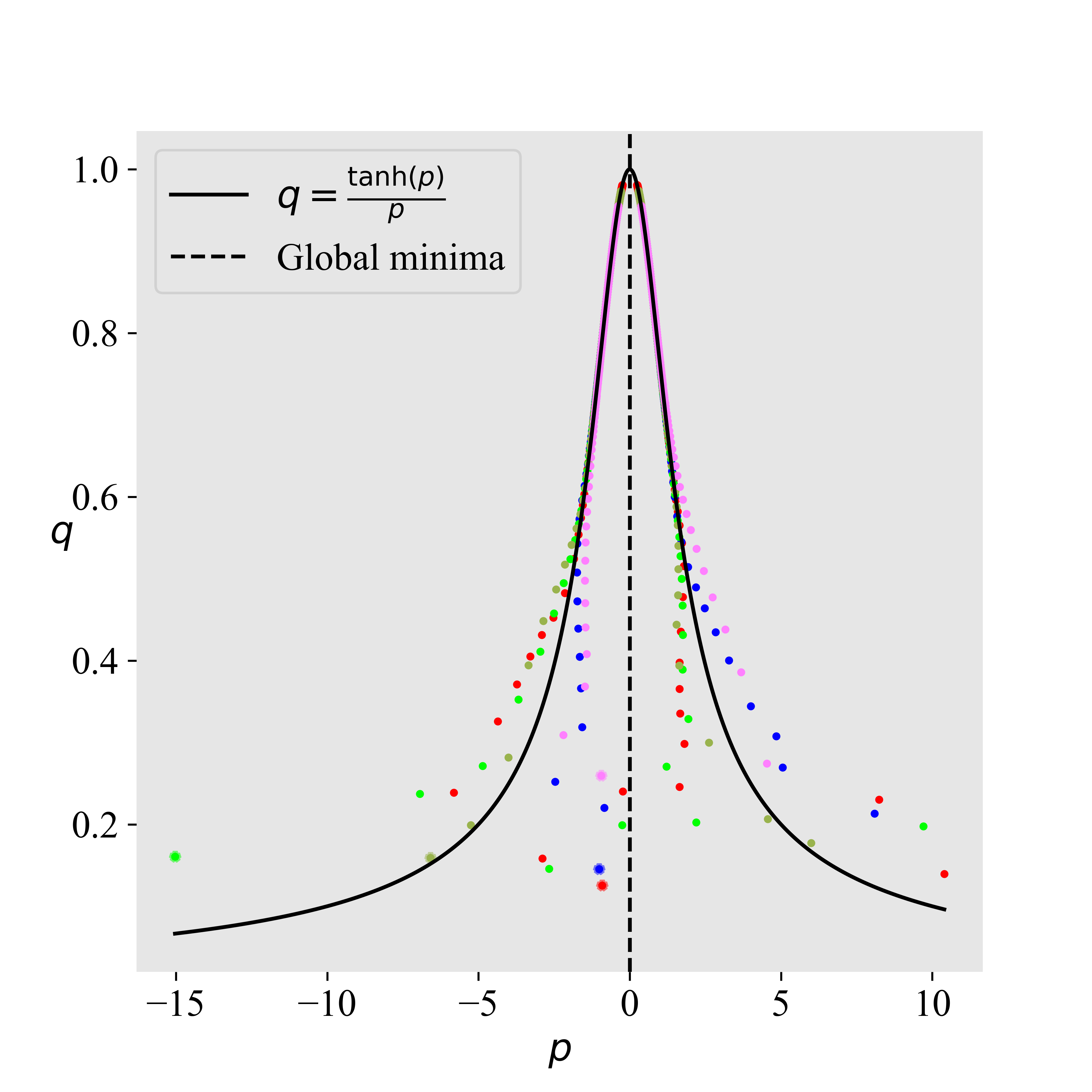}
  
      \label{Figure:width256}
  \end{subfigure}
  \begin{subfigure}{0.25\textwidth}
    \caption{$m=256$, $L=10$}
    \vspace{-4mm}
    \includegraphics[width=\textwidth]{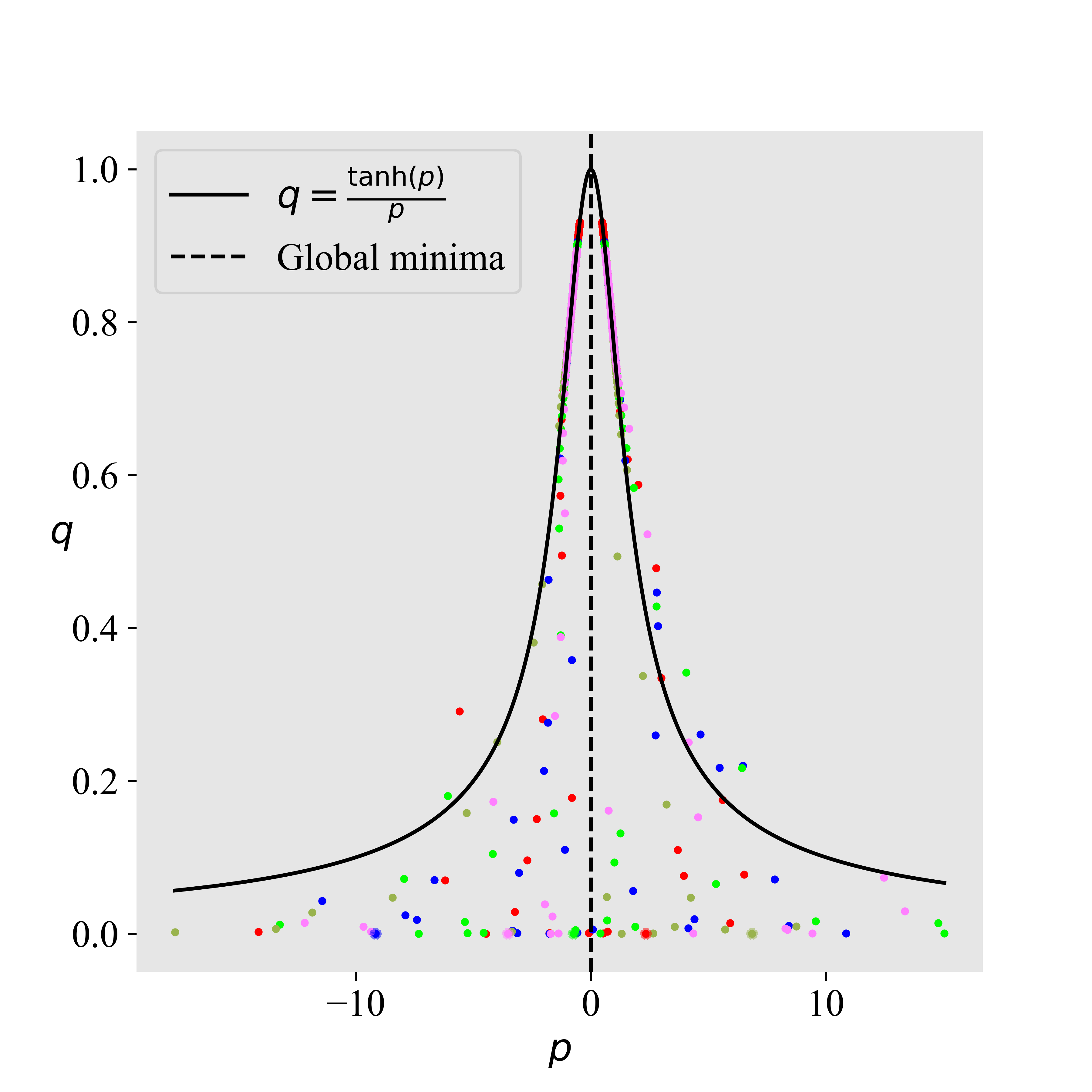}

    \label{Figure:depth10}
  \end{subfigure}
  \vspace{-3mm}
  \caption{GD trajectories of $\tanh$-activated neural networks with \textbf{varying width and depth}. Each color corresponds to a single run of GD. We observe that the wider network ($m=256$) exhibits a stronger trajectory alignment phenomenon compared to the narrower network ($m=64$). Figure~\ref{Figure:depth10} depicts the trajectories for a deeper network ($L=10$), which also shows the trajectory alignment phenomenon.} 
  \label{Figure:exp2}
\end{figure}

In this section, we conduct experimental studies on the trajectory alignment phenomenon in GD dynamics under the canonical reparameterization proposed in Section~\ref{S:prelim}.

We consider a fully-connected $L$-layer neural network $f(\,\cdot\,; \bm{\Theta}): \sR^d \to \sR$ written as
\[
  f(\vx; \bm{\Theta}) = \vw_L^\top \phi(\mW_{L-1} \phi(\cdots \phi(\mW_2 \phi(\mW_1 \vx))\cdots)),
\]
where $\phi$ is an activation function, $\mW_1\in \sR^{m\times d}$, $\mW_l\in \sR^{m \times m}$ for $2\le l \le L-1$, and $\vw_L\in \sR^{m}$. All $L$ layers have the same width of $m$. We minimize the empirical risk $\mathcal{L}(\bm{\Theta}) = \ell(f(\vx; \bm{\Theta})-y)$. We visualize GD trajectories under the canonical parameterization, where each plot shows five different randomly initialized weights using Xavier initialization multiplied with a rescaling factor of $\alpha$. For this analysis, we fix the training data point and hyperparameters as $\vx = \ve_1 = (1,0,\ldots,0)$, $y=1$, $\eta=0.01$, $d=10$, and focus on the $\log$-$\cosh$ loss for $\ell$, with either $\phi(t) = t$ (linear) or $\phi(t) = \tanh(t)$.
We note that the trajectory alignment phenomenon is consistently observed in other settings, including square root loss, different activations (e.g., ELU), and various hyperparameters, in particular for sufficiently wide networks (additional experimental results are provided in Appendix~\ref{A:exp}).

\vspace*{-5pt}
\paragraph{The effect of initialization scale.}
In Figures~\ref{Figure:init_small} and~\ref{Figure:init_large}, we examine the effect of the initialization scale $\alpha$ on GD trajectories in a two-layer fully-connected linear network with a width of $m=256$.
In Figure~\ref{Figure:init_small}, when the weights are initialized with a smaller scale ($\alpha=5$), the initial value of $q$ is greater than $1$, and it converges towards the minimum with only a small change in $q_t$ until convergence. In this case, the limiting sharpness is relatively smaller than $2/\eta$, and the EoS phenomenon does not occur. This case is referred to as the \emph{gradient flow regime}~\citep{ahn2023learning}.
On the other hand, in Figure~\ref{Figure:init_large}, when the weights are initialized with a larger scale ($\alpha=10$), the initial value of $q$ is less than $1$, and we observe convergence towards the point (close to) $(p,q) = (0,1)$. This case is referred to as the \emph{EoS regime}. We note that choosing larger-than-standard scale $\alpha$ is not a necessity for observing EoS; we note that even with $\alpha = 1$, we observe the EoS regime when $\eta$ is larger.

\vspace*{-5pt}
\paragraph{Trajectory alignment on the bifurcation diagram.} In order to investigate the trajectory alignment phenomenon on the bifurcation diagram, we plot the bifurcation diagram $q = r(p) = \frac{\ell'(p)}{p}$ and observe that GD trajectories tend to align with this curve, which depends solely on $\ell$. Figure~\ref{Figure:init_large} clearly demonstrates this alignment phenomenon. Additionally, we analyze the evolution of $\frac{q_t}{r(p_t)}$ and $p_t$ in Figure~\ref{Figure:phase}. We observe that the evolution of $\frac{q_t}{r(p_t)}$ follows two phases. In \textbf{Phase~\RomanNumeralCaps{1}}, $\frac{q_t}{r(p_t)}$ approaches to $1$ quickly. In \textbf{Phase~\RomanNumeralCaps{2}}, the ratio remains close to $1$.
Notably, the convergence speed of $\frac{q_t}{r(p_t)}$ towards $1$ is much faster than the convergence speed of $p_t$ towards $0$. In Sections~\ref{S:fc_linear} and~\ref{S:nonlinear}, we will provide a rigorous analysis of this behavior, focusing on the separation between Phase~\RomanNumeralCaps{1} and Phase~\RomanNumeralCaps{2}.

\vspace*{-5pt}
\paragraph{The effect of width and depth.} In Figure~\ref{Figure:exp2}, we present the GD trajectories of $\tanh$-activated networks with different widths and depths ($\alpha = 5$). All three cases belong to the EoS regime, where GD converges to a point close to $(p,q)=(0,1)$, resulting in a limiting sharpness near $2/\eta$. However, when comparing Figures~\ref{Figure:width64} and~\ref{Figure:width256}, we observe that the trajectory alignment phenomenon is not observed for the narrower network with $m=64$, whereas the GD trajectories for the wider network with $m=256$ are clearly aligned on the bifurcation diagram. 
This suggests that network width plays a role in the trajectory alignment phenomenon, which is reasonable since wide networks are well approximated by their linearized models, hence Eq.~\eqref{E:canonical} is more accurate. Furthermore, we note that the trajectory alignment phenomenon is also observed for a deeper network with $L=10$, as depicted in Figure~\ref{Figure:depth10}. 

\begin{figure}
  \centering
  \begin{subfigure}{0.25\textwidth}
    \caption{$P(\vz)=\frac{1}{n}\sum_{i=1}^n z_i$}
        \vspace{-4mm}
      \includegraphics[width=\textwidth]{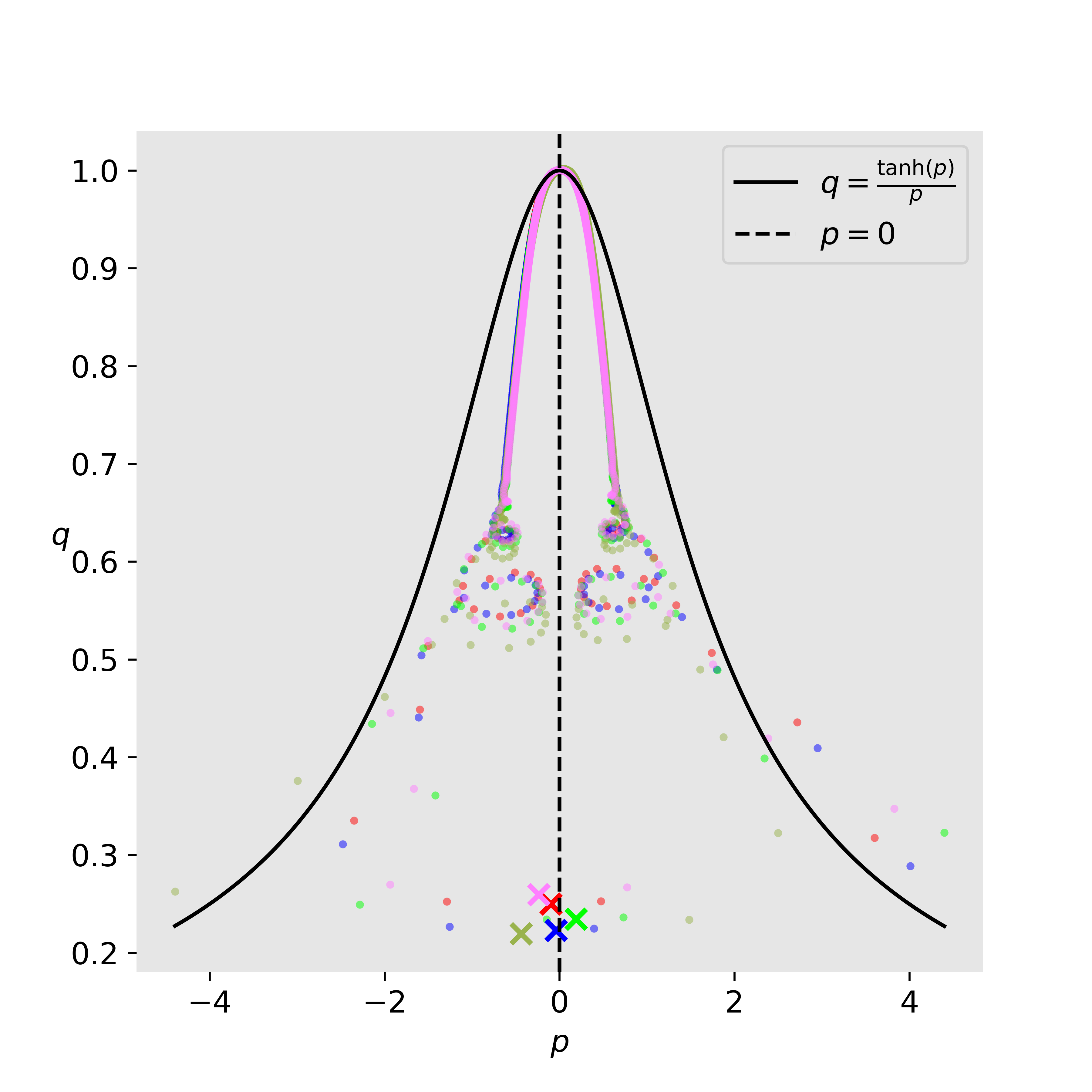}
  
  \end{subfigure}
  \begin{subfigure}{0.25\textwidth}
    \caption{$P(\vz)=\norm{\vz}_2$}
    \vspace{-4mm}
      \includegraphics[width=\textwidth]{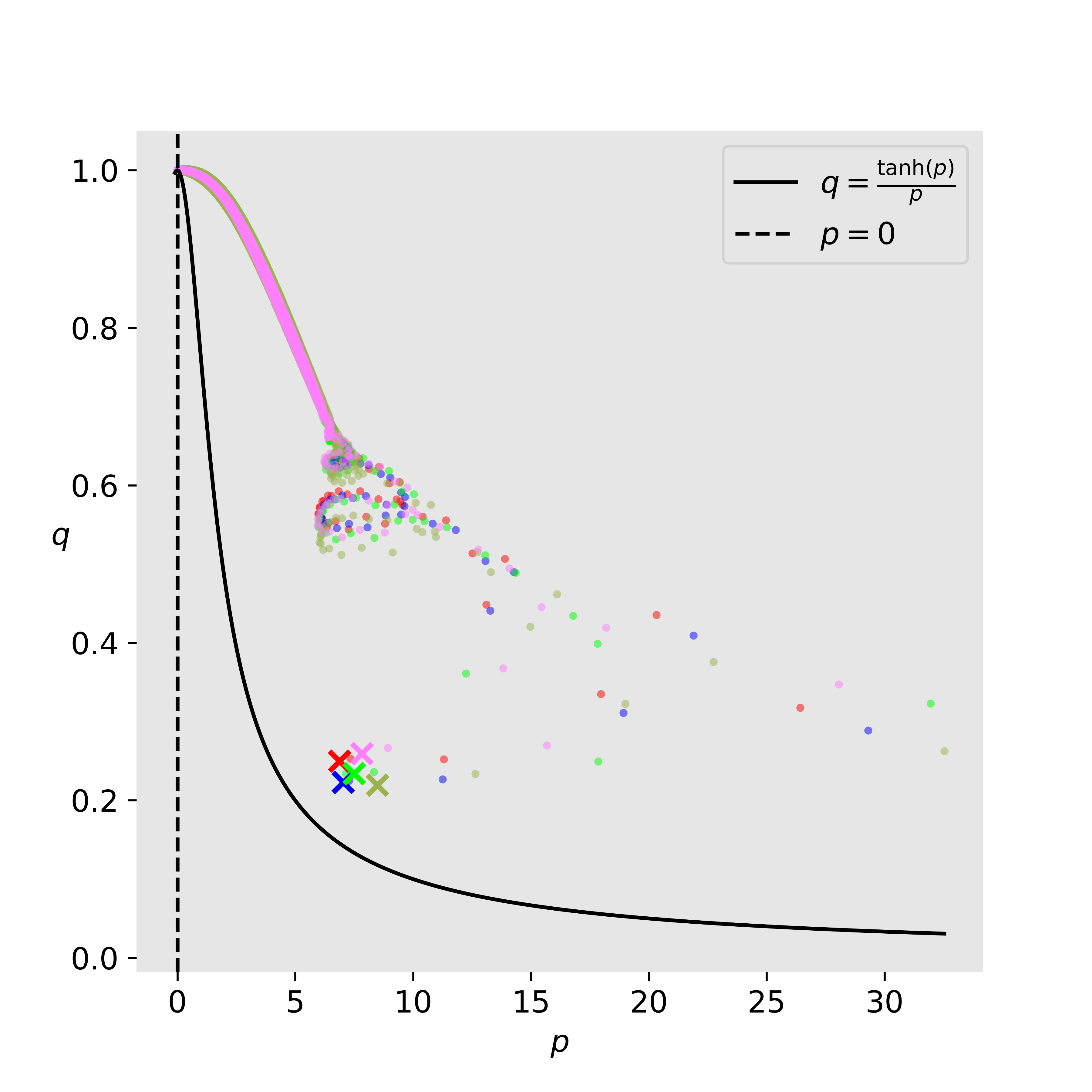}
  
  \end{subfigure}
  \begin{subfigure}{0.25\textwidth}
    \caption{Loss and $q$ vs. $t$}
    \vspace{-4mm}
    \includegraphics[width=\textwidth]{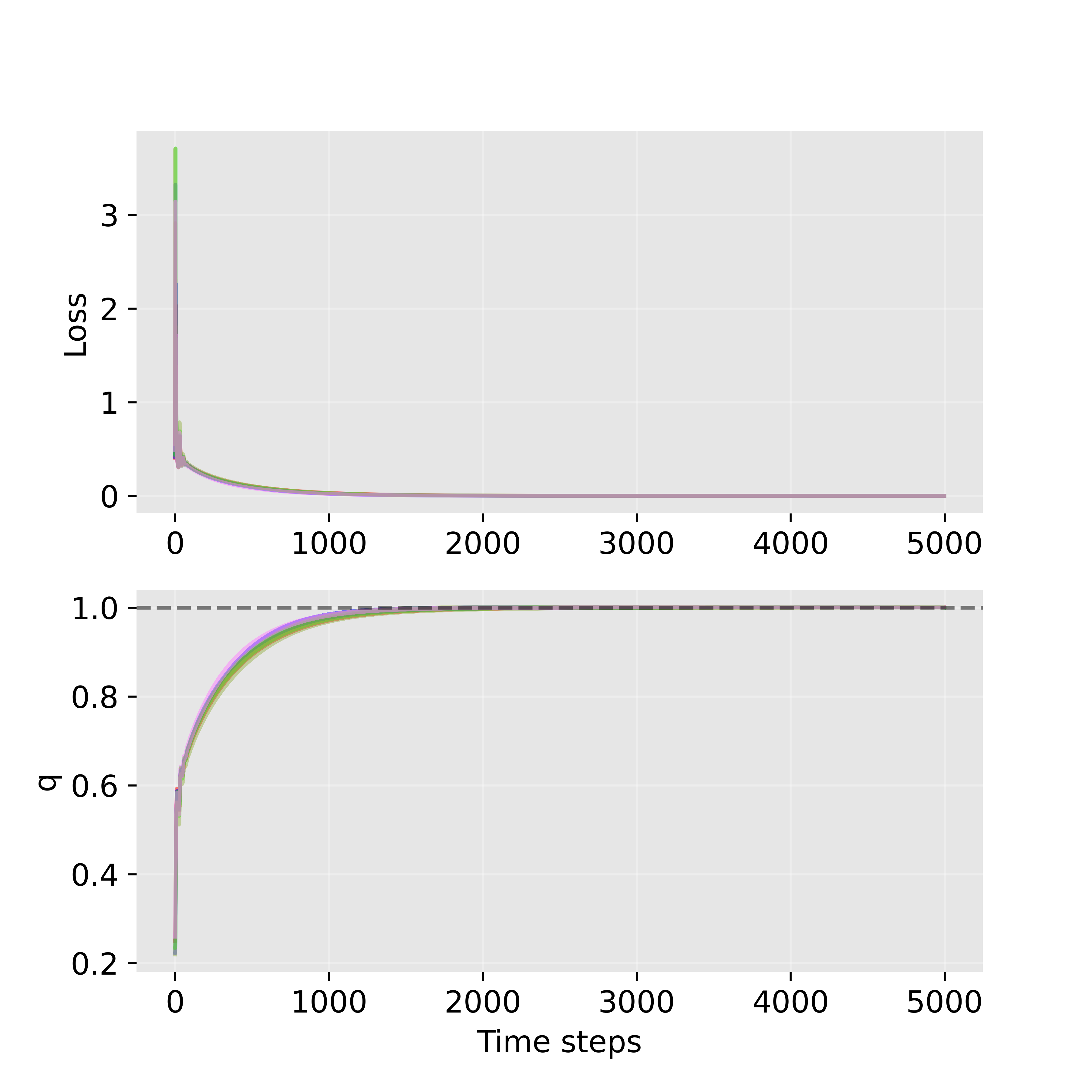}

  \end{subfigure}
  \begin{subfigure}{0.25\textwidth}
      \vspace{-5mm}
    \includegraphics[width=\textwidth]{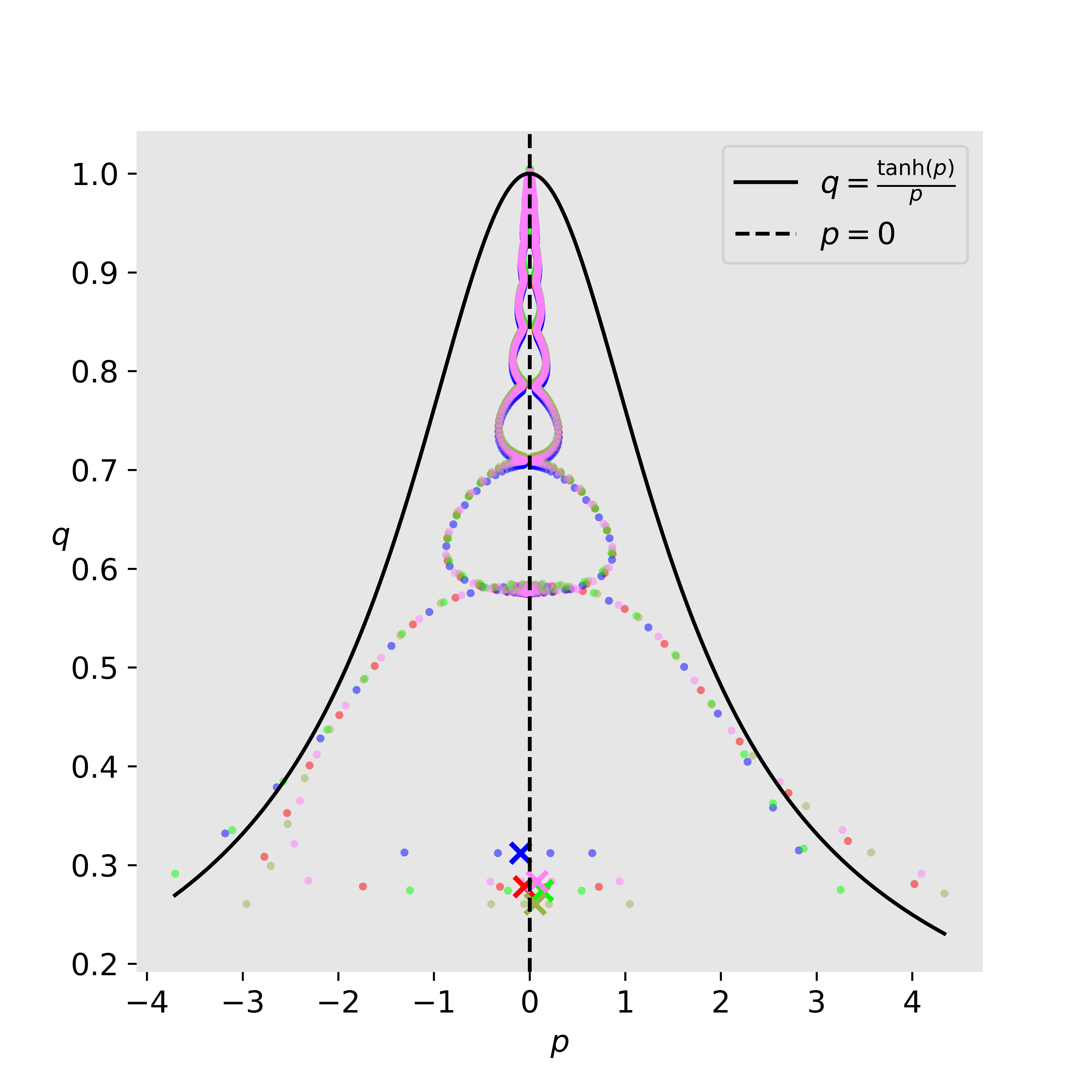}

  \end{subfigure}
  \begin{subfigure}{0.25\textwidth}
      \vspace{-5mm}
    \includegraphics[width=\textwidth]{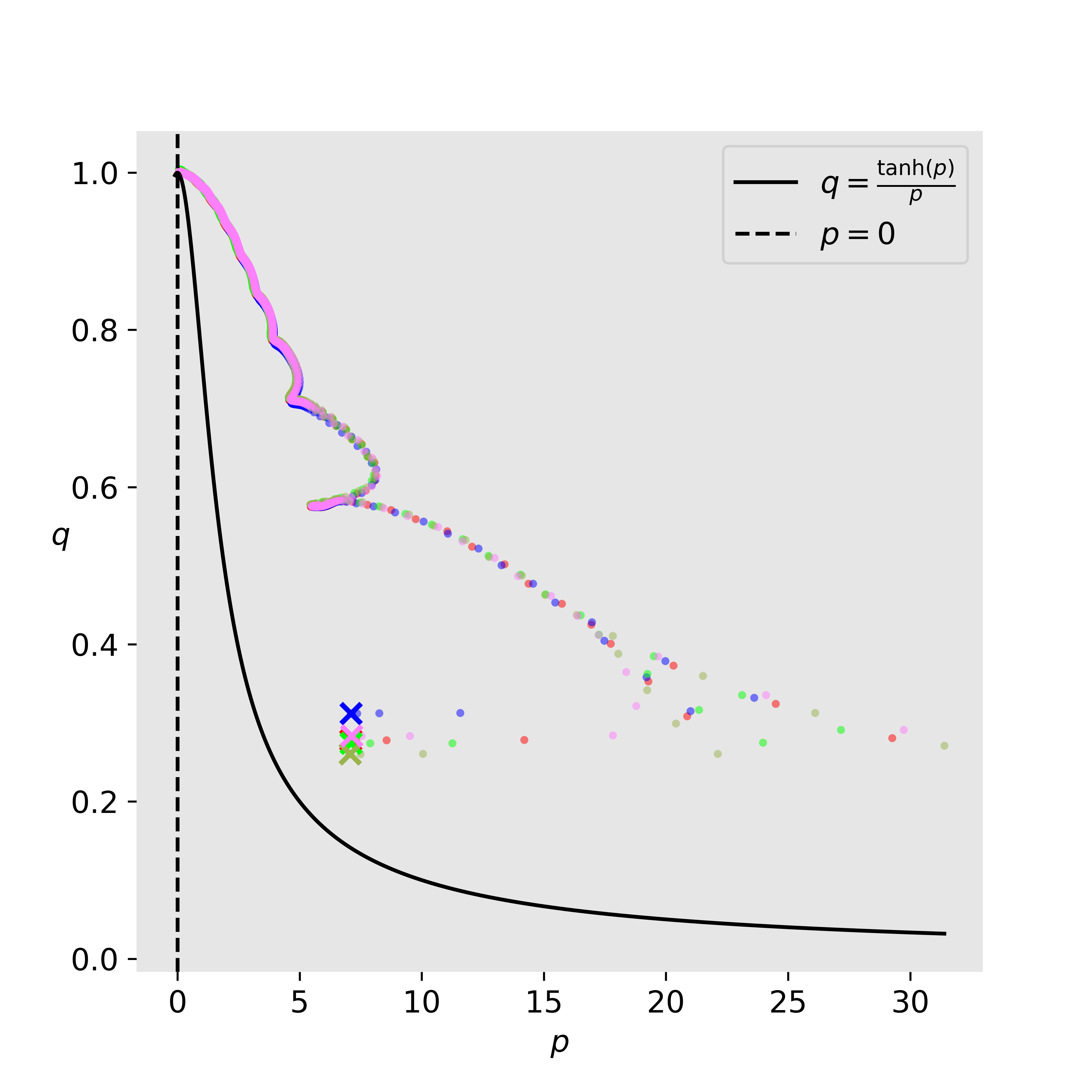}

  \end{subfigure}
  \begin{subfigure}{0.25\textwidth}
      \vspace{-5mm}
    \includegraphics[width=\textwidth]{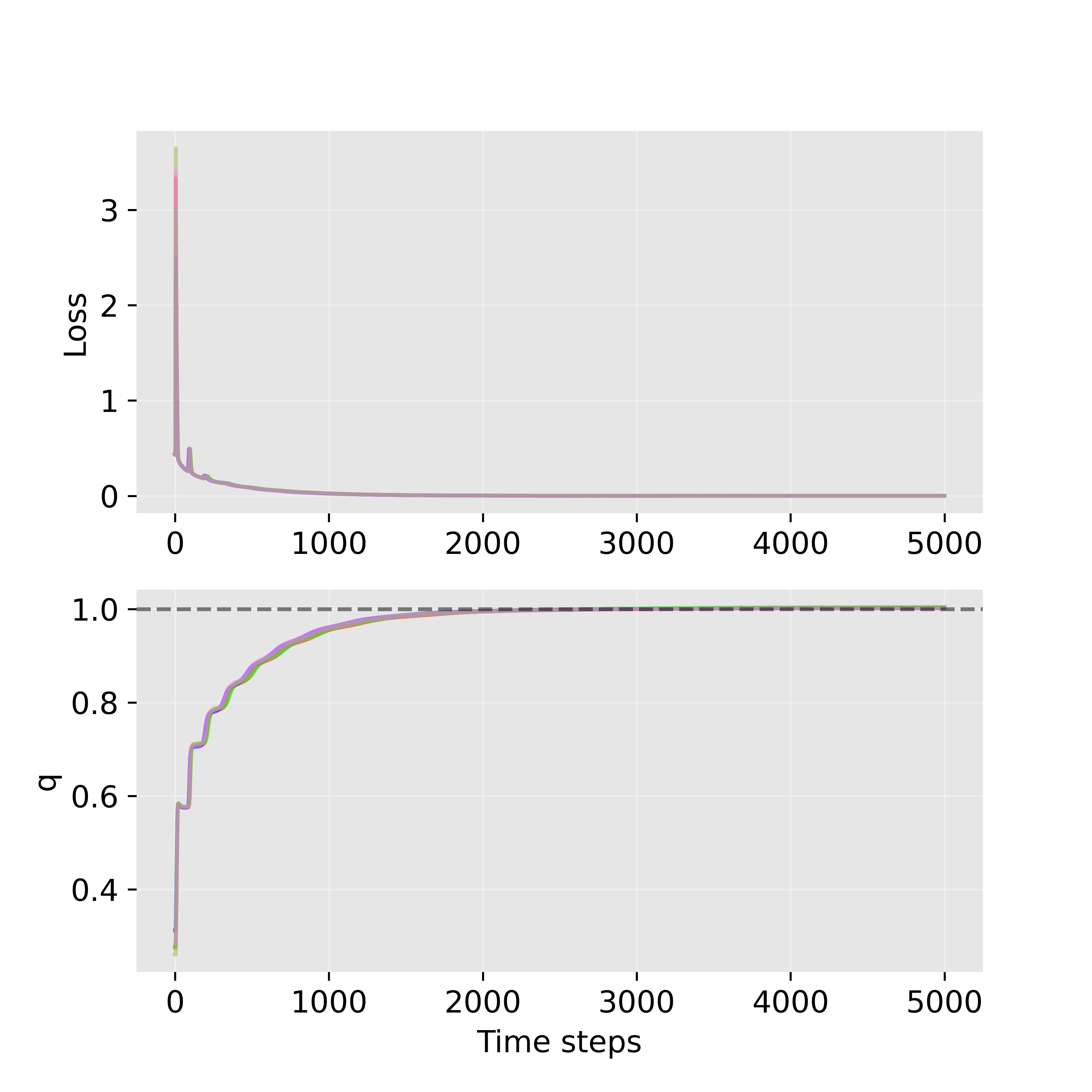}

  \end{subfigure}
\caption{\textbf{Training on a subset of CIFAR-$10$.} The GD trajectories trained on a $50$ example subset of CIFAR-$10$ under the \textbf{generalized canonical reparameterization}~\eqref{E:canonical_multidata} with different choices of $P$ and network architecture. Each color corresponds to a single run of GD. \textbf{Top row:} fully-connected neural network with $\tanh$ activation. \textbf{Bottom row:} CNN with $\tanh$ activation. Full implementation details and further experimental results are given in Appendix~\ref{A:exp:cifar}.}
\label{Figure:exp3}
\end{figure}

\vspace*{-5pt}
\paragraph{Multiple training data points.} In our trajectory alignment analysis, we have primarily focused on training with a single data point. However, it is important to explore the extension of this phenomenon to scenarios with multiple training data points.

To investigate this, we train a neural network on a dataset $\{(\vx_i, y_i)\}_{i=1}^n$, where $\vx_i\in \sR^d$ and $y_i\in \sR$, by minimizing the empirical risk $\mathcal{L}(\bm{\Theta}) \deq \frac{1}{n} \sum_{i=1}^{n} \ell(f(\vx_i; \bm{\Theta}) - y_i)$. Defining $\mX\in \sR^{n\times d}$ as the data matrix and $\vy\in \sR^n$ as the label vector, we introduce a \emph{generalized canonical reparameterization}:
\begin{align}\label{E:canonical_multidata}
  (p, q)\deq \left( P\left(f(\mX; \bm{\Theta}) - \vy\right) ,\, \frac{2n}{\eta\left\lVert \sum_{i=1}^n (\nabla_{\bm{\Theta}} f(\vx_i; \bm{\Theta}))^{\otimes 2} \right\rVert_2} \right),
\end{align}
where $P: \sR^{n}\to \sR$ can be a function such as a mean value or a specific vector norm.

In Figure~\ref{Figure:exp3}, we consider training on a $50$ example subset of CIFAR-$10$ with only $2$ classes and vary the network architecture. We use three-layer fully-connected network with $\tanh$ activation and convolutional network (CNN) with $\tanh$ activation. Under the generalized canonical reparameterization~\eqref{E:canonical_multidata} for various choices of $P$, including the mean and the $\ell_2$ norm, we observe the trajectory alignment phenomenon throughout all settings, indicating a common alignment property of the GD trajectories. However, unlike the single data point case, the alignment does not happen on the curve $q = \frac{\ell'(p)}{p}$. The precise characterization of the coinciding curve is an interesting direction for future research.

\vspace*{-4pt}
\section{Trajectory Alignment of GD: A Theoretical Study}\label{S:fc_linear}
\vspace*{-4pt}

In this section, we study a two-layer fully-connected linear network defined as $f(\bm{x}; \bm{\Theta}) \deq \vv^\top \mU \vx$, where $\mU\in \sR^{d\times m}$, $\vv\in \sR^m$, and $\bm{\Theta}$ denote the collection of all parameters $(\mU, \vv)$.
We consider training this network with a single data point $\{(\vx, 0)\}$, where $\vx\in \sR^d$ and $\norm{\vx}_2=1$. We run GD with step size $\eta$ on the empirical risk
\begin{align*}
  \mathcal{L}(\bm{\Theta}) \deq \ell(f(\vx; \bm{\Theta}) - 0) = \ell(\vv^\top \mU \vx),
\end{align*}
where $\ell$ is a loss function satisfying Assumption~\ref{A:l}. We note that our assumptions on $\ell$ is motivated from the single-neuron linear network analysis ($d=m=1$) by \citet{ahn2023learning}.
\begin{assumption}\label{A:l}
  The loss $\ell$ is convex, even, $1$-Lipschitz, and twice differentiable with $\ell''(0) = 1$.
\end{assumption}
\vspace*{-2pt}
The canonical reparameterization (Definition~\ref{D:canonical}) of $\bm{\Theta} = (\mU, \vv)$ is given by
\begin{align*}
  (p, q) \deq \left(\vv^\top \mU \vx, \, \frac{2}{\eta (\norm{\mU \vx}_2^2 + \norm{\vv}_2^2)}\right).
\vspace*{-2pt}
\end{align*}
Under the canonical reparameterization, the $1$-step update rule of GD can be written as
\begin{align}
  p_{t+1} = \left[ 1 - \frac{2r(p_t)}{q_t} + \eta^2 p_t^2 r(p_t)^2 \right] p_t, \quad
  q_{t+1} =  \left[ 1 - \eta^2 p_t^2 r(p_t) ( 2q_t - r(p_t)) \right]^{-1} q_t \label{E:GD_pq_linear},
\vspace*{-3pt}
\end{align}
where we define the function $r$ by $r(p)\deq \frac{\ell'(p)}{p}$ for $p\ne 0$ and $r(0)\deq 1$. Note that the sequence $(q_t)_{t=0}^{\infty}$ is monotonically increasing if $q_0 \ge \frac{1}{2}$, which is the case our analysis will focus on.

We have an additional assumption on $\ell$ as below, motivated from Lemma~\ref{L:bifurcation}.

\begin{assumption}\label{A:l_r}
  The function $r(p) = \frac{\ell'(p)}{p}$ corresponding to the loss $\ell$ satisfies Assumption~\ref{A:r}.
\end{assumption}
\vspace*{-2pt}
We now present our theoretical results on this setting, and defer the proofs to Appendix~\ref{S:A:linear_fc}.

\vspace*{-3pt}
\subsection{Gradient flow regime}
\vspace*{-3pt}

We first consider the gradient flow regime, where $q$ is initialized with $q_0 > 1$.

\begin{restatable}[gradient flow regime]{theorem}{TheoremGFdiag}\label{T:gradient_flow_linear}
  Let $\eta\in (0, \frac{2}{33})$ be a fixed step size and $\ell$ be a loss function satisfying Assumptions~\ref{A:l} and~\ref{A:l_r}. Suppose that the initialization $(p_0, q_0)$ satisfies $\abs{p_0}\le 1$ and $q_0 \in \left(\frac{2}{2-\eta}, \min\left\{ \frac{1}{16\eta}, \frac{r(1)}{2\eta} \right\}\right)$. Consider the GD trajectory characterized in Eq.~\eqref{E:GD_pq_linear}. Then, the GD iterations $(p_t, q_t)$ converge to the point $(0, q^*)$ such that
  \[
    q_0 \le q^* \le \exp ( C \eta^2) q_0 \le 2q_0,
  \]
  where $C = 8 q_0 \left[\min \left\{ \tfrac{2(q_0-1)}{q_0}, \tfrac{r(1)}{2q_0} \right\}\right]^{-1} > 0$.
\end{restatable}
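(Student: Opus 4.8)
The plan is to track the two coupled scalar recursions in Eq.~\eqref{E:GD_pq_linear} and show that the $\eta^2$-order perturbation terms are summable, so that $q_t$ increases only by a bounded multiplicative factor while $p_t\to 0$. First I would record the elementary facts about $r$ that follow from Assumptions~\ref{A:l} and~\ref{A:l_r}: since $\ell$ is convex, even, $1$-Lipschitz with $\ell''(0)=1$, we have $0\le r(p)\le 1$, $|\ell'(p)|=|p|\,r(p)\le 1$, and $r$ is nonincreasing in $|p|$; in particular $r(p_t)\le 1$ and $|p_t r(p_t)|=|\ell'(p_t)|\le 1$ throughout. These bounds let me control the perturbation term in the $q$-update: $\eta^2 p_t^2 r(p_t)(2q_t-r(p_t)) \le 2\eta^2 q_t (p_t r(p_t))^2 \le 2\eta^2 q_t$, which is small since $q_t=\mathcal{O}(1/\eta)$ by hypothesis, so $q_{t+1}\le (1-2\eta^2 q_t)^{-1} q_t \le q_t(1+4\eta^2 q_t)$ as long as $2\eta^2 q_t\le \tfrac12$ (which holds initially and, I will show, is maintained).

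The core of the argument is a coupled induction showing that for all $t$: (a) $|p_t|\le 1$, (b) $q_t\le 2q_0$, and (c) $q_t$ is increasing. Given these, the key quantity to estimate is $\prod_t q_{t+1}/q_t = q^*/q_0$. Taking logs, $\log(q^*/q_0)\le \sum_t \log(1+4\eta^2 q_t)\le 4\eta^2\sum_t q_t$, so I need a summability bound on $\sum_t q_t(p_t r(p_t))^2$, or more simply on $\sum_t (p_t r(p_t))^2$, combined with $q_t\le 2q_0$. This is where I would exploit the $p$-recursion: writing $p_{t+1}^2 = p_t^2\big(1-\tfrac{2r(p_t)}{q_t}+\eta^2 p_t^2 r(p_t)^2\big)^2$ and using $q_t>1$ so that $1-\tfrac{2r(p_t)}{q_t}\in(-1,1)$, together with the smallness of the $\eta^2$ term, the bracket has absolute value bounded by some $\rho<1$ uniformly (here $q_0>\tfrac{2}{2-\eta}$ forces $1-\tfrac{2r(p_t)}{q_t}\ge 1-\tfrac{2}{q_0}>-(1-\eta)$, giving a bound away from $-1$; the upper side is handled since $r(p_t)\ge r(1)>0$ when $|p_t|\le 1$, forcing $1-\tfrac{2r(p_t)}{q_t}\le 1-\tfrac{2r(1)}{q^*}$ bounded away from $1$ using $q^*\le 2q_0\le \min\{\tfrac{1}{8\eta},\tfrac{r(1)}{\eta}\}$). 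Hence $p_t^2$ decays geometrically: $p_t^2\le \rho^{2t}p_0^2$, which gives $\sum_t (p_t r(p_t))^2\le \sum_t p_t^2\le \tfrac{1}{1-\rho^2}$, and then $\log(q^*/q_0)\le 8\eta^2 q_0\cdot\tfrac{1}{1-\rho^2}$. Choosing $\rho$ explicitly in terms of $\min\{2(q_0-1)/q_0,\, r(1)/(2q_0)\}$ recovers the stated constant $C$; the bound $\exp(C\eta^2)q_0\le 2q_0$ then follows from $\eta<\tfrac{2}{33}$ and the range of $q_0$, closing the induction on (b).

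The main obstacle is the interaction between the two inductive hypotheses: the geometric decay rate $\rho$ for $p_t$ depends on an a priori upper bound for $q_t$ (to keep $1-2r(p_t)/q_t$ bounded away from $+1$ and to keep the $\eta^2$ correction negligible), while the bound $q_t\le 2q_0$ depends on the summability of $p_t^2$, which in turn needs the decay rate $\rho$. I would resolve this by a bootstrap: first assume the weaker bound $q_t\le \min\{\tfrac{1}{16\eta},\tfrac{r(1)}{2\eta}\}$ (which is an open condition and propagates as long as $q_{t+1}\le q_t(1+4\eta^2 q_t)$ keeps it inside the interval — true for, say, a number of steps, or for all steps once we verify the total growth is $<2$), derive the geometric decay of $p_t$ and hence the summability, then feed this back to upgrade to $q_t\le 2q_0$, and finally verify consistency. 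Care is needed to make the constants match exactly; the residual check that $C\eta^2\le\log 2$ under the stated hypotheses is routine once $\rho$ is pinned down, using that $q_0\ge \tfrac{2}{2-\eta}$ makes $2(q_0-1)/q_0$ bounded below and $q_0\le\tfrac{1}{16\eta}$ makes $\eta^2 q_0\le \eta/16$ small.
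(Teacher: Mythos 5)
Your proposal takes essentially the same route as the paper: a joint induction maintaining geometric decay of $|p_t|$ at a rate $\rho = 1 - \min\{2(q_0-1)/q_0,\ r(1)/(2q_0)\}$ together with the envelope $q_0 \le q_t \le 2q_0$, then summing $\log(q_{t+1}/q_t) = \mathcal{O}(\eta^2 q_0 p_t^2)$ along a geometric series to get the $\exp(C\eta^2)$ bound; the paper packages your ``bootstrap'' as a single simultaneous induction hypothesis, which is the cleanest way to break the circularity you flag. One small algebraic slip: your intermediate bound $\eta^2 p_t^2 r(p_t)(2q_t - r(p_t)) \le 2\eta^2 q_t (p_t r(p_t))^2$ does not hold in general (it would need $2q_t - r(p_t) \le 2q_t r(p_t)$); the correct intermediate step is $\eta^2 p_t^2 r(p_t)(2q_t - r(p_t)) \le 4\eta^2 q_0 p_t^2$ using $r(p_t)\le 1$ and $2q_t - r(p_t) \le 4q_0$, which yields the same downstream conclusion.
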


Theorem~\ref{T:gradient_flow_linear} implies that in gradient flow regime, GD with initialization $\bm{\Theta}_0 = (\mU_0, \vv_0)$ and step size $\eta$ converges to $\bm{\Theta}^*$ which has the sharpness bounded by:
\begin{align*}
  (1 - C \eta^2) (\norm{\mU_0 \vx}_2^2 + \norm{\vv_0}_2^2) \le \lambda_{\max}(\bm{\Theta}^*) \le (\norm{\mU_0 \vx}_2^2 + \norm{\vv_0}_2^2).
\end{align*}
Hence, for small step size $\eta$, if the initialization satisfies $\norm{\mU_0 \vx}_2^2 + \norm{\vv_0}_2^2 < \frac{2}{\eta}-1$, then the limiting sharpness is slightly below $\norm{\mU_0 \vx}_2^2 + \norm{\vv_0}_2^2$. Note that we assumed the bound $\abs{p_0}\le 1$ for simplicity, but our proof also works with the assumption $\abs{p_0}\le K$ for any positive constant $K$ modulo some changes in numerical constants. Moreover, our assumption on the upper bound of $q_0$ is $1/\eta$ up to a constant factor, which covers most realistic choices of initialization.

\vspace*{-3pt}
\subsection{EoS regime}
\vspace*{-3pt}

We now provide rigorous results in the EoS regime, where the GD trajectory aligns on the bifurcation diagram $q = r(p)$. To establish these results, we introduce additional assumptions on the loss $\ell$.

\begin{assumption}\label{A:l_r2}
  The function $r(z) = \frac{\ell'(z)}{z}$ is $C^4$ on $\sR$ and satisfies
  \vspace*{-5pt}
  \begin{enumerate}[label=(\roman*),leftmargin=20pt,itemsep=0pt]
    \item\label{A:l_r2:r'/r2} $z\mapsto \frac{r'(z)}{r(z)^2}$ is decreasing on $\sR$, 
    \item\label{A:l_r2:zr'/r} $z\mapsto \frac{zr'(z)}{r(z)}$ is decreasing on $z>0$ and increasing on $z<0$,
    \item\label{A:l_r2:zr/r'} $z\mapsto \frac{zr(z)}{r'(z)}$ is decreasing on $z>0$ and increasing on $z<0$.
\end{enumerate}
\end{assumption}

We note that both the $\log$-$\cosh$ loss and the square root loss satisfy Assumptions~\ref{A:l},~\ref{A:l_r}, and~\ref{A:l_r2}.

\begin{restatable}[EoS regime, Phase~\RomanNumeralCaps{1}]{theorem}{TheoremEoSearlylinear}\label{T:EoS1_linear}
  Let $\eta$ be a small enough step size and $\ell$ be a loss function satisfying Assumptions~\ref{A:l},~\ref{A:l_r}, and~\ref{A:l_r2}. Let $z_0 \deq \sup_z\{\frac{zr'(z)}{r(z)}\ge -\frac{1}{2}\}$ and $c_0 \deq \max\{r(z_0), \frac{1}{2}\}$. Let $\delta\in (0, 1-c_0)$ be any given constant. Suppose that the initialization $(p_0, q_0)$ satisfies $\abs{p_0}\le 1$ and $q_0\in (c_0, 1-\delta)$. Consider the reparameterized GD trajectory characterized in Eq.~\eqref{E:GD_pq_linear}. We assume that for all $t\ge 0$ such that $q_t<1$, we have $p_t\ne 0$. Then, there exists a time step $t_a = \mathcal{O}_{\delta, \ell}(\log(\eta^{-1}))$, such that for any $t\ge t_a$,
  \begin{align*}
    \frac{q_t}{r(p_t)} = 1 + h(p_t)\eta^2 + \mathcal{O}_{\delta, \ell}(\eta^4),
    \vspace*{-7pt}
  \end{align*}
  where $h(p) \deq - \frac{1}{2} \left( \frac{p r(p)^3}{r'(p)} + p^2 r(p)^2 \right)$ for $p\ne0$ and $h(p) \deq - \frac{1}{2r''(0)}$ for $p = 0$.
\end{restatable}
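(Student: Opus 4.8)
The plan is to track the ratio $s_t \deq q_t / r(p_t)$ along the reparameterized GD trajectory \eqref{E:GD_pq_linear} and show it contracts to within $1 + \mathcal{O}(\eta^2)$ of its equilibrium after $\mathcal{O}(\log(\eta^{-1}))$ steps. The guiding intuition is Lemma~\ref{L:bifurcation}: when $\eta = 0$, the map $f_q$ has the stable period-$2$ orbit $\{\pm\hat r(q)\}$, i.e. $r(p) = q$, so $s = 1$ is the attracting value; the $\eta^2$ terms in \eqref{E:GD_pq_linear} perturb this fixed point to $s = 1 + h(p)\eta^2 + \mathcal{O}(\eta^4)$. So first I would derive the exact one-step recursion for $s_t$ (or for $\log s_t$, which is cleaner since $q_{t+1}/q_t$ is given multiplicatively): combining the two updates in \eqref{E:GD_pq_linear},
\begin{align*}
  \log s_{t+1} - \log s_t = \log\frac{q_{t+1}}{q_t} + \log\frac{r(p_t)}{r(p_{t+1})},
\end{align*}
where $p_{t+1} = (1 - 2/s_t + \eta^2 p_t^2 r(p_t)^2)\,p_t$. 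I would then expand the right-hand side as a function of $(s_t, p_t)$, treating $\eta^2$ as small: to leading order $p_{t+1} \approx (1 - 2/s_t)p_t$, and since near the orbit $s_t \approx 1$ we have $p_{t+1} \approx -p_t$, so by evenness of $r$, $r(p_{t+1}) \approx r(p_t)$ and the whole increment is $\mathcal{O}(s_t - 1) + \mathcal{O}(\eta^2)$.

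Second, I would establish the contraction. The claim is that the map $s_t \mapsto s_{t+1}$, for $p_t$ in the relevant range, is a contraction toward a value $1 + \mathcal{O}(\eta^2)$ with a contraction factor bounded away from $1$ by a constant depending on $\delta$ and $\ell$ — this is where Assumption~\ref{A:l_r2} enters. Specifically, the monotonicity conditions \ref{A:l_r2:r'/r2}--\ref{A:l_r2:zr/r'} are exactly what is needed to control the derivative of the two-step composition and to guarantee that the linearization of the $s$-dynamics at the perturbed fixed point has modulus $\le 1 - c(\delta,\ell)$ uniformly. I would also need to confirm that $q_t$ stays below $1$ (so we remain in the EoS regime and $p_t\ne0$ by hypothesis) for all $t \le t_a$ — this follows because $q_t$ is monotonically increasing but the total increase over $\mathcal{O}(\log\eta^{-1})$ steps is $\mathcal{O}(\eta^2\log\eta^{-1})$, negligible against the gap $\delta$. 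Then a standard geometric-convergence argument gives: after $t_a = \mathcal{O}_{\delta,\ell}(\log\eta^{-1})$ steps, $|s_{t_a} - (1 + h(p_{t_a})\eta^2)| = \mathcal{O}_{\delta,\ell}(\eta^4)$, and the same bound persists for all $t \ge t_a$ because the fixed point is stable and $p_t$ varies slowly (its change per step is $\mathcal{O}(\eta^2)$ once $s_t$ is near $1$, so $h(p_t)$ drifts by $\mathcal{O}(\eta^2)$, absorbed into the error).

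Third, I would pin down the exact form of $h$. At the $\eta^2$-order fixed point, $s_{t+1} = s_t = s^*$ with $p_{t+1} = -p_t$ forced; plugging $s^* = 1 + h\eta^2$ into the expanded recursion and collecting the $\eta^2$ coefficient yields an equation $h = (\text{contraction slope})\cdot h + (\text{forcing term from the explicit }\eta^2 p_t^2 r(p_t)^2\text{ and }\eta^2 p_t^2 r(p_t)(2q_t - r(p_t))\text{ terms})$, solved to give $h(p) = -\tfrac12\big(\tfrac{p r(p)^3}{r'(p)} + p^2 r(p)^2\big)$; the $p = 0$ value $-1/(2r''(0))$ comes out by L'Hôpital since $r'(p)/p \to r''(0)$. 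The main obstacle I expect is the second step: verifying that the $s$-recursion is genuinely a uniform contraction over the whole admissible range of $p_t$ (including $p_t$ near $0$, where $r'(p_t)$ in the denominator of $h$ could look singular, and $p_t$ large, where $r(p_t)\to 0$). This is precisely the role of the technical Assumption~\ref{A:l_r2}, and carefully matching each monotonicity clause to the sign/magnitude control it provides in the derivative estimate — while keeping all $\mathcal{O}_{\delta,\ell}$ constants uniform — is the delicate bookkeeping part of the argument.
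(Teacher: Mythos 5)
Your plan follows the same route as the paper: track $s_t = q_t/r(p_t)$, set up a Mean-Value expansion of $1/r(p_{t+1})$ about $-p_t$ using evenness of $r$, show a contraction of $s_t$ toward $1+\mathcal{O}(\eta^2)$, and then read off $h$ by balancing the $\eta^2$-order terms at the approximate fixed point. The computation you sketch for $h$ — solving $h = (1 + \tfrac{2pr'}{r})h + (\text{forcing})$ with forcing from the $\eta^2 p^2 r^2$ perturbation of the $p$-update and the $\eta^2 p^2 r(2q - r)$ factor in the $q$-update — is exactly the paper's Lemma~\ref{L:s_t_convergence_tight_linear}.

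The genuine gap is in the second step, and it is not bookkeeping: the contraction factor $1 + \tfrac{2p_t r'(p_t)}{r(p_t)}$ genuinely degenerates to $1$ as $p_t \to 0$, so the map $s_t \mapsto s_{t+1}$ is \emph{not} a uniform contraction over the admissible $p_t$ range, and no clever use of Assumption~\ref{A:l_r2} alone can make it so. The paper resolves this with a qualitatively different argument (Lemma~\ref{L:p_t_small_linear}): when $r(p_t) \ge 1 - \delta/4$ (so $|p_t|$ is small), we have $s_t < 1 - \Omega(\delta)$, hence $|p_{t+1}/p_t| = 2/s_t - 1 - \mathcal{O}(\eta^2) \ge \tfrac{4}{4-\delta}$, so $|p_t|$ grows exponentially and exits the small-$p$ region in $\mathcal{O}_{\delta,\ell}(1)$ steps, \emph{before} the contraction estimate is ever invoked. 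The contraction lemma (Lemma~\ref{L:s_t_convergence_linear}) is only applied under the hypothesis $r(p_t) \le 1 - \delta/4$, where the factor is bounded away from $1$. Without this escape mechanism your argument would stall.

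A second, smaller omission: you start the contraction from $s_0$, but $s_0 = q_0/r(p_0)$ can be as large as $(1-\delta)/r(1)$, which may exceed $2$. The contraction estimate only kicks in once $s_t < 2(1+\eta^2)^{-1}$; the paper needs a separate preliminary lemma (Lemma~\ref{L:s_t0_linear}) showing that in $\mathcal{O}_{\delta,\ell}(1)$ steps $s_t$ drops to $\le \tfrac{2}{2-r(1)}$. Relatedly, the paper stages the contraction in two passes — a coarse one (Lemma~\ref{L:s_t_convergence_linear}) that brings $|s_t - 1|$ to $\mathcal{O}(\eta^2)$, and then the fine one (Lemma~\ref{L:s_t_convergence_tight_linear}) that controls $|s_t - 1 - h(p_t)\eta^2|$ to $\mathcal{O}(\eta^4)$ — because the $h(p_{t+1}) = h(p_t) + \mathcal{O}(\eta^2 p_t^2)$ Taylor step in the fine pass is only valid after $s_t$ is already $\mathcal{O}(\eta^2)$-close to $1$. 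Your writeup collapses these two passes; spelling them out is what actually makes the $\mathcal{O}(\eta^4)$ error legible.
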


One can check that for $\log$-$\cosh$ and square-root losses, the ranges of $h$ are $(0,3/4]$ and $(0,1/2]$, respectively.
Theorem~\ref{T:EoS1_linear} implies that in the early phase of training ($t\le t_a = \mathcal{O}(\log(\eta^{-1}))$), GD iterates $(p_t, q_t)$ approach closely to the bifurcation diagram $r(p)=q$, which we called Phase~\RomanNumeralCaps{1} in Section~\ref{S:trajectory_alignment}. In Phase~\RomanNumeralCaps{2}, GD trajectory aligns on this curve in the remaining of the training ($t\ge t_a$). Theorem~\ref{T:EoS2_linear} provides an analysis on Phase~\RomanNumeralCaps{2} stated as below.

\begin{restatable}[EoS regime, Phase~\RomanNumeralCaps{2}]{theorem}{TheoremEoSlatediag}\label{T:EoS2_linear}
  Under the same settings as in Theorem~\ref{T:EoS1_linear}, there exists a time step $t_b = \Omega((1-q_{0})\eta^{-2})$ such that $q_{t_b}\le 1$ and $q_{t} > 1$ for any $t > t_b$. Moreover, the GD iterates $(p_t, q_t)$ converge to the point $(0, q^*)$ such that
  \[
    q^* = 1 - \frac{\eta^2}{2r''(0)} + \mathcal{O}_{\delta, \ell}(\eta^4).
  \]
\end{restatable}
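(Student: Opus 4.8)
}
The plan is to run the whole Phase~\RomanNumeralCaps{2} argument directly on top of Theorem~\ref{T:EoS1_linear}: for every $t\ge t_a$ we may use the alignment identity
\[
  \frac{q_t}{r(p_t)} = 1 + h(p_t)\,\eta^2 + \mathcal{O}_{\delta,\ell}(\eta^4),
\]
whose error term is \emph{uniform in $t$}. Since $(q_t)$ is monotonically increasing (recall $q_0\ge\tfrac12$, by the remark following Eq.~\eqref{E:GD_pq_linear}), the whole of Phase~\RomanNumeralCaps{2} reduces to a monotone-convergence argument plus a summability estimate, after which the value of $q^*$ is simply read off from the alignment identity evaluated at $p=0$.

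I would proceed in four steps. \textbf{(1) $q_t\nearrow q^*<\infty$.} Using $r(p)\le r(0)=1$ for all $p$ (Assumption~\ref{A:r}) and the boundedness of $h$ (a consequence of Assumption~\ref{A:l_r2}; e.g.\ $h\in(0,3/4]$ for $\log$-$\cosh$ and $h\in(0,1/2]$ for square-root loss), the alignment identity gives $q_t = r(p_t)\bigl(1+h(p_t)\eta^2+\mathcal{O}_{\delta,\ell}(\eta^4)\bigr)\le 1+\mathcal{O}_{\delta,\ell}(\eta^2)$ for all $t\ge t_a$, so the increasing sequence $(q_t)$ converges to some $q^*\le 1+\mathcal{O}_{\delta,\ell}(\eta^2)$. \textbf{(2) $p_t\to0$.} From Eq.~\eqref{E:GD_pq_linear},
\[
  q_{t+1}-q_t = q_t\cdot\frac{\eta^2 p_t^2\, r(p_t)\bigl(2q_t-r(p_t)\bigr)}{1-\eta^2 p_t^2\, r(p_t)\bigl(2q_t-r(p_t)\bigr)} \ge c_1\,\eta^2 p_t^2
\]
for a constant $c_1=c_1(q_0,\ell)>0$, using $q_t\ge q_0>c_0\ge\tfrac12$, $r(p_t)\ge r(1)>0$ (from the a priori bound $\abs{p_t}\le1$ maintained in the proof of Theorem~\ref{T:EoS1_linear}), and $2q_t-r(p_t)\ge 2q_0-1>0$; summing over $t\ge t_a$ gives $\sum_{t\ge t_a}p_t^2\le (q^*-q_{t_a})/(c_1\eta^2)<\infty$, hence $p_t\to0$. \textbf{(3) Identify $q^*$.} Let $t\to\infty$ in the alignment identity: $r(p_t)\to r(0)=1$ and $h(p_t)\to h(0)=-1/(2r''(0))$ by continuity, while the error stays $\mathcal{O}_{\delta,\ell}(\eta^4)$ by uniformity in $t$, so
\[
  q^* = 1 + h(0)\,\eta^2 + \mathcal{O}_{\delta,\ell}(\eta^4) = 1 - \frac{\eta^2}{2r''(0)} + \mathcal{O}_{\delta,\ell}(\eta^4).
\]
In particular $q^*>1$ (as $r''(0)<0$), which, combined with $q_0<1-\delta$ and monotonicity, forces $q_t$ to cross $1$; hence $t_b\deq\max\{t:q_t\le1\}$ is finite, $q_{t_b}\le1$, and $q_t>1$ for all $t>t_b$. \textbf{(4) Lower bound on $t_b$.} The same computation as in Step~2, now with $q_t\le1$ for $t\le t_b$ (and $p_t$ bounded throughout, including the short Phase~\RomanNumeralCaps{1}), yields $q_{t+1}-q_t\le C\eta^2$ there; summing from $0$ to $t_b$ and using $q_{t_b+1}>1>q_0$ gives $(t_b+1)C\eta^2>1-q_0$, i.e.\ $t_b=\Omega\bigl((1-q_0)\eta^{-2}\bigr)$.

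The main obstacle is the a priori control hidden in ``$\abs{p_t}\le1$'' and ``$h$ bounded'': to invoke the alignment identity and to make the increment estimates two-sided, one must know that $p_t$ never leaves the regime in which Theorem~\ref{T:EoS1_linear} was proven, so that $r(p_t)$ stays bounded away from $0$ — this is cleanest if $\abs{p_t}\le1$ is carried as part of the induction underlying Theorem~\ref{T:EoS1_linear} rather than re-derived here. Conceptually, the one genuinely nontrivial point is that the $\mathcal{O}(\eta^2)$ term in the alignment identity shifts the effective bifurcation point from $q=1$ to $q=q^*=1-\eta^2/(2r''(0))+\mathcal{O}(\eta^4)$: GD rides the perturbed bifurcation curve, and $p_t$ is driven to $0$ precisely as $q_t$ reaches $q^*$, so the limiting sharpness $2/(\eta q^*) = 2/\eta - \eta/r''(0) + \mathcal{O}(\eta^3)$ is pinned down to an $\mathcal{O}(\eta^3)$ window, much tighter than prior $\mathcal{O}(\eta)$-level bounds.
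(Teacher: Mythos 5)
Your argument is correct, but it reaches $p_t\to 0$ and the existence of $t_b$ by a genuinely different route than the paper. The paper first proves that $q_t$ must cross $1$ by contradiction: if $q_t\le 1$ forever, then (by the alignment identity and the observation that $g(p)\deq r(p)-1+\tfrac12 h(p)\eta^2$ is positive near $p=0$) $\abs{p_t}$ stays bounded away from zero for $t\ge t_a$, so $q_t/q_{t+1}\le 1-\tfrac12\eta^2\ell'(\epsilon)^2$ and $q_t$ grows geometrically, a contradiction; only after $t_b$ is in hand does the paper argue $\abs{p_{t+1}/p_t}\le 2/q_{t_b+1}-1<1$ for $t>t_b$, so $\abs{p_t}$ decays geometrically and $q^*$ is read off from the alignment. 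You bypass both the contradiction and the geometric-decay step: you obtain $q_t\nearrow q^*$ directly from boundedness, prove $p_t\to 0$ by the summability estimate $\sum_{t\ge t_a}p_t^2\le (q^*-q_{t_a})/(c_1\eta^2)<\infty$ (an argument the paper never uses), identify $q^*$ by passing to the limit in the alignment identity, and then extract the existence of $t_b$ as a corollary of $q^*>1$. Both approaches rely in exactly the same way on the uniform-in-$t$ alignment from Theorem~\ref{T:EoS1_linear} and on the $\mathcal{O}(\eta^2)$ increment bound from Lemma~\ref{L:pq_bound_linear} for the lower bound $t_b=\Omega((1-q_0)\eta^{-2})$; your route is arguably cleaner because the summability of $\sum p_t^2$ handles the ``stuck away from zero'' and ``converging to zero'' regimes in one stroke, without needing the explicit escape rate $\ell'(\epsilon)^2$ that powers the paper's contradiction.

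One small slip: you invoke $\abs{p_t}\le 1$ to get $r(p_t)\ge r(1)>0$, attributing the bound to the proof of Theorem~\ref{T:EoS1_linear}, but Lemma~\ref{L:pq_bound_linear} only maintains $\abs{p_t}\le 4$ (and only while $q_t\le 1$). Replace $r(1)$ by $r(4)$, or — more robustly, and covering $t>t_b$ as well — note that once the alignment holds, $r(p_t)=q_t\bigl(1+h(p_t)\eta^2+\mathcal{O}_{\delta,\ell}(\eta^4)\bigr)^{-1}\ge q_0/2$ for small $\eta$, which simultaneously bounds $r(p_t)$ away from zero and caps $\abs{p_t}$ by $\hat{r}(q_0/2)$; either fix keeps the summability estimate intact.
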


Theorem~\ref{T:EoS2_linear} implies that in EoS regime, GD with step size $\eta$ converges to $\bm{\Theta}^*$ with sharpness
\begin{align*}
  \lambda_{\max}(\bm{\Theta}^*) = \frac{2}{\eta} - \frac{\eta}{\abs{r''(0)}}  + \mathcal{O}_{\delta, \ell}(\eta^3).
\end{align*}
Note that \citet{ahn2023learning} study the special case $d=m=1$ and prove that the limiting sharpness is between $2/\eta - \mathcal{O}(\eta)$ and $2/\eta$. Theorem~\ref{T:EoS2_linear} provides tighter analysis on the limiting sharpness in more general settings, reducing the gap between the upper bound and lower bound to only $\mathcal{O}(\eta^3)$. Also, our result is the first to prove that the limiting sharpness in the EoS regime is bounded away from $2/\eta$ by a nontrivial margin.

We also study the evolution of sharpness along the GD trajectory and prove that progressive sharpening (i.e., sharpness increases) occurs during Phase~\RomanNumeralCaps{2}.

\begin{restatable}[progressive sharpening]{theorem}{TheoremPSdiag}\label{T:PS_linear}
  Under the same setting as in Theorem~\ref{T:EoS1_linear}, let $t_a$ denote the obtained iteration. Define the function $\tilde{\lambda}: \sR_{>0}\to \sR$ given by
  \[
    \tilde{\lambda}(q) \deq \begin{cases*}
      \left(1 + \frac{\hat{r}(q) r'(\hat{r}(q))}{q}\right)  \frac{2}{\eta} & \text{if $q\le 1$, and} \\
      \frac{2}{\eta} & \text{otherwise}.
    \end{cases*}
  \]
  Then, the sequence $\big(\tilde{\lambda}(q_t)\big)_{t=0}^{\infty}$ is monotonically increasing. Moreover, for any $t\ge t_a$, the sharpness at GD iterate $\bm{\Theta}_t$ closely follows the sequence $\big(\tilde{\lambda}(q_t)\big)_{t=0}^{\infty}$ by satisfying
  \begin{align*}
    \left\lvert \lambda_{\max}(\bm{\Theta}_t) - \tilde{\lambda}\left(q_t\right) \right\rvert \le 1 +  \mathcal{O}_{\ell}(\eta).
  \end{align*}
\end{restatable}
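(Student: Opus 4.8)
The plan is to decompose the sharpness $\lambda_{\max}(\bm{\Theta}_t)=\lambda_{\max}\big(\nabla_{\bm{\Theta}}^2\mathcal{L}(\bm{\Theta}_t)\big)$ into a rank-one ``Gauss--Newton'' part that can be evaluated \emph{exactly} in terms of $(p_t,q_t)$ plus a uniformly bounded curvature remainder, and then feed the Phase~\RomanNumeralCaps{1} estimate of Theorem~\ref{T:EoS1_linear} into the rank-one part. Because $f(\bm{\Theta})=\vv^\top\mU\vx$ is bilinear in $(\mU,\vv)$, its parameter-Hessian $\nabla_{\bm{\Theta}}^2 f$ is a \emph{constant} self-adjoint operator $\mH_f$ (independent of $\bm{\Theta}$) with quadratic form $(\delta\mU,\delta\vv)\mapsto 2\,\delta\vv^\top\delta\mU\vx$; using $\norm{\vx}_2=1$ and AM--GM one gets $\norm{\mH_f}_2=1$. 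Plugging into the Hessian formula of Section~\ref{S:prelim},
\begin{align*}
  \nabla_{\bm{\Theta}}^2\mathcal{L}(\bm{\Theta}_t)=\ell''(p_t)\big(\nabla_{\bm{\Theta}}f(\vx;\bm{\Theta}_t)\big)^{\otimes 2}+\ell'(p_t)\,\mH_f,
\end{align*}
where $\ell''(p_t)\ge0$ by convexity and $\abs{\ell'(p_t)}\le1$ by $1$-Lipschitzness. Weyl's inequality, together with the fact that $\lambda_{\max}$ of a PSD rank-one matrix equals its trace, gives
\begin{align*}
  \left\lvert \lambda_{\max}(\bm{\Theta}_t)-\ell''(p_t)\norm{\nabla_{\bm{\Theta}}f(\vx;\bm{\Theta}_t)}_2^2 \right\rvert \le \abs{\ell'(p_t)}\,\norm{\mH_f}_2 \le 1.
\end{align*}
By the definition of the canonical reparameterization $\norm{\nabla_{\bm{\Theta}}f(\vx;\bm{\Theta}_t)}_2^2=\norm{\mU_t\vx}_2^2+\norm{\vv_t}_2^2=\tfrac{2}{\eta q_t}$ exactly, and $\ell''(p)=r(p)+p\,r'(p)$, so the rank-one term equals $\tfrac{\ell''(p_t)}{q_t}\cdot\tfrac{2}{\eta}$. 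Since $1+\tfrac{zr'(z)}{r(z)}=\tfrac{\ell''(z)}{r(z)}$ and $r(\hat r(q))=q$, the target is $\tilde\lambda(q)=\tfrac{\ell''(\hat r(q))}{q}\cdot\tfrac{2}{\eta}$ for $q\le1$ and $\tilde\lambda(q)=\tfrac{2}{\eta}$ for $q>1$; hence the approximation claim reduces to showing, uniformly for $t\ge t_a$, that $\ell''(p_t)-\ell''(\hat r(q_t))=\mathcal{O}_\ell(\eta^2)$ when $q_t\le1$, and $\tfrac{\ell''(p_t)}{q_t}=1+\mathcal{O}_\ell(\eta^2)$ when $q_t>1$.

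For $t\ge t_a$, Theorem~\ref{T:EoS1_linear} yields $r(p_t)=q_t\big(1-h(p_t)\eta^2+\mathcal{O}_{\delta,\ell}(\eta^4)\big)$. Since $h$ is bounded on the relevant compact range of $p_t$ and $r\le1$, this forces $q_t\le1+\mathcal{O}_\ell(\eta^2)$; together with $q_t\ge q_0>c_0$ (the sequence $(q_t)$ is nondecreasing by Eq.~\eqref{E:GD_pq_linear} since $q_0\ge c_0\ge\tfrac12$ and $r\le1$) this keeps both $\abs{p_t}$ and $\hat r(q_t)$ in a fixed compact subinterval of $[0,z_0)$, on which $r'<0$ is bounded away from $0$. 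The only degeneracy is at $p=0$, where $r'$ vanishes; to handle it, note that $r$ and $\ell''$ are \emph{smooth functions of $p^2$}: writing $r(p)=\rho(p^2)$ and $\ell''(p)=\mu(p^2)$, the maps $\rho,\mu$ are $C^1$ on the relevant range with $\rho'<0$ bounded away from $0$ and $\mu'$ bounded (all bounds depending only on $\ell$). For $q_t\le1$, with $z^*\deq\hat r(q_t)$ so $\rho((z^*)^2)=q_t$, the estimate above reads $\rho(p_t^2)-\rho((z^*)^2)=-q_t h(p_t)\eta^2+\mathcal{O}_{\delta,\ell}(\eta^4)$, whence $\abs{p_t^2-(z^*)^2}=\mathcal{O}_\ell(\eta^2)$ (the leading coefficient depends only on $\ell$; the $\delta$-dependence lives in the lower-order $\eta^4$ term), and therefore $\abs{\ell''(p_t)-\ell''(z^*)}=\abs{\mu(p_t^2)-\mu((z^*)^2)}=\mathcal{O}_\ell(\eta^2)$. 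For $q_t>1$ one has $q_t=1+\mathcal{O}_\ell(\eta^2)$, so $\rho(p_t^2)=1+\mathcal{O}_\ell(\eta^2)$ together with $\rho\le1$ gives $p_t^2=\mathcal{O}_\ell(\eta^2)$, hence $\tfrac{\ell''(p_t)}{q_t}=1+\mathcal{O}_\ell(\eta^2)$. In either case $\ell''(p_t)\norm{\nabla_{\bm{\Theta}}f(\vx;\bm{\Theta}_t)}_2^2=\tilde\lambda(q_t)+\mathcal{O}_\ell(\eta)$, and combining with the Weyl bound gives $\bigl\lvert\lambda_{\max}(\bm{\Theta}_t)-\tilde\lambda(q_t)\bigr\rvert\le1+\mathcal{O}_\ell(\eta)$. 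I expect this to be the main obstacle: near the bifurcation point $q=1$ the map $r$ is not locally bi-Lipschitz, so one cannot directly pass from $\mathcal{O}(\eta^2)$-closeness of $r(\abs{p_t})$ to $q_t$ to $\mathcal{O}(\eta^2)$-closeness of $\abs{p_t}$ to $\hat r(q_t)$; working in the variable $p^2$ removes the degeneracy and succeeds precisely because $\ell''$ is itself flat at $0$.

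It remains to show that $\big(\tilde\lambda(q_t)\big)_{t\ge0}$ is nondecreasing. Since $(q_t)$ is nondecreasing, it suffices to prove that $\tilde\lambda$ is a nondecreasing function of $q$. It is the constant $\tfrac{2}{\eta}$ on $[1,\infty)$; on $(0,1)$ it equals $\big(1+\tfrac{zr'(z)}{r(z)}\big)\tfrac{2}{\eta}$ evaluated at $z=\hat r(q)$, and since $q\mapsto\hat r(q)$ is decreasing (as $r$ is strictly decreasing on $\sR_{\ge0}$) while $z\mapsto\tfrac{zr'(z)}{r(z)}$ is decreasing on $\sR_{>0}$ by Assumption~\ref{A:l_r2}\ref{A:l_r2:zr'/r}, the composition is nondecreasing; continuity at $q=1$ (both one-sided limits equal $\tfrac{2}{\eta}$, as $\hat r(1)=0$) completes the argument.
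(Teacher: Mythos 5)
Your proof is correct and follows the same high-level structure as the paper's: decompose the Hessian into the rank-one Gauss--Newton term $\ell''(p_t)\,(\nabla_{\bm{\Theta}}f)^{\otimes 2}$, whose top eigenvalue is exactly $\frac{2\ell''(p_t)}{\eta q_t}$ by the definition of $q_t$, plus the remainder $\ell'(p_t)\,\nabla_{\bm{\Theta}}^2 f$ of operator norm at most $1$; then compare the rank-one term to $\tilde\lambda(q_t)$ via the Phase~\RomanNumeralCaps{1} alignment $s_t=1+\mathcal{O}_{\ell}(\eta^2)$, with care around the degeneracy of $r'$ at $p=0$. You differ from the paper in two technical sub-steps, both valid. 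First, you isolate the rank-one eigenvalue via Weyl's inequality, whereas the paper (Proposition~\ref{P:PS_linear}) first argues that $\lambda_{\max}(\nabla_{\bm{\Theta}}^2\mathcal{L})$ equals the spectral norm --- which requires the extra observation (Lemma~\ref{L:PS_linear_H}) that the spectrum of $\nabla_{(\mU,\vv)}^2(\vv^\top\mU\vx)$ is symmetric about zero --- and then uses the reverse triangle inequality for operator norms; your Weyl route gives the same two-sided bound with no spectral-symmetry input and is a little leaner. Second, to go from $\abs{r(p_t)-q_t}=\mathcal{O}_{\ell}(\eta^2)$ to $\abs{\ell''(p_t)-\ell''(\hat{r}(q_t))}=\mathcal{O}_{\ell}(\eta^2)$ you change variables to $u=p^2$, using that $\rho(u)\deq r(\sqrt{u})$ has $\rho'(0)=r''(0)/2<0$ so $\rho$ stays bi-Lipschitz at $u=0$; the paper instead observes that $q\mapsto \hat{r}(q)r'(\hat{r}(q))/q$ has a bounded derivative on $[\tfrac{1}{4},1)$ (with limit $2$ as $q\to 1^-$) and applies the mean value theorem, noting that $p_t r'(p_t)/r(p_t)$ is this function evaluated at $r(p_t)$. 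Both manoeuvres remove the same square-root degeneracy at the bifurcation point. One small gloss worth fixing: you write that $\abs{p_t}$ and $\hat{r}(q_t)$ lie in a compact subinterval of $[0,z_0)$ ``on which $r'<0$ is bounded away from $0$,'' which is false on any interval containing $0$; what you actually need, and immediately supply, is the corresponding statement for $\rho'$. Your monotonicity argument for $\tilde\lambda$ matches the paper's.
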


The gap between $\lambda_{\max}(\bm{\Theta}_t)$ and $\tilde{\lambda}(q_t)$ is bounded by a numerical constant, which becomes negligible compared to $2/\eta$ for small $\eta$. In Figure~\ref{Figure:toy_a}, we perform numerical experiments on a single-neuron case and observe that $\tilde{\lambda}(q_t)$ closely approximates the sharpness. 

Note that \citet{cohen2021gradient} observe an increase in sharpness during training in the gradient flow regime, while our work reveals that sharpness increases when exhibiting oscillations in the EoS regime. This distinction may be linked to the selection of the loss function, as our study focuses on Lipschitz convex losses, while \citet{cohen2021gradient} examine the squared loss.

\vspace*{-4pt}
\section{EoS in Squared Loss: Single-neuron Nonlinear Network}\label{S:nonlinear}
\vspace*{-4pt}

Our canonical reparameterization has a limitation in explaining the EoS phenomenon under squared loss $\ell(p)=\frac{1}{2}p^2$, as the function $r(p) = \frac{\ell'(p)}{p} = 1$ does not satisfy Assumption~\ref{A:r}. However, empirical studies by \citet{cohen2021gradient} have observed the EoS phenomenon in GD training with squared loss. In this section, we analyze a simple toy model to gain insight into the EoS phenomenon and trajectory alignment of GD under squared loss.

We study the GD dynamics on a two-dimensional function $\mathcal{L}(x,y) \deq \frac{1}{2}(\phi(x)y)^2$, where $x$, $y$ are scalars and $\phi$ is a nonlinear activation satisfying Assumption~\ref{A:phi} below.
\begin{assumption}[sigmoidal activation]\label{A:phi}
  The activation function $\phi: \sR\to \sR$ is odd, increasing, $1$-Lipschitz and twice continuously differentiable. Moreover, $\phi(0)=0$, $\phi'(0)=1$, $\lim_{x\to\infty} \phi(x) = 1$, and $\lim_{x\to-\infty} \phi(x) = -1$.
\end{assumption}
One good example of $\phi$ satisfying Assumption~\ref{A:phi} is $\tanh$. 
For this section, we use an alternative reparameterization defined as below.

\begin{definition}\label{D:reparameterization}
  For given step size $\eta$, the $(p,q)$ reparameterization of $(x,y)\in \sR^2$ is defined as
  \begin{align*}
    (p, q) \deq \left( x, \, \frac{2}{\eta y^2}\right).
    \vspace*{-5pt}
  \end{align*}
\end{definition}

Under the reparameterization, the $1$-step update rule can be written as
\begin{align}\label{E:GD_pq}
  p_{t+1} = \left( 1 - \frac{2r(p_t)}{q_t} \right) p_t, \quad
  q_{t+1} = (1- \eta \phi(p_t)^2)^{-2} q_t,
  \vspace*{-5pt}
\end{align}
where the function $r$ is given by $r(z)\deq \frac{\phi(z)\phi'(z)}{z}$ for $z\ne 0$ and $r(0)\deq 1$. 

We can observe a notable resemblance between Eq.~\eqref{E:GD_pq} and Eq.~\eqref{E:GD_pq_linear}. Indeed, our theoretical findings for a single-neuron nonlinear network closely mirror those of the two-layer linear network discussed in Section~\ref{S:fc_linear}. Due to lack of space, we summarize our theorems in this setting as the following:
\begin{theorem}[informal]\label{T:EoS_informal}
    Under suitable assumptions on $\phi$, step size, and initialization, GD trained on the squared loss $\mathcal{L}(x,y) \deq \frac{1}{2}(\phi(x)y)^2$ exhibits the same gradient flow, EoS (Phase~\RomanNumeralCaps{1},~\RomanNumeralCaps{2}), and progressive sharpening phenomena as shown in Section~\ref{S:fc_linear}.
\end{theorem}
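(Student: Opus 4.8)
The plan is to exploit the structural resemblance between the reparameterized recursion~\eqref{E:GD_pq} and the two-layer linear recursion~\eqref{E:GD_pq_linear}: one formulates precise analogues of Theorems~\ref{T:gradient_flow_linear}--\ref{T:PS_linear} for $\mathcal{L}(x,y)=\tfrac12(\phi(x)y)^2$ and proves each by replaying the corresponding argument from Section~\ref{S:fc_linear}, with the loss-induced $r(p)=\ell'(p)/p$ replaced by the activation-induced $r(z)=\phi(z)\phi'(z)/z$ and the $q$-update replaced by $q_{t+1}=(1-\eta\phi(p_t)^2)^{-2}q_t$. Since~\eqref{E:GD_pq} is literally $p_{t+1}=f_{q_t}(p_t)$ for this $r$, the bifurcation picture of Lemma~\ref{L:bifurcation} for $f_q$ is reused essentially unchanged.

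The first task is to translate the hypotheses on $\phi$ into hypotheses on $r$: that $r(z)=\phi(z)\phi'(z)/z$ satisfies Assumption~\ref{A:r} together with the monotonicity conditions of Assumption~\ref{A:l_r2} (on $z\mapsto r'(z)/r(z)^2$, $z\mapsto zr'(z)/r(z)$, $z\mapsto zr(z)/r'(z)$). Evenness of $r$ follows from $\phi$ being odd, $r(0)=\phi'(0)^2=1$ and $r'(0)=0$ from $\phi(z)=z+\mathcal{O}(z^3)$, and $r(z)\to0$ as $\abs{z}\to\infty$ from boundedness of $\phi$; strict monotonicity $r'(z)<0$ for $z>0$ and the $C^4$ conditions are imposed as (or derived from) mild extra structure on $\phi$ and checked explicitly for $\phi=\tanh$. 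The genuinely new feature, anticipated right after Lemma~\ref{L:bifurcation}, is that the threshold $c=r(p^*)$ --- equivalently $c_0=\max\{r(z_0),\tfrac12\}$ from Theorem~\ref{T:EoS1_linear} --- is now strictly positive (for $\tanh$, because $(\phi\phi')'$ changes sign), so the EoS statements must assume $q_0\in(c_0,1-\delta)$ and cannot cover the window $q_0<c$ in which $f_{q_0}$ has no stable period-$2$ orbit (yielding the Figure~\ref{Figure:toy_c}-type period-$4$/chaotic dynamics).

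With this in hand the three regimes proceed as in Section~\ref{S:fc_linear}. \emph{Gradient flow} ($q_0>1$): $\abs{\phi}\le1$ and small $\eta$ give $q_{t+1}\ge q_t$, while $0<r(p_t)\le1<q_t$ gives $\abs{1-2r(p_t)/q_t}<1$ and hence geometric decay $\abs{p_{t+1}}\le\rho(q_0)\abs{p_t}$; then $\sum_t\phi(p_t)^2\le\sum_t p_t^2<\infty$ and telescoping the $q$-recursion give $q_t\uparrow q^*\le(1+\mathcal{O}(\eta))q_0$, with $(p_t,q_t)\to(0,q^*)$. \emph{EoS, Phase~\RomanNumeralCaps{1}}: since $q_t$ drifts upward only at relative rate $\Theta(\eta)$ per step while $p_t=\Theta(1)$, $p_t$ is slaved to the slowly moving stable period-$2$ orbit $\pm\hat r(q_t)$; writing $p_t=\pm\hat r(q_t)+\epsilon_t$ with the sign alternating and composing one GD step with the drift of $\hat r$ gives $\epsilon_{t+1}=-\mu(q_t)\epsilon_t+\Theta(\eta)+\mathcal{O}(\epsilon_t^2)$ with multiplier $\mu(q)=1+2\hat r(q)r'(\hat r(q))/q$ bounded away from $\pm1$ on $[c_0,1-\delta]$, so after $t_a=\mathcal{O}_{\delta,\phi}(\log\eta^{-1})$ steps one gets $q_t/r(p_t)=1+\mathcal{O}_{\delta,\phi}(\eta)$, i.e.\ alignment on $q=r(p)$. \emph{EoS, Phase~\RomanNumeralCaps{2}}: $q_t$ keeps increasing, reaching $1$ after $t_b=\Omega((1-q_0)\eta^{-1})$ steps, after which $p=0$ is stable and $(p_t,q_t)\to(0,q^*)$ with $q^*$ within $\mathcal{O}(\eta)$ of $1$, giving limiting sharpness $\lambda_{\max}(\bm{\Theta}^*)=2/(\eta q^*)$ close to $2/\eta$. \emph{Progressive sharpening}: the $(x,x)$ entry of the Hessian of $\tfrac12(\phi(x)y)^2$ at $(p_t,y_t)$ with $y_t^2=2/(\eta q_t)$ equals $(r(p_t)+p_tr'(p_t))\tfrac{2}{\eta q_t}\approx\mu(q_t)\tfrac2\eta$ on the period-$2$ orbit and dominates the remaining entries, so eigenvalue perturbation gives $\abs{\lambda_{\max}(\bm{\Theta}_t)-\tilde\lambda(q_t)}=\mathcal{O}(1)$ for the same $\tilde\lambda$ as in Theorem~\ref{T:PS_linear}, which is monotone in $t$ by the Assumption~\ref{A:l_r2}-type conditions; combined with $q_t\uparrow1$ this is progressive sharpening up to $\approx2/\eta$.

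The main obstacle is twofold. First, verifying all the monotonicity conditions of Assumption~\ref{A:l_r2} for $r(z)=\phi(z)\phi'(z)/z$ --- even for $\phi=\tanh$ --- is a fiddly function-theoretic exercise, and because $c>0$ one must restrict to $q_0>c_0$ and separately confirm that the monotone-increasing $q_t$ never drops back into the unstable window. Second, and more seriously, the Phase~\RomanNumeralCaps{2} endgame near the bifurcation point is delicate: as $q_t\uparrow1$ the multiplier $\mu(q_t)\to1$, so the convergence of $p_t$ to the period-$2$ orbit undergoes critical slowing down, and it is the competition between this slowdown and the vanishing $q$-drift (both governed by $r''(0)$) that controls whether and when $q_t$ crosses $1$ and pins the $\mathcal{O}(\eta)$-scale location of $q^*$ --- this is where essentially all the work beyond a verbatim port of the linear-case proof lies.
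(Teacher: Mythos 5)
Your proposal is correct and follows essentially the same route as the paper: Appendix~\ref{S:A:nonlinear} formulates Theorems~\ref{T:gradient_flow}--\ref{T:PS} as exact analogues of Theorems~\ref{T:gradient_flow_linear}--\ref{T:PS_linear}, imposes Assumption~\ref{A:phi_r2} on $r(z)=\phi(z)\phi'(z)/z$ (a near-verbatim copy of Assumption~\ref{A:l_r2}, plus the extra clause $\hat{r}(\tfrac12)\,r'(\hat{r}(\tfrac12))>-\tfrac12$ needed because the bifurcation threshold $c_0$ now exceeds $\tfrac12$), and replays each proof with the $q$-update $q_{t+1}=(1-\eta\phi(p_t)^2)^{-2}q_t$, which degrades the Phase~I and Phase~II error terms from $\mathcal{O}(\eta^2),\mathcal{O}(\eta^4)$ to $\mathcal{O}(\eta),\mathcal{O}(\eta^2)$. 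The only material presentational difference is that the paper tracks $s_t=q_t/r(p_t)$ rather than the additive deviation $\epsilon_t$ you introduce, and the resulting invariance bound $\abs{s_t-1-h(p_t)\eta}=\mathcal{O}_{\delta,\phi}(\eta^2)$ with $h(p)=-\phi(p)^2 r(p)/(p\,r'(p))$ both pins down $q^*=1-\eta/r''(0)+\mathcal{O}_{\delta,\phi}(\eta^2)$ (sharper than your coarse $q^*=1+\mathcal{O}(\eta)$) and dispenses automatically with the critical-slowing-down concern you flag, since the perturbation term $\mathcal{O}(\eta^2 p_t^2)$ shrinks with $\abs{p_t}$ in lockstep with the contraction factor $1+2p_t r'(p_t)/r(p_t)$ approaching $1$.
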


In Theorem~\ref{T:EoS2}, we prove that in the EoS regime, the limiting sharpness is $\frac{2}{\eta} - \frac{2}{\abs{r''(0)}} + \mathcal{O}(\eta)$. For formal statements of the theorems and the proofs, we refer the reader to Appendix~\ref{S:A:nonlinear}.

\vspace*{-4pt}
\section{Conclusion}
\vspace*{-4pt}

In this paper, we provide empirical evidence and rigorous analysis to demonstrate the interesting phenomenon of GD trajectory alignment in the EoS regime. Importantly, we show that different GD trajectories, under the canonical reparameterization, align on a bifurcation diagram independent of initialization. This discovery is notable due to the intricate and non-convex nature of neural network optimization, where the algorithm trajectory is heavily influenced by initialization choices. Our theoretical analysis not only characterizes the behavior of limiting sharpness but also establishes progressive sharpening of GD. One immediate future direction is to understand the trajectory alignment behavior when trained on multiple data points. Lastly, it will be interesting to extend our analysis to encompass squared loss for general neural network, going beyond the toy single-neuron example.

\section*{Acknowledgments}
This paper was supported by Institute of Information \& communications Technology Planning \& Evaluation (IITP) grant (No.\ 2019-0-00075, Artificial Intelligence Graduate School Program (KAIST)) funded by the Korea government (MSIT), two National Research Foundation of Korea (NRF) grants (No.\ NRF-2019R1A5A1028324, RS-2023-00211352) funded by the Korea government (MSIT), and a grant funded by Samsung Electronics Co., Ltd.

\bibliography{refs}

\begin{thebibliography}{29}
\providecommand{\natexlab}[1]{#1}
\providecommand{\url}[1]{\texttt{#1}}
\expandafter\ifx\csname urlstyle\endcsname\relax
  \providecommand{\doi}[1]{doi: #1}\else
  \providecommand{\doi}{doi: \begingroup \urlstyle{rm}\Url}\fi

\bibitem[Ahn et~al.(2022)Ahn, Zhang, and Sra]{ahn2022understanding}
Kwangjun Ahn, Jingzhao Zhang, and Suvrit Sra.
\newblock Understanding the unstable convergence of gradient descent.
\newblock In Kamalika Chaudhuri, Stefanie Jegelka, Le~Song, Csaba Szepesvari,
  Gang Niu, and Sivan Sabato, editors, \emph{Proceedings of the 39th
  International Conference on Machine Learning}, volume 162 of
  \emph{Proceedings of Machine Learning Research}, pages 247--257. PMLR, 17--23
  Jul 2022.
\newblock URL \url{https://proceedings.mlr.press/v162/ahn22a.html}.

\bibitem[Ahn et~al.(2023)Ahn, Bubeck, Chewi, Lee, Suarez, and
  Zhang]{ahn2023learning}
Kwangjun Ahn, S{\'e}bastien Bubeck, Sinho Chewi, Yin~Tat Lee, Felipe Suarez,
  and Yi~Zhang.
\newblock Learning threshold neurons via the ``edge of stability''.
\newblock \emph{NeurIPS 2023 (arXiv:2212.07469)}, 2023.

\bibitem[Arora et~al.(2018)Arora, Cohen, and Hazan]{arora2018on}
Sanjeev Arora, Nadav Cohen, and Elad Hazan.
\newblock On the optimization of deep networks: Implicit acceleration by
  overparameterization.
\newblock In Jennifer Dy and Andreas Krause, editors, \emph{Proceedings of the
  35th International Conference on Machine Learning}, volume~80 of
  \emph{Proceedings of Machine Learning Research}, pages 244--253. PMLR, 10--15
  Jul 2018.
\newblock URL \url{https://proceedings.mlr.press/v80/arora18a.html}.

\bibitem[Arora et~al.(2019)Arora, Cohen, Hu, and Luo]{arora2019implicit}
Sanjeev Arora, Nadav Cohen, Wei Hu, and Yuping Luo.
\newblock Implicit regularization in deep matrix factorization.
\newblock In H.~Wallach, H.~Larochelle, A.~Beygelzimer, F.~d\textquotesingle
  Alch\'{e}-Buc, E.~Fox, and R.~Garnett, editors, \emph{Advances in Neural
  Information Processing Systems}, volume~32. Curran Associates, Inc., 2019.
\newblock URL
  \url{https://proceedings.neurips.cc/paper_files/paper/2019/file/c0c783b5fc0d7d808f1d14a6e9c8280d-Paper.pdf}.

\bibitem[Arora et~al.(2022)Arora, Li, and Panigrahi]{arora2022eos}
Sanjeev Arora, Zhiyuan Li, and Abhishek Panigrahi.
\newblock Understanding gradient descent on the edge of stability in deep
  learning.
\newblock In Kamalika Chaudhuri, Stefanie Jegelka, Le~Song, Csaba Szepesvari,
  Gang Niu, and Sivan Sabato, editors, \emph{Proceedings of the 39th
  International Conference on Machine Learning}, volume 162 of
  \emph{Proceedings of Machine Learning Research}, pages 948--1024. PMLR,
  17--23 Jul 2022.
\newblock URL \url{https://proceedings.mlr.press/v162/arora22a.html}.

\bibitem[Chen and Bruna(2023)]{chen23beyond}
Lei Chen and Joan Bruna.
\newblock Beyond the edge of stability via two-step gradient updates.
\newblock In Andreas Krause, Emma Brunskill, Kyunghyun Cho, Barbara Engelhardt,
  Sivan Sabato, and Jonathan Scarlett, editors, \emph{Proceedings of the 40th
  International Conference on Machine Learning}, volume 202 of
  \emph{Proceedings of Machine Learning Research}, pages 4330--4391. PMLR,
  23--29 Jul 2023.
\newblock URL \url{https://proceedings.mlr.press/v202/chen23b.html}.

\bibitem[Chizat and Bach(2020)]{chizat2020implicit}
L\'ena\"ic Chizat and Francis Bach.
\newblock Implicit bias of gradient descent for wide two-layer neural networks
  trained with the logistic loss.
\newblock In Jacob Abernethy and Shivani Agarwal, editors, \emph{Proceedings of
  Thirty Third Conference on Learning Theory}, volume 125 of \emph{Proceedings
  of Machine Learning Research}, pages 1305--1338. PMLR, 09--12 Jul 2020.
\newblock URL \url{https://proceedings.mlr.press/v125/chizat20a.html}.

\bibitem[Cohen et~al.(2021)Cohen, Kaur, Li, Kolter, and
  Talwalkar]{cohen2021gradient}
Jeremy Cohen, Simran Kaur, Yuanzhi Li, J~Zico Kolter, and Ameet Talwalkar.
\newblock Gradient descent on neural networks typically occurs at the edge of
  stability.
\newblock In \emph{International Conference on Learning Representations}, 2021.
\newblock URL \url{https://openreview.net/forum?id=jh-rTtvkGeM}.

\bibitem[Damian et~al.(2023)Damian, Nichani, and
  Lee]{damian2023selfstabilization}
Alex Damian, Eshaan Nichani, and Jason~D. Lee.
\newblock Self-stabilization: The implicit bias of gradient descent at the edge
  of stability.
\newblock In \emph{The Eleventh International Conference on Learning
  Representations}, 2023.
\newblock URL \url{https://openreview.net/forum?id=nhKHA59gXz}.

\bibitem[Gunasekar et~al.(2017)Gunasekar, Woodworth, Bhojanapalli, Neyshabur,
  and Srebro]{gunasekar2017implicit}
Suriya Gunasekar, Blake~E Woodworth, Srinadh Bhojanapalli, Behnam Neyshabur,
  and Nati Srebro.
\newblock Implicit regularization in matrix factorization.
\newblock In I.~Guyon, U.~Von Luxburg, S.~Bengio, H.~Wallach, R.~Fergus,
  S.~Vishwanathan, and R.~Garnett, editors, \emph{Advances in Neural
  Information Processing Systems}, volume~30. Curran Associates, Inc., 2017.
\newblock URL
  \url{https://proceedings.neurips.cc/paper_files/paper/2017/file/58191d2a914c6dae66371c9dcdc91b41-Paper.pdf}.

\bibitem[Gunasekar et~al.(2018)Gunasekar, Lee, Soudry, and
  Srebro]{gunasekar2018implicit}
Suriya Gunasekar, Jason~D Lee, Daniel Soudry, and Nati Srebro.
\newblock Implicit bias of gradient descent on linear convolutional networks.
\newblock In S.~Bengio, H.~Wallach, H.~Larochelle, K.~Grauman, N.~Cesa-Bianchi,
  and R.~Garnett, editors, \emph{Advances in Neural Information Processing
  Systems}, volume~31. Curran Associates, Inc., 2018.
\newblock URL
  \url{https://proceedings.neurips.cc/paper_files/paper/2018/file/0e98aeeb54acf612b9eb4e48a269814c-Paper.pdf}.

\bibitem[Jastrzebski et~al.(2020)Jastrzebski, Szymczak, Fort, Arpit, Tabor,
  Cho*, and Geras*]{jastrzebski2020the}
Stanislaw Jastrzebski, Maciej Szymczak, Stanislav Fort, Devansh Arpit, Jacek
  Tabor, Kyunghyun Cho*, and Krzysztof Geras*.
\newblock The break-even point on optimization trajectories of deep neural
  networks.
\newblock In \emph{International Conference on Learning Representations}, 2020.
\newblock URL \url{https://openreview.net/forum?id=r1g87C4KwB}.

\bibitem[Jastrzębski et~al.(2019)Jastrzębski, Kenton, Ballas, Fischer,
  Bengio, and Storkey]{jastrzebski2018on}
Stanisław Jastrzębski, Zachary Kenton, Nicolas Ballas, Asja Fischer, Yoshua
  Bengio, and Amost Storkey.
\newblock On the relation between the sharpest directions of {DNN} loss and the
  {SGD} step length.
\newblock In \emph{International Conference on Learning Representations}, 2019.
\newblock URL \url{https://openreview.net/forum?id=SkgEaj05t7}.

\bibitem[Ji and Telgarsky(2019{\natexlab{a}})]{ji2018gradient}
Ziwei Ji and Matus Telgarsky.
\newblock Gradient descent aligns the layers of deep linear networks.
\newblock In \emph{International Conference on Learning Representations},
  2019{\natexlab{a}}.
\newblock URL \url{https://openreview.net/forum?id=HJflg30qKX}.

\bibitem[Ji and Telgarsky(2019{\natexlab{b}})]{ji2019implicit}
Ziwei Ji and Matus Telgarsky.
\newblock The implicit bias of gradient descent on nonseparable data.
\newblock In Alina Beygelzimer and Daniel Hsu, editors, \emph{Proceedings of
  the Thirty-Second Conference on Learning Theory}, volume~99 of
  \emph{Proceedings of Machine Learning Research}, pages 1772--1798. PMLR,
  25--28 Jun 2019{\natexlab{b}}.
\newblock URL \url{https://proceedings.mlr.press/v99/ji19a.html}.

\bibitem[Kreisler et~al.(2023)Kreisler, Nacson, Soudry, and
  Carmon]{kreisler2023gradient}
Itai Kreisler, Mor~Shpigel Nacson, Daniel Soudry, and Yair Carmon.
\newblock Gradient descent monotonically decreases the sharpness of gradient
  flow solutions in scalar networks and beyond.
\newblock In Andreas Krause, Emma Brunskill, Kyunghyun Cho, Barbara Engelhardt,
  Sivan Sabato, and Jonathan Scarlett, editors, \emph{Proceedings of the 40th
  International Conference on Machine Learning}, volume 202 of
  \emph{Proceedings of Machine Learning Research}, pages 17684--17744. PMLR,
  23--29 Jul 2023.
\newblock URL \url{https://proceedings.mlr.press/v202/kreisler23a.html}.

\bibitem[Lee and Jang(2023)]{lee2023a}
Sungyoon Lee and Cheongjae Jang.
\newblock A new characterization of the edge of stability based on a sharpness
  measure aware of batch gradient distribution.
\newblock In \emph{The Eleventh International Conference on Learning
  Representations}, 2023.
\newblock URL \url{https://openreview.net/forum?id=bH-kCY6LdKg}.

\bibitem[Lyu et~al.(2022)Lyu, Li, and Arora]{lyu2022sharpness}
Kaifeng Lyu, Zhiyuan Li, and Sanjeev Arora.
\newblock Understanding the generalization benefit of normalization layers:
  Sharpness reduction.
\newblock In Alice~H. Oh, Alekh Agarwal, Danielle Belgrave, and Kyunghyun Cho,
  editors, \emph{Advances in Neural Information Processing Systems}, 2022.
\newblock URL \url{https://openreview.net/forum?id=xp5VOBxTxZ}.

\bibitem[Ma et~al.(2022)Ma, Wu, and Ying]{ma2022multiscale}
Chao Ma, Lei Wu, and Lexing Ying.
\newblock The multiscale structure of neural network loss functions: The effect
  on optimization and origin.
\newblock \emph{arXiv preprint arXiv:2204.11326}, 2022.

\bibitem[Paszke et~al.(2019)Paszke, Gross, Massa, Lerer, Bradbury, Chanan,
  Killeen, Lin, Gimelshein, Antiga, Desmaison, Kopf, Yang, DeVito, Raison,
  Tejani, Chilamkurthy, Steiner, Fang, Bai, and Chintala]{paszke2018pytorch}
Adam Paszke, Sam Gross, Francisco Massa, Adam Lerer, James Bradbury, Gregory
  Chanan, Trevor Killeen, Zeming Lin, Natalia Gimelshein, Luca Antiga, Alban
  Desmaison, Andreas Kopf, Edward Yang, Zachary DeVito, Martin Raison, Alykhan
  Tejani, Sasank Chilamkurthy, Benoit Steiner, Lu~Fang, Junjie Bai, and Soumith
  Chintala.
\newblock Pytorch: An imperative style, high-performance deep learning library.
\newblock In H.~Wallach, H.~Larochelle, A.~Beygelzimer, F.~d\textquotesingle
  Alch\'{e}-Buc, E.~Fox, and R.~Garnett, editors, \emph{Advances in Neural
  Information Processing Systems}, volume~32. Curran Associates, Inc., 2019.
\newblock URL
  \url{https://proceedings.neurips.cc/paper_files/paper/2019/file/bdbca288fee7f92f2bfa9f7012727740-Paper.pdf}.

\bibitem[Soudry et~al.(2018)Soudry, Hoffer, Nacson, Gunasekar, and
  Srebro]{soudry2018implicit}
Daniel Soudry, Elad Hoffer, Mor~Shpigel Nacson, Suriya Gunasekar, and Nathan
  Srebro.
\newblock The implicit bias of gradient descent on separable data.
\newblock \emph{Journal of Machine Learning Research}, 19\penalty0
  (70):\penalty0 1--57, 2018.
\newblock URL \url{http://jmlr.org/papers/v19/18-188.html}.

\bibitem[Strogatz(1994)]{strogatz_nonlinear_1994}
Steven~H. Strogatz.
\newblock \emph{Nonlinear dynamics and {Chaos}: with applications to physics,
  biology, chemistry, and engineering}.
\newblock Studies in nonlinearity. Addison-Wesley Pub, Reading, Mass, 1994.
\newblock ISBN 9780201543445.

\bibitem[Vardi(2022)]{vardi2022implicit}
Gal Vardi.
\newblock On the implicit bias in deep-learning algorithms.
\newblock \emph{arXiv preprint arXiv:2208.12591}, 2022.

\bibitem[Wang et~al.(2022)Wang, Li, and Li]{wang2022sharpness}
Zixuan Wang, Zhouzi Li, and Jian Li.
\newblock Analyzing sharpness along {GD} trajectory: Progressive sharpening and
  edge of stability.
\newblock In Alice~H. Oh, Alekh Agarwal, Danielle Belgrave, and Kyunghyun Cho,
  editors, \emph{Advances in Neural Information Processing Systems}, 2022.
\newblock URL \url{https://openreview.net/forum?id=thgItcQrJ4y}.

\bibitem[Woodworth et~al.(2020)Woodworth, Gunasekar, Lee, Moroshko, Savarese,
  Golan, Soudry, and Srebro]{woodworth2020kernel}
Blake Woodworth, Suriya Gunasekar, Jason~D. Lee, Edward Moroshko, Pedro
  Savarese, Itay Golan, Daniel Soudry, and Nathan Srebro.
\newblock Kernel and rich regimes in overparametrized models.
\newblock In Jacob Abernethy and Shivani Agarwal, editors, \emph{Proceedings of
  Thirty Third Conference on Learning Theory}, volume 125 of \emph{Proceedings
  of Machine Learning Research}, pages 3635--3673. PMLR, 09--12 Jul 2020.
\newblock URL \url{https://proceedings.mlr.press/v125/woodworth20a.html}.

\bibitem[Wu et~al.(2023)Wu, Braverman, and Lee]{wu2023implicit}
Jingfeng Wu, Vladimir Braverman, and Jason~D Lee.
\newblock Implicit bias of gradient descent for logistic regression at the edge
  of stability.
\newblock \emph{arXiv preprint arXiv:2305.11788}, 2023.

\bibitem[Yao et~al.(2020)Yao, Gholami, Keutzer, and Mahoney]{yao2020pyhessian}
Zhewei Yao, Amir Gholami, Kurt Keutzer, and Michael~W. Mahoney.
\newblock Pyhessian: Neural networks through the lens of the hessian.
\newblock In \emph{2020 IEEE International Conference on Big Data (Big Data)},
  pages 581--590, Dec 2020.
\newblock \doi{10.1109/BigData50022.2020.9378171}.

\bibitem[Yun et~al.(2021)Yun, Krishnan, and Mobahi]{yun2021a}
Chulhee Yun, Shankar Krishnan, and Hossein Mobahi.
\newblock A unifying view on implicit bias in training linear neural networks.
\newblock In \emph{International Conference on Learning Representations}, 2021.
\newblock URL \url{https://openreview.net/forum?id=ZsZM-4iMQkH}.

\bibitem[Zhu et~al.(2023)Zhu, Wang, Wang, Zhou, and Ge]{zhu2023understanding}
Xingyu Zhu, Zixuan Wang, Xiang Wang, Mo~Zhou, and Rong Ge.
\newblock Understanding edge-of-stability training dynamics with a minimalist
  example.
\newblock In \emph{The Eleventh International Conference on Learning
  Representations}, 2023.
\newblock URL \url{https://openreview.net/forum?id=p7EagBsMAEO}.

\end{thebibliography}
\bibliographystyle{plainnat}

\newpage
\setcounter{tocdepth}{2}
\tableofcontents
\newpage

\appendix

\section{Additional Experiments}\label{A:exp}

In this section, we present additional empirical evidence demonstrating the phenomenon of trajectory alignment, which supports the findings discussed in Section~\ref{S:trajectory_alignment} of our main paper.

\subsection{Experimental setup}\label{A:exp:setup}

\paragraph{Objective function.} We run gradient descent (GD) to minimize the objective function defined as
\begin{align*}
    \mathcal{L}(\bm{\Theta}) = \ell(f(\vx; \bm{\Theta}) - y),
\end{align*}
where $\bm{\Theta}$ represents the parameters, $\ell:\sR\to \sR$ is a loss function, $f:\sR^d\to \sR$ is a neural network, and $\{(\vx,y)\}$ denotes a single data point with $\vx\in \sR^d$ and $y\in \sR$. We also consider training on multiple data points $\{(\vx_i,y_i)\}_{i=1}^{n}$ with $\vx_i\in \sR^d$ and $y_i\in \sR$ for each $1\le i\le n$, where we minimize the objective function
\begin{align*}
    \mathcal{L}(\bm{\Theta}) \deq \frac{1}{n} \sum_{i=1}^{n} \ell(f(\vx_i; \bm{\Theta}) - y_i).
\end{align*}
In our experiments, we primarily focus on the $\log$-$\cosh$ loss function $\ell_{\text{$\log$-$\cosh$}}(p) = \log(\cosh(p))$, but we also investigate the square root loss $\ell_{\text{sqrt}}(p) = \sqrt{1+p^2}$.

\paragraph{Model architecture.}
We train a fully-connected $L$-layer neural network, denoted as $f(\,\cdot\,; \bm{\Theta}): \sR^d \to \sR$. The network is defined as follows:
\[
  f(\vx; \bm{\Theta}) = \vw_L^\top \phi(\mW_{L-1} \phi(\cdots \phi(\mW_2 \phi(\mW_1 \vx))\cdots)),
\]
where $\phi:\sR\to \sR$ is an activation function applied entry-wise, $\mW_1\in \sR^{m\times d}$, $\mW_l\in \sR^{m \times m}$ for $2\le l \le L-1$, and $\vw_L\in \sR^{m}$. All $L$ layers have the same width of $m$, and the biases of all layers are fixed to $0$.\footnote{While we fix the biases of all layers to $0$ to maintain consistency with our theory, the trajectory alignment phenomenon is consistently observed even for neural networks with bias. In Appendix~\ref{A:exp:cifar}, we consider training networks with bias.}  We consider three activations: hyperbolic tangent $\phi(t) = \tanh(t)$, exponential linear unit $\phi(t) = \text{ELU}(t)$, and linear $\phi(t) = t$.

\paragraph{Weight initialization.} We perform gradient descent (GD) using five different randomly initialized sets of weights. The weights are initialized using Xavier initialization, and each layer is multiplied by a rescaling factor (gain) of $\alpha$. In the plots presented throughout this section, we mark the initialization points with an `x' to distinguish them from other points on the trajectories.

\paragraph{Canonical reparameterization.}
   We plot GD trajectories after applying the canonical reparameterization introduced in Definition~\ref{D:canonical}:
   \begin{align*}
    (p, q) \deq \left( f(\vx; \bm{\Theta}) - y, \, \frac{2}{\eta \norm{ \nabla_{\bm{\Theta}} f(\vx; \bm{\Theta})}_2^2} \right),
    \vspace*{-5pt}
  \end{align*}
  where $\eta$ denotes the step size. For training on multiple data points, we employ the generalized canonical reparameterization as defined in Eq.~\eqref{E:canonical_multidata}:
  \begin{align*}
  (p, q)\deq \left( P\left(f(\mX; \bm{\Theta}) - \vy\right) ,\, \frac{2n}{\eta\left\lVert \sum_{i=1}^n (\nabla_{\bm{\Theta}} f(\vx_i; \bm{\Theta}))^{\otimes 2} \right\rVert_2} \right),
 \end{align*}
 where $P: \sR^{n}\to \sR$ can represent the mean value or vector norms. Specifically, we mainly focus on the mean value $P(\vz) = \frac{1}{n}\sum_{i=1}^n z_i$, but we also examine vector norms such as $P(\vz) = \norm{\vz}_1$, $P(\vz) = \norm{\vz}_2$, and $P(\vz) = \norm{\vz}_\infty$, where $\vz = (z_1, z_2, \ldots, z_n)\in \sR^n$.

 For large networks, explicitly calculating the $\ell_2$ matrix norm $\left\lVert \sum_{i=1}^n (\nabla_{\bm{\Theta}} f(\vx_i; \bm{\Theta}))^{\otimes 2} \right\rVert_2$ is infeasible. Therefore, we adopt a fast and efficient matrix-free method based on power iteration, as proposed by \citet{yao2020pyhessian}. This method allows us to numerically compute the $\ell_2$ norm of large-scale symmetric matrices.

\newpage
\subsection{Training on a single data point}\label{A:exp:single}

\paragraph{Training data.} We conduct experiments on a synthetic single data point $(\vx, y)$, where $\vx = \ve_1 = (1,0,\ldots,0)\in \sR^d$ and $y \in\sR$. Throughout the experiments in this subsection, we keep the data dimension fixed at $d=10$, the data label at $y=1$, and the step size at $\eta=0.01$.

\paragraph{The effect of initialization scale.} We investigate the impact of the initialization scale $\alpha$ while keeping other hyperparameters fixed. We vary the initialization scale across $\{0.5, 1.0, 2.0, 4.0\}$. Specifically, in Figure~\ref{Figure:single:scale}, we train a $3$-layer fully connected neural network with $\tanh$ activation. We observe that smaller initialization scales ($\alpha = 0.5, 1.0$) lead to trajectories in the gradient flow regime, while larger initialization scales ($\alpha = 2.0, 4.0$) result in trajectories in the EoS regime. This behavior is primarily due to the initial value of $q$ being smaller than $1$ for larger initialization scales, which causes the trajectory to fall into the EoS regime.

\paragraph{The effect of network width.} We investigate the impact of network width on the trajectory alignment phenomenon. While keeping other hyperparameters fixed, we control the width $m$ with values of $\{64, 128, 256, 512\}$. In Figure~\ref{Figure:single:width_depth3}, we train $3$-layer fully connected neural networks with $\tanh$ activation and an initialization scale of $\alpha=4$. Additionally, in Figure~\ref{Figure:single:width_depth10}, we examine the same setting but with different depth, training $10$-layer fully connected neural networks.

It is commonly observed that the alignment trend becomes more pronounced as the network width increases. In Figures~\ref{Figure:single:width_depth3} and~\ref{Figure:single:width_depth10}, all trajectories are in the EoS regime. However, narrower networks ($m=64, 128$) do not exhibit the trajectory alignment phenomenon, while wider networks ($m=256, 512$) clearly demonstrate this behavior. These results indicate that network width plays a significant role in the trajectory alignment property of GD.

\paragraph{The effect of loss and activation functions.}
The trajectory alignment phenomenon is consistently observed in various settings, including those with square root loss and different activation functions. In Figure~\ref{Figure:single:lossact}, we investigate a $3$-layer fully connected neural network with a width of $m=256$ and an initialization scale of $\alpha=2$. We explore different activation functions, including $\tanh$, ELU, and linear, and consider both $\log$-$\cosh$ loss and square-root loss. Across all these settings, we observe the trajectory alignment phenomenon, where the GD trajectories align on a curve $q = \frac{\ell'(p)}{p}$. 

\begin{figure}
  \centering
\begin{subfigure}{0.24\textwidth}
    \caption{$\alpha=0.5$}
    \includegraphics[width=\textwidth]{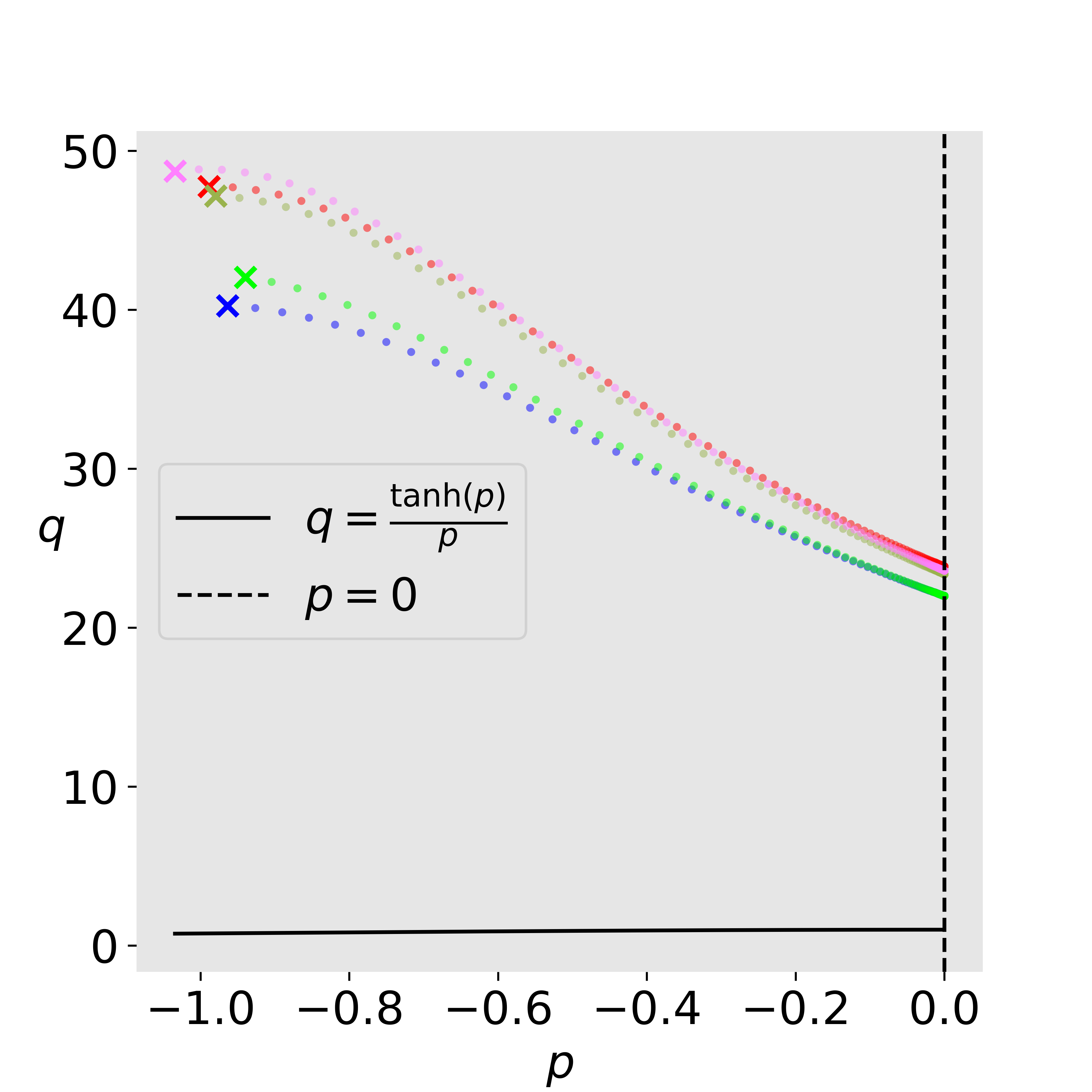}

\end{subfigure}
\begin{subfigure}{0.24\textwidth}
    \caption{$\alpha=1.0$}
    \includegraphics[width=\textwidth]{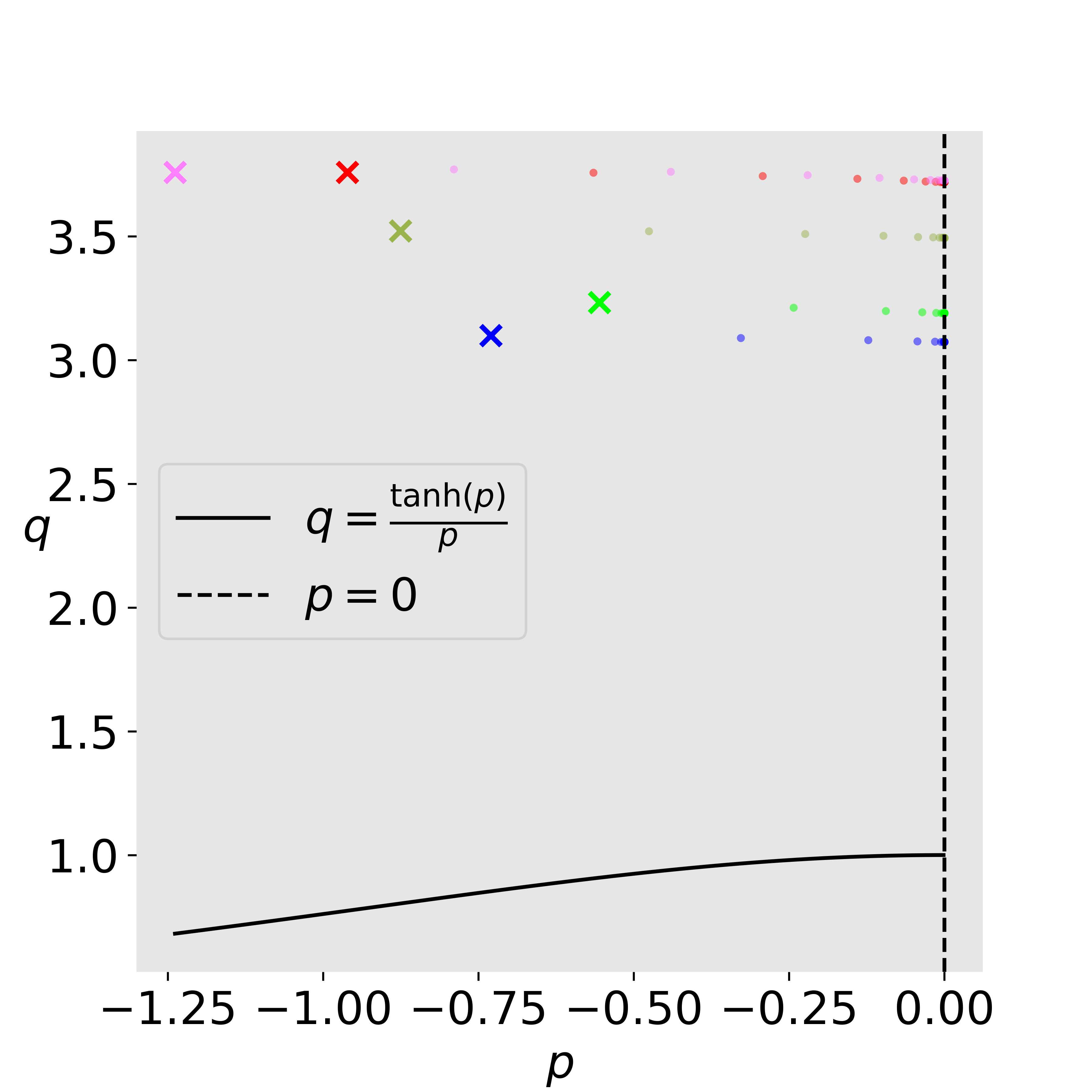}

\end{subfigure}
\begin{subfigure}{0.24\textwidth}
  \caption{$\alpha=2.0$}
    \includegraphics[width=\textwidth]{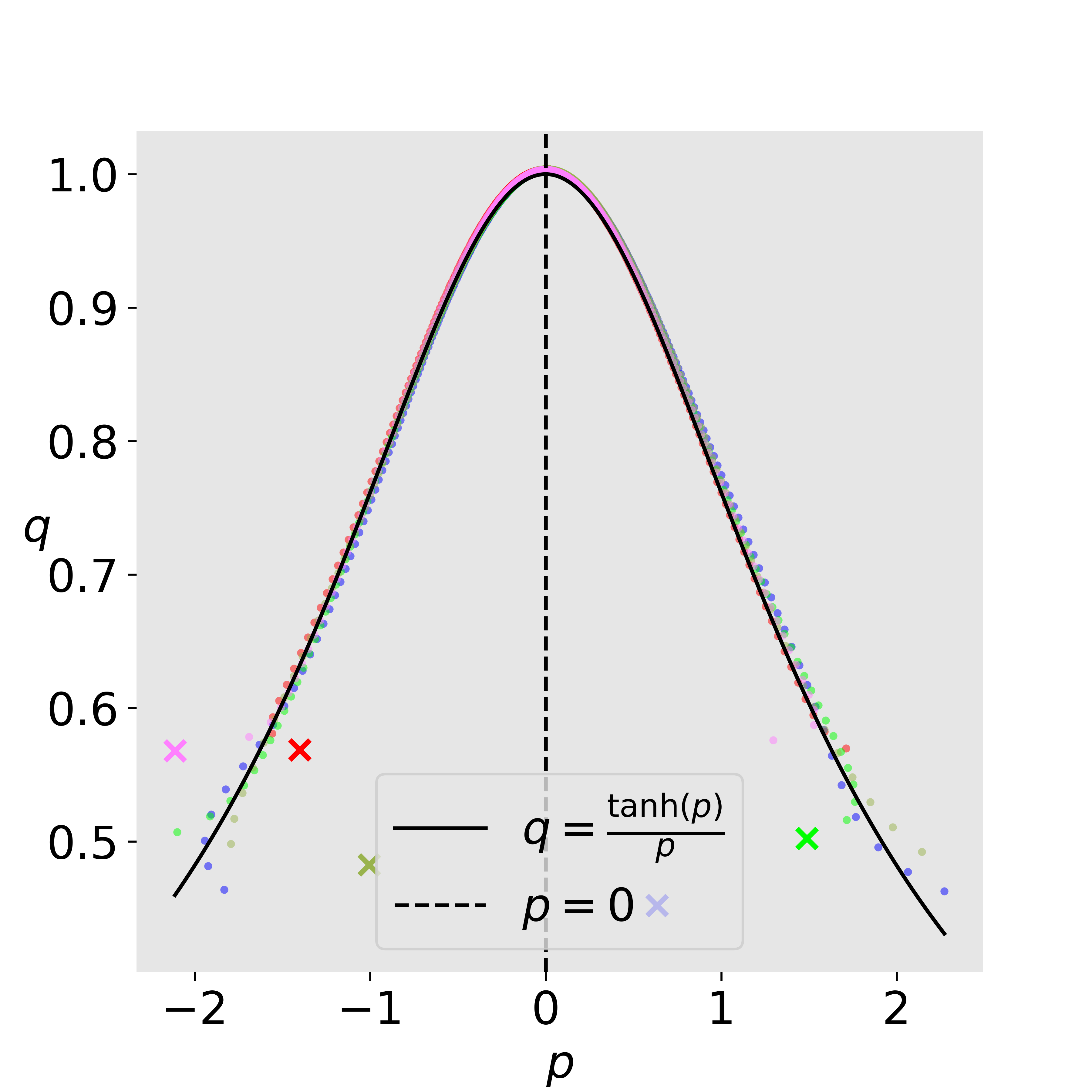}

\end{subfigure}
\begin{subfigure}{0.24\textwidth}
  \caption{$\alpha=4.0$}
    \includegraphics[width=\textwidth]{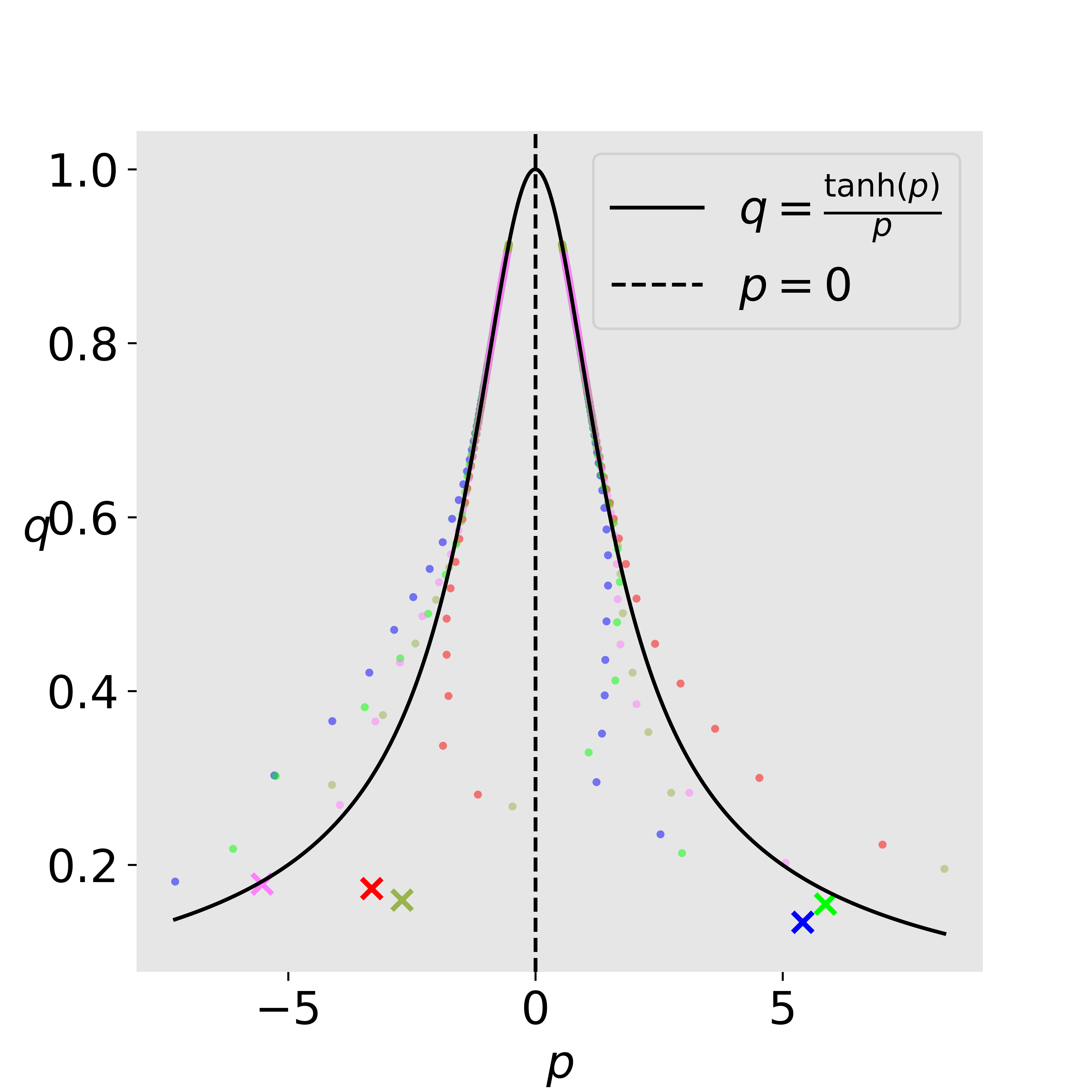}

\end{subfigure}
\caption{GD trajectories ($1000$ iterations) of $3$-layer fully connected neural networks with $\tanh$ activation and a width of $m=256$. The plots depict the trajectories for  \textbf{different initialization scales}: $\alpha = 0.5, 1.0, 2.0, 4.0$. Smaller initialization scales ($\alpha = 0.5, 1.0$) lead to trajectories in the gradient flow regime, while larger initialization scales ($\alpha = 2.0, 4.0$) result in trajectories in the EoS regime. }
\label{Figure:single:scale}
\end{figure}

\begin{figure}
  \centering
\begin{subfigure}{0.24\textwidth}
    \caption{$m=64$}
    \includegraphics[width=\textwidth]{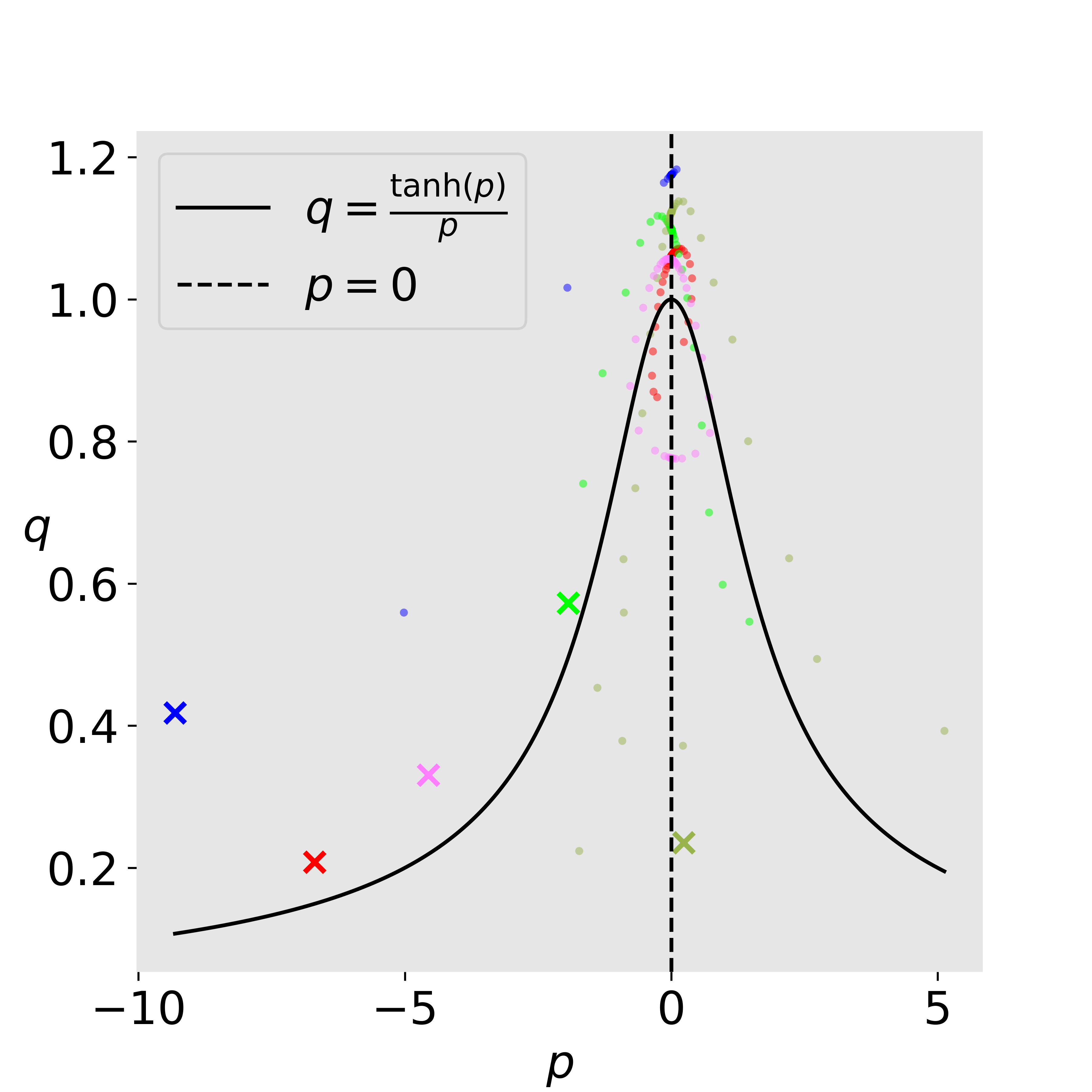}

\end{subfigure}
\begin{subfigure}{0.24\textwidth}
    \caption{$m=128$}
    \includegraphics[width=\textwidth]{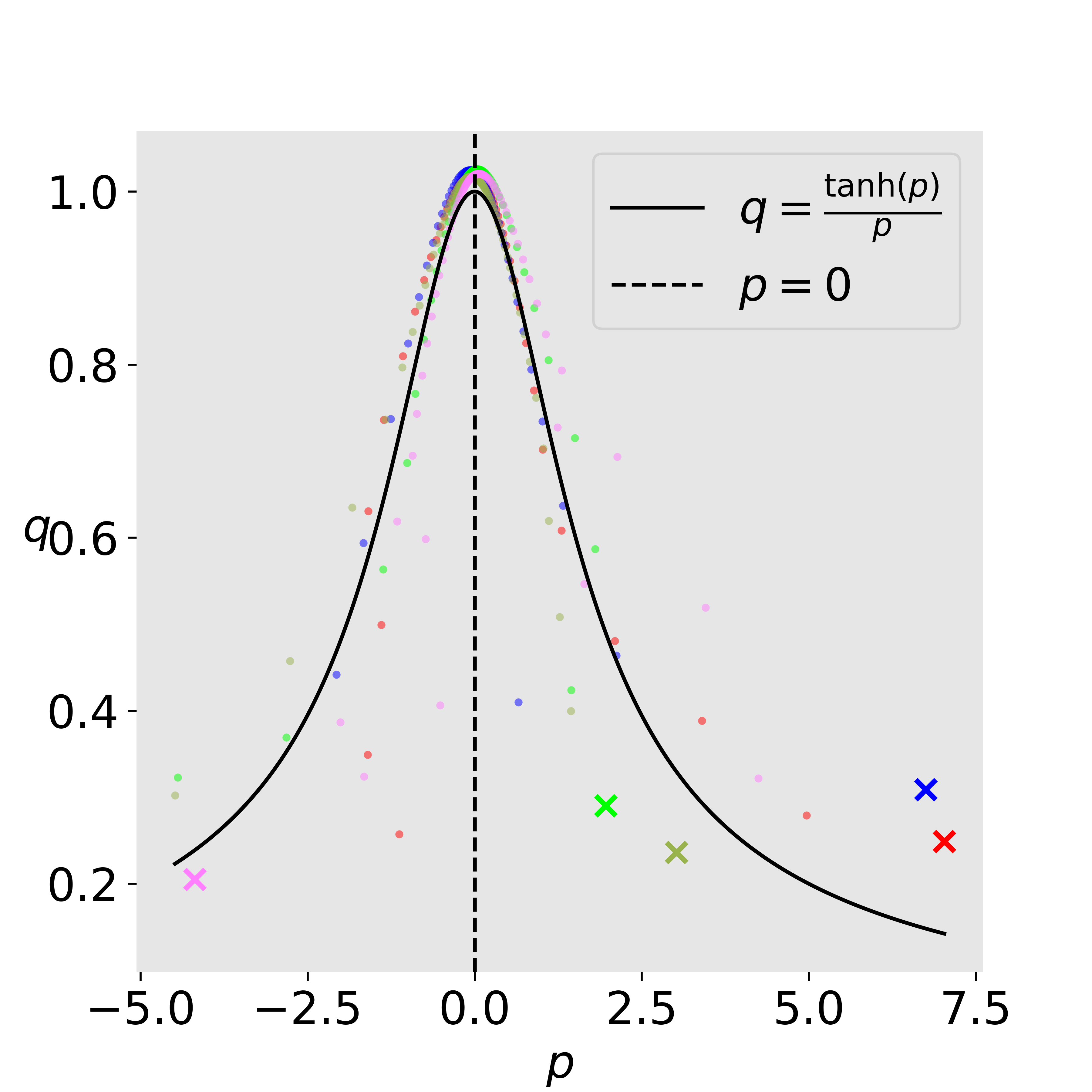}

\end{subfigure}
\begin{subfigure}{0.24\textwidth}
  \caption{$m=256$}
    \includegraphics[width=\textwidth]{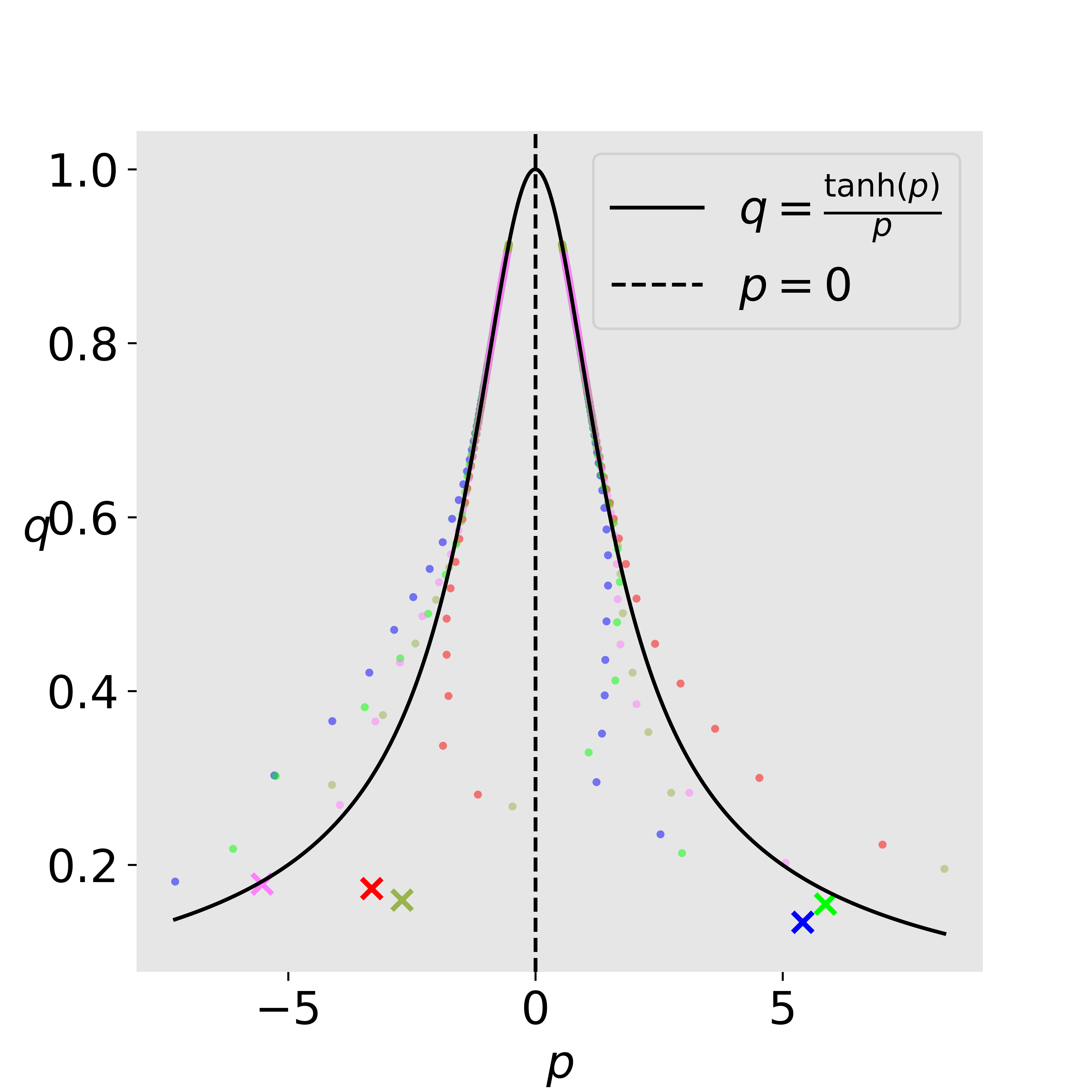}

\end{subfigure}
\begin{subfigure}{0.24\textwidth}
  \caption{$m=512$}
    \includegraphics[width=\textwidth]{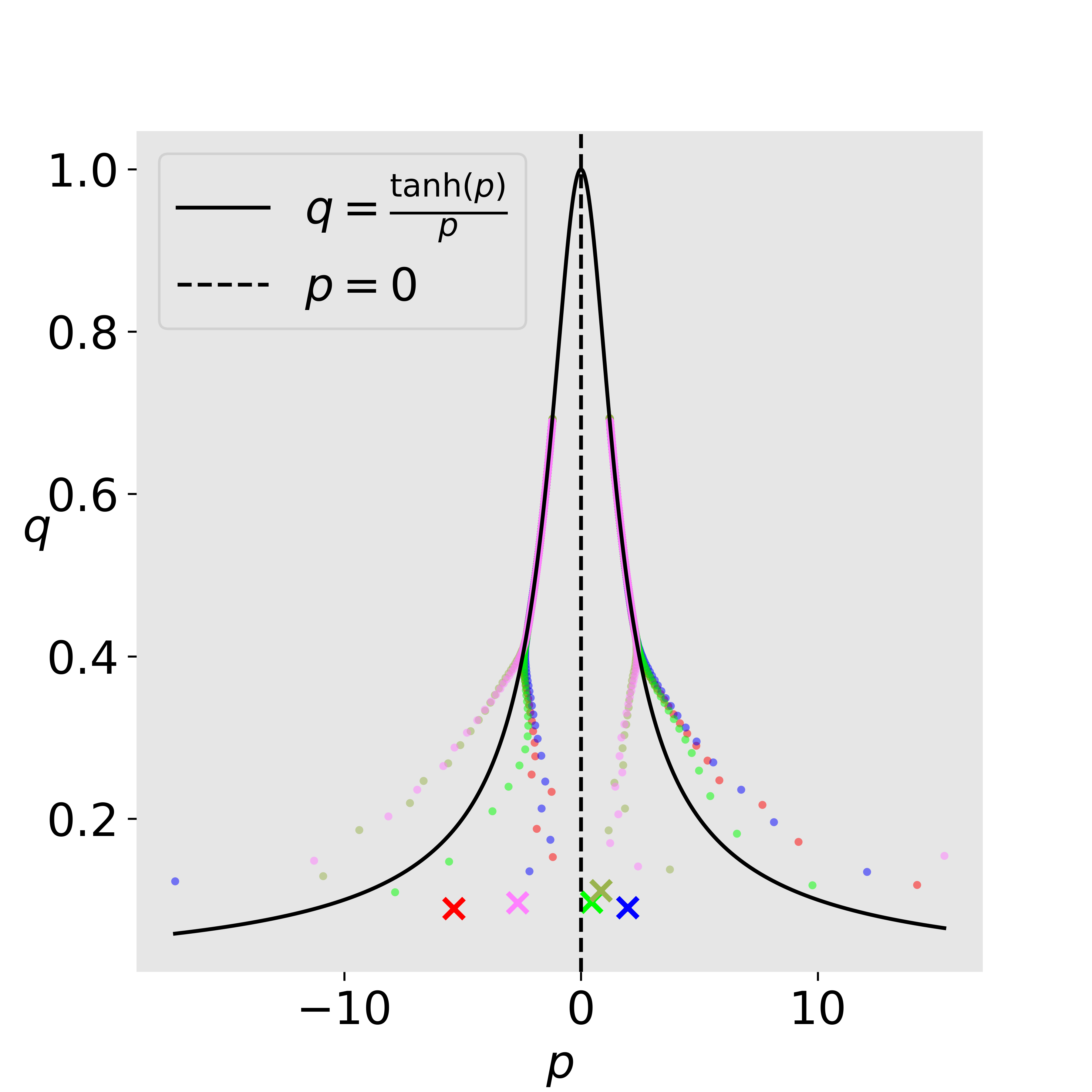}

\end{subfigure}
\caption{GD trajectories ($1000$ iterations) of $3$-layer fully connected neural networks with $\tanh$ activation and an initialization scale of $\alpha=4$. The plots depict the trajectories for \textbf{different network widths}: $m = 64, 128, 256, 512$. Narrower networks ($m=64, 128$) do not exhibit trajectory alignment, while wider networks ($m=256, 512$) clearly demonstrate the trajectory alignment phenomenon.}
\label{Figure:single:width_depth3}
\end{figure}

\begin{figure}
  \centering
\begin{subfigure}{0.24\textwidth}
    \caption{$m=64$}
    \includegraphics[width=\textwidth]{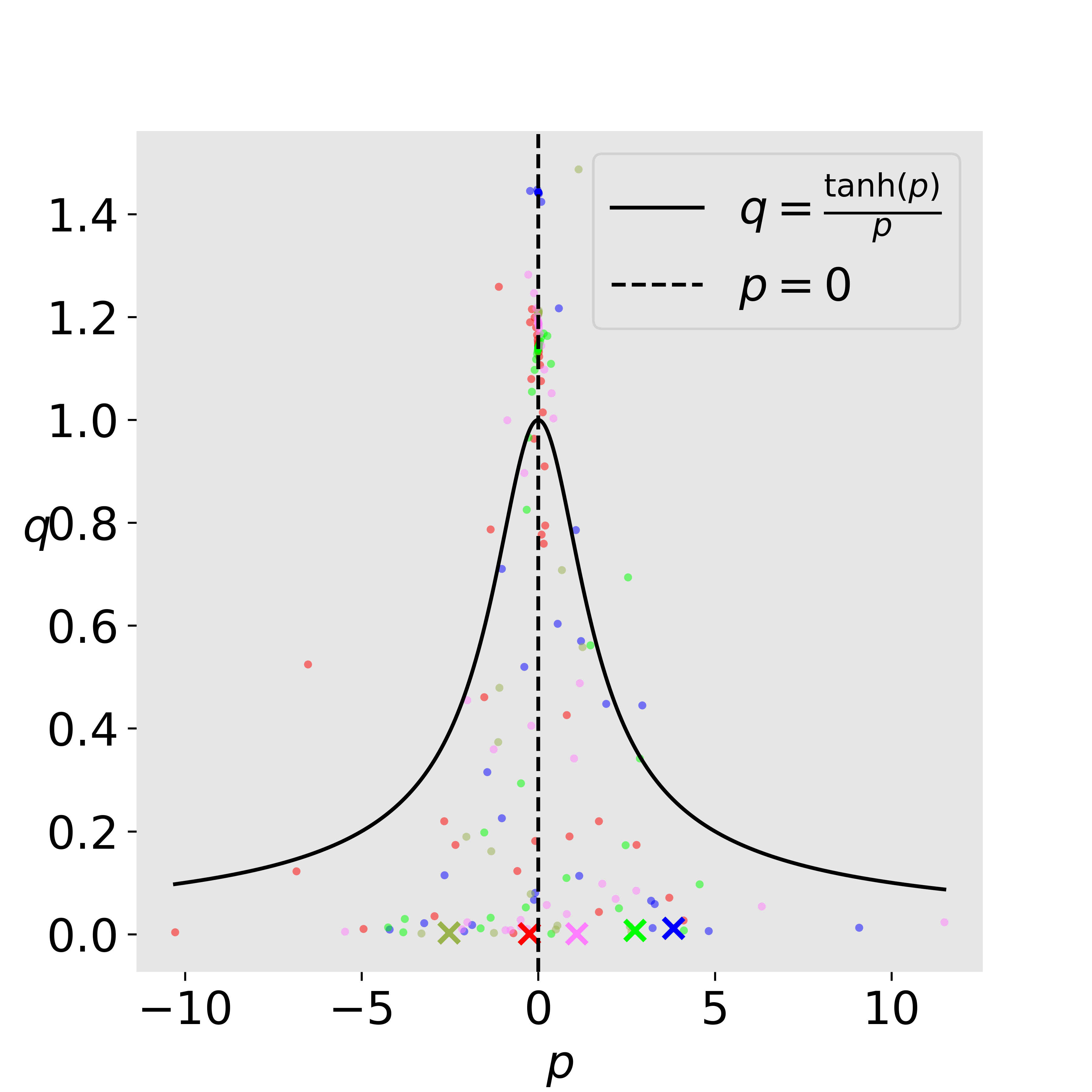}

\end{subfigure}
\begin{subfigure}{0.24\textwidth}
    \caption{$m=128$}
    \includegraphics[width=\textwidth]{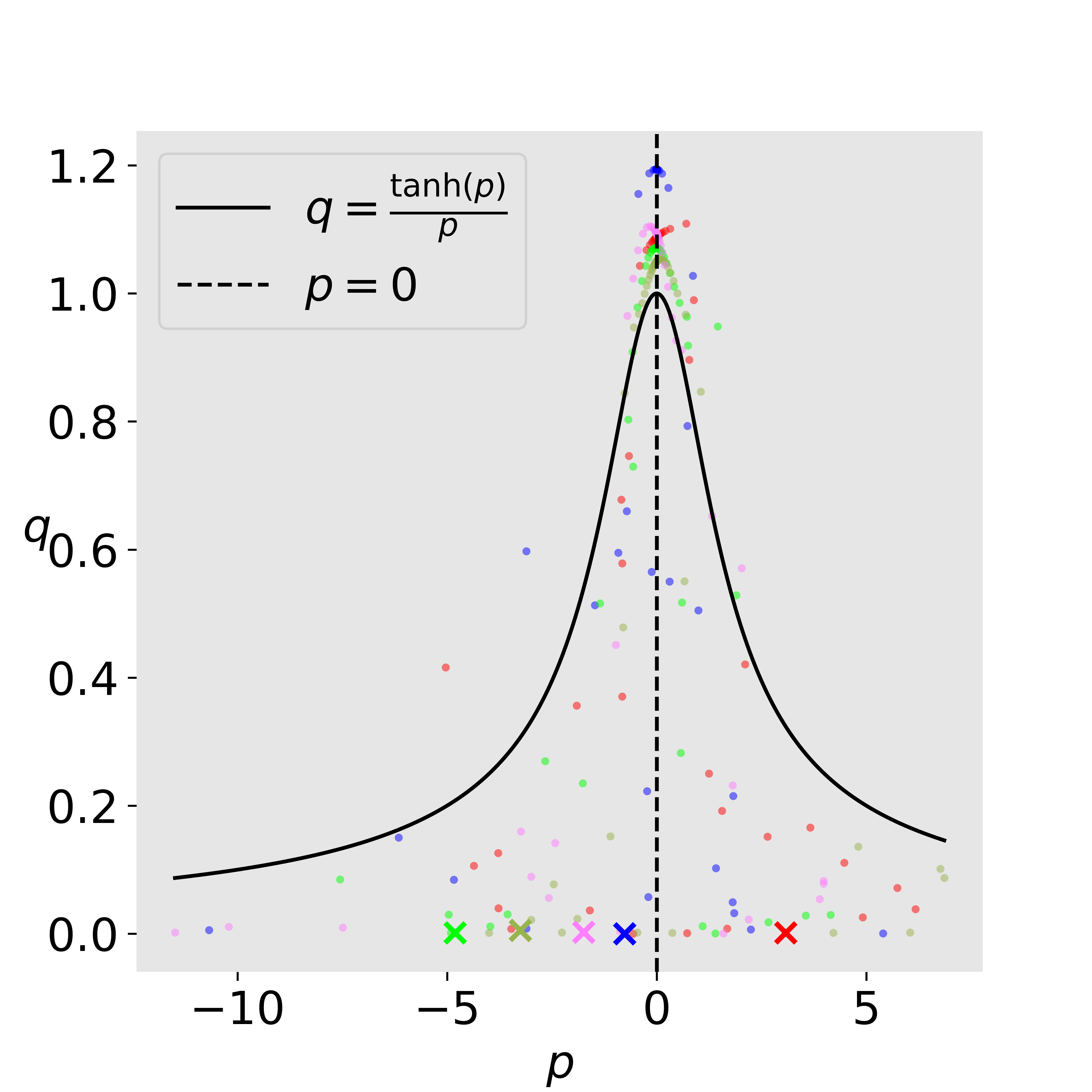}

\end{subfigure}
\begin{subfigure}{0.24\textwidth}
  \caption{$m=256$}
    \includegraphics[width=\textwidth]{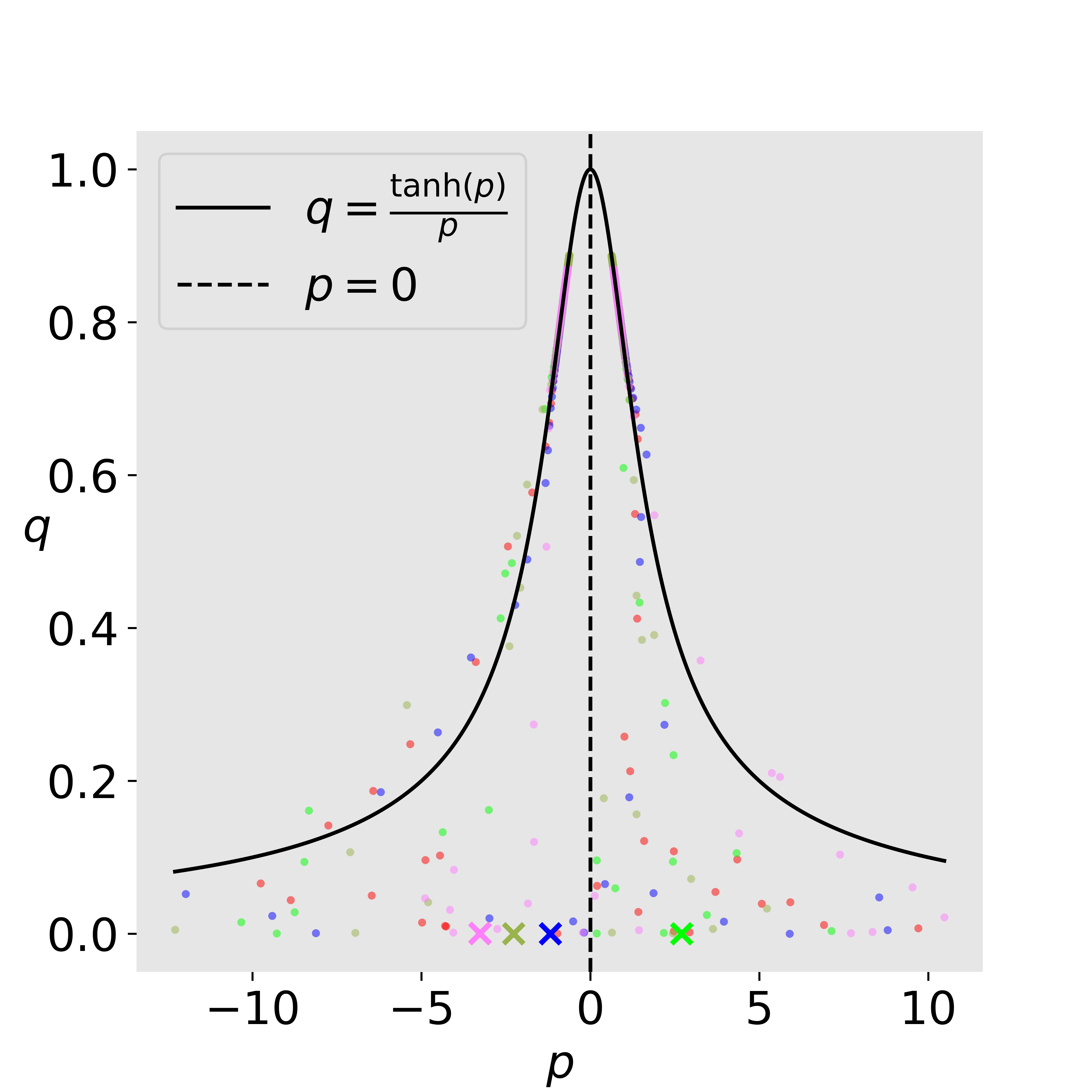}

\end{subfigure}
\begin{subfigure}{0.24\textwidth}
  \caption{$m=512$}
    \includegraphics[width=\textwidth]{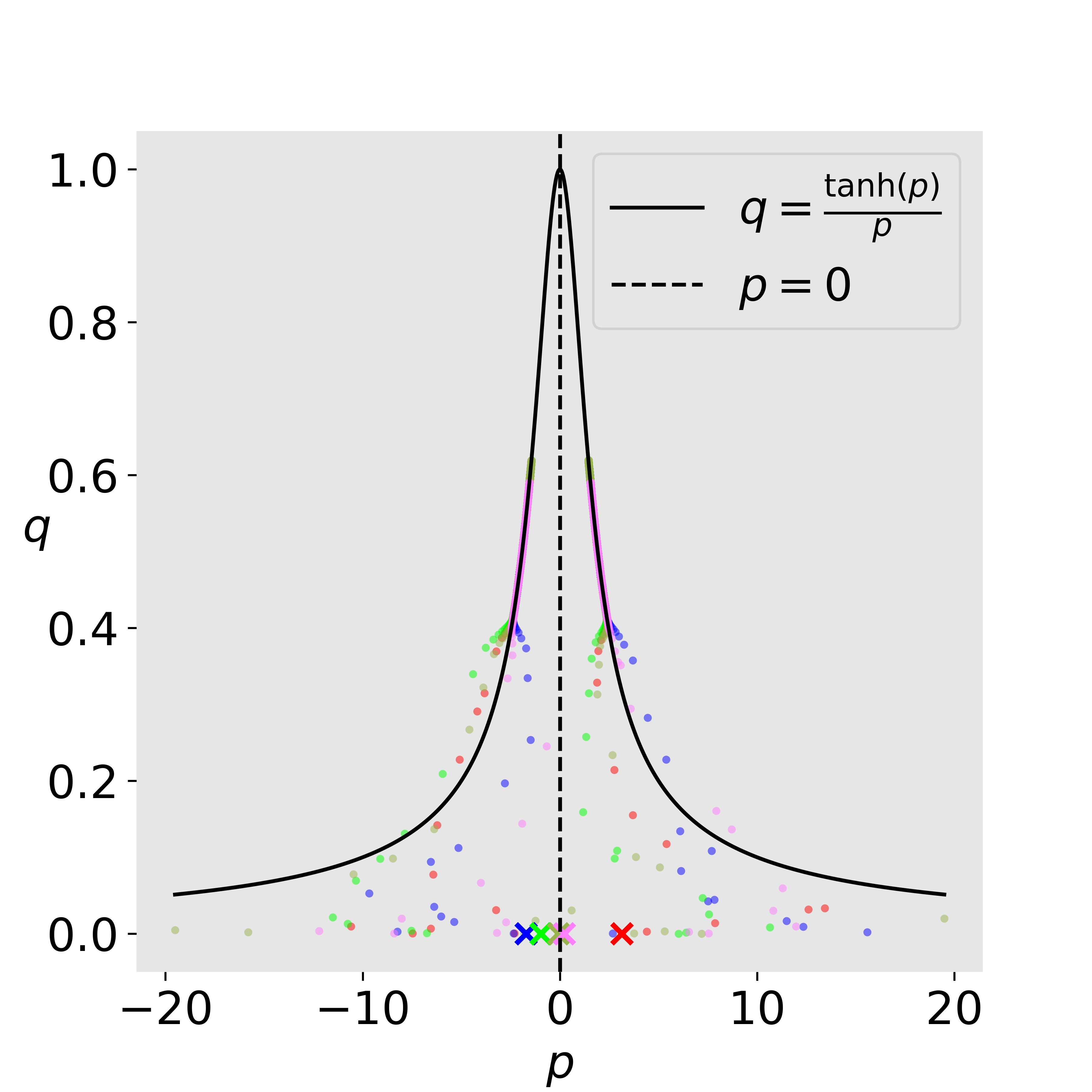}

\end{subfigure}
\caption{GD trajectories ($1000$ iterations) of $10$-layer fully connected neural networks with $\tanh$ activation and an initialization scale of $\alpha=4$. The plots depict the trajectories for \textbf{different network widths}: $m = 64, 128, 256, 512$. Similar to Figure~\ref{Figure:single:width_depth3}, narrower networks ($m=64, 128$) do not exhibit trajectory alignment, while wider networks ($m=256, 512$) clearly demonstrate the trajectory alignment phenomenon.}
\label{Figure:single:width_depth10}
\end{figure}

\begin{figure}
\centering
\begin{subfigure}[b]{0.25\textwidth}
    \centering
    \caption{$\log$-$\cosh$, $\tanh$}
    \includegraphics[width=0.95\hsize]{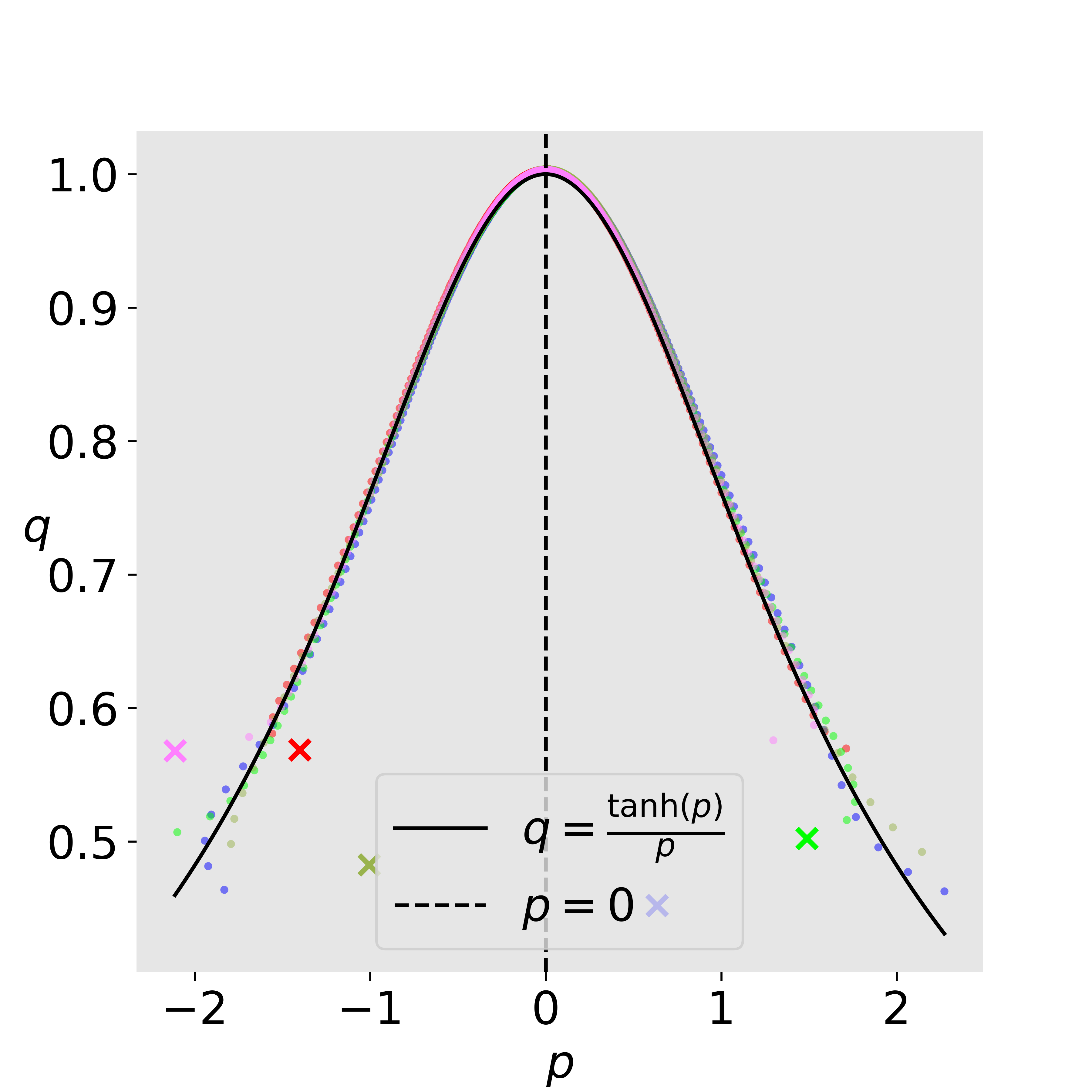}

\end{subfigure}
    \hfil
\begin{subfigure}[b]{0.25\textwidth}
    \centering
    \caption{$\log$-$\cosh$, ELU}
    \includegraphics[width=0.95\hsize]{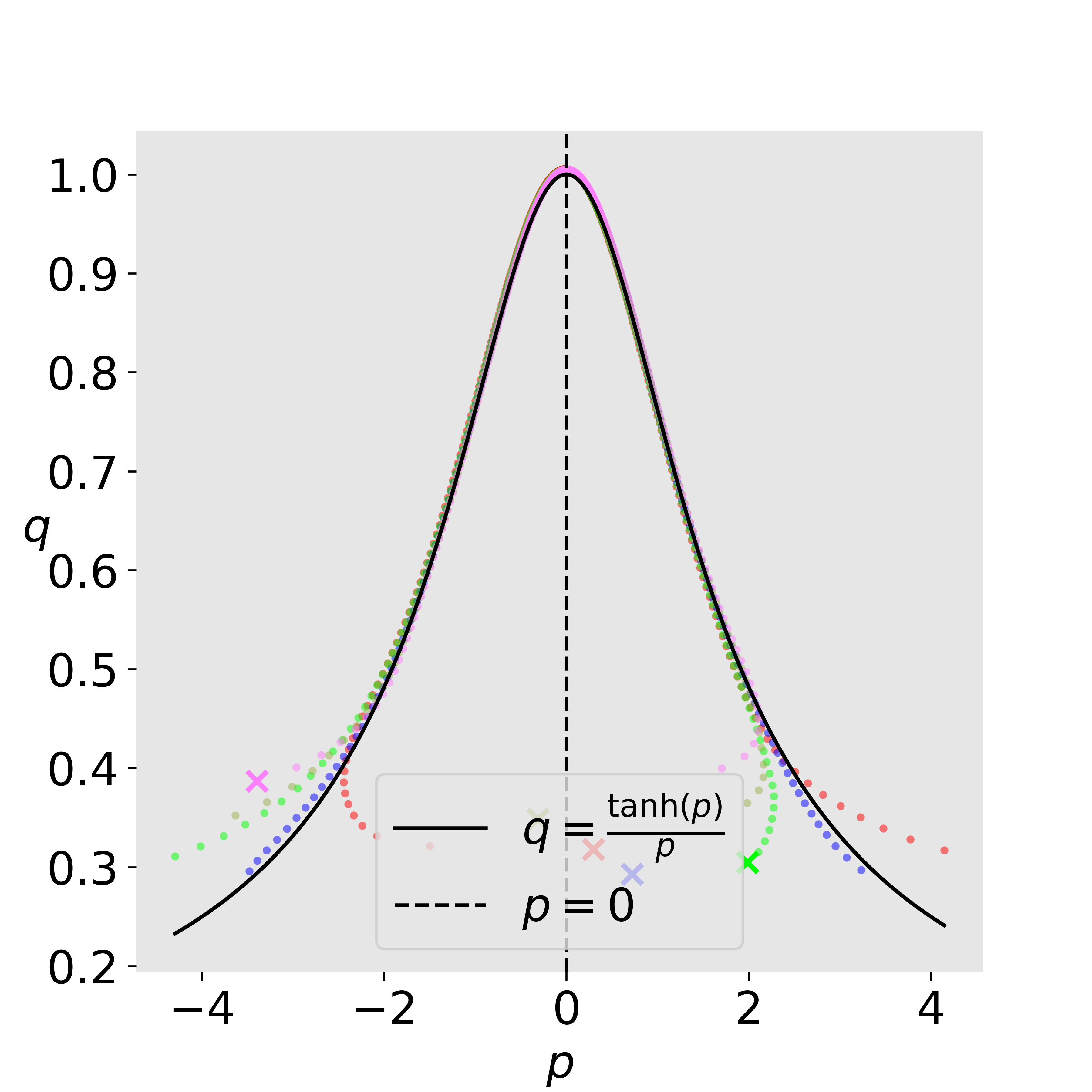}

\end{subfigure}
        \hfil
\begin{subfigure}[b]{0.25\textwidth}
    \centering
  \caption{$\log$-$\cosh$, linear}
    \includegraphics[width=0.95\hsize]{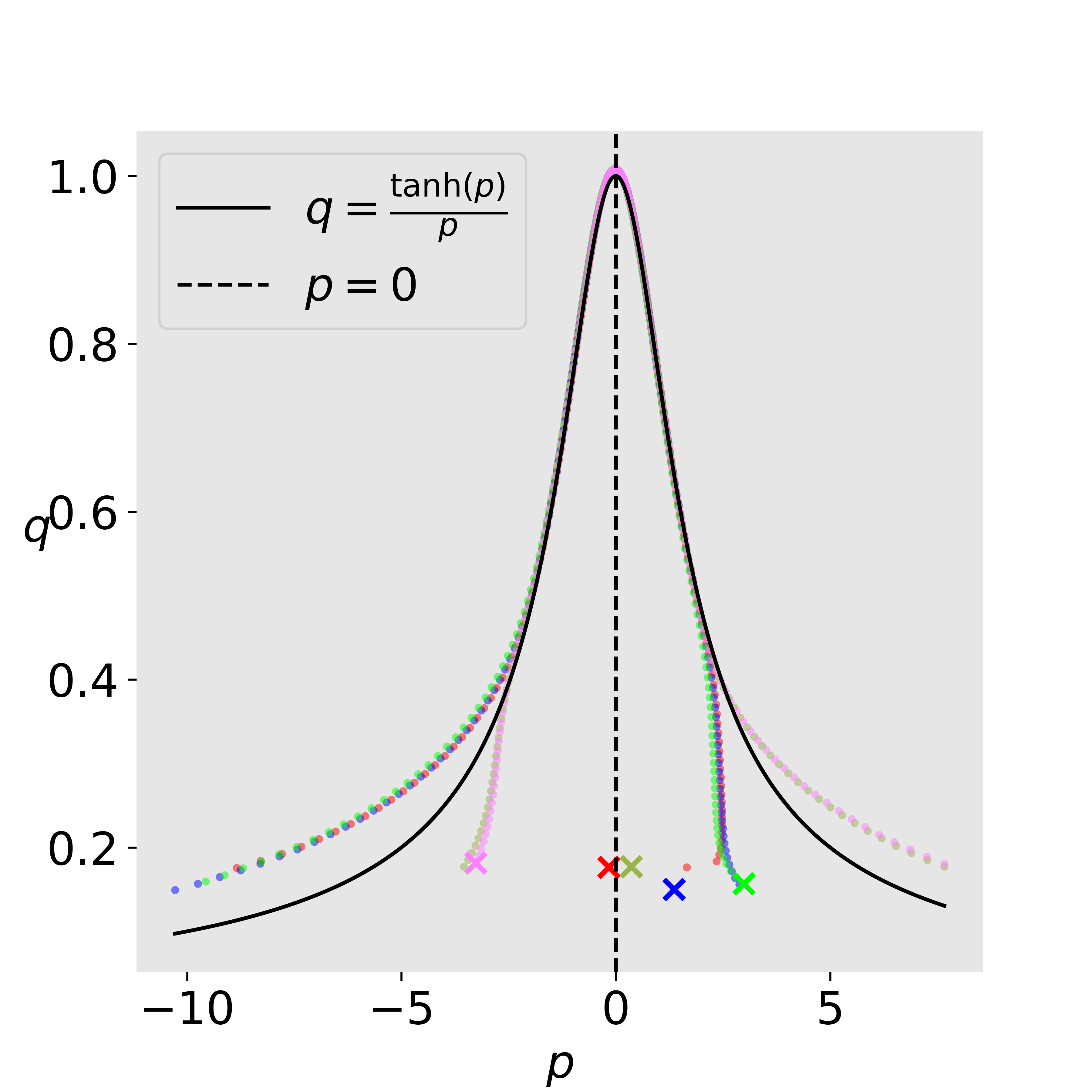}

\end{subfigure}
    \vspace{1cm}
\begin{subfigure}[b]{0.25\textwidth}
    \centering
  \caption{square-root, $\tanh$}
    \includegraphics[width=0.95\hsize]{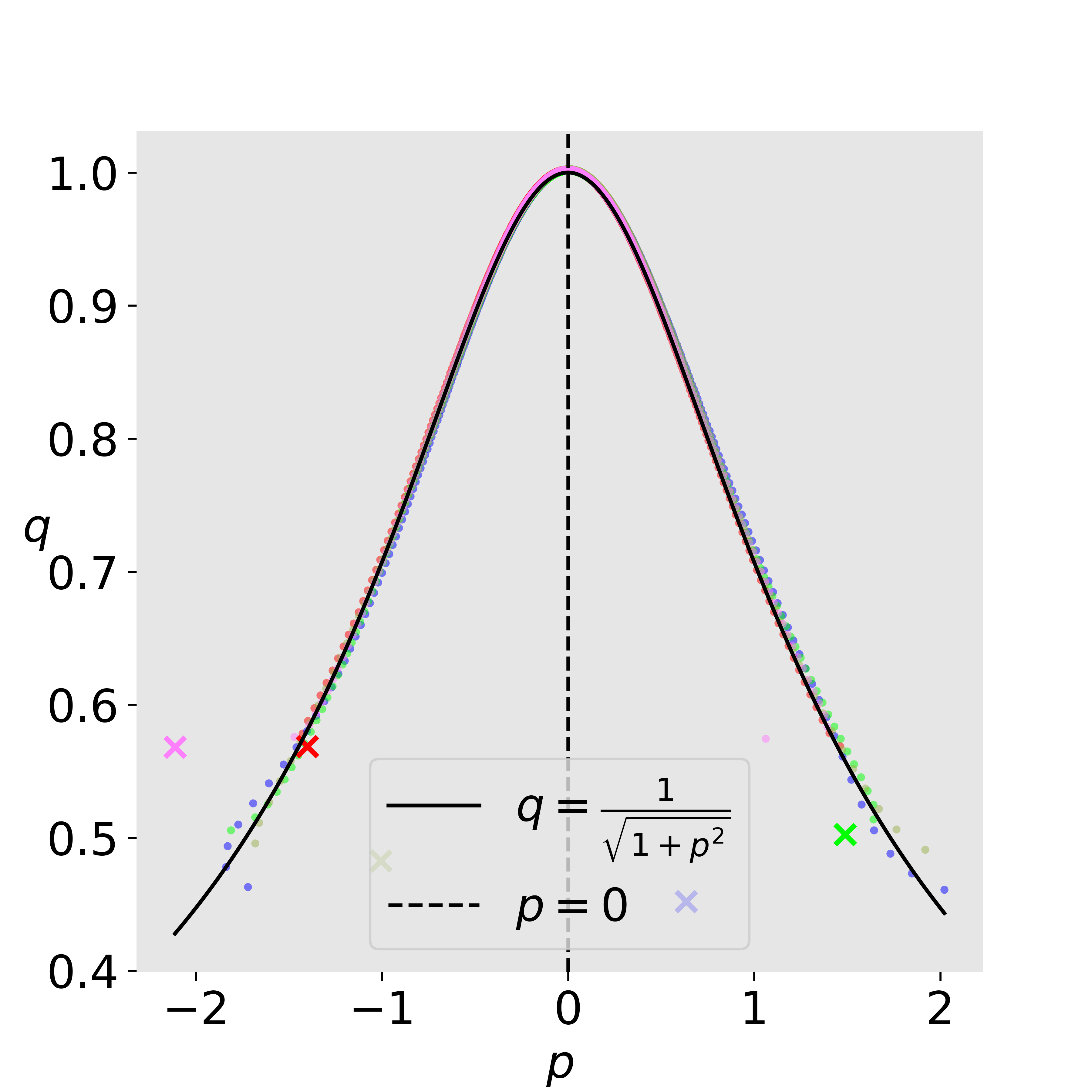}

\end{subfigure}
\hfil
\begin{subfigure}[b]{0.25\textwidth}
    \centering
  \caption{square-root, ELU}
    \includegraphics[width=0.95\hsize]{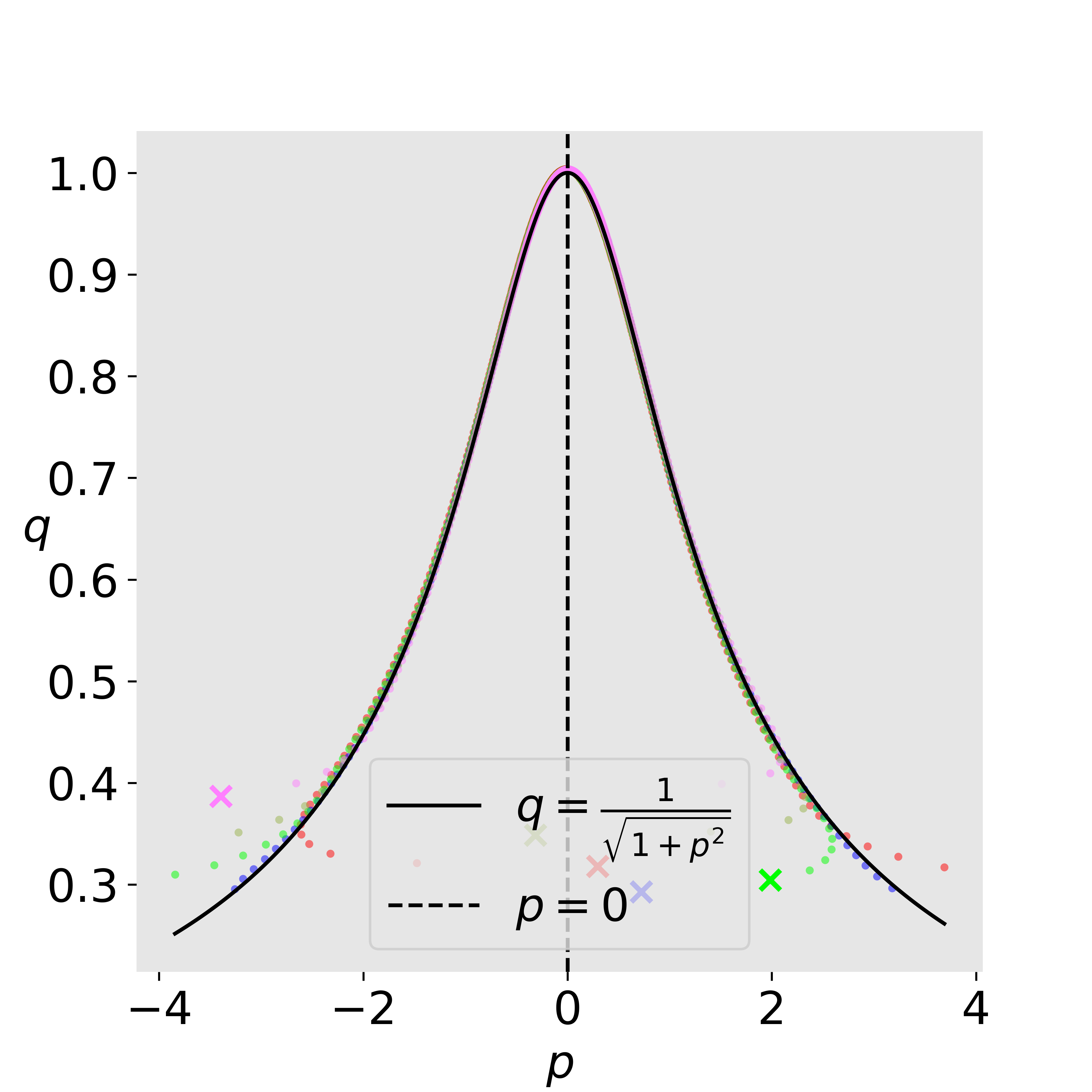}

\end{subfigure}
\hfil
\begin{subfigure}[b]{0.25\textwidth}
    \centering
  \caption{square-root, linear}
    \includegraphics[width=0.95\hsize]{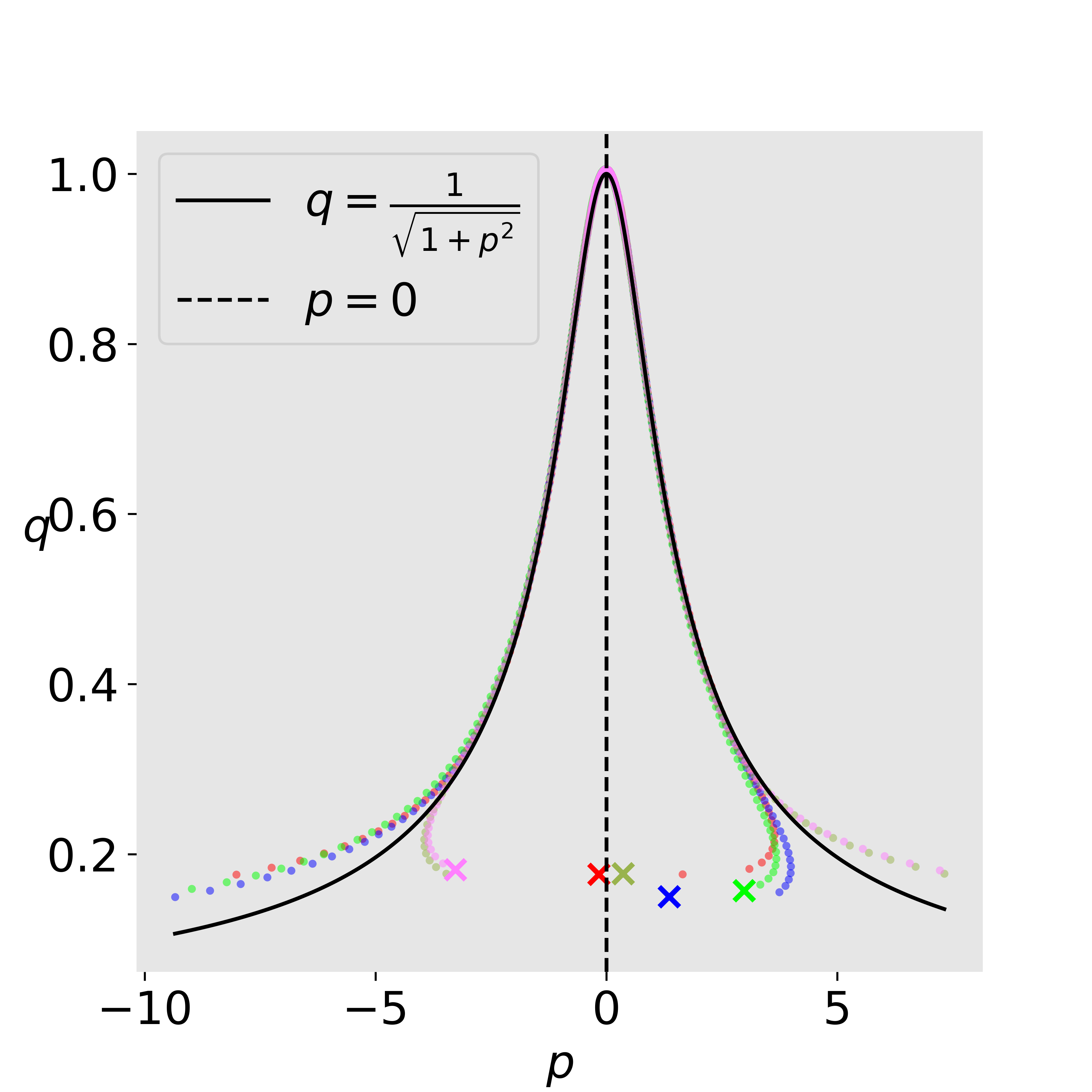}

\end{subfigure}
\vspace{-1cm}
\caption{GD trajectories ($1000$ iterations) of $3$-layer fully connected neural networks with \textbf{different loss and activation} functions. The networks have a width of $m=256$ and an initialization scale of $\alpha=2$. The plots illustrate the trajectories for various combinations of activation functions ($\tanh$, ELU, and linear) and loss functions ($\log$-$\cosh$ and square-root loss). In all these settings, the trajectory alignment phenomenon is observed, where GD trajectories align on a curve $q = \frac{\ell'(p)}{p}$.
\textbf{Top row}: trajectories with $\log$-$\cosh$ loss. From left to right: $\tanh$, ELU, and linear activations.
\textbf{Bottom row}: trajectories with square-root loss. From left to right: $\tanh$, ELU, and linear activations.}
\label{Figure:single:lossact}
\end{figure}

\newpage
\subsection{Training on multiple synthetic data Points}\label{A:exp:multiple}

\paragraph{Training data.}

We consider training on a synthetic dataset consisting of $n$ data points, denoted as ${(\vx_i, y_i)}_{i=1}^n$. The input vectors $\vx_i$ are sampled from a standard Gaussian distribution $\mathcal{N}(\bm{0}, \mI)$, where $\vx_i\in \sR^d$, and the corresponding target values $y_i$ are sampled from a Gaussian distribution $\mathcal{N}(0,1)$, where $y_i\in \sR$. Throughout our experiments in this subsection, we use a fixed data dimension of $d=10$ and a step size of $\eta=0.01$.

\paragraph{The effect of function $P$.}

To investigate the impact of different choices of the function $P$, we train a $3$-layer fully connected neural network with $\tanh$ activation. The network has a width of $m=256$ and is initialized with a scale of $\alpha=4$. The training is performed on a dataset consisting of $n=10$ data points. Figure~\ref{Figure:multiple-P} displays the trajectories of GD trajectories under the generalized canonical reparameterization defined in Eq.~\eqref{E:canonical_multidata} for various choices of the function $P$. These choices include the mean, $\ell_1$ norm, $\ell_2$ norm, and $\ell_\infty$ norm.

We observe that GD trajectories exhibit alignment behavior across different choices of the function $P$. Notably, when $P$ is selected as the mean, the trajectories align on the curve $q = \frac{\ell'(p)}{p}$. However, when $P$ is based on vector norms, the alignment occurs on different curves. The precise characterization of these curves remains as an interesting open question for further exploration.
 
\begin{figure}[!htb]
  \centering
\begin{subfigure}{0.24\textwidth}
    \caption{$P(\vz)=\frac{1}{n}\sum_{i=1}^n z_i$}
    \includegraphics[width=\textwidth]{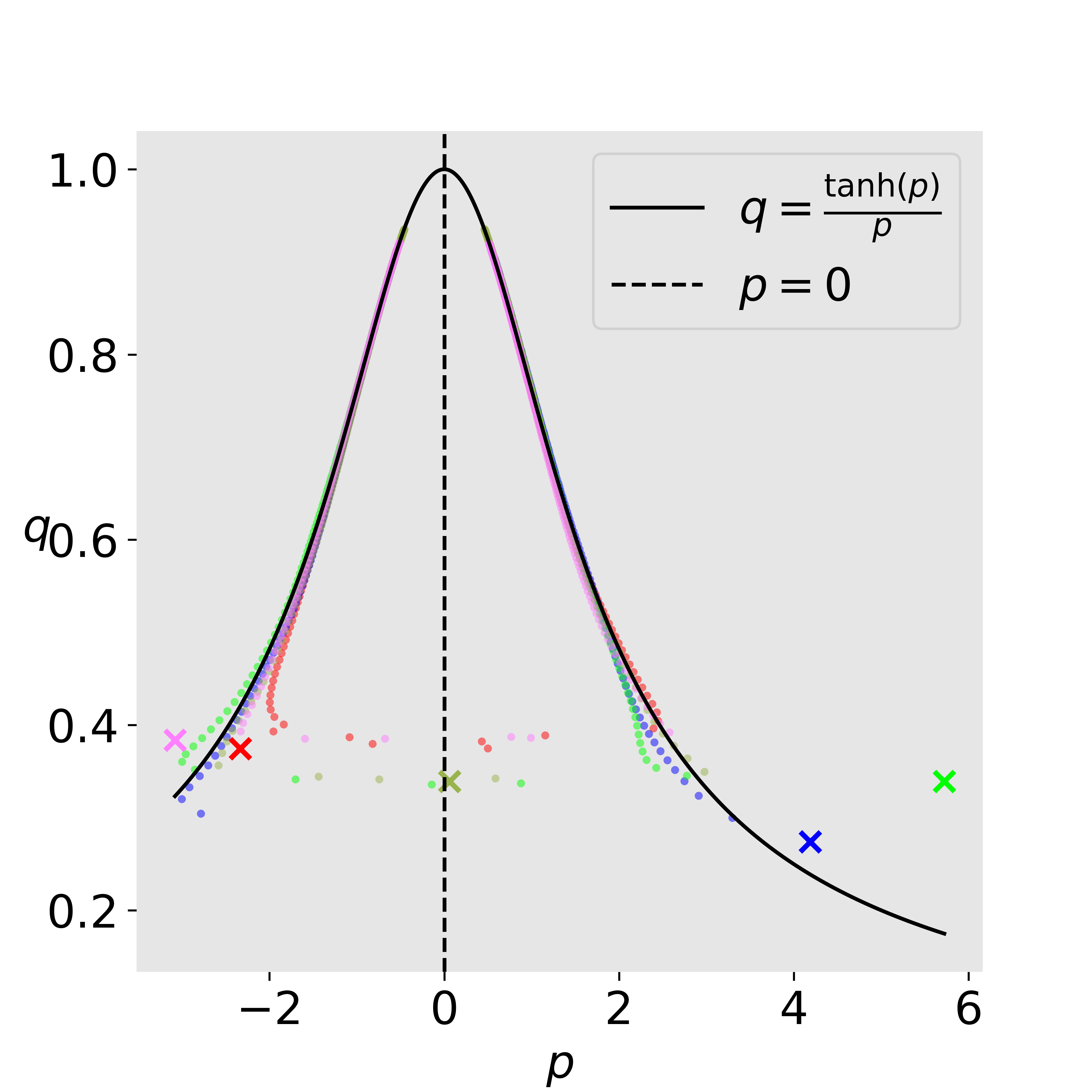}

\end{subfigure}
\begin{subfigure}{0.24\textwidth}
    \caption{$P(\vz)=\norm{\vz}_1$}
    \includegraphics[width=\textwidth]{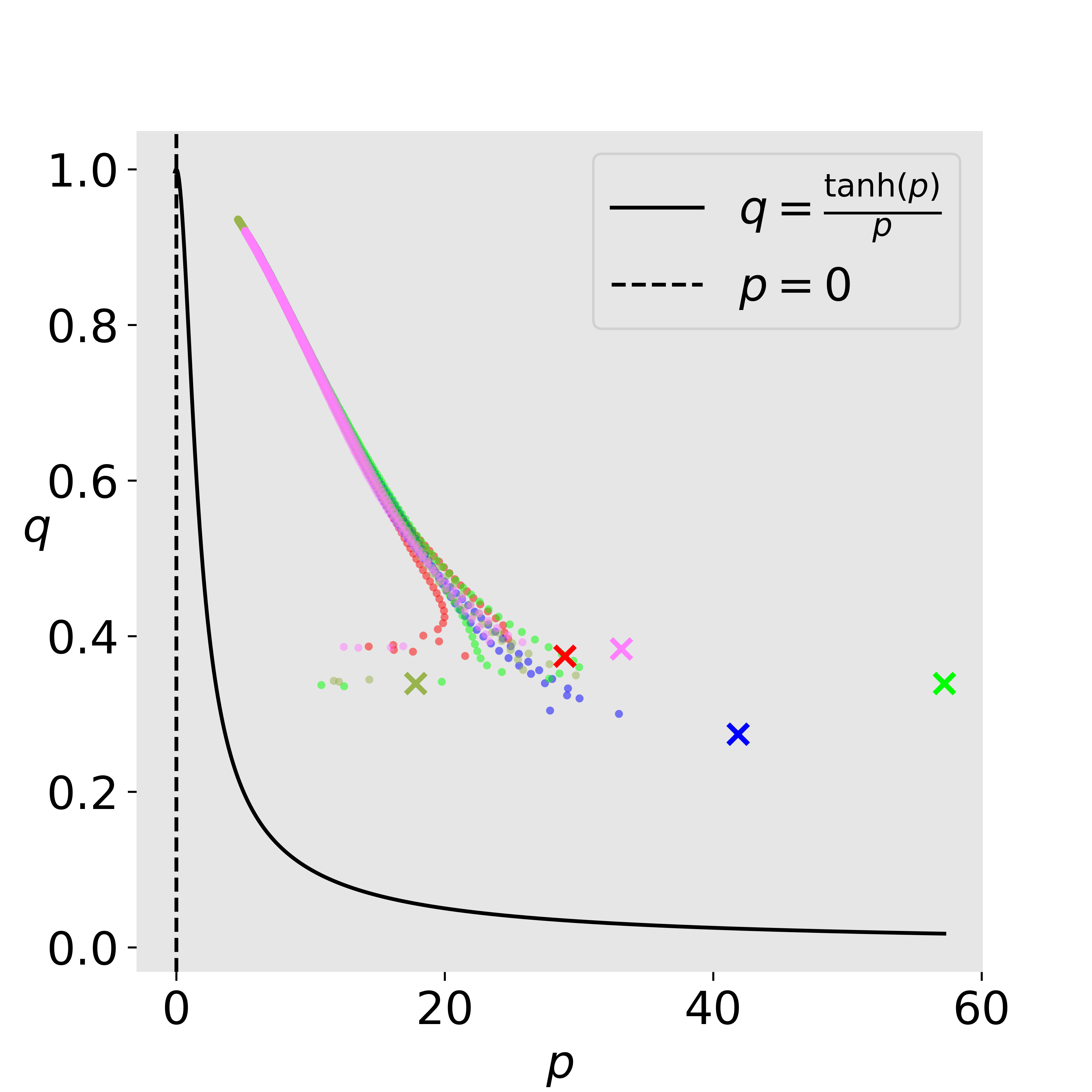}

\end{subfigure}
\begin{subfigure}{0.24\textwidth}
    \caption{$P(\vz)=\norm{\vz}_2$}
    \includegraphics[width=\textwidth]{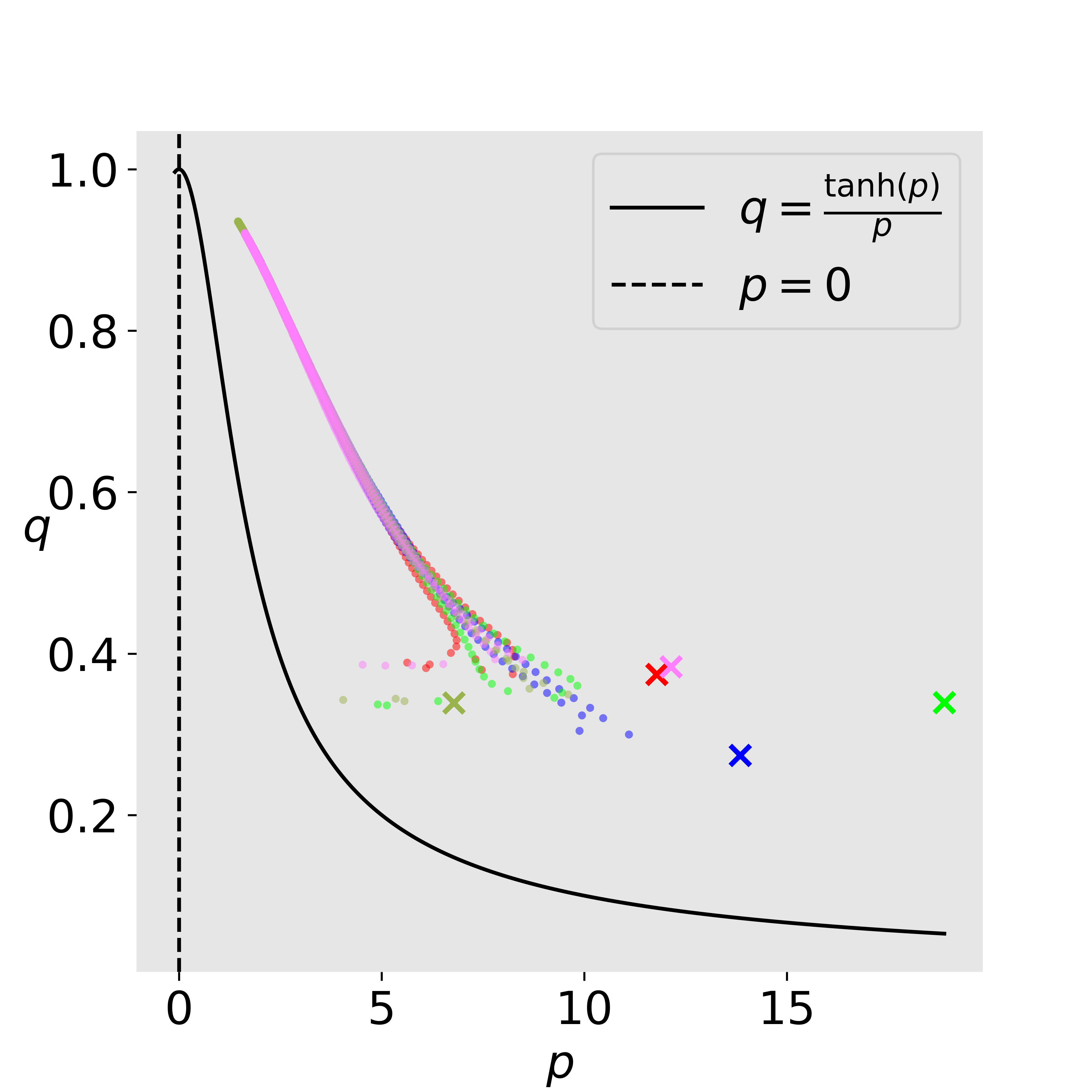}

\end{subfigure}
\begin{subfigure}{0.24\textwidth}
    \caption{$P(\vz)=\norm{\vz}_\infty$}
    \includegraphics[width=\textwidth]{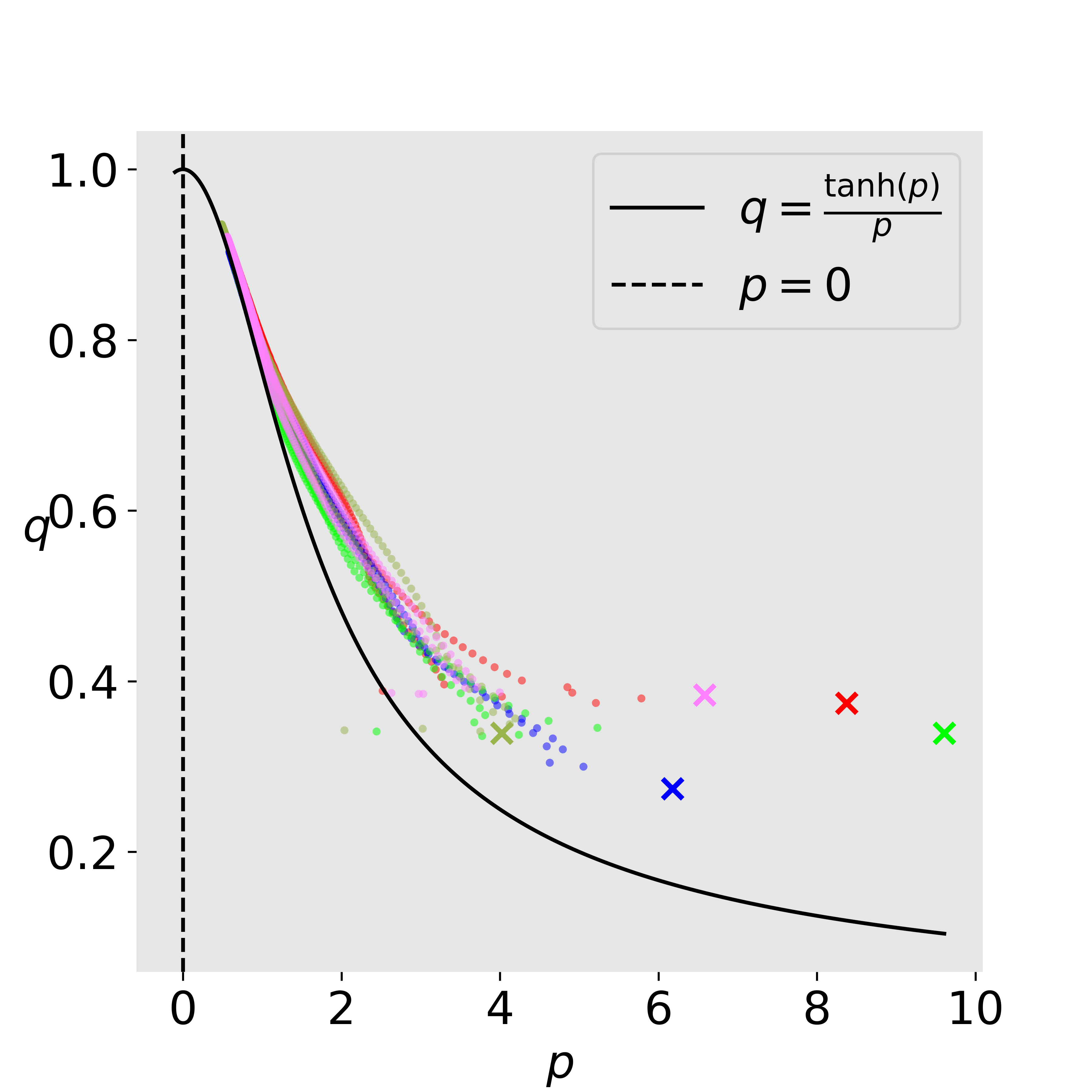}

\end{subfigure}
\caption{GD trajectories ($500$ iterations) under the generalized canonical reparameterization (Eq.~\eqref{E:canonical_multidata}) for \textbf{different choices of the function $P$}. The plots depict the training of a $3$-layer fully connected neural network with $\tanh$ activation, a width of $m=256$, and an initialization scale of $\alpha=4$. Each plot corresponds to a different parameterization, including the mean, $\ell_1$ norm, $\ell_2$ norm, and $\ell_\infty$ norm. GD trajectories commonly exhibit alignment behavior, with the mean parameterization aligning on the curve $q = \frac{\ell'(p)}{p}$.}
\label{Figure:multiple-P}
\end{figure}

\paragraph{The effect of the number of data points.}

We examine how the size of the training dataset, denoted by $n$, influences the trajectory alignment behavior of GD. While keeping other hyperparameters constant, we vary $n$ with values $\{2, 4, 8, 16, 32, 64, 128, 512, 1024\}$. In Figure~\ref{Figure:multiple-tanh}, we train a $3$-layer fully connected neural network with $\tanh$ activation, a width of $m=256$, and an initialization scale of $\alpha=4$. Additionally, in Figure~\ref{Figure:multiple-elu}, we investigate the same setting but with a different activation function, training ELU-activated fully connected neural networks. The GD trajectories are plotted under the generalized canonical reparameterization using the mean function $P(\vz)=\frac{1}{n}\sum_{i=1}^n z_i$.

We observe a consistent trajectory alignment phenomenon across different choices of the number of data points. Interestingly, for small values of $n$, the trajectories clearly align on the curve $q = \frac{\ell'(p)}{p}$. However, as the number of data points $n$ increases, it seems that the trajectories no longer align on this curve but different ``narrower'' curves. Understanding the underlying reasons for this phenomenon poses an intriguing open question.

\begin{figure}
\centering
\begin{subfigure}{0.25\textwidth}
    \centering
    \caption{$n=2$}
    \vspace{-4pt}
    \includegraphics[width=0.95\hsize]{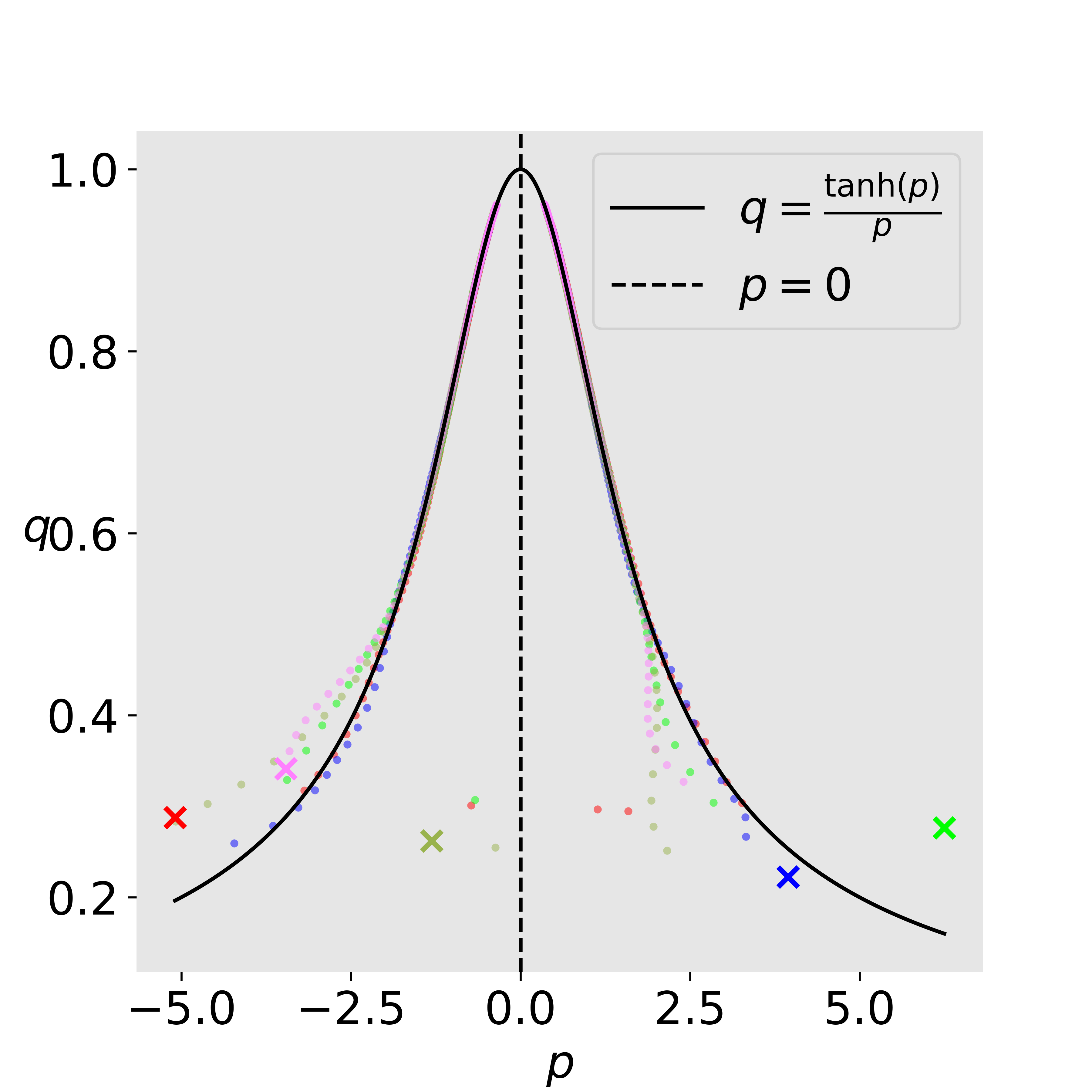}

\end{subfigure}
    \hfil
\begin{subfigure}{0.25\textwidth}
    \centering
    \caption{$n=4$}
    \vspace{-4pt}
    \includegraphics[width=0.95\hsize]{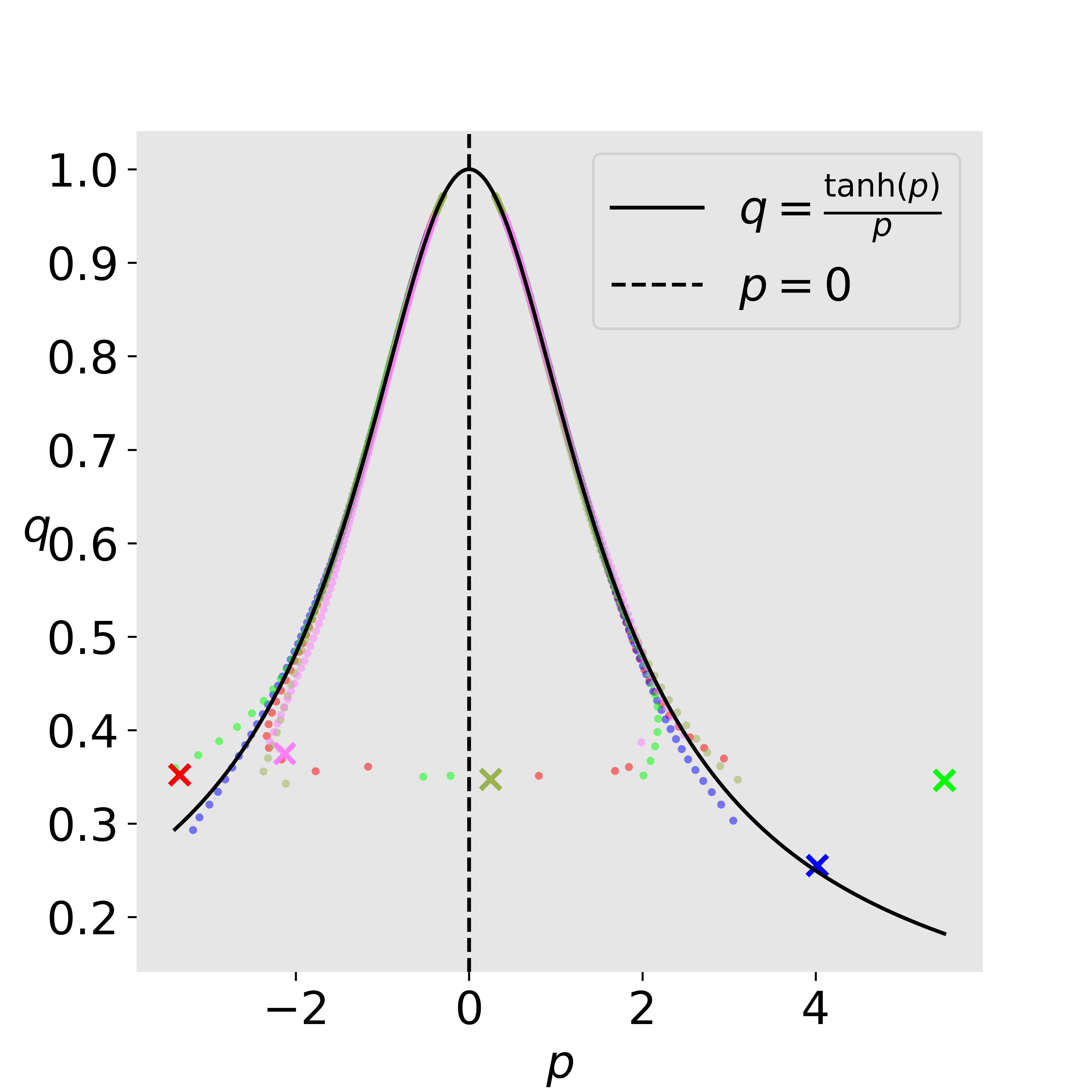}

\end{subfigure}
        \hfil
\begin{subfigure}{0.25\textwidth}
    \centering
    \caption{$n=8$}
    \vspace{-4pt}
    \includegraphics[width=0.95\hsize]{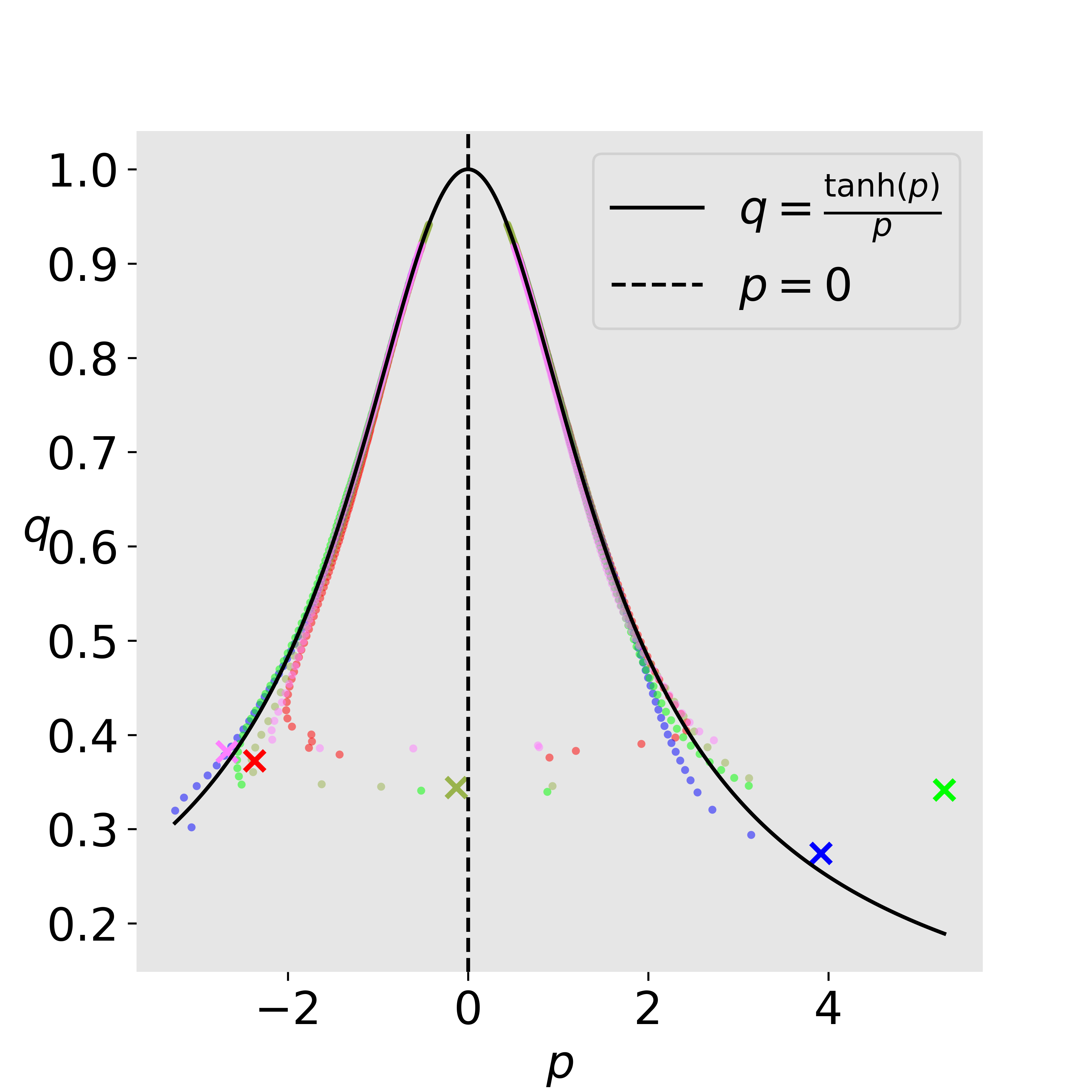}

\end{subfigure}

    \vspace{.5cm}
\begin{subfigure}{0.25\textwidth}
    \centering
    \caption{$n=16$}
    \vspace{-4pt}
    \includegraphics[width=0.95\hsize]{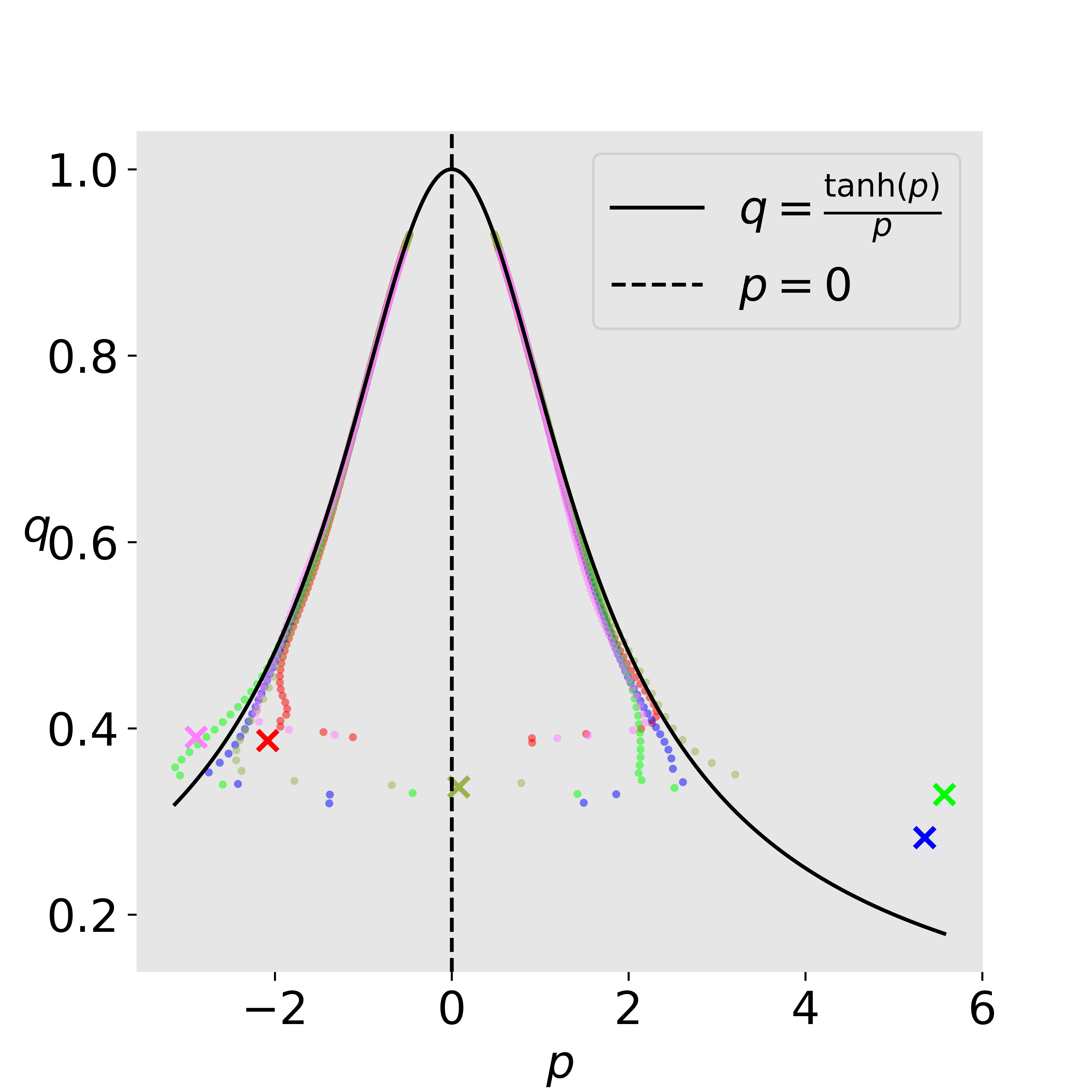}

\end{subfigure}
\hfil
\begin{subfigure}{0.25\textwidth}
    \centering
    \caption{$n=32$}
    \vspace{-4pt}
    \includegraphics[width=0.95\hsize]{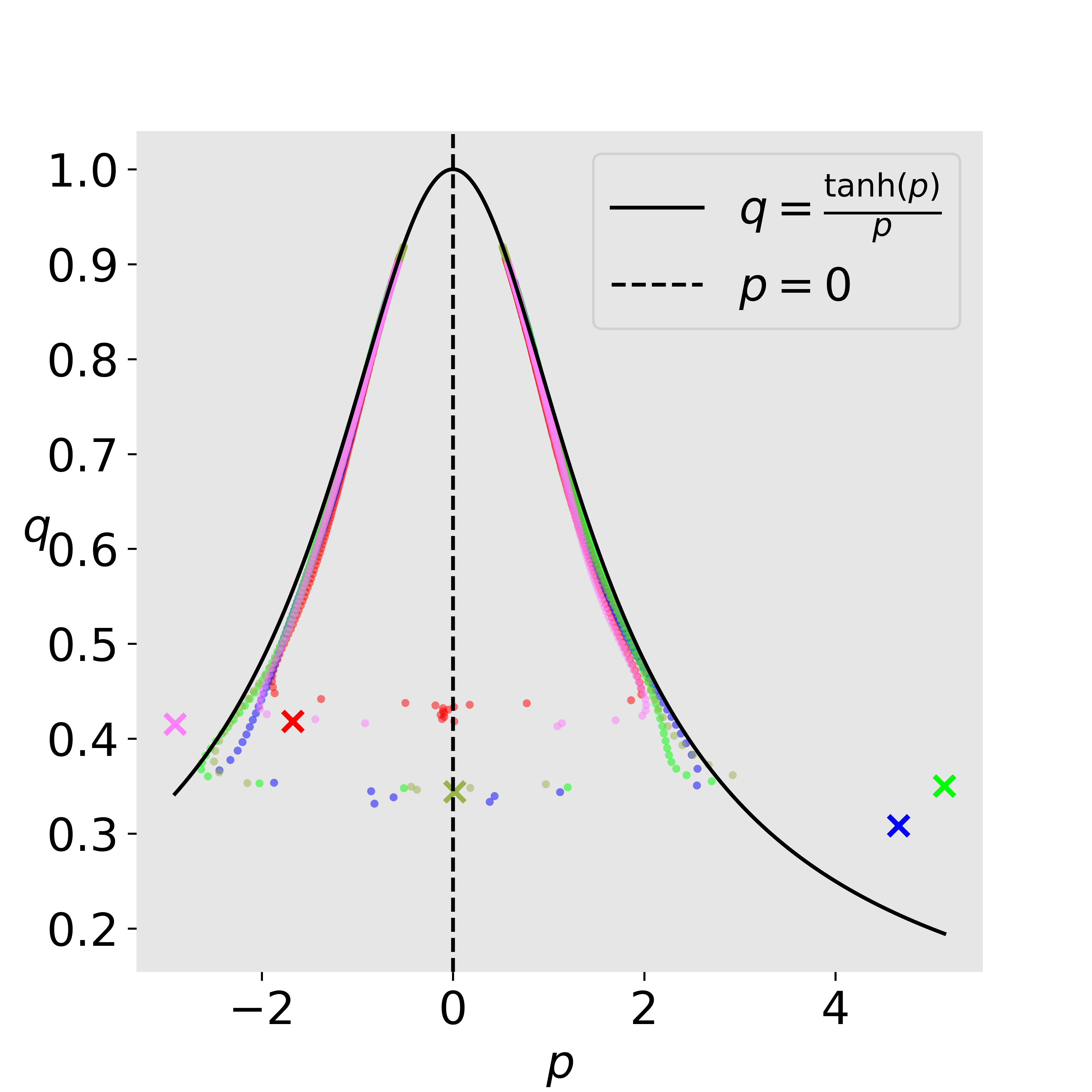}

\end{subfigure}
\hfil
\begin{subfigure}{0.25\textwidth}
    \centering
    \caption{$n=64$}
    \vspace{-4pt}
    \includegraphics[width=0.95\hsize]{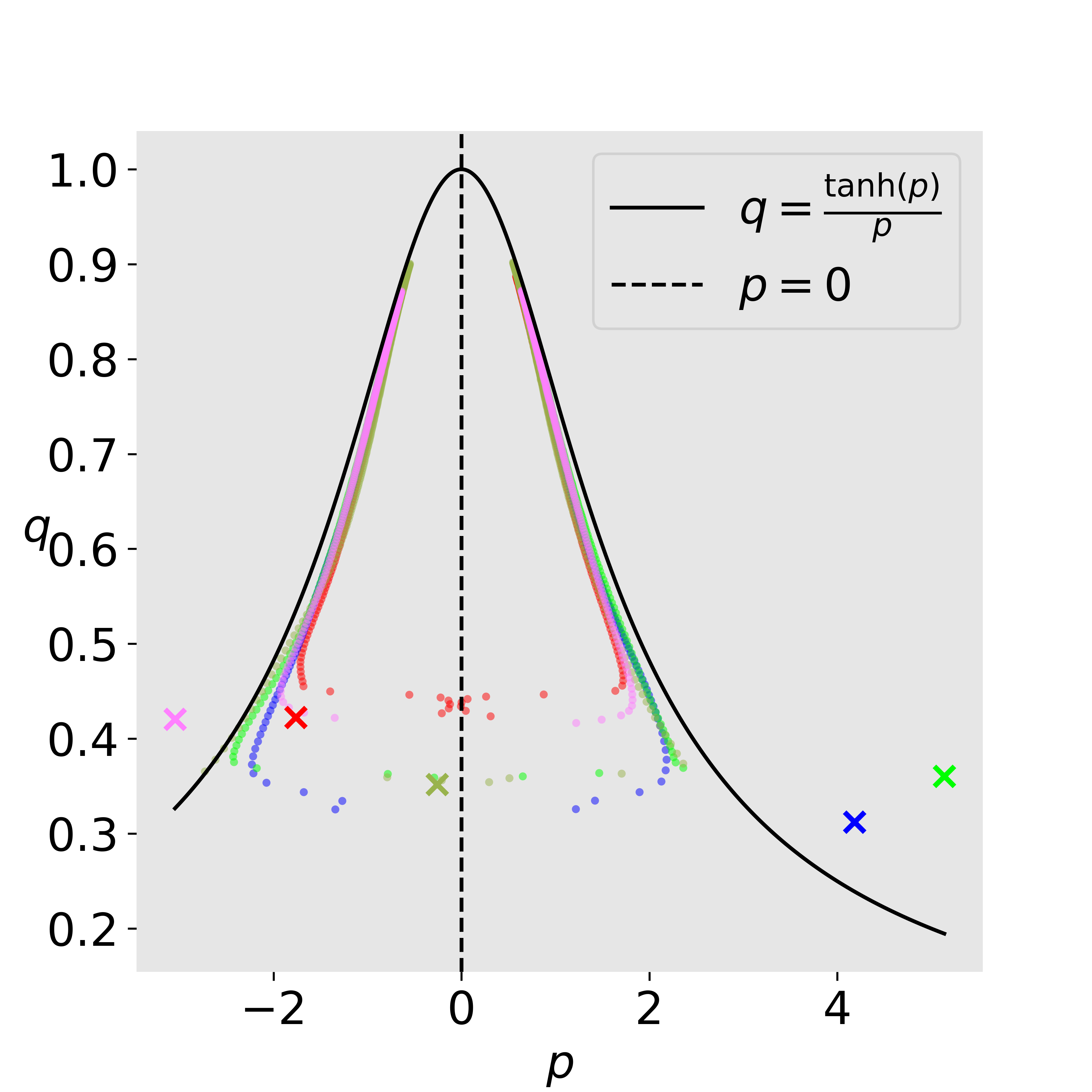}

\end{subfigure}

\vspace{.5cm}
\begin{subfigure}{0.25\textwidth}
    \centering
    \caption{$n=128$}
    \vspace{-4pt}
    \includegraphics[width=0.95\hsize]{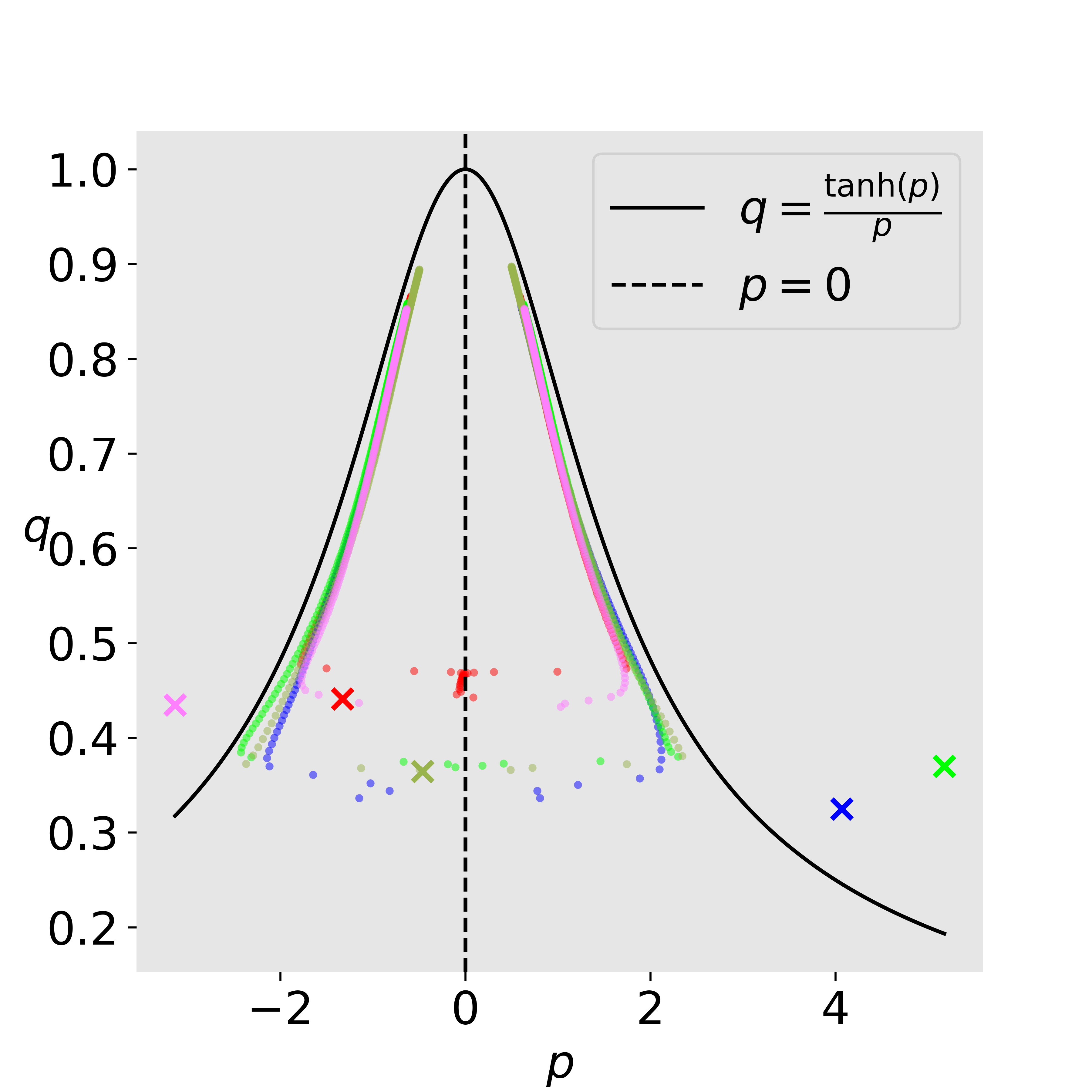}

\end{subfigure}
\hfil
\begin{subfigure}{0.25\textwidth}
    \centering
    \caption{$n=256$}
    \vspace{-4pt}
    \includegraphics[width=0.95\hsize]{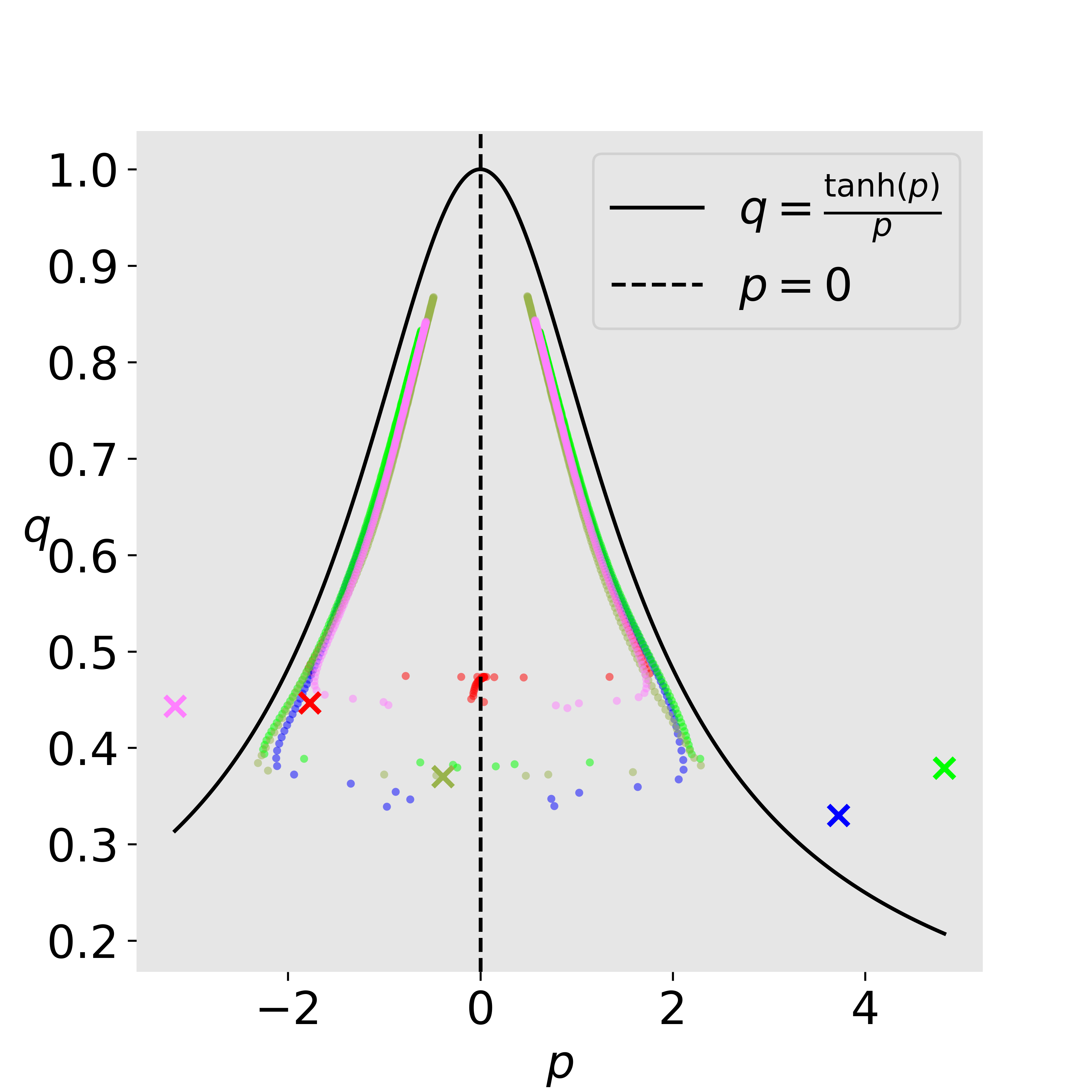}

\end{subfigure}
\hfil
\begin{subfigure}{0.25\textwidth}
    \centering
    \caption{$n=512$}
    \vspace{-4pt}
    \includegraphics[width=0.95\hsize]{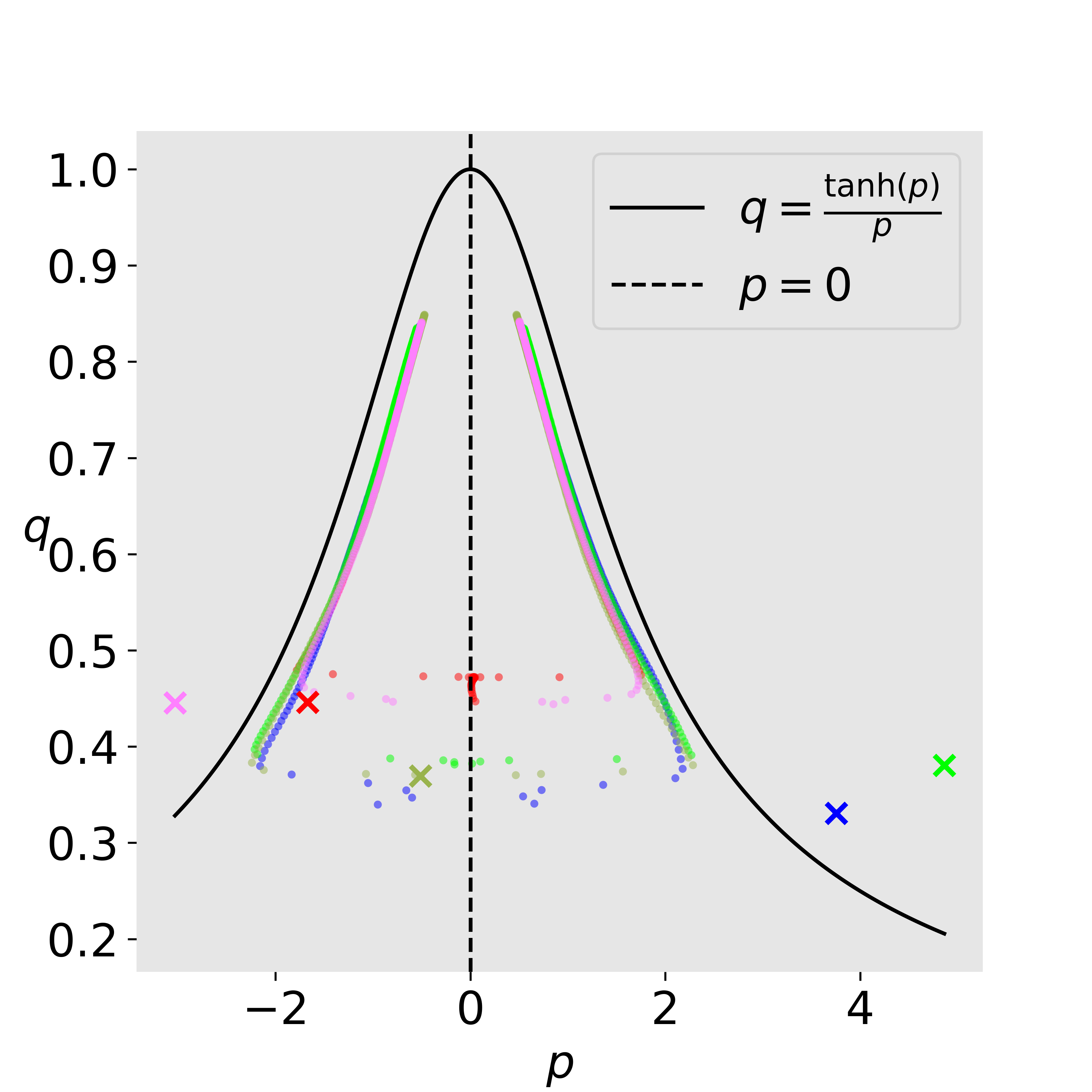}

\end{subfigure}

\vspace{.5cm}
\begin{subfigure}{0.25\textwidth}
    \centering
    \caption{$n=1024$}
    \vspace{-4pt}
    \includegraphics[width=0.95\hsize]{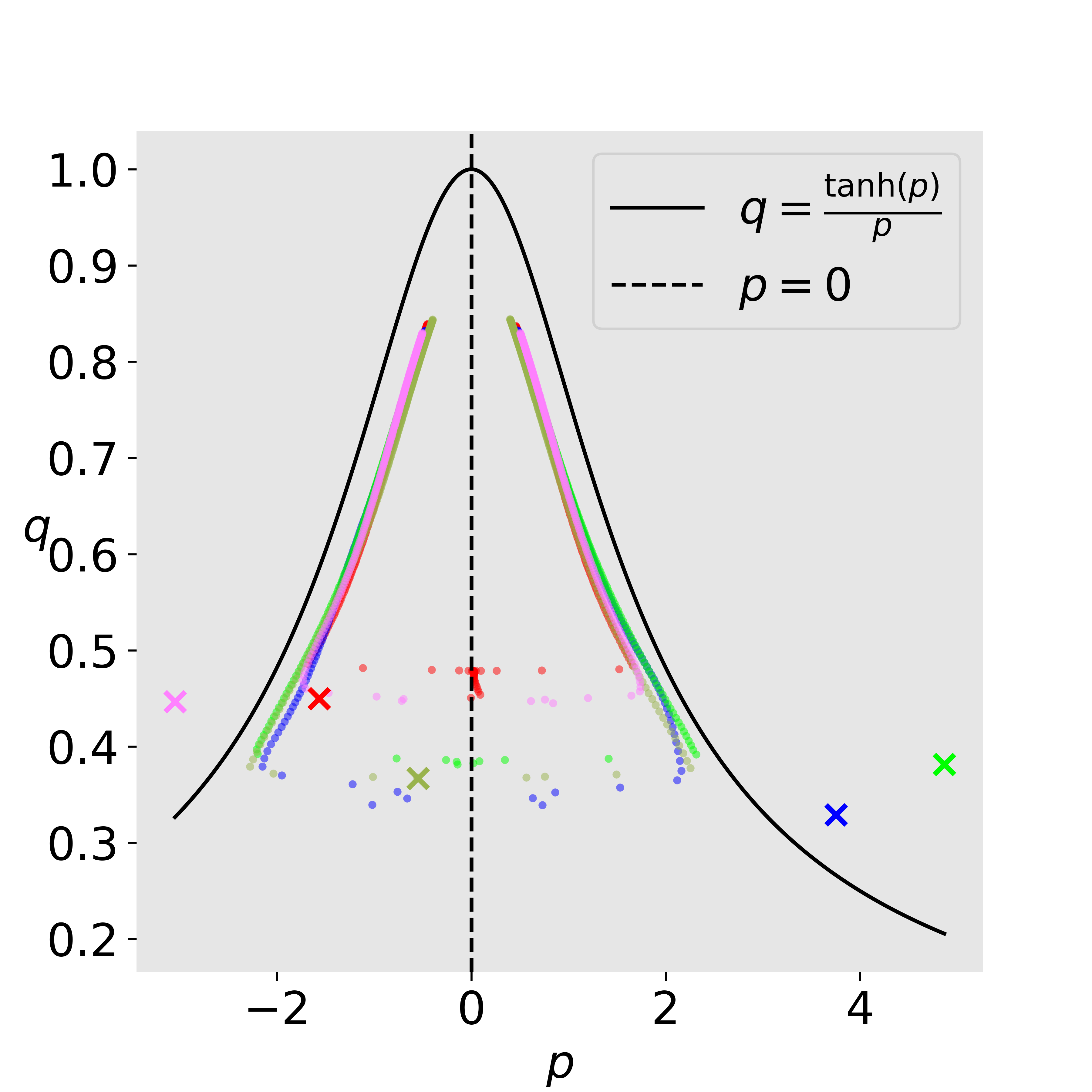}
 
\end{subfigure}
\caption{GD trajectories ($500$ iterations) under the generalized canonical reparameterization (Eq.~\eqref{E:canonical_multidata}) for \textbf{different choices of the number of data points $n$}. The plots depict the training of a $3$-layer fully connected neural network with $\tanh$ activation, a width of $m=256$, and an initialization scale of $\alpha=4$. The function $P$ is chosen to be the mean $P(\vz)=\frac{1}{n}\sum_{i=1}^n z_i$. The trajectories exhibit alignment behavior, where the curves followed by the trajectories change depending on the value of $n$. For small values of $n$, the trajectories align on the curve $q = \frac{\ell'(p)}{p}$, while for large values of $n$, they align on a distinct curve.}
\label{Figure:multiple-tanh}
\end{figure}

\begin{figure}
\centering
\begin{subfigure}[b]{0.25\textwidth}
    \centering
    \caption{$n=2$}
    \vspace{-4pt}
    \includegraphics[width=0.95\hsize]{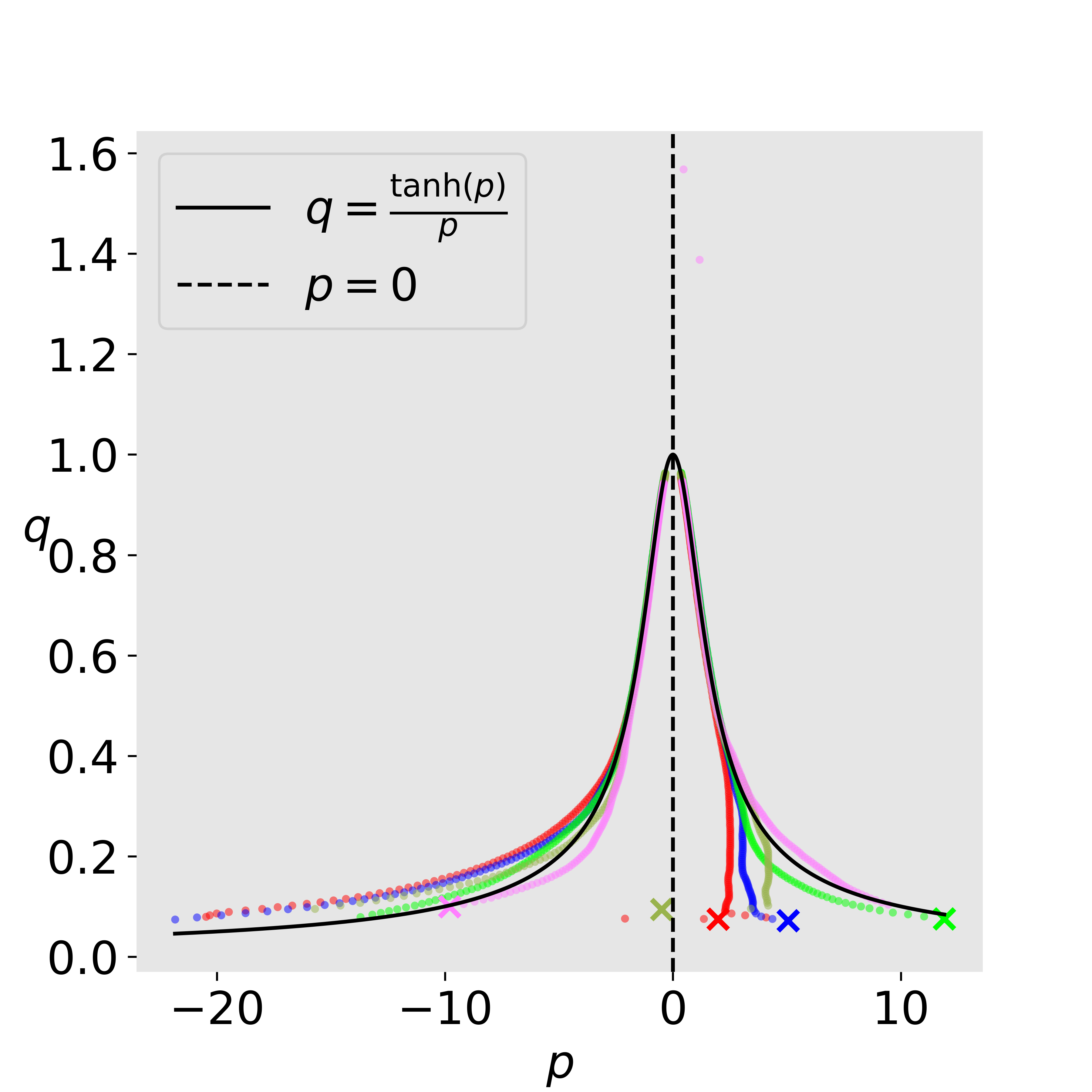}

\end{subfigure}
    \hfil
\begin{subfigure}[b]{0.25\textwidth}
    \centering
    \caption{$n=4$}
    \vspace{-4pt}
    \includegraphics[width=0.95\hsize]{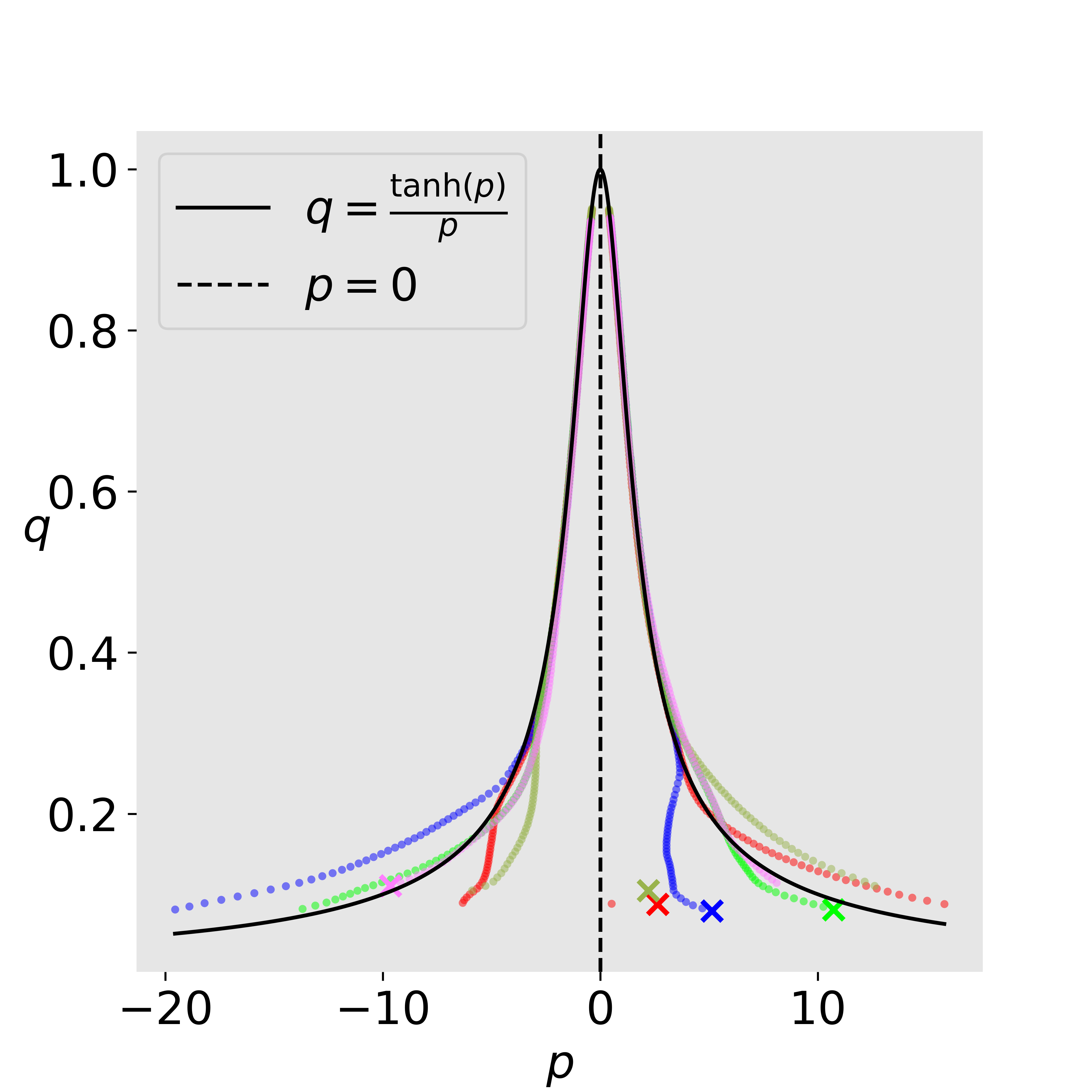}

\end{subfigure}
        \hfil
\begin{subfigure}[b]{0.25\textwidth}
    \centering
    \caption{$n=8$}
    \vspace{-4pt}
    \includegraphics[width=0.95\hsize]{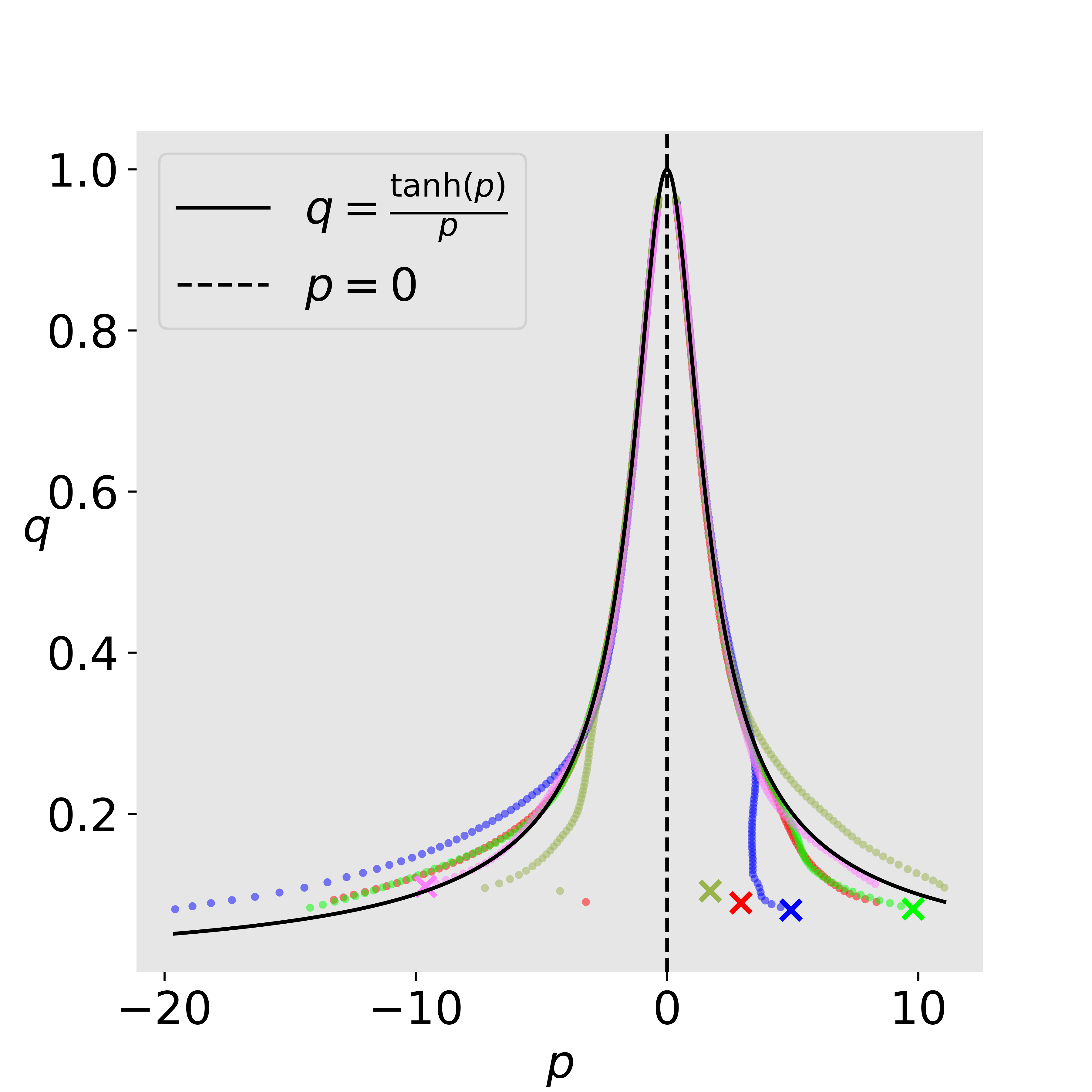}

\end{subfigure}
    \vspace{.5cm}
\begin{subfigure}[b]{0.25\textwidth}
    \centering
    \caption{$n=16$}
    \vspace{-4pt}
    \includegraphics[width=0.95\hsize]{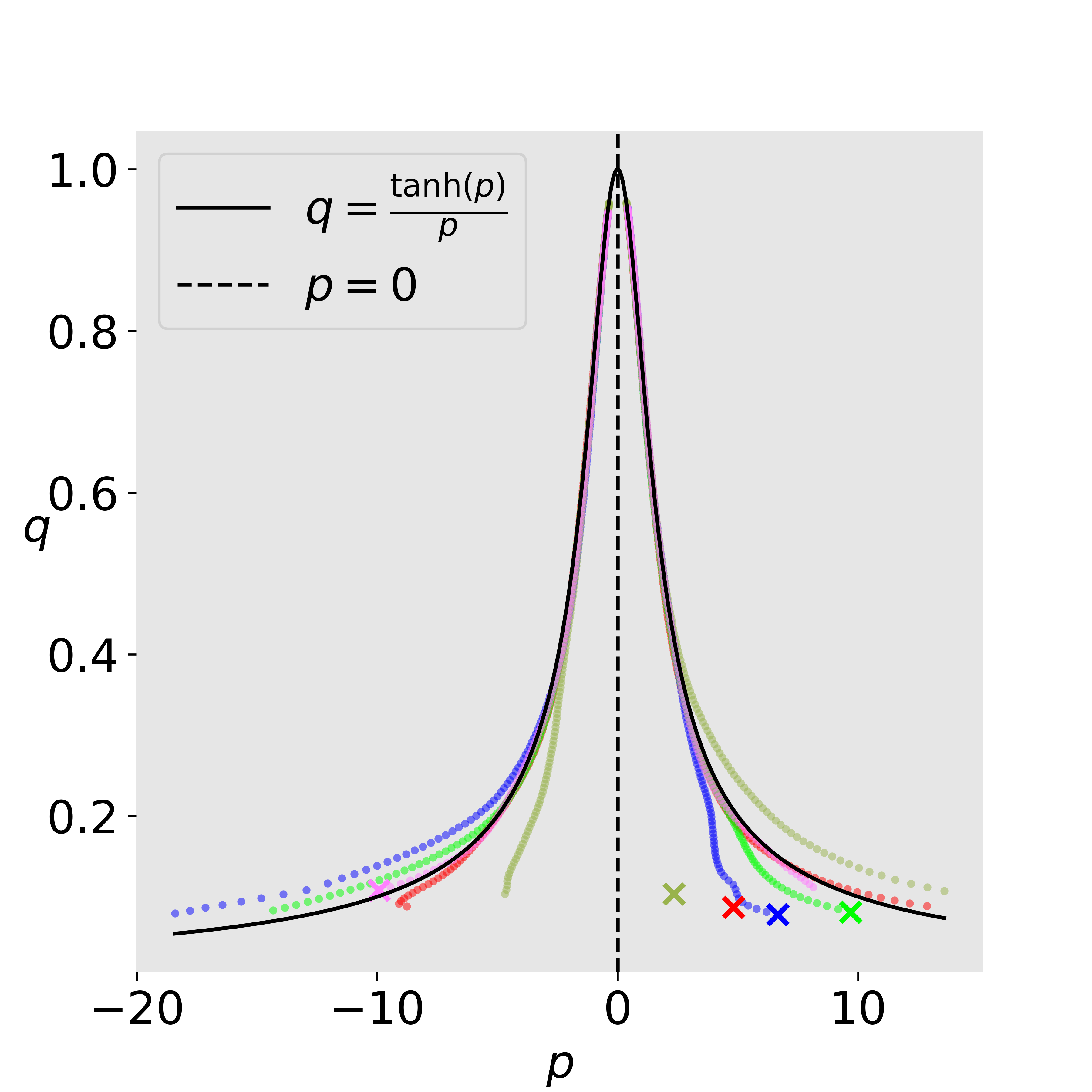}

\end{subfigure}
\hfil
\begin{subfigure}[b]{0.25\textwidth}
    \centering
    \caption{$n=32$}
    \vspace{-4pt}
    \includegraphics[width=0.95\hsize]{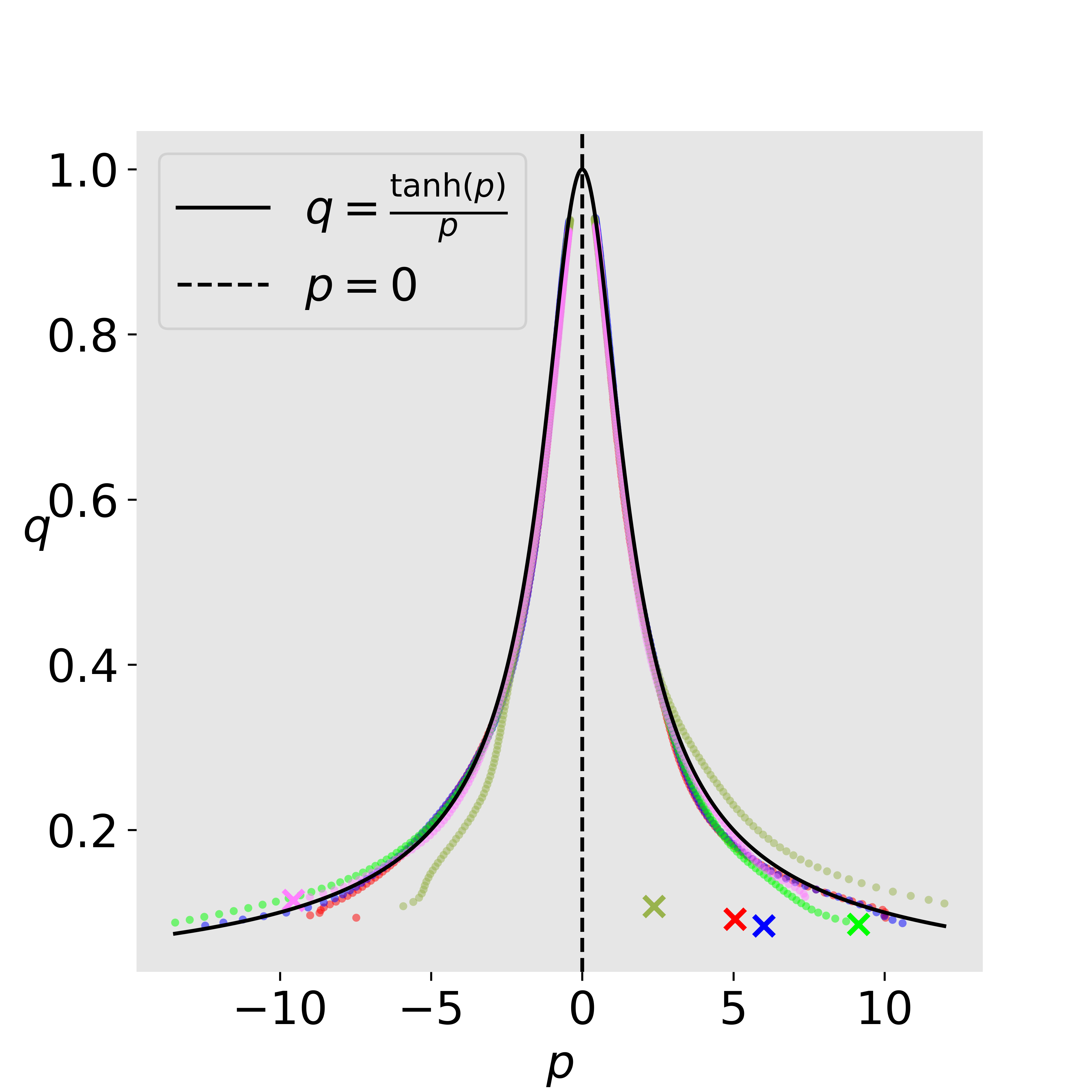}

\end{subfigure}
\hfil
\begin{subfigure}[b]{0.25\textwidth}
    \centering
    \caption{$n=64$}
    \vspace{-4pt}
    \includegraphics[width=0.95\hsize]{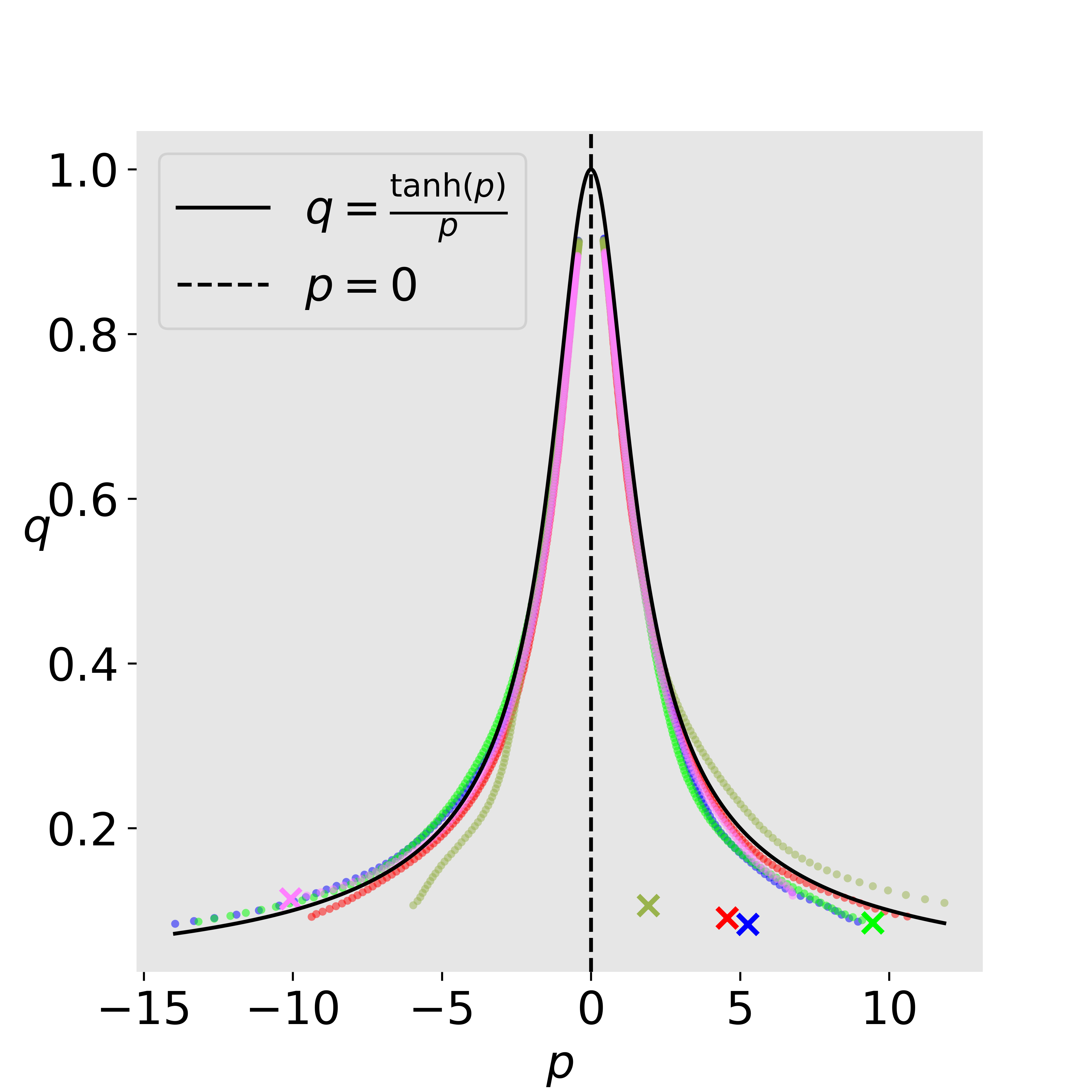}

\end{subfigure}

\vspace{.5cm}
\begin{subfigure}[b]{0.25\textwidth}
    \centering
    \caption{$n=128$}
    \vspace{-4pt}
    \includegraphics[width=0.95\hsize]{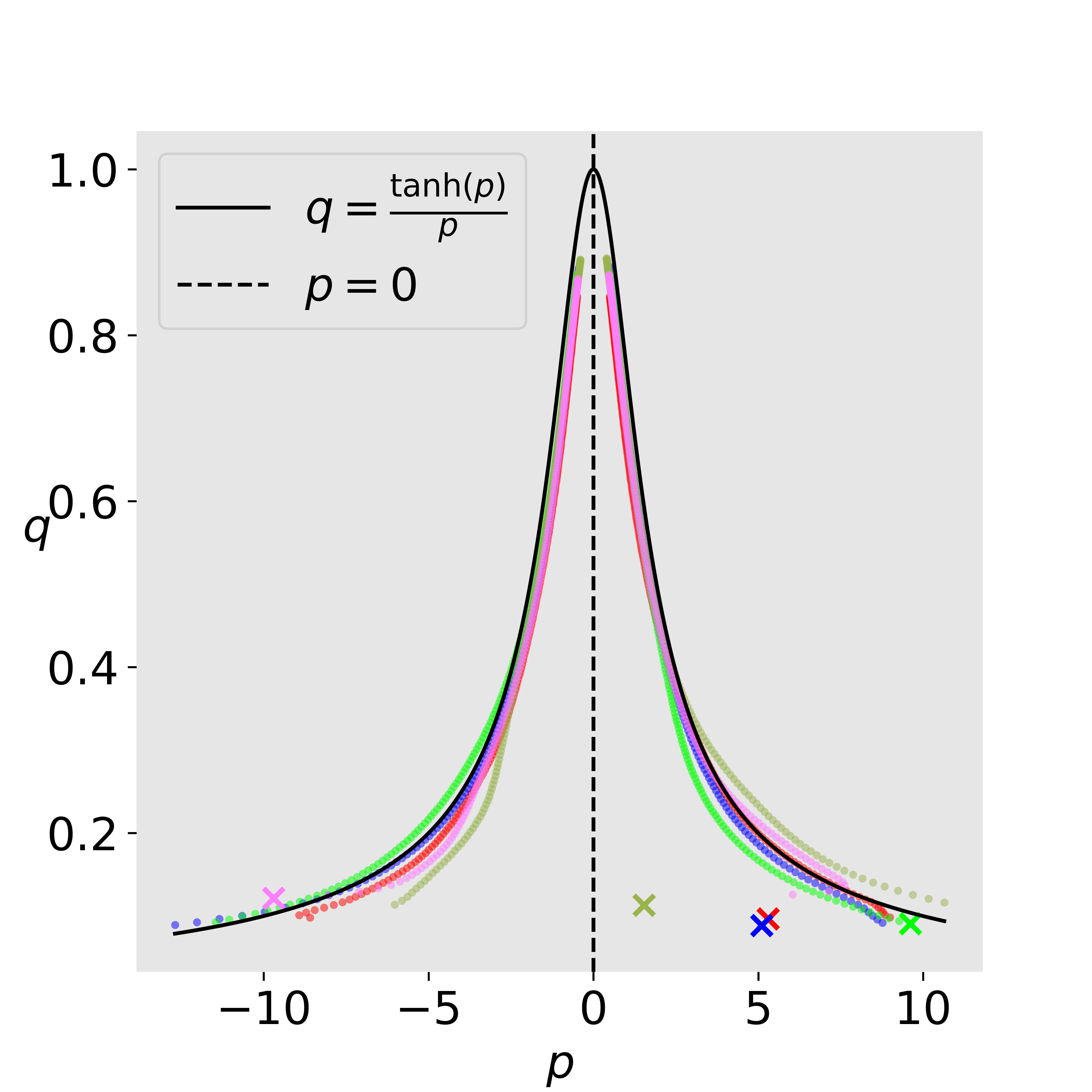}

\end{subfigure}
\hfil
\begin{subfigure}[b]{0.25\textwidth}
    \centering
    \caption{$n=256$}
    \vspace{-4pt}
    \includegraphics[width=0.95\hsize]{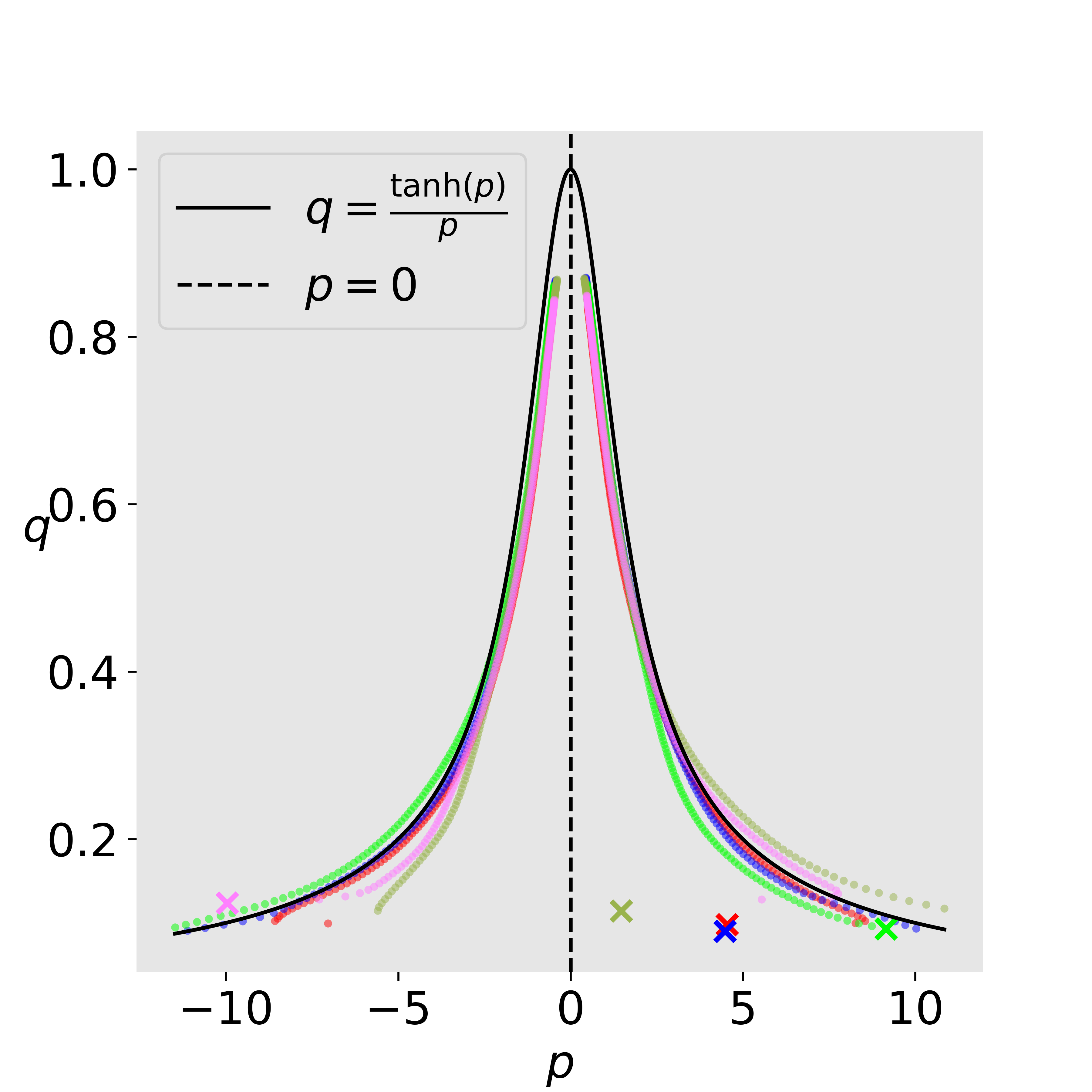}

\end{subfigure}
\hfil
\begin{subfigure}[b]{0.25\textwidth}
    \centering
    \caption{$n=512$}
    \vspace{-4pt}
    \includegraphics[width=0.95\hsize]{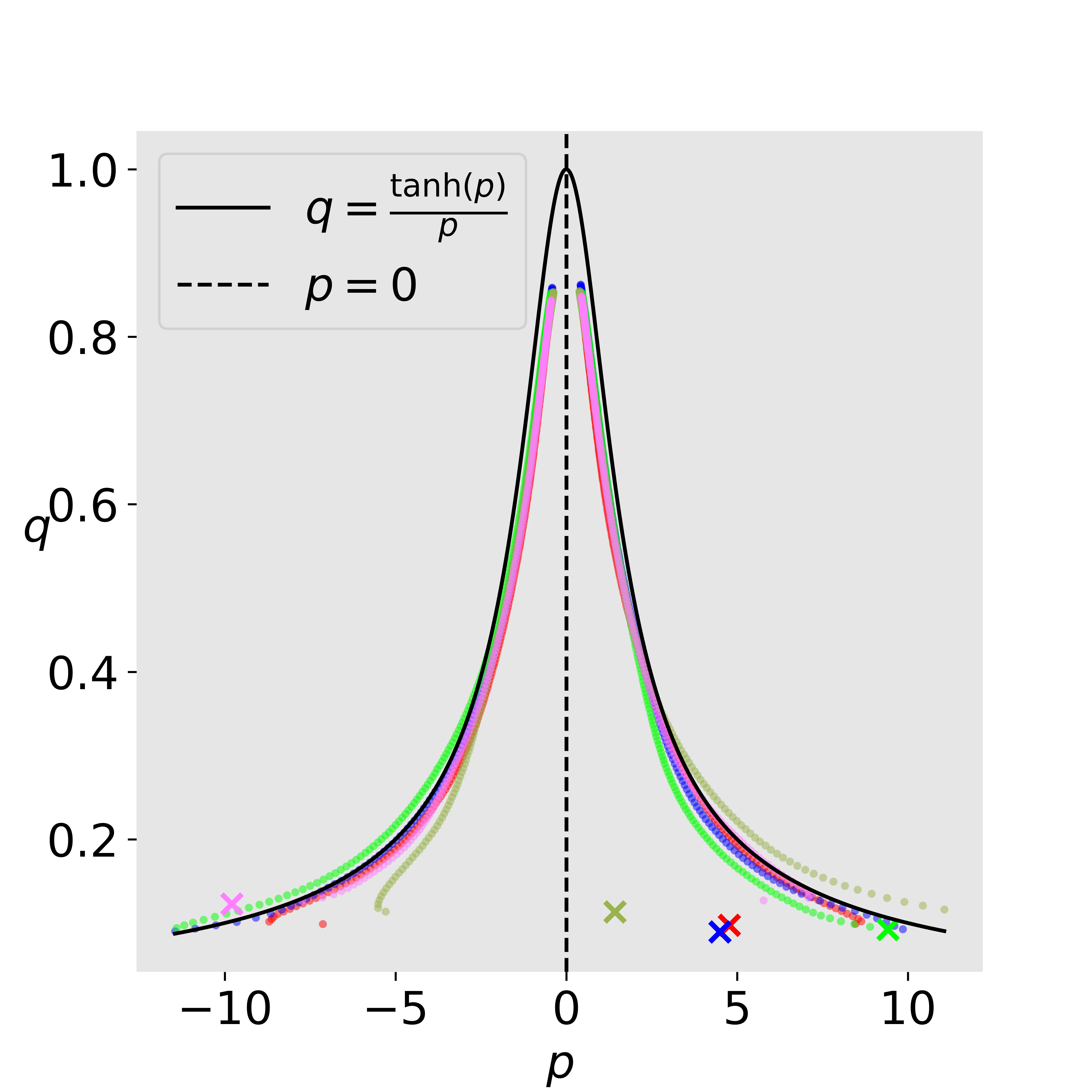}

\end{subfigure}
\vspace{.5cm}
\begin{subfigure}[b]{0.25\textwidth}
    \centering
    \caption{$n=1024$}
    \vspace{-4pt}
    \includegraphics[width=0.95\hsize]{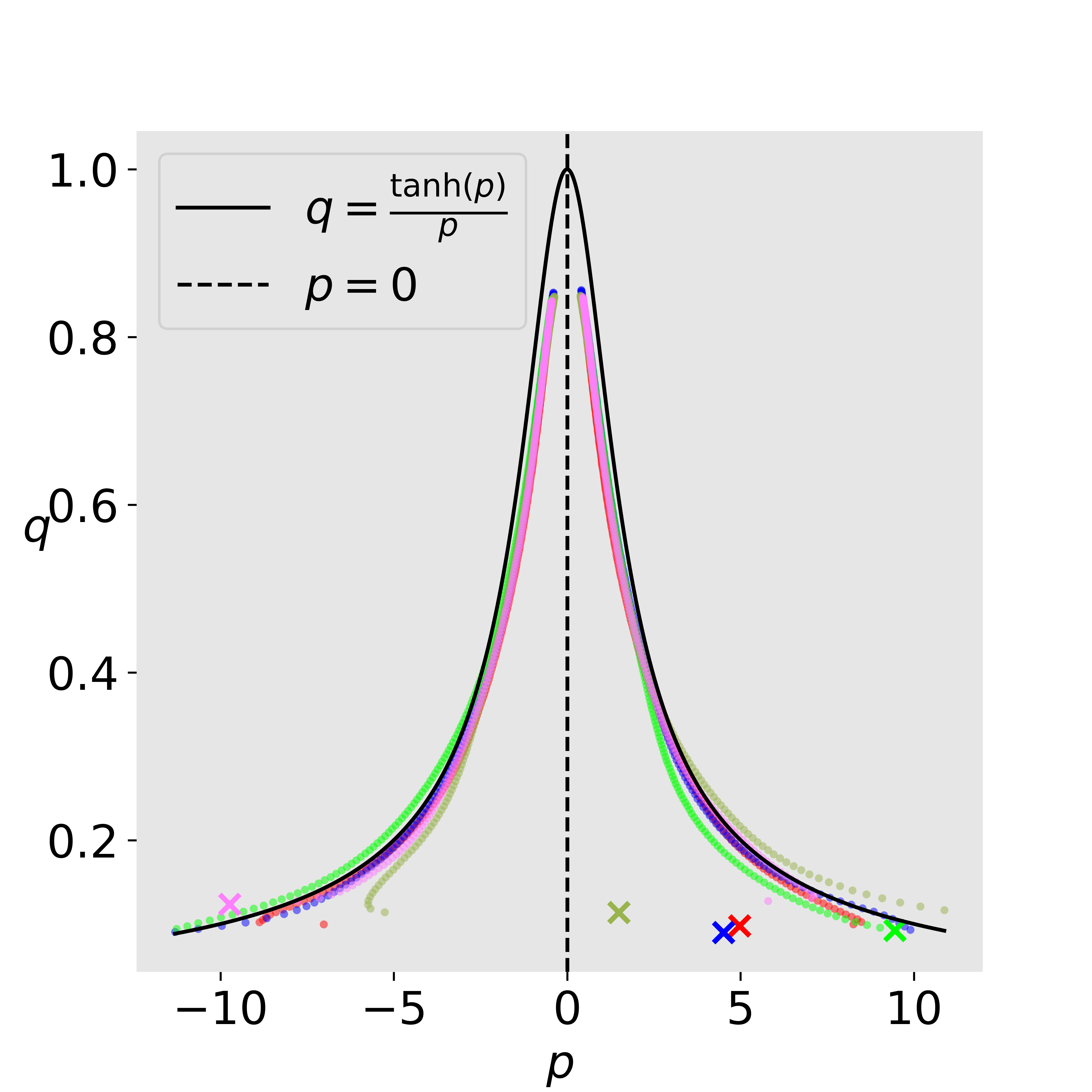}

\end{subfigure}
\caption{GD trajectories ($500$ iterations) under the generalized canonical reparameterization (Eq.~\eqref{E:canonical_multidata}) for \textbf{different choices of the number of data points $n$}. The plots depict the training of a $3$-layer fully connected neural network with ELU activation, a width of $m=256$, and an initialization scale of $\alpha=4$. The function $P$ is chosen to be the mean $P(\vz)=\frac{1}{n}\sum_{i=1}^n z_i$. The trajectories exhibit alignment behavior, where the curves followed by the trajectories change depending on the value of $n$. For small values of $n$, the trajectories align on the curve $q = \frac{\ell'(p)}{p}$, while for large values of $n$, they align on a distinct curve.}
\label{Figure:multiple-elu}
\end{figure}

\newpage
\subsection{Training on real-world dataset}\label{A:exp:cifar}

\paragraph{Training data.} In this subsection, we investigate a binary classification problem using a subset of the CIFAR-$10$ image classification dataset. Our dataset consists of $50$ samples, with $25$ samples from class $0$ (airplane) and $25$ samples from class $1$ (automobile). We assign a label of $+1$ to samples from class $0$ and a label of $-1$ to samples from class $1$. This dataset was used in the experimental setup by \citet{zhu2023understanding}. 

\paragraph{Architectures.} In Figure~\ref{Figure:exp3}, we examine the training of two types of network architectures: (top row) fully-connected $\tanh$ network and (bottom row) convolutional $\tanh$ network. The PyTorch code for the fully-connected $\tanh$ network is provided as follows:
\begin{verbatim}
nn.Sequential(
  nn.Flatten(start_dim=1, end_dim=-1),
  nn.Linear(3072, 500, bias=True),
  nn.Tanh(),
  nn.Linear(500, 500, bias=True),
  nn.Tanh(),
  nn.Linear(500, 1, bias=True)
)
\end{verbatim}
Similarly, the PyTorch code for the convolutional $\tanh$ network is as follows:
\begin{verbatim}
nn.Sequential(
  nn.Flatten(start_dim=1, end_dim=-1),
  nn.Unflatten(dim=1, unflattened_size=(3, 32, 32)),
  nn.Conv2d(3, 500, kernel_size=(3, 3), stride=(1, 1), padding=(1, 1), bias=True),
  nn.Tanh(),
  nn.Conv2d(500, 500, kernel_size=(3, 3), stride=(1, 1), padding=(1, 1), bias=True),
  nn.Tanh(),
  nn.Flatten(start_dim=1, end_dim=-1),
  nn.Linear(512000, 1, bias=True)
)
\end{verbatim}
Note that we consider networks \emph{with} bias. We use a step size of $\eta = 0.01$ for the fully-connected network and $\eta = 0.001$ for the CNN. The default PyTorch initialization \citep{paszke2018pytorch} is applied to all these networks.

In this subsection, we further explore the (reparameterized) GD trajectories of fully-connected networks with different activation functions, network widths, and the choice of function $P$ in \eqref{E:canonical_multidata}.

\paragraph{The effect of function $P$.} We investigate the impact of different choices of the function $P$ on the GD trajectories. We train a 3-layer fully connected neural network with ELU activation, a width of $m=256$, and an initialization scale of $\alpha=1$. Figure~\ref{Figure:cifar-P} illustrates the GD trajectories under the generalized canonical reparameterization defined in Eq.~\eqref{E:canonical_multidata} for various choices of the function $P$, including the mean, $\ell_1$ norm, $\ell_2$ norm, and $\ell_\infty$ norm.

We observe that the GD trajectories exhibit alignment behavior, which is more pronounced when $P$ is chosen to be the mean or $\ell_1$ norm, but less evident for the $\ell_\infty$ norm. Unlike in Figure~\ref{Figure:multiple-P}, the trajectories do not align on the curve $q = \frac{\ell'(p)}{p}$ when $P$ is selected as the mean $P(\vz)=\frac{1}{n}\sum_{i=1}^n z_i$.

\begin{figure}[!htb]
  \centering
\begin{subfigure}{0.24\textwidth}
    \caption{$P(\vz)=\frac{1}{n}\sum_{i=1}^n z_i$}
    \includegraphics[width=\textwidth]{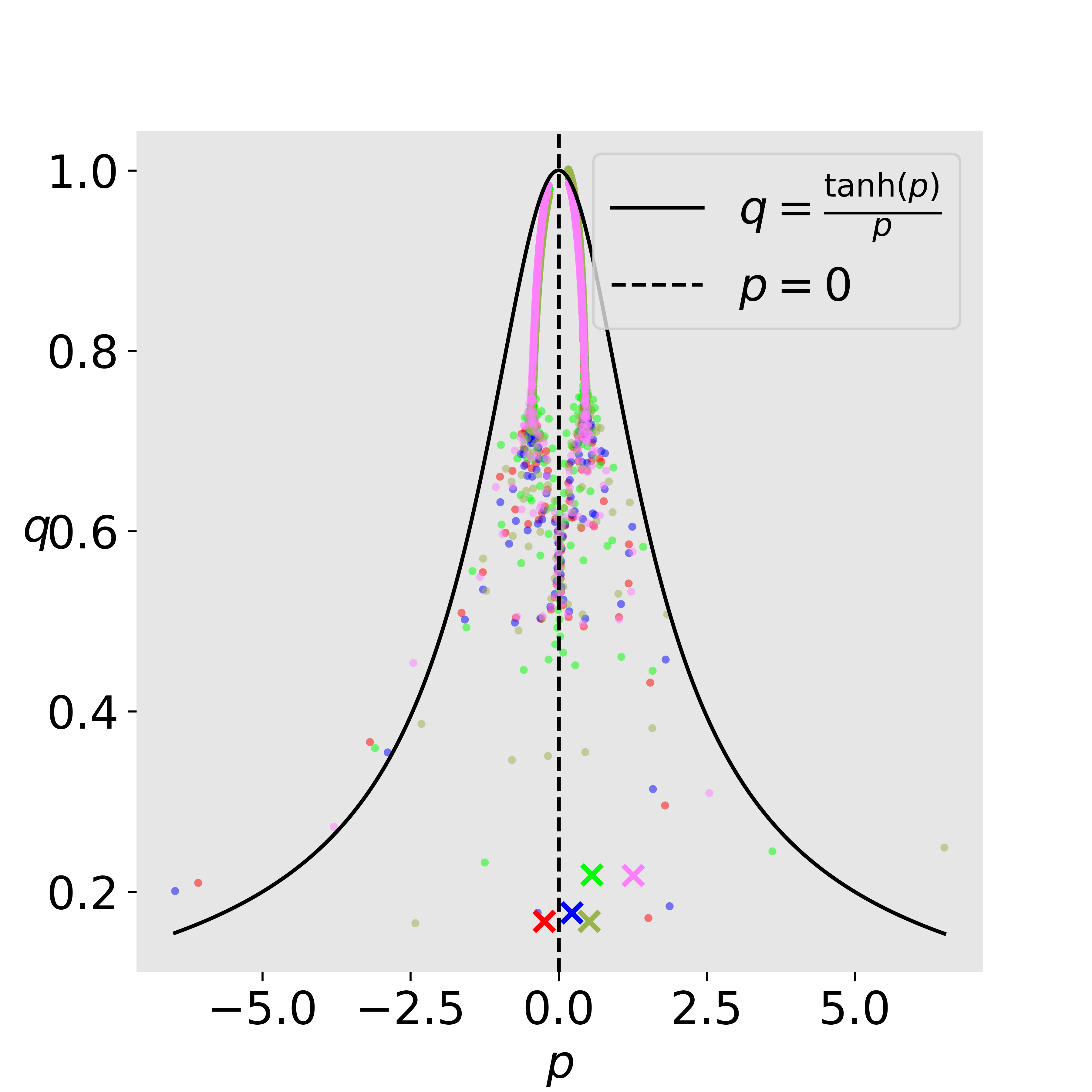}

\end{subfigure}
\begin{subfigure}{0.24\textwidth}
    \caption{$P(\vz)=\norm{\vz}_1$}
    \includegraphics[width=\textwidth]{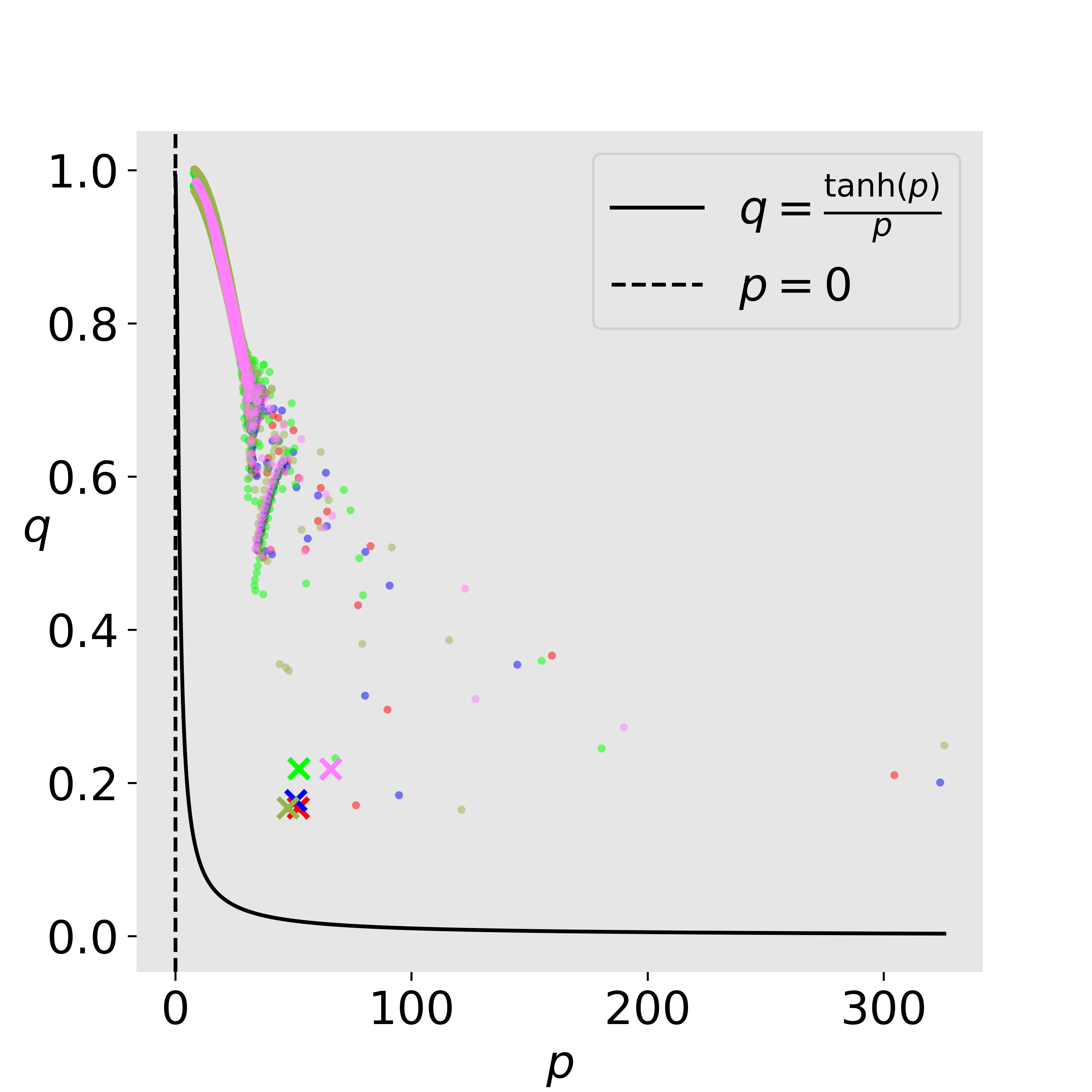}

\end{subfigure}
\begin{subfigure}{0.24\textwidth}
    \caption{$P(\vz)=\norm{\vz}_2$}
    \includegraphics[width=\textwidth]{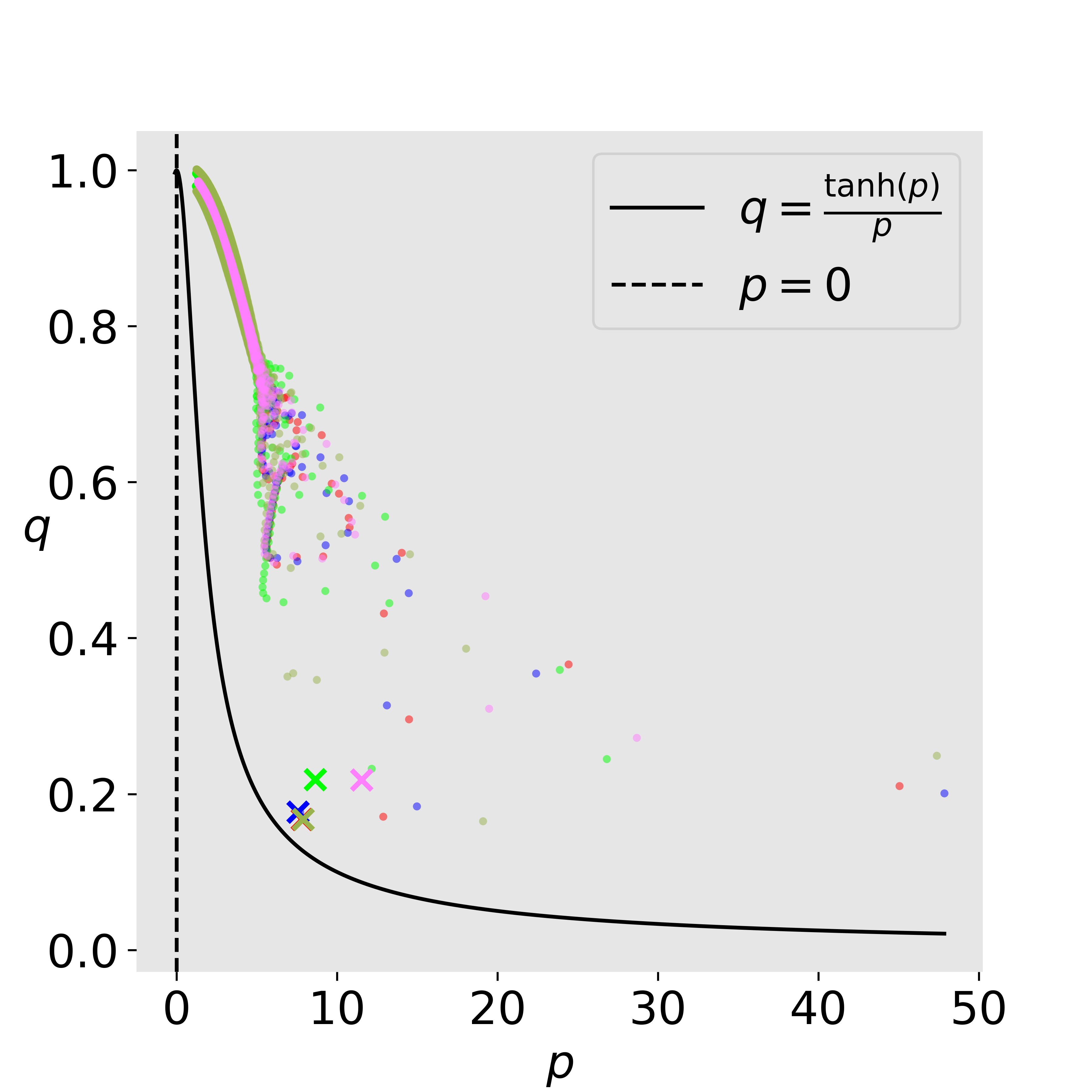}

\end{subfigure}
\begin{subfigure}{0.24\textwidth}
    \caption{$P(\vz)=\norm{\vz}_\infty$}
    \includegraphics[width=\textwidth]{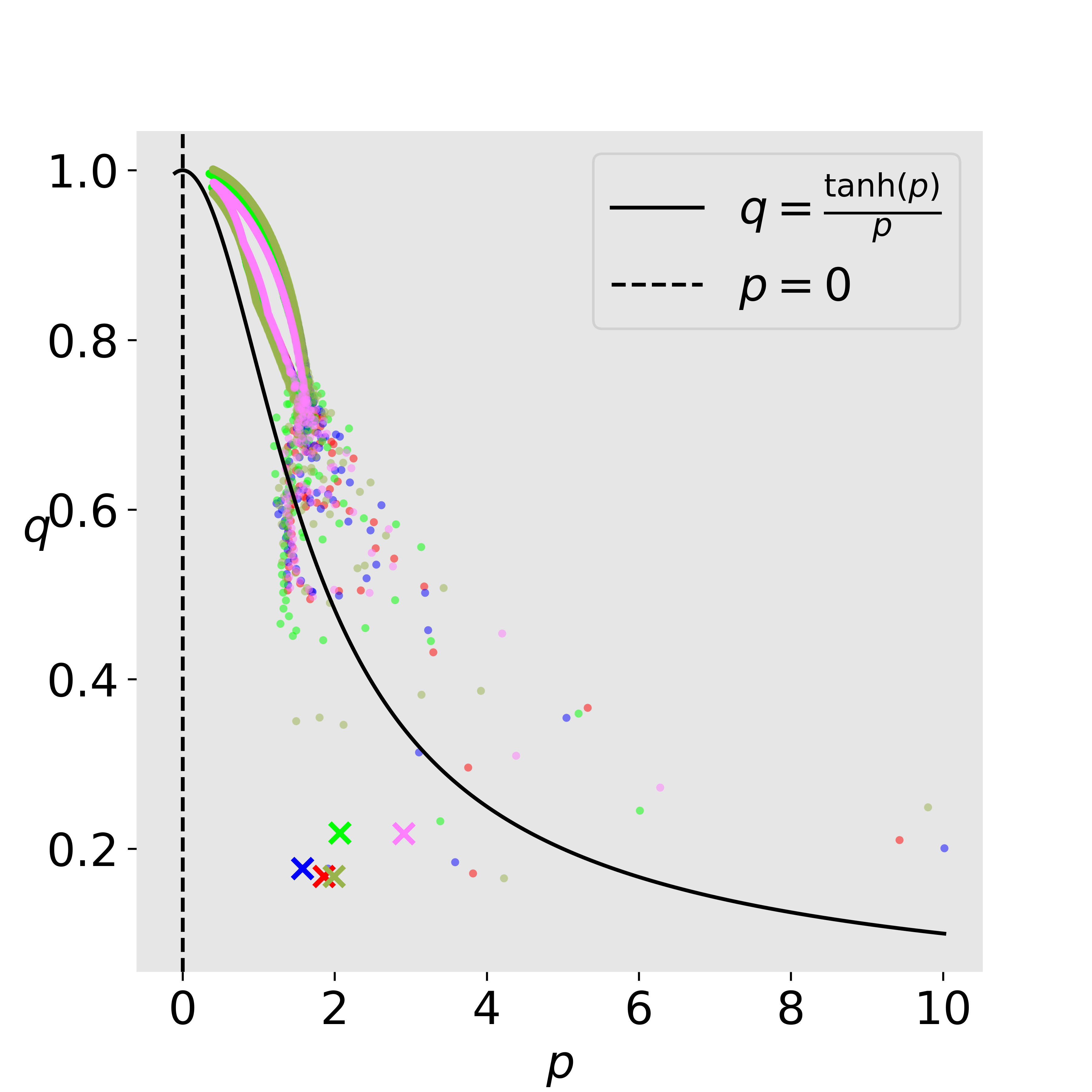}

\end{subfigure}
\caption{GD trajectories ($1000$ iterations) under the generalized canonical reparameterization (Eq.~\eqref{E:canonical_multidata}) for different choices of the function $P$ trained on a small subset of CIFAR-$10$ image dataset. The plots depict the training of a $3$-layer fully connected neural network with ELU activation, a width of $m=256$, and an initialization scale of $\alpha=1$. The function $P$ is varied to be the mean, $\ell_1$ norm, $\ell_2$ norm, and $\ell_\infty$ norm. The alignment behavior of the trajectories is more prominent when $P$ is chosen as the mean or $\ell_1$ norm, but less evident for the $\ell_\infty$ norm.}
\label{Figure:cifar-P}
\end{figure}

\paragraph{The effect of network width.} We investigate how the width of the network influences the trajectory alignment phenomenon. We vary the width $m$ using values from $\{64, 128, 256, 512\}$ while keeping other hyperparameters constant. In Figure~\ref{Figure:cifar-tanh}, we train $3$-layer fully connected neural networks with $\tanh$ activation and an initialization scale of $\alpha=1$. Similarly, in Figure~\ref{Figure:cifar-elu}, we conduct experiments using the same configuration but with ELU activation, training ELU-activated fully connected neural networks.

\begin{figure}[!htb]
  \centering
\begin{subfigure}{0.24\textwidth}
    \caption{$m=64$}
    \includegraphics[width=\textwidth]{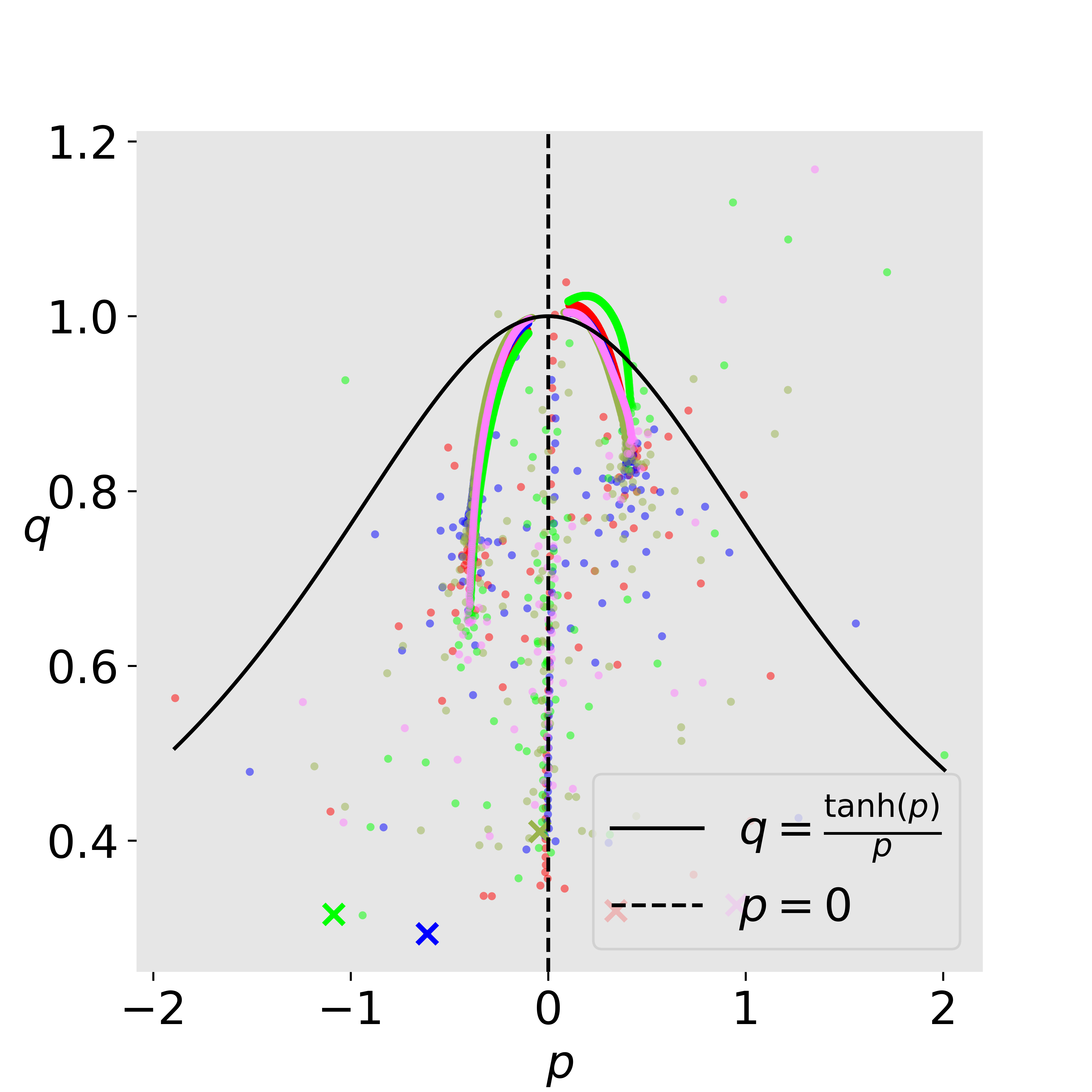}

\end{subfigure}
\begin{subfigure}{0.24\textwidth}
    \caption{$m=128$}
    \includegraphics[width=\textwidth]{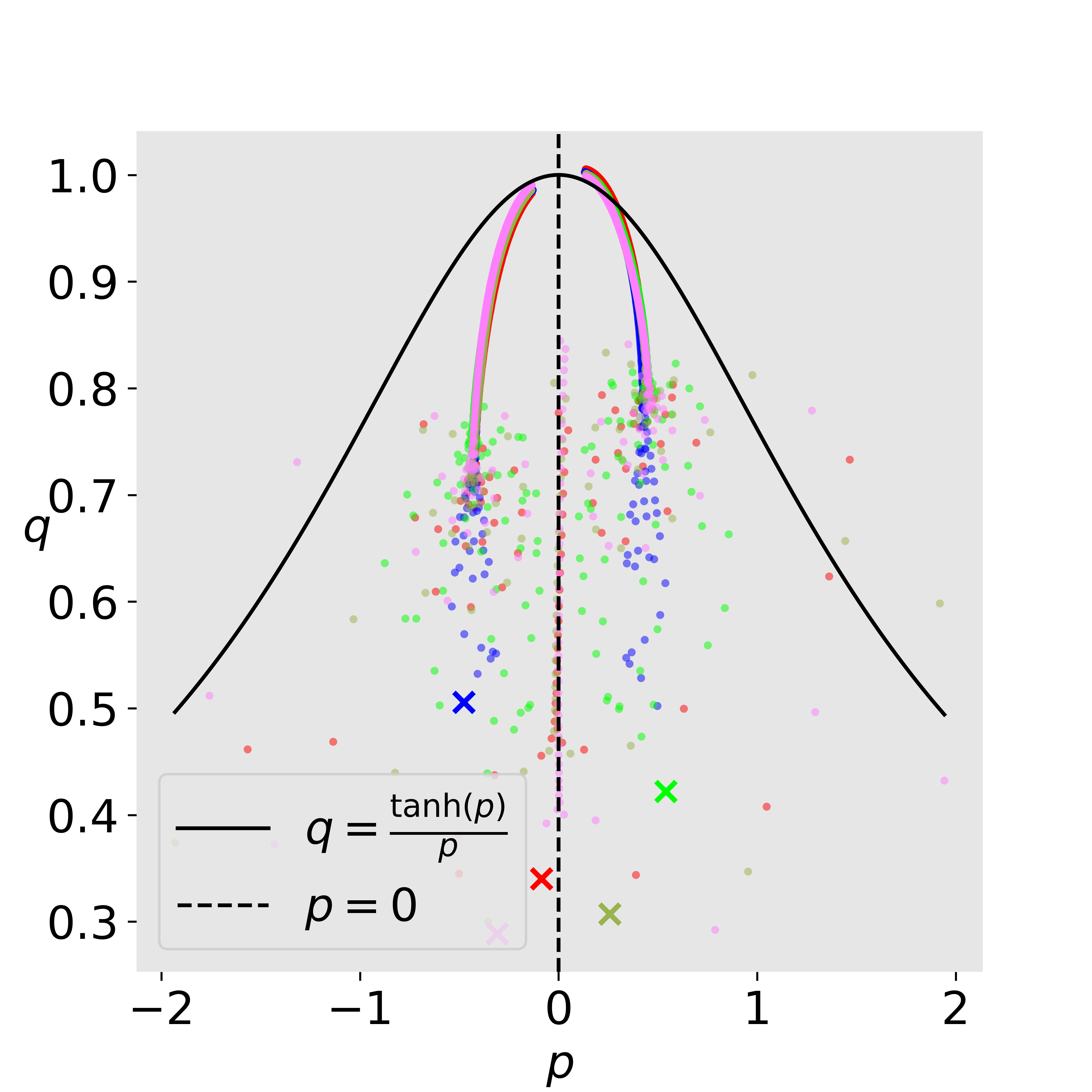}

\end{subfigure}
\begin{subfigure}{0.24\textwidth}
  \caption{$m=256$}
    \includegraphics[width=\textwidth]{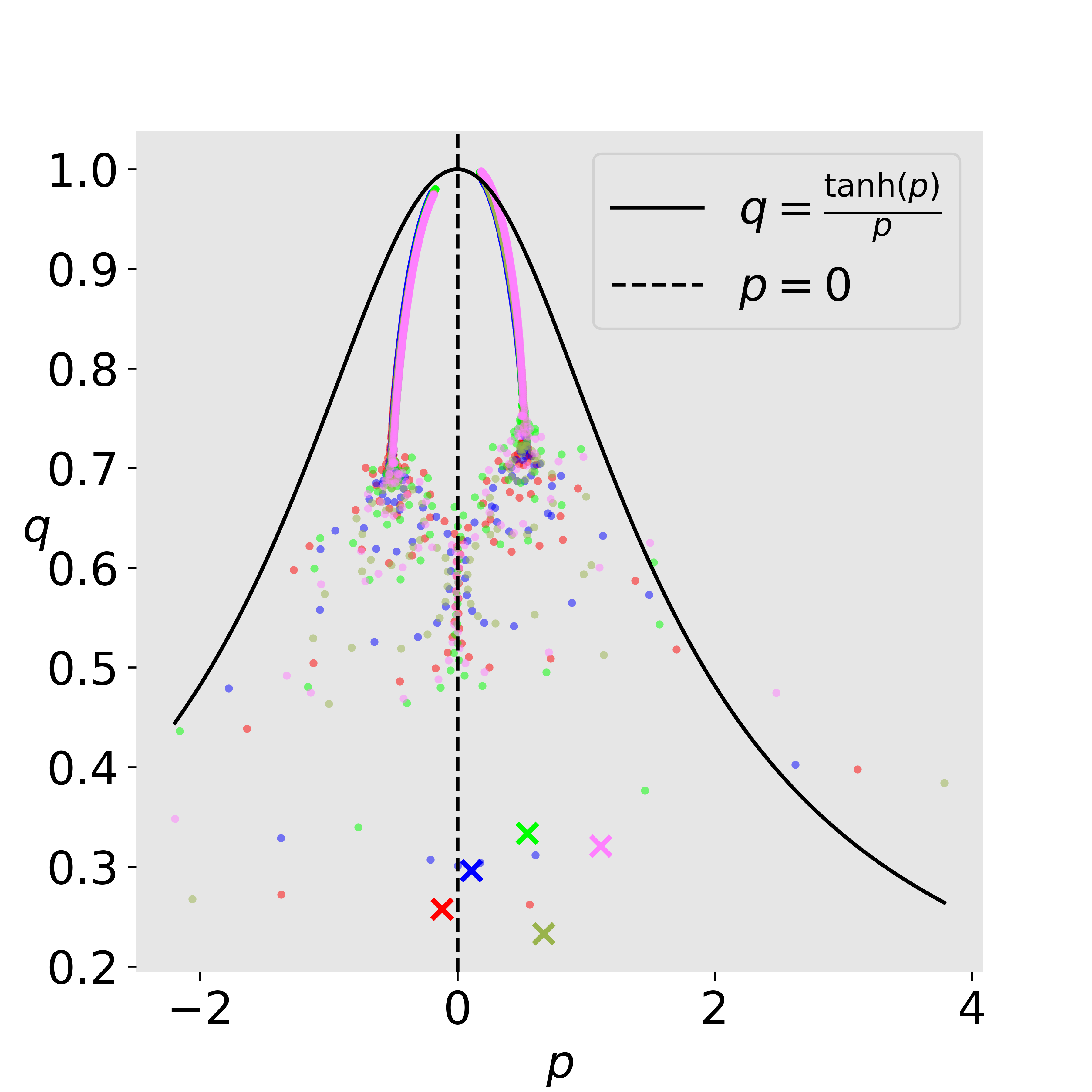}

\end{subfigure}
\begin{subfigure}{0.24\textwidth}
  \caption{$m=512$}
    \includegraphics[width=\textwidth]{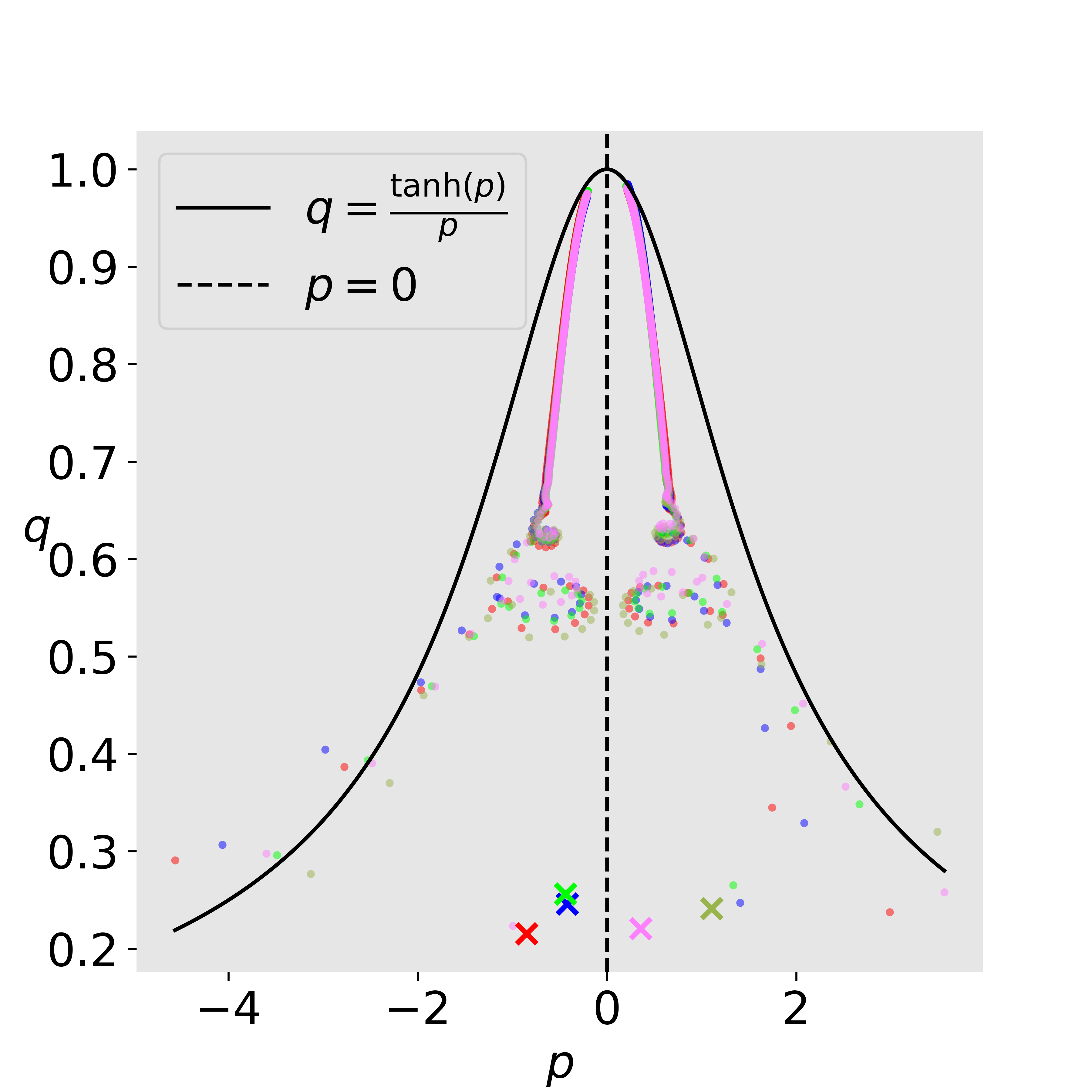}

\end{subfigure}
\caption{GD trajectories ($1000$ iterations) under the generalized canonical reparameterization (Eq.~\eqref{E:canonical_multidata}) for \textbf{different network widths} trained on a small subset of CIFAR-$10$ image dataset. The plots depict the training of $3$-layer fully connected neural networks with $\tanh$ activation and an initialization scale of $\alpha=1$. The trajectories are shown for network widths of $m=64$, $m=128$, $m=256$, and $m=512$.}
\label{Figure:cifar-tanh}
\end{figure}

\begin{figure}[!htb]
  \centering
\begin{subfigure}{0.24\textwidth}
    \caption{$m=64$}
    \includegraphics[width=\textwidth]{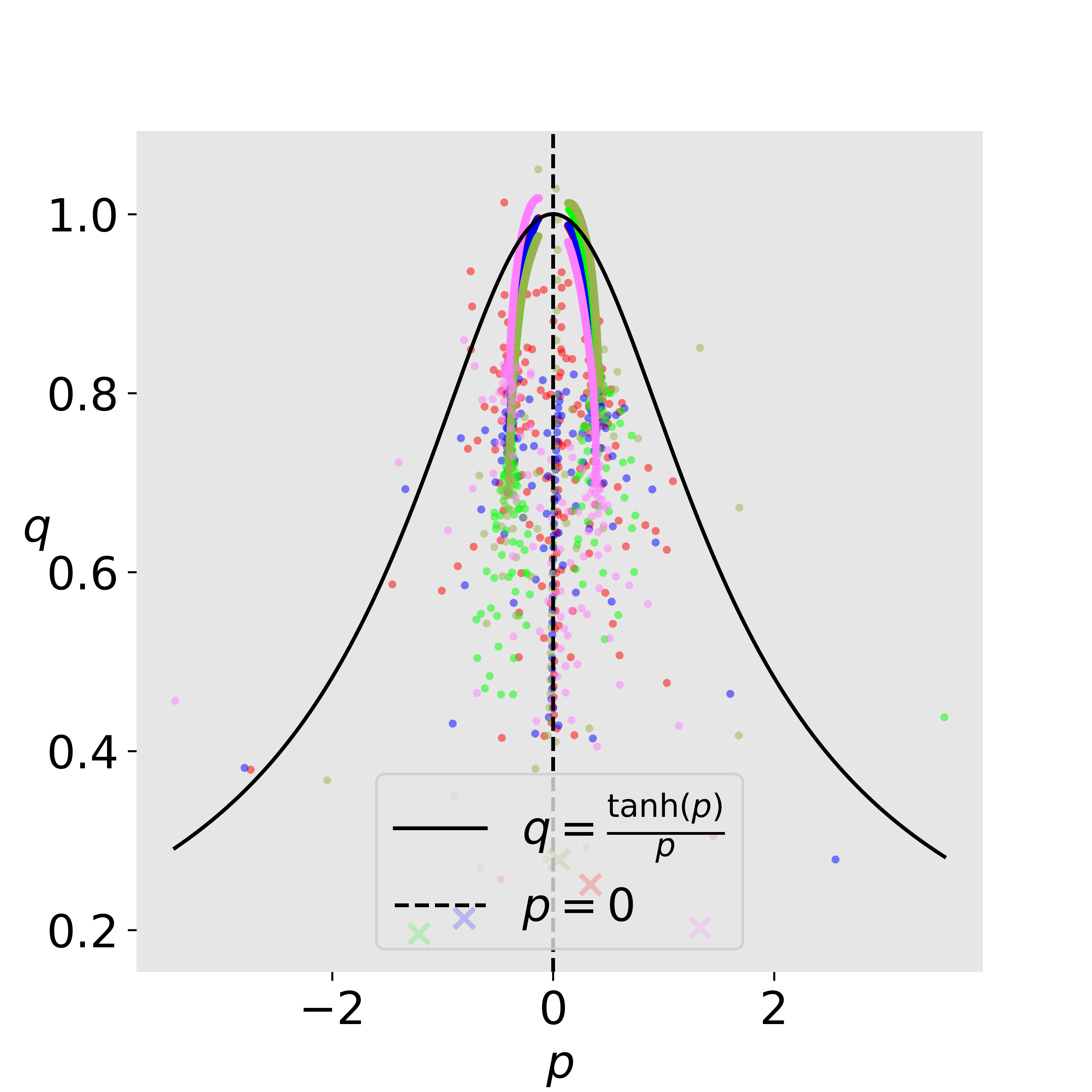}

\end{subfigure}
\begin{subfigure}{0.24\textwidth}
    \caption{$m=128$}
    \includegraphics[width=\textwidth]{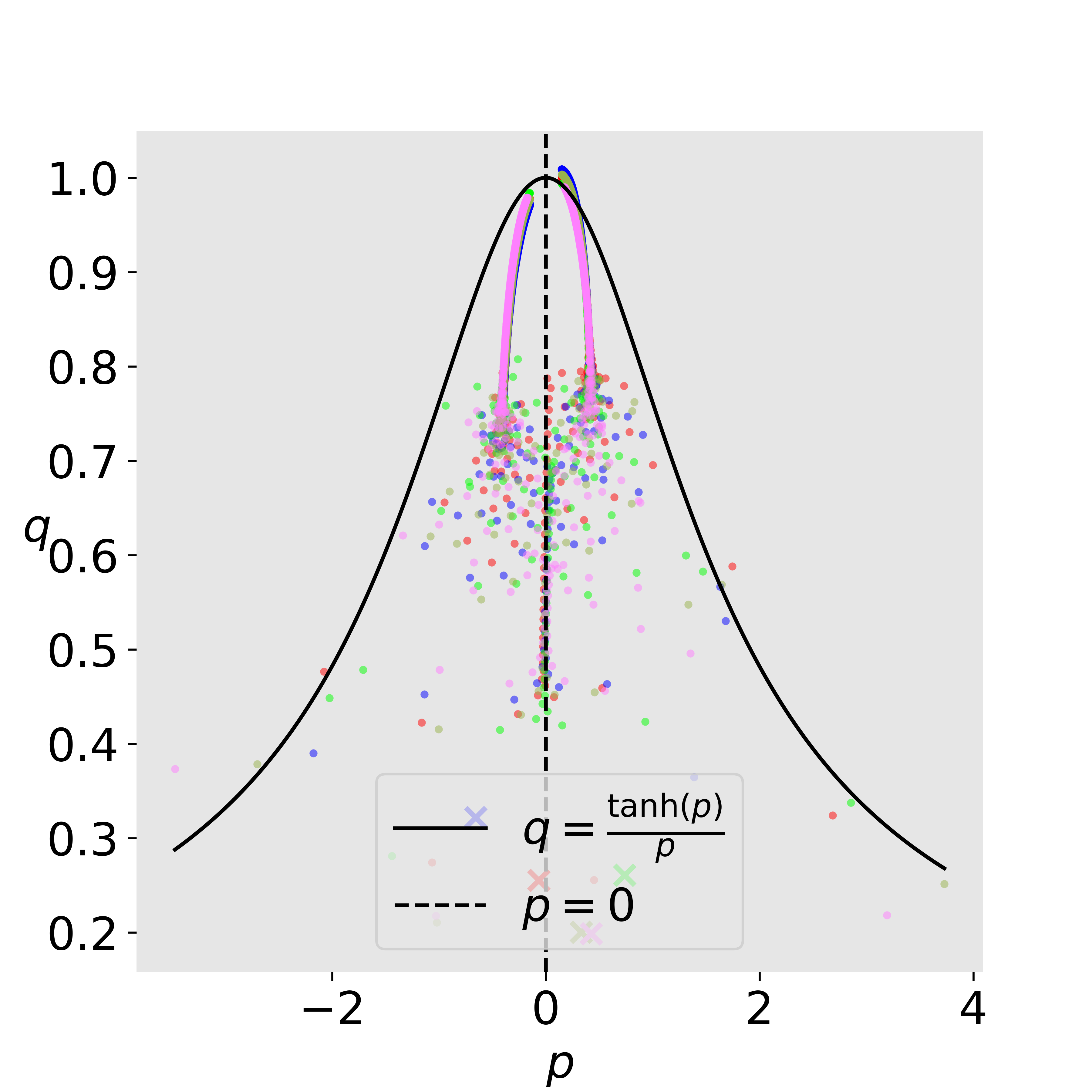}

\end{subfigure}
\begin{subfigure}{0.24\textwidth}
  \caption{$m=256$}
    \includegraphics[width=\textwidth]{cifar_mean_actelu_width256.png}

\end{subfigure}
\begin{subfigure}{0.24\textwidth}
  \caption{$m=512$}
    \includegraphics[width=\textwidth]{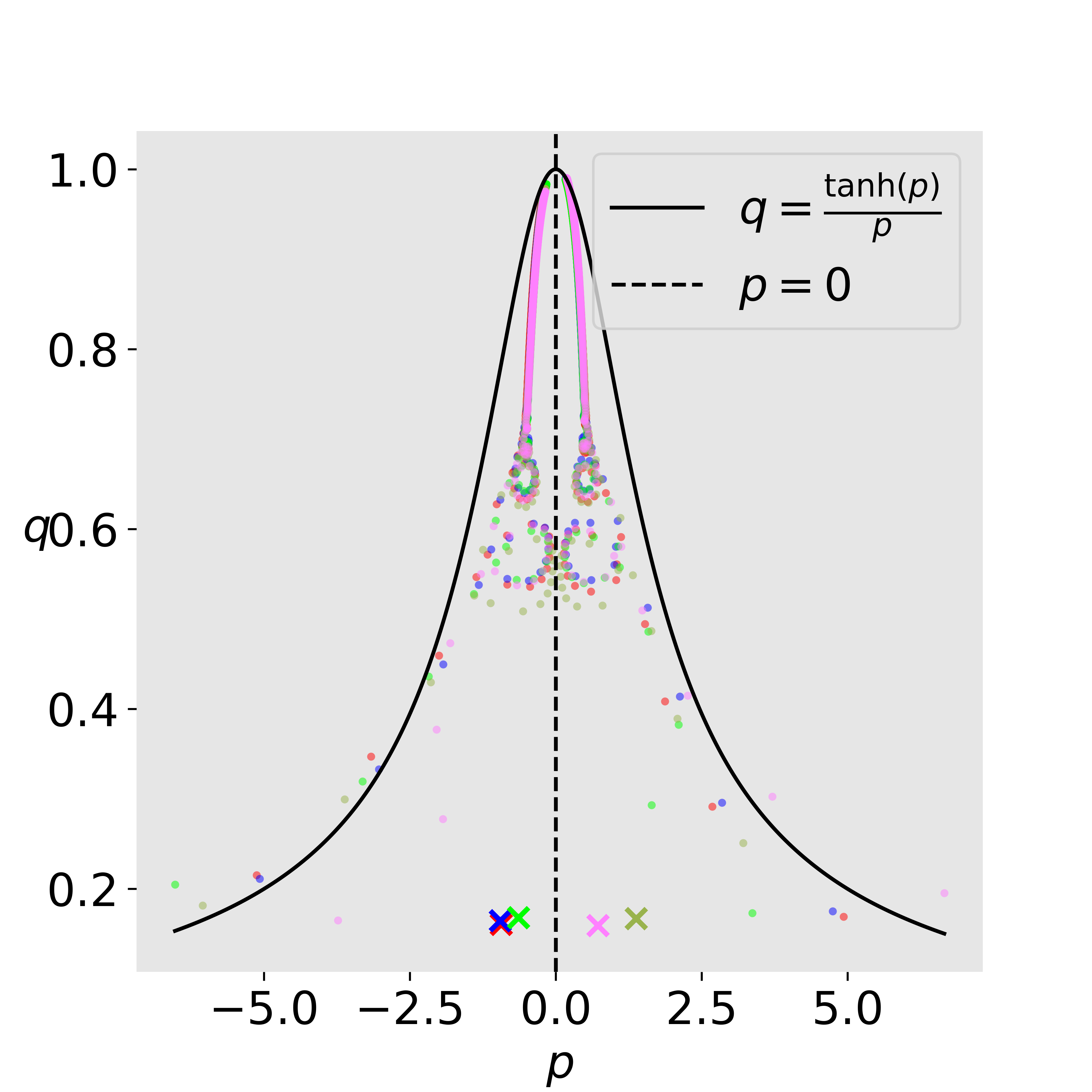}

\end{subfigure}
\caption{GD trajectories ($1000$ iterations) under the generalized canonical reparameterization (Eq.~\eqref{E:canonical_multidata}) for \textbf{different network widths} trained on a small subset of CIFAR-$10$ image dataset. The plots depict the training of $3$-layer fully connected neural networks with ELU activation and an initialization scale of $\alpha=1$. The trajectories are shown for network widths of $m=64$, $m=128$, $m=256$, and $m=512$.}
\label{Figure:cifar-elu}
\end{figure}

Consistent with the observations from Figures~\ref{Figure:single:width_depth3} and~\ref{Figure:single:width_depth10} in the single data point setting, we commonly find that as the network width increases, the alignment trend becomes more pronounced. In both Figure~\ref{Figure:cifar-tanh} and Figure~\ref{Figure:cifar-elu}, all trajectories fall within the EoS regime. However, narrower networks ($m=64$) show less evidence of the trajectory alignment phenomenon, while wider networks ($m=256, 512$) clearly demonstrate this behavior. These findings emphasize the significant impact of network width on the trajectory alignment property of GD.

\paragraph{The effect of data label.} 

In our previous experiments, we assigned labels of $+1$ and $-1$ to the dataset. However, in this particular experiment, we investigate the training process on a dataset with zero labels. This means that all samples in the dataset are labeled as zero ($y_i = 0$ for all $1 \leq i \leq n$). Figure~\ref{Figure:cifar:label0} visualizes the training of $3$-layer fully connected neural networks with $\tanh$ activation and an initialization scale of $\alpha=1$. The network widths $m$ are varied from $\{256, 512, 1024\}$. Interestingly, the GD trajectories align with the curve $q = \frac{\ell'(p)}{p}$, in contrast to our observations in Figures~\ref{Figure:cifar-tanh} and~\ref{Figure:cifar-elu}. These results suggest that the data label distribution also influences the alignment curve of GD trajectories. As a future research direction, it would be intriguing to investigate why setting the labels as zero leads to alignment towards the curve $q = \frac{\ell'(p)}{p}$, which aligns with our theoretical findings in the single data point setting.

\begin{figure}[!htb]
\centering
\begin{subfigure}[b]{0.25\textwidth}
    \centering
    \caption{$m=256$}
    \includegraphics[width=0.95\hsize]{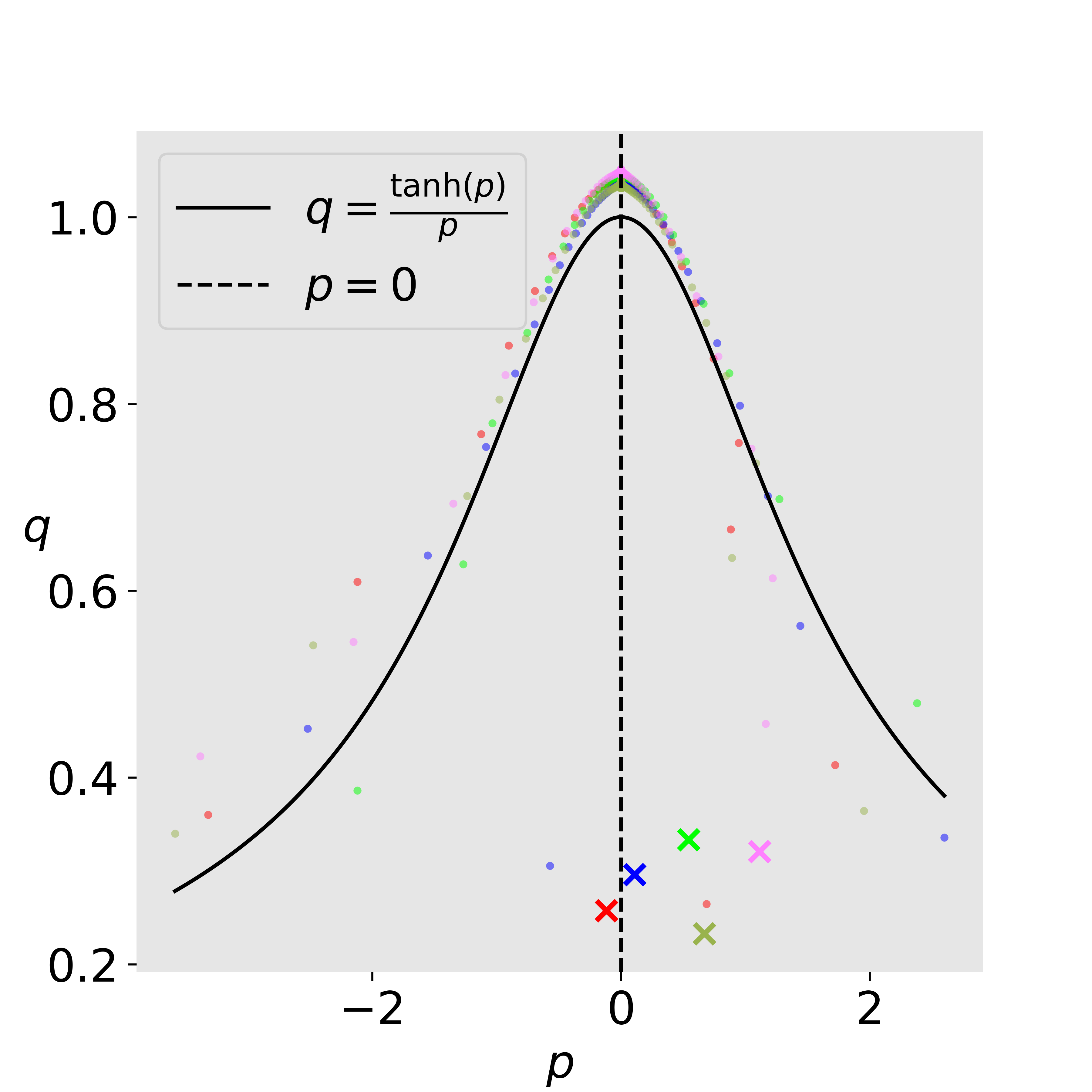}

\end{subfigure}
    \hfil
\begin{subfigure}[b]{0.25\textwidth}
    \centering
    \caption{$m=512$}
    \includegraphics[width=0.95\hsize]{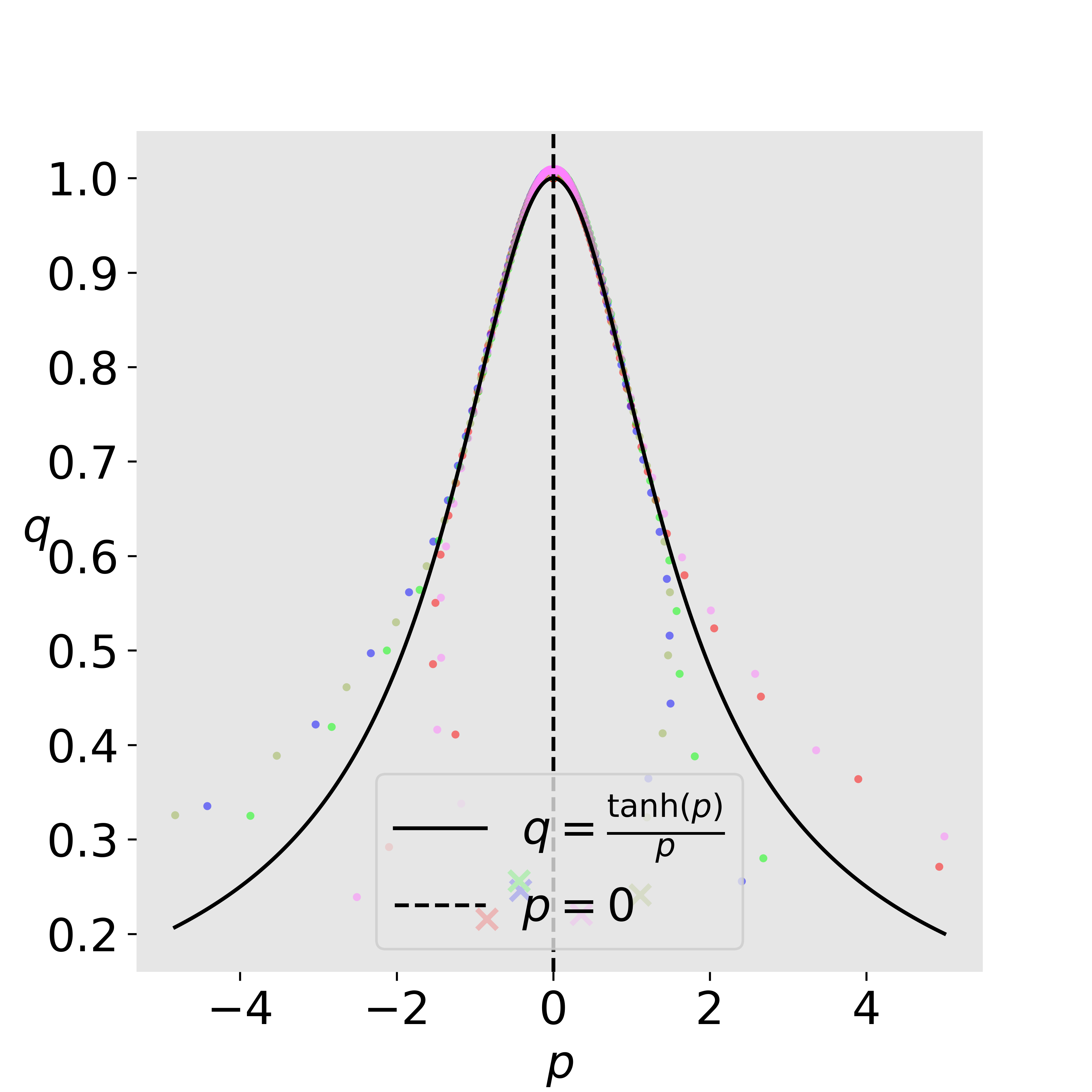}

\end{subfigure}
        \hfil
\begin{subfigure}[b]{0.25\textwidth}
    \centering
  \caption{$m=1024$}
    \includegraphics[width=0.95\hsize]{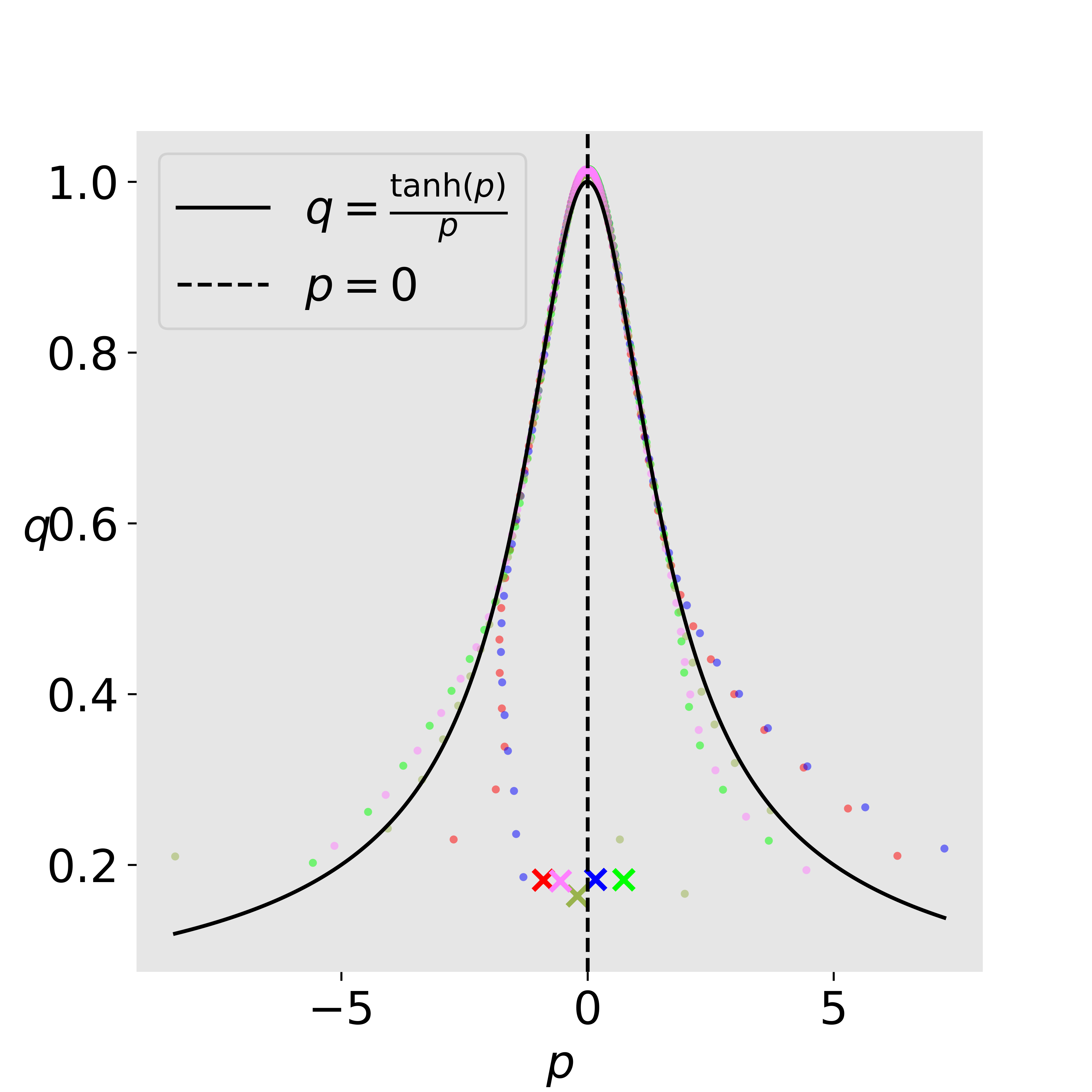}

\end{subfigure}

\caption{GD trajectories (1000 iterations) of the training for $3$-layer fully connected neural networks with $\tanh$ activation and an initialization scale of $\alpha=1$. The networks are trained on a small subset of the CIFAR-$10$ dataset, where \textbf{all labels are set to zero}. The network widths $m$ are varied, including values of $256$, $512$, and $1024$. GD trajectories exhibit alignment behavior, aligning on the curve $q = \frac{\ell'(p)}{p}$.}
\label{Figure:cifar:label0}
\end{figure}

\newpage
\section{Proofs for the Two-Layer Fully-Connected Linear Network}\label{S:A:linear_fc}

\subsection{Proof of Theorem~\ref{T:gradient_flow_linear}}

We give the proof of Theorem~\ref{T:gradient_flow_linear}, restated below for the sake of readability.

\TheoremGFdiag*
We note that in the given interval $\left(\frac{2}{2-\eta}, \min\left\{ \frac{1}{16\eta}, \frac{r(1)}{2\eta} \right\}\right)$, it is possible that the interval is empty, depending on the value of $r(1)$. However, this does not impact the correctness of the theorem.
\begin{proof}
  By Proposition~\ref{P:GF_linear}, $p_t$ converges to $0$ as $t\to \infty$, and for all $t\ge 0$, we have 
  \[
    q_0 \le q_t \le \exp(C\eta^2) q_0.
  \]
  Since the sequence $(q_t)_{t=0}^{\infty}$ is monotonic increasing and bounded, it converges. Suppose that $q_t\to q^*$ as $t\to \infty$.  Then, we can obtain the inequality
  \[
    q_0 \le q^* \le \exp(C\eta^2) q_0,
  \]
  as desired.
\end{proof}

\begin{proposition}\label{P:GF_linear}
  Suppose that $\eta \in (0, \frac{2}{33})$, $\abs{p_0}\le 1$ , and $q_0 \in \left(\frac{2}{2-\eta}, \min\left\{ \frac{1}{16\eta}, \frac{r(1)}{2\eta} \right\}\right)$. Then for any $t\ge 0$, we have
  \[
    \abs{p_t} \le \left[ 1 - \min \left\{ \frac{2(q_0-1)}{q_0}, \frac{r(1)}{2q_0} \right\} \right]^t \le 1,
  \]
  and
  \[
    q_0 \le q_t \le \exp \left( 8 \eta^2 q_0 \left[\min \left\{ \frac{2(q_0-1)}{q_0}, \frac{r(1)}{2q_0} \right\}\right]^{-1}\right) q_0 \le 2 q_0.
  \]
\end{proposition}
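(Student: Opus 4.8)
The plan is to establish both displayed bounds by a single induction on $t$, carrying the joint invariant
\[
  |p_t| \le \rho^t \qquad\text{and}\qquad q_0 \le q_t \le 2q_0 ,
\]
where $\rho \deq 1-m$ and $m \deq \min\bigl\{\tfrac{2(q_0-1)}{q_0},\,\tfrac{r(1)}{2q_0}\bigr\}$. First I would record the bookkeeping facts that by Assumption~\ref{A:r} the function $r$ is even, strictly decreasing on $(0,\infty)$ with $r(0)=1$ and $r>0$, so $r(1)\le r(p_t)\le 1$ whenever $|p_t|\le 1$; that $q_0>\tfrac{2}{2-\eta}>1$; and that $q_0<\tfrac1{16\eta}$ together with $\eta<\tfrac2{33}$ give $\eta^2 q_0 < \tfrac{\eta}{16}<\tfrac12$. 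These force $m\in(0,\tfrac12)$, hence $\rho\in(\tfrac12,1)$, so $\rho$ is a genuine contraction factor and $\rho^t\le 1$ for all $t$. The base case $t=0$ is immediate from $|p_0|\le 1$ and $q_0\le q_0\le 2q_0$.

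For the $p$-component of the inductive step I would write $p_{t+1}=g_t\,p_t$ with $g_t=1-\tfrac{2r(p_t)}{q_t}+\eta^2 p_t^2 r(p_t)^2$ and aim for $|g_t|\le\rho$. Using $r(1)\le r(p_t)\le 1$, $q_t\le 2q_0$, $r(p_t)^2\le r(p_t)$, and $\eta^2\le\tfrac1{2q_0}$, one gets
\[
  1-g_t \;=\; \tfrac{2r(p_t)}{q_t}-\eta^2 p_t^2 r(p_t)^2 \;\ge\; r(p_t)\Bigl(\tfrac1{q_0}-\eta^2\Bigr)\;\ge\;\tfrac{r(p_t)}{2q_0}\;\ge\;\tfrac{r(1)}{2q_0}\;\ge\; m ,
\]
so $g_t\le 1-m=\rho$; and $g_t\ge 1-\tfrac{2r(p_t)}{q_t}\ge 1-\tfrac{2}{q_0}=\tfrac{2(q_0-1)}{q_0}-1\ge m-1=-\rho$, using $m\le\tfrac{2(q_0-1)}{q_0}$. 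Hence $|p_{t+1}|=|g_t|\,|p_t|\le\rho^{t+1}$. A pleasant point is that this part needs no case split on which term realizes $m$.

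For the $q$-component I would write $q_{t+1}=q_t/(1-\varepsilon_t)$ with $\varepsilon_t=\eta^2 p_t^2 r(p_t)(2q_t-r(p_t))$. Since $2q_t-r(p_t)\ge 2q_0-1>0$ and $r(p_t)\ge 0$ we have $\varepsilon_t\ge 0$, hence $q_{t+1}\ge q_t\ge q_0$ (this gives monotonicity and the lower bound); and $|p_t|\le\rho^t$, $r(p_t)\le 1$, $q_t\le 2q_0$ give $0\le\varepsilon_t\le 4\eta^2 q_0\rho^{2t}\le\tfrac{\eta}{4}<\tfrac12$, so $\ln q_{t+1}-\ln q_t=-\ln(1-\varepsilon_t)\le 2\varepsilon_t$. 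Summing the geometric series,
\[
  \ln q_{t+1}-\ln q_0 \;\le\; 8\eta^2 q_0\sum_{s\ge 0}\rho^{2s}\;\le\;\frac{8\eta^2 q_0}{1-\rho}\;=\;\frac{8\eta^2 q_0}{m},
\]
and to close the induction it remains to verify $\tfrac{8\eta^2 q_0}{m}\le\tfrac12$; this simultaneously yields $q_{t+1}\le e^{1/2}q_0<2q_0$ and the claimed bound $q_t\le\exp(8\eta^2 q_0/m)\,q_0\le 2q_0$.

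I expect this last numeric inequality to be the only delicate part, and it is where the constants $33$ and $16$ enter. Writing $\tfrac{8\eta^2 q_0}{m}=\max\bigl\{\tfrac{4\eta^2 q_0^2}{q_0-1},\,\tfrac{16\eta^2 q_0^2}{r(1)}\bigr\}$: the second term is $<8\eta q_0<\tfrac12$ by $q_0<\tfrac{r(1)}{2\eta}$ and $q_0<\tfrac1{16\eta}$; for the first term I would use that $q\mapsto\tfrac{q^2}{q-1}$ is decreasing on $(1,2)$ and satisfies $\tfrac{q^2}{q-1}\le 2q$ for $q\ge 2$, so for $q_0\in(\tfrac2{2-\eta},2]$ it is bounded by its value at the left endpoint, namely $\tfrac{16\eta}{2-\eta}\le\tfrac12$ exactly because $\eta\le\tfrac2{33}$, while for $q_0\ge 2$ one has $\tfrac{4\eta^2 q_0^2}{q_0-1}\le 8\eta^2 q_0<\tfrac\eta2<\tfrac12$. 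Once the induction is complete, $p_t\to 0$ follows from $\rho<1$, and $(q_t)$, being monotone and bounded, converges to some $q^*\in[q_0,\exp(8\eta^2 q_0/m)\,q_0]$, which is exactly what Theorem~\ref{T:gradient_flow_linear} consumes.
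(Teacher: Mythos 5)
Your proposal is correct and follows essentially the same induction as the paper, carrying the same joint invariant on $|p_t|$ and $q_t$ and using the same geometric-series estimate on $\sum|\log(q_t/q_{t+1})|$. The only noticeable divergence is the closing numeric check: the paper observes directly that the two endpoints of the $q_0$-interval force $\min\bigl\{\tfrac{2(q_0-1)}{q_0},\tfrac{r(1)}{2q_0}\bigr\}\ge\eta$ and then concludes $8\eta^2 q_0/m \le 8\eta q_0 \le \tfrac12$, whereas you reach the same bound through a case split on which term realizes the minimum and on $q_0 \lessgtr 2$ — a bit longer but equally valid, and it does make visible exactly where $\eta<\tfrac{2}{33}$ is needed.
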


\begin{proof}
  We give the proof by induction; namely, if \begin{align*}
  \abs{p_t} \le \left[ 1 - \min \left\{ \frac{2(q_0-1)}{q_0}, \frac{r(1)}{2q_0} \right\} \right]^t,
  \quad q_0 \le q_t \le \exp \left( 8 \eta^2 q_0 \left[\min \left\{ \frac{2(q_0-1)}{q_0}, \frac{r(1)}{2q_0} \right\}\right]^{-1}\right) q_0 \le 2 q_0    
  \end{align*}
  are satisfied for time steps $0\le t\le k$ for some $k$, then the inequalities are also satisfied for the next time step $k+1$.

  For the base case, the inequalities are satisfied for $t=0$ by assumptions. For the induction step, we assume that the inequalities hold for any $0\le t\le k$. We will prove that the inequalities are also satisfied for $t=k+1$.

  By induction assumptions, $q_0\le q_k\le 2q_0$ and $\abs{p_k}\le 1$, so that we have
  \[
    1 - \frac{2}{q_0} \le \frac{p_{k+1}}{p_k} = 1 - \frac{2 r(p_k)}{q_k} + \eta^2 p_k^2r(p_k)^2 \le 1 - \frac{r(1)}{q_0} + \eta^2,
  \]
  where we used $r(1) \le r(p_k) \le r(0) = 1$. Since $q_0 \le \frac{r(1)}{2\eta} \le \frac{r(1)}{2\eta^2}$, we have
  \begin{align*}
    \left\lvert \frac{p_{k+1}}{p_k}\right\rvert & \le \max \left\{ \frac{2}{q_0}-1, 1 - \frac{r(1)}{q_0} + \eta^2 \right\}\\
    & = 1 - \min \left\{ \frac{2(q_0-1)}{q_0}, \frac{r(1)}{q_0} - \eta^2 \right\} \\
    & \le 1 - \min \left\{ \frac{2(q_0-1)}{q_0}, \frac{r(1)}{2q_0} \right\}.
  \end{align*}
  This implies that
  \[
    \abs{p_{k+1}} \le \left( 1 - \min \left\{ \frac{2(q_0-1)}{q_0}, \frac{r(1)}{2q_0} \right\} \right) \abs{p_k} \le \left[ 1 - \min \left\{ \frac{2(q_0-1)}{q_0}, \frac{r(1)}{2q_0} \right\} \right]^{k+1},
  \]
  which is the desired bound for $\abs{p_{k+1}}$.

  For any $t\le k$, we also have
  \[
    1 - 4\eta^2 p_t^2 q_0 \le \frac{q_t}{q_{t+1}} = 1 - \eta^2 p_t^2 r(p_t) (2q_t-r(p_t)) \le 1,
  \]
  where we used the induction assumptions to deduce $4 q_0 \ge 2q_t - r(p_t) \ge 2q_0 - 1 \ge \frac{4}{2-\eta}-1 > 0$. This gives $q_{k+1} \ge q_k \ge q_0$.
  Furthermore, note that $4\eta^2 p_t^2 q_0 \leq 4 \eta^2 \cdot \frac{1}{16\eta} \leq \frac{1}{4}$, so the ratio $\frac{q_t}{q_{t+1}} \in [\frac{3}{4},1]$. From this, we have
  \begin{align*}
    \left\lvert \log \left( \frac{q_0}{q_{k+1}} \right) \right\rvert \le \sum_{t=0}^{k}\left\lvert  \log\left(\frac{q_t}{q_{t+1}}\right) \right\rvert & \le 2 \sum_{t=0}^{k} \left\lvert \frac{q_t}{q_{t+1}} - 1 \right\rvert \\ 
    & \le 8\eta^2 q_0 \sum_{t=0}^k p_t^2 \\ 
    & \le 8\eta^2 q_0 \sum_{t=0}^{k}  \left[ 1 - \min \left\{ \frac{2(q_0-1)}{q_0}, \frac{r(1)}{2q_0} \right\} \right]^{2t} \\
    & \le 8 \eta^2 q_0 \left[\min \left\{ \frac{2(q_0-1)}{q_0}, \frac{r(1)}{2q_0} \right\}\right]^{-1},
  \end{align*}
  where the second inequality holds since $\abs{\log (1 + z)} \le 2 \abs{z}$ if $\abs{z} \le \frac{1}{2}$. Moreover, this implies that
  \[
    q_{k+1} \le \exp \left( 8 \eta^2 q_0 \left[\min \left\{ \frac{2(q_0-1)}{q_0}, \frac{r(1)}{2q_0} \right\}\right]^{-1}\right) q_0.
  \]
  Since $q_0 \in \left(\frac{2}{2-\eta},  \frac{r(1)}{2\eta}\right)$, we have
  \[
    \min \left\{ \frac{2(q_0-1)}{q_0}, \frac{r(1)}{2q_0} \right\} \ge \eta.
  \]
  Therefore, since $q_0 \le \frac{1}{16\eta}$, we can conclude that
  \[
    q_0 \le q_{k+1} \le \exp \left( 8 \eta^2 q_0 \left[\min \left\{ \frac{2(q_0-1)}{q_0}, \frac{r(1)}{2q_0} \right\}\right]^{-1}\right) q_0 \le \exp(8\eta q_0)q_0 \le 2 q_0,
  \]
  the desired bounds for $q_{k+1}$.
\end{proof}

\subsection{Proof of Theorem~\ref{T:EoS1_linear}}\label{S:T:EoS1_linear}

In this subsection, we prove Theorem~\ref{T:EoS1_linear}. From here onwards, we use the following notation:
\[
  s_t\deq \frac{q_t}{r(p_t)}.
\]
All the lemmas in this subsection are stated in the context of Theorem~\ref{T:EoS1_linear}.
\begin{lemma}\label{L:pq_bound_linear}
  Suppose that the initialization $(p_0, q_0)$ satisfies $\abs{p_0}\le 1$ and $q_0\in (c_0, 1-\delta)$. Then for any $t\ge 0$ such that $q_t \le 1$, it holds that 
  \[
    \abs{p_t}\le 4, \text{ and } q_t\le q_{t+1} \le (1 + \mathcal{O}(\eta^2)) q_t.
  \]
\end{lemma}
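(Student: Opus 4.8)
The plan is to prove both assertions simultaneously by induction on $t$ ranging over the set $\{t\ge 0: q_t\le 1\}$; since the $q$-update in \eqref{E:GD_pq_linear} always gives $q_{t+1}\ge q_t$, this set is an initial segment $\{0,\dots,T\}$ (possibly all of $\mathbb{N}$). The invariant to maintain is $\abs{p_t}\le 4$ together with $q_0\le q_t\le 1$; the base case holds since $\abs{p_0}\le 1\le 4$ and $q_0\in(c_0,1-\delta)\subseteq(0,1]$. Granting the invariant at $t$, the bound on $q$ is immediate: in $q_{t+1}=\big[1-\eta^2 p_t^2 r(p_t)(2q_t-r(p_t))\big]^{-1}q_t$ the subtracted term lies in $[0,32\eta^2]$, because $p_t^2\le 16$, $0<r(p_t)\le 1$, and $0\le 2q_t-r(p_t)\le 2$ (the lower bound from $2q_0-1>2c_0-1\ge 0$, using $c_0\ge\tfrac12$); hence for small $\eta$ the bracket lies in $(0,1]$, so $q_t\le q_{t+1}\le(1-32\eta^2)^{-1}q_t=(1+\mathcal{O}(\eta^2))q_t$, which also propagates $q_0\le q_{t+1}$ and preserves $q_{t+1}\le 1$ when $t+1\le T$.

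The core of the argument is the bound $\abs{p_{t+1}}\le 4$. Writing the $p$-recursion as $p_{t+1}=M(p_t)p_t+\eta^2 p_t^3 r(p_t)^2$ with multiplier $M(p)\deq 1-\tfrac{2r(p)}{q_t}$, and noting $\abs{\eta^2 p_t^3 r(p_t)^2}\le 16\eta^2\abs{p_t}$, I would split into three regimes according to the size of $r(p_t)$ relative to $q_t/2$ and $q_t$, equivalently of $\abs{p_t}$ relative to $\hat r(q_t/2)\ge\hat r(q_t)$. If $r(p_t)\le q_t/2$ (so $\abs{p_t}\ge\hat r(q_t/2)$), then $M(p_t)\in\big(0,\,1-\tfrac{2r(4)}{q_t}\big]\subseteq(0,1-2r(4))$ since $r(p_t)\ge r(4)>0$ and $q_t\le 1$, whence $\abs{p_{t+1}}\le(1-2r(4)+16\eta^2)\abs{p_t}<\abs{p_t}\le 4$ for $\eta$ small. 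If $q_t/2\le r(p_t)\le q_t$, then $M(p_t)\in[-1,0]$, so $M(p_t)+\eta^2 p_t^2 r(p_t)^2\in[-1,16\eta^2]$ and $\abs{p_{t+1}}\le\abs{p_t}\le 4$. Finally, if $r(p_t)>q_t$ (the linearly expanding window $\abs{p_t}<\hat r(q_t)$), then $\abs{M(p_t)}\le\tfrac{2}{q_t}-1\le\tfrac{2}{c_0}-1$, and — crucially — $\hat r(q_t)\le z_0$ because $q_t\ge q_0>c_0\ge r(z_0)$ and $\hat r$ is decreasing on $(0,1]$; therefore $\abs{p_{t+1}}\le\big(\tfrac{2}{c_0}-1+16\eta^2\big)\hat r(q_t)\le\big(\tfrac{2}{c_0}-1+16\eta^2\big)z_0\le 4$ (for the losses in play, e.g.\ $\log$-$\cosh$ with $c_0\approx0.73$, $z_0\approx1.09$, this is $\approx1.9$; in general one can also use $\hat r(q_t)\le 1/q_t$, which follows from $1$-Lipschitzness of $\ell$). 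In all three cases $\abs{p_{t+1}}\le 4$, closing the induction.

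I expect the third regime to be the main obstacle: while the linearized one-step map is expanding one must certify the iterate cannot escape, and this is precisely where the calibrated threshold $c_0=\max\{r(z_0),\tfrac12\}$ is used, to force $\hat r(q_t)\le z_0$ for every $t$ in range via the monotonicity $q_t\ge q_0>c_0$; the clean constant $4$ is then just a generous bound dominating $\big(\tfrac{2}{c_0}-1\big)z_0$. A secondary subtlety is that the cubic $\eta^2$-term in the $p$-update must be prevented from accumulating over the $\Omega(\eta^{-2})$ iterations during which $q_t\le 1$, which is why the first regime is argued with genuine contraction by a factor bounded below $1$ rather than the weaker estimate $\abs{p_{t+1}}\le(1+\mathcal{O}(\eta^2))\abs{p_t}$; all of the numerical inequalities go through once $\eta$ is small enough in terms of $\ell$.
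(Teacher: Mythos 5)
The $q$-bound and your Cases 1 and 2 for the $p$-bound are sound, and the observation that $\{t:q_t\le 1\}$ is an initial segment is correct. The gap is in Case 3, the expanding window $r(p_t)>q_t$. There you bound $\abs{p_{t+1}}$ by multiplying a worst-case bound on the multiplier, $\abs{M(p_t)}\le\tfrac{2}{c_0}-1$ (attained as $\abs{p_t}\to 0$), by a worst-case bound on the magnitude, $\abs{p_t}<\hat r(q_t)$ (attained as $\abs{p_t}\to\hat r(q_t)$). These two extremes are mutually exclusive, and decoupling them discards exactly the cancellation that keeps the iterate bounded. Concretely, the definition $c_0=\max\{r(z_0),\tfrac12\}$ only guarantees $c_0\ge\tfrac12$, in which case $\hat r(q_t)\le 1/q_t<1/c_0\le 2$ and $\tfrac{2}{c_0}-1<3$, so your product bound is only $\lesssim 6$, not $\le 4$. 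Your numerical check $(\tfrac{2}{c_0}-1)z_0\approx 1.9$ uses $\log$-$\cosh$ ($c_0\approx 0.73$); the lemma is stated for every $\ell$ satisfying Assumptions~\ref{A:l}, \ref{A:l_r}, \ref{A:l_r2}, and the inequality $(\tfrac{2}{c_0}-1)\hat r(q_t)\le 4$ is not implied by those. Your fallback $\hat r(q_t)\le 1/q_t$ gives $(\tfrac{2}{c_0}-1)/c_0=\tfrac{2-c_0}{c_0^2}$, which exceeds $4$ once $c_0<\tfrac{\sqrt{33}-1}{8}\approx 0.59$.

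The paper's proof closes the induction in one line precisely by exploiting the cancellation, uniformly over all regimes. Using $p_t r(p_t)=\ell'(p_t)$ in Eq.~\eqref{E:GD_pq_linear},
\begin{align*}
p_{t+1} = (1+\eta^2\ell'(p_t)^2)p_t - \frac{2\ell'(p_t)}{q_t},
\end{align*}
and both terms share the sign of $p_t$. After verifying
$\tfrac{2\abs{\ell'(p_t)}}{q_t}\ge 2\abs{\ell'(p_t)}\ge\tfrac12\ell'(p_t)^2\abs{p_t}\ge\eta^2\ell'(p_t)^2\abs{p_t}$
(using $q_t\le 1$, $\abs{\ell'(p_t)}\abs{p_t}\le 4$, and $\eta$ small), one gets $\abs{p_{t+1}}\le\max\{\abs{p_t},\tfrac{2\abs{\ell'(p_t)}}{q_t}\}\le\max\{4,\tfrac{2}{q_t}\}\le 4$ since $q_t>c_0\ge\tfrac12$. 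Note also that this lemma uses only $c_0\ge\tfrac12$; the calibrated threshold $c_0\ge r(z_0)$ plays no role here, contrary to what you suggest. If you want to retain your case split, the fix for Case 3 is to bound $\abs{M(p_t)p_t}=\abs{p_t-\tfrac{2\ell'(p_t)}{q_t}}\le\max\{\abs{p_t},\tfrac{2}{q_t}\}$ rather than multiplying suprema.
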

\begin{proof}
  We prove by induction. We assume that for some $t\ge 0$, it holds that $\abs{p_t}\le 4$ and $\frac{1}{2}\le q_t \le 1$. We will prove that $\abs{p_{t+1}}\le 4$ and $\frac{1}{2}\le q_t\le q_{t+1}\le (1 + \mathcal{O}(\eta^2)) q_t$. For the base case, $\abs{p_0}\le 1 \le 4$ and $\frac{1}{2}\le c_0 < q_t \le 1$ holds by the assumptions on the initialization. Now suppose that for some $t\ge 0$, it holds that $\abs{p_t}\le 4$ and $\frac{1}{2}\le q_t \le 1$. Then for small step size $\eta$, we have
  \begin{align*}
    \left\lvert\frac{2\ell'(p_t)}{q_t}\right\rvert\ge 2\abs{\ell'(p_t)} \ge \frac{1}{2} \ell'(p_t)^2\abs{p_t}\ge \eta^2 \ell'(p_t)^2\abs{p_t}.
  \end{align*}
  Consequently, by Eq.~\eqref{E:GD_pq_linear},
  \begin{align*}
    \abs{p_{t+1}} = \left\lvert (1 + \eta^2 \ell'(p_t)^2)p_t - \frac{2\ell'(p_t)}{q_t} \right\rvert \le \max\left\{ \abs{p_t}, \frac{2}{q_t} \right\}\le 4.
  \end{align*}
  where we used $1$-Lipshitzness of $\ell$. Moreover,
  \begin{align*}
    1 - 8\eta^2 \le 1 - 2\abs{p_t}\eta^2 \le \frac{q_t}{q_{t+1}} = 1 - \eta^2 p_t^2 r(p_t) (2q_t - r(p_t))\le 1,
  \end{align*}
  where we used $q_t\in [\frac{1}{2}, 1]$ and $|p_t r(p_t)| = |\ell'(p_t)| \le 1$ from $1$-Lipschitzness of $\ell$. Hence, $q_t\le q_{t+1}\le (1 + \mathcal{O}(\eta^2)) q_t$, as desired.
\end{proof}

Lemma~\ref{L:pq_bound_linear} implies that $p_t$ is bounded by a constant throughout the iterations, and $q_t$ monotonically increases slowly, where the increment for each step is $\mathcal{O}(\eta^2)$. Hence, there exists a time step $T=\Omega(\delta\eta^{-2}) = \Omega_{\delta}(\eta^{-2})$ such that for any $t\le T$, it holds that $q_t \le 1 - \frac{\delta}{2}$. Through out this subsection, we focus on these $T$ early time steps. Note that for all $0\le t \le T$, it holds that  $q_t \in (c_0, 1-\frac{\delta}{2})$.

\paragraph{Intuition on Theorem~\ref{T:EoS1_linear}.} Before we dive in to the rigorous proofs, we provide an intuition on Theorem~\ref{T:EoS1_linear}. Lemma~\ref{L:pq_bound_linear} establishes that $p_t$ is bounded and $q_t$ monotonically increases slowly, with an increment of $\mathcal{O}(\eta^2)$ per step. Lemma~\ref{L:bifurcation} shows that the map $f_{q_{t}}(p) = \left(1 - \frac{2r(p_t)}{q_t}\right)p_t$ has a stable $2$-period orbit $\{\pm \hat{r}(q_t)\}$ when $q_t \in (0,1)$. Consequently, when $q_t$ is treated as a fixed value, $p_t$ converges to the orbit $\{\pm \hat{r}(q_t)\}$, leading to $s_t$ converging to $1$. In the (early) short-term dynamics, $q_t$ is nearly fixed for small step size $\eta$, and hence $s_t$ converges to $1$. In long-term dynamics perspective, $q_t$ gradually increases and at the same time, $s_t$ stays near the value $1$. In Theorem~\ref{T:EoS1_linear}, we prove that it takes only $t_a=\mathcal{O}_{\delta, \ell}(\log(\eta^{-1}))$ time steps for $s_t$ to converge close to $1$ (Phase~\RomanNumeralCaps{1}, $t\le t_a$), and after that, $s_t$ stay close to $1$ for the remaining iterations (Phase~\RomanNumeralCaps{2}, $t > t_a$).

We informally summarize the lemmas used in the proof of Theorem~\ref{T:EoS1_linear}. Lemma~\ref{L:s_t0_linear} states that in the early phase of training, there exists a time step $t_0$ where $s_{t_0}$ becomes smaller or equal to $\frac{2}{2-r(1)}$, which is smaller than $2(1+\eta^2)^{-1}$. Lemma~\ref{L:s_t_convergence_linear} demonstrates that if $s_t$ is smaller than $2(1+\eta^2)^{-1}$ and $\abs{p_t}\ge \hat{r}(1-\frac{\delta}{4})$, then $\abs{s_{t}-1}$ decreases exponentially. For the case where $\abs{p_t} < \hat{r}(1-\frac{\delta}{4})$, Lemma~\ref{L:p_t_small_linear} proves that $\abs{p_t}$ increases at an exponential rate. Moreover, Lemma~\ref{L:s_t<1_linear} shows that if $s_t < 1$ at some time step, then $s_{t+1}$ is upper bounded by $1+\mathcal{O}(\eta^2)$. Combining these findings, Proposition~\ref{P:s_t_1_linear} establishes that in the early phase of training, there exists a time step $t_a^*$ such that $s_{t_a^*} = 1 + \mathcal{O}_{\delta, \ell}(\eta^2)$. Lastly, Lemma~\ref{L:s_t_convergence_tight_linear} demonstrates that if $s_{t} = 1 + \mathcal{O}_{\delta, \ell}(\eta^2)$, then $\abs{s_{t} - 1 - h(p_{t}) \eta^2}$ decreases exponentially.
  
Now we prove Theorem~\ref{T:EoS1_linear}, starting with the lemma below.

\begin{lemma}\label{L:s_t0_linear}
  There exists a time step $t_0 = \mathcal{O}_{\delta, \ell}(1)$ such that $s_{t_0}\le \frac{2}{2-r(1)}$.
\end{lemma}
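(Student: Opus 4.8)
The plan is to show that whenever $s_t \deq q_t/r(p_t)$ is still larger than the target $\frac{2}{2-r(1)}$, the iterate magnitude $\abs{p_t}$ must shrink by a fixed multiplicative factor, so this ``bad'' situation can persist for only a number of steps that depends on $\ell$ (and $\delta$), not on $\eta$. To set up, I first record the standing facts: by Lemma~\ref{L:pq_bound_linear}, $\abs{p_t}\le 4$ and $\tfrac12\le q_t\le 1$ for every $t$ in the window we consider (in fact $q_t\le 1-\tfrac{\delta}{2}$ for $t\le T=\Omega_\delta(\eta^{-2})$), hence $r(p_t)\ge r(4)>0$ and $s_t\le 1/r(4)$, a constant depending only on $\ell$. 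If $1/r(4)\le\frac{2}{2-r(1)}$, the claim is immediate with $t_0=0$; otherwise $1-\tfrac{r(1)}{2}\in(r(4),1)$, and since $r$ is even and, by Assumption~\ref{A:r}, continuous and strictly decreasing on $[0,\infty)$ with $r(0)=1$, there is a unique $p^\dagger\in(0,4)$ with $r(p^\dagger)=1-\tfrac{r(1)}{2}$; note $p^\dagger$ depends only on $\ell$.

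Now suppose, toward deriving a bound, that $s_t>\frac{2}{2-r(1)}$ for all $0\le t\le k$. For each such $t$, combining this with $q_t\le1$ gives $r(p_t)<\big(1-\tfrac{r(1)}{2}\big)q_t\le r(p^\dagger)$, hence $\abs{p_t}>p^\dagger$; moreover $\tfrac{2}{s_t}=\tfrac{2r(p_t)}{q_t}$ lies in $[2r(4),\,2-r(1))$, so $\big\lvert 1-\tfrac{2}{s_t}\big\rvert\le\max\{1-2r(4),\,1-r(1)\}=1-\mu$ with $\mu\deq\min\{2r(4),r(1)\}>0$. Using the $p$-update in Eq.~\eqref{E:GD_pq_linear} and $p_t^2r(p_t)^2=\ell'(p_t)^2\le1$ (the $1$-Lipschitzness of $\ell$),
\[
  \abs{p_{t+1}}\;\le\;\Big(\big\lvert 1-\tfrac{2}{s_t}\big\rvert+\eta^2\Big)\abs{p_t}\;\le\;(1-\mu+\eta^2)\abs{p_t},
\]
so once $\eta$ is small enough that $\eta^2\le\mu/2$ we obtain $\abs{p_{t+1}}\le(1-\mu/2)\abs{p_t}$ for all $t<k$.

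Iterating from $\abs{p_0}\le1$ gives $\abs{p_k}\le(1-\mu/2)^k$, while $\abs{p_k}>p^\dagger$ from the previous paragraph; hence $(1-\mu/2)^k>p^\dagger$, which forces $k<\log(1/p^\dagger)/\log\!\big(1/(1-\mu/2)\big)=\mathcal{O}_\ell(1)$. Therefore the first index $t_0$ at which $s_{t_0}\le\frac{2}{2-r(1)}$ satisfies $t_0=\mathcal{O}_\ell(1)=\mathcal{O}_{\delta,\ell}(1)$, and since $t_0\ll T$ for small $\eta$, all the facts invoked above indeed hold for every $0\le t\le t_0$. The only delicate points are tracking the $\mathcal{O}(\eta^2)$ correction and the sign split in $\big\lvert 1-\tfrac{2}{s_t}\big\rvert$; but because the target $\frac{2}{2-r(1)}$ sits strictly above $1$, we never enter the ``expanding'' regime $s_t<1$ (where, as in Lemma~\ref{L:bifurcation}, $\abs{p_t}$ would grow), which is what keeps the argument clean.
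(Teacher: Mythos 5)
Your proof is correct, and it takes a genuinely cleaner route than the paper's. The paper runs an induction maintaining the invariant $s_t < 2r(1)^{-1}$: at each step it must rule out $s_{t+1}\ge 2r(1)^{-1}$, which it does by a contradiction argument via the Mean Value Theorem applied to $1/r(\cdot)$, requiring the monotonicity of $r'/r^2$ from Assumption~\ref{A:l_r2}~\ref{A:l_r2:r'/r2}. You sidestep that invariant entirely by noticing the unconditional bound $s_t = q_t/r(p_t)\le 1/r(4)$, which follows directly from $|p_t|\le 4$ and $q_t\le 1$ (Lemma~\ref{L:pq_bound_linear}) and needs nothing beyond Assumptions~\ref{A:l} and~\ref{A:l_r}. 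This gives you $|1-2/s_t|\le 1-\mu$ with $\mu=\min\{2r(4),r(1)\}$, which is a possibly weaker contraction rate than the paper's $1-r(1)/2$, but still a constant in $\ell$, so the $\mathcal{O}_{\delta,\ell}(1)$ conclusion is unaffected. Your termination threshold $p^\dagger$ (defined by $r(p^\dagger)=1-r(1)/2$) is also slightly more natural than the paper's $\hat{r}(1-\delta/2)$: it falls directly out of the target inequality $s_{t_0}\le\tfrac{2}{2-r(1)}$, and it makes your $t_0$ bound $\mathcal{O}_\ell(1)$ with no $\delta$-dependence at all (the paper's constant depends on $\delta$). The edge case $p^\dagger\ge 1$ is handled correctly: then $|p_0|\le 1\le p^\dagger$ already contradicts the bad-regime requirement, so $t_0=0$. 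Net, your proof uses fewer assumptions and avoids the paper's MVT/contradiction step; the paper's buys only a marginally sharper contraction constant.
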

\begin{proof}
  We start by proving the following statement: for any $0\le t\le T$, if $\frac{2}{2-r(1)} < s_t < 2r(1)^{-1}$, then $s_{t+1} < 2r(1)^{-1}$ and $\abs{p_{t+1}}\le (1-\frac{r(1)}{2})\abs{p_t}$. Suppose that $\frac{2}{2-r(1)} < s_t < 2r(1)^{-1}$. Then from Eq.~\eqref{E:GD_pq_linear}, it holds that
  \begin{align*}
    \left\lvert \frac{p_{t+1}}{p_t} \right\rvert = \left\lvert 1 - \frac{2}{s_t} + \eta^2p_t^2r(p_t)^2\right\rvert \le \left\lvert 1 - \frac{2}{s_t}\right\rvert + \eta^2 \le 1-r(1) + \eta^2 \le 1 - \frac{r(1)}{2},
  \end{align*}
  for small step size $\eta$. Hence, $\abs{p_{t+1}}\le (1-\frac{r(1)}{2})\abs{p_t}$. Now we prove $s_{t+1}< 2r(1)^{-1}$. Assume the contrary that $s_{t+1} \ge 2r(1)^{-1}$. Then, $r(p_{t+1}) = \frac{q_{t+1}}{s_{t+1}} < q_{t+1} < 1-\frac{\delta}{2}$ so that $\abs{p_{t+1}}\ge \hat{r}(1-\frac{\delta}{2})$. By Mean Value Theorem, there exists $p_t^* \in (\abs{p_{t+1}}, \abs{p_{t}})$ such that (recall that $\frac{r'(p)}{r(p)^2} < 0$ for $p > 0$)
  \begin{align*}
    \frac{1}{r(p_{t+1})} = \frac{1}{r(\abs{p_t} - (\abs{p_t}-\abs{p_{t+1}}))} 
    &= \frac{1}{r(p_t)} + \frac{r'(p_t^*)}{r(p_t^*)^2} (\abs{p_t}-\abs{p_{t+1}}) \\
    &\le \frac{1}{r(p_t)} + \frac{r'(\abs{p_{t+1}})}{r(\abs{p_{t+1}})^2} \left(\frac{r(1)\abs{p_t}}{2}\right) \\
    &\le \frac{1}{r(p_t)} - \frac{\abs{r'(\hat{r}(1-\frac{\delta}{2}))}}{(1-\frac{\delta}{2})^2} \left(\frac{r(1)\hat{r}(1-\frac{\delta}{2})}{2}\right) \\
    &= \frac{1}{r(p_t)} - \Omega_{\delta,\ell}(1),
  \end{align*}
  where we used Assumption~\ref{A:l_r2}~\ref{A:l_r2:r'/r2} and $\hat{r}(1-\frac{\delta}{2}) \le \abs{p_{t+1}} \le (1-\frac{r(1)}{2})\abs{p_t} \le \abs{p_t}$. Consequently,
  \begin{align*}
    s_{t+1} = \frac{q_{t+1}}{r(p_{t+1})} = (1+\mathcal{O}(\eta^2))q_t \left(\frac{1}{r(p_t)} - \Omega_{\delta,\ell}(1)\right) \le \frac{q_t}{r(p_t)} = s_t < 2r(1)^{-1},
  \end{align*}
  for small step size $\eta$. This gives a contradiction to our assumption that $s_{t+1}\ge 2r(1)^{-1}$. Hence, we can conclude that $s_{t+1} < 2r(1)^{-1}$, as desired. 
  
  We proved that for any $0\le t\le T$, if $\frac{2}{2-r(1)} < s_t < 2r(1)^{-1}$, it holds that $s_{t+1} < 2r(1)^{-1}$ and $\abs{p_{t+1}}\le (1-\frac{r(1)}{2})\abs{p_t}$. At initialization, $\abs{p_0}\le 1$ and $q_0 < 1$, so that $s_0 < r(1)^{-1}$. If $s_0 \le \frac{2}{2-r(1)}$, then $t_0=0$ is the desired time step. Suppose that $s_0 > \frac{2}{2-r(1)}$. Then, we have $s_1 < 2r(1)^{-1}$ and $\abs{p_{1}}\le (1-\frac{r(1)}{2})\abs{p_0}\le 1-\frac{r(1)}{2}$. Then we have either $s_1 \le \frac{2}{2-r(1)}$, or $\frac{2}{2-r(1)} < s_1 < 2r(1)^{-1}$. In the previous case, $t_0=1$ is the desired time step. In the latter case, we can repeat the same argument and obtain $s_2 < 2r(1)^{-1}$ and $\abs{p_2}\le (1-\frac{r(1)}{2})^2$. 
  By inductively repeating the same argument, we can obtain a time step $t_0\le \log(\hat{r}(1-\frac{\delta}{2})) / \log(1-\frac{r(1)}{2}) = \mathcal{O}_{\delta, \ell}(1)$ such that either $s_{t_0}\le \frac{2}{2-r(1)}$, or $\abs{p_{t_0}}\le \hat{r}(1-\frac{\delta}{2})$. In the latter case, $r(p_{t_0}) \ge 1-\frac{\delta}{2} > q_{t_0}$, and hence $s_{t_0} < 1 < \frac{2}{2-r(1)}$. Therefore, $t_0 = \mathcal{O}_{\delta, \ell}(1)$ is the desired time step satisfying $s_{t_0}\le \frac{2}{2-r(1)}$.
\end{proof}

According to Lemma~\ref{L:s_t0_linear}, there exists a time step $t_0=\mathcal{O}_{\delta,\ell}(1)$ such that $s_{t_0} \le \frac{2}{2-r(1)} < 2(1+\eta^2)^{-1}$ for small step size $\eta$. Now we prove the lemma below.

\begin{lemma}\label{L:s_t<1_linear}
  Suppose that $s_t\le 1$. Then, it holds that $s_{t+1}\le 1 + \mathcal{O}(\eta^2)$.
\end{lemma}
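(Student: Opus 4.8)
The plan is to show that a single GD step starting from a point with $s_t\le 1$ cannot push the iterate appreciably outside the interval $[-\hat r(q_t),\hat r(q_t)]$, so that $r(p_{t+1})\ge q_t$; since $q_{t+1}/q_t=1+\mathcal{O}(\eta^2)$ by Lemma~\ref{L:pq_bound_linear}, this forces $s_{t+1}=q_{t+1}/r(p_{t+1})\le 1+\mathcal{O}(\eta^2)$.

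First I would unpack the hypothesis. Since $r$ is even and strictly decreasing on $[0,\infty)$ (Assumption~\ref{A:r}), $s_t=q_t/r(p_t)\le 1$ is equivalent to $q_t\le r(p_t)$, which in particular gives $q_t\le r(p_t)\le 1$ and $\abs{p_t}\le\hat r(q_t)$; set $\rho\deq\hat r(q_t)$. Combining $q_t\le 1$ with the monotonicity of $(q_\tau)$ from Lemma~\ref{L:pq_bound_linear} yields $q_t\ge q_0>c_0\ge r(z_0)$, hence, again by strict monotonicity of $r$, $\rho=\hat r(q_t)\le z_0$ (with equality only in the degenerate case $z_0=\infty$).

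Next I would bound $\abs{p_{t+1}}$. From Eq.~\eqref{E:GD_pq_linear}, $\abs{p_{t+1}}=\bigl\lvert 1-\tfrac{2r(p_t)}{q_t}+\eta^2p_t^2r(p_t)^2\bigr\rvert\,\abs{p_t}$. Because $q_t\le r(p_t)$ we have $1-\tfrac{2r(p_t)}{q_t}\le -1$, and $\eta^2p_t^2r(p_t)^2=\eta^2\ell'(p_t)^2\le\eta^2$ by $1$-Lipschitzness of $\ell$, so for small $\eta$ the bracket is negative with absolute value at most $\tfrac{2r(p_t)}{q_t}-1$; using that $r$ is even, $\abs{p_{t+1}}\le\phi(\abs{p_t})$ where $\phi(u)\deq\bigl(\tfrac{2r(u)}{q_t}-1\bigr)u$ on $[0,\rho]$. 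The crux is that $\phi$ is nondecreasing on $[0,\rho]$: one computes $\phi'(u)=\tfrac{2r(u)}{q_t}\bigl(1+\tfrac{ur'(u)}{r(u)}\bigr)-1$, and for $u\in(0,\rho]$ we have $\tfrac{2r(u)}{q_t}\ge 2$ (since $u\le\rho$ and $r$ is decreasing) while $1+\tfrac{ur'(u)}{r(u)}\ge\tfrac12$ (since $u\le\rho\le z_0$, the definition of $z_0$ together with the monotonicity of $z\mapsto\tfrac{zr'(z)}{r(z)}$ on $z>0$ from Assumption~\ref{A:l_r2}~\ref{A:l_r2:zr'/r} gives $\tfrac{ur'(u)}{r(u)}\ge-\tfrac12$), so $\phi'(u)\ge 0$. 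Hence $\abs{p_{t+1}}\le\phi(\abs{p_t})\le\phi(\rho)=\bigl(\tfrac{2r(\rho)}{q_t}-1\bigr)\rho=\rho=\hat r(q_t)$.

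Finally, $\abs{p_{t+1}}\le\hat r(q_t)$ gives $r(p_{t+1})\ge r(\hat r(q_t))=q_t$, so $s_{t+1}=q_{t+1}/r(p_{t+1})\le q_{t+1}/q_t\le 1+\mathcal{O}(\eta^2)$ by Lemma~\ref{L:pq_bound_linear}. The one genuinely nontrivial ingredient is the monotonicity of $\phi$ — i.e.\ that the reparameterized one-step map does not expand $[-\hat r(q_t),\hat r(q_t)]$ — and this is exactly where the standing hypothesis $q_0>c_0$ (equivalently $\hat r(q_t)\le z_0$) and item~\ref{A:l_r2:zr'/r} of Assumption~\ref{A:l_r2} enter; everything else is routine bookkeeping of the $\eta^2$ correction term.
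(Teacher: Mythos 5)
Your proof is correct and follows essentially the same strategy as the paper's: both show that the reparameterized one-step map sends $[-\hat r(q_t),\hat r(q_t)]$ into itself, by establishing that $\phi(u)=\bigl(\tfrac{2r(u)}{q_t}-1\bigr)u$ is nondecreasing on $(0,\hat r(q_t))$, and then use $q_{t+1}/q_t=1+\mathcal{O}(\eta^2)$ from Lemma~\ref{L:pq_bound_linear}. The only difference is cosmetic: where the paper argues that $\phi'$ is decreasing (as a product of two positive decreasing factors) and then checks $\phi'(\hat r(q_t))\ge 0$, you instead bound the two factors directly — $\tfrac{2r(u)}{q_t}\ge 2$ because $r(u)\ge r(\hat r(q_t))=q_t$, and $1+\tfrac{ur'(u)}{r(u)}\ge\tfrac12$ because $u\le\hat r(q_t)\le z_0$ — which gives $\phi'(u)\ge 2\cdot\tfrac12-1=0$ without needing the monotonicity of $\phi'$ itself. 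This is a modest simplification, but the underlying idea and the inputs (the $\eta^2$ term has the right sign, Assumption~\ref{A:l_r2}~\ref{A:l_r2:zr'/r}, and the standing assumption $q_0>c_0$) are identical.
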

\begin{proof}
  For any $p\in (0, \hat{r}(\frac{q_t}{2}))$, we have $r(p)\ge \frac{q_t}{2}$ so that $\abs{f_{q_t}(p)} = (-1+\frac{2r(p)}{q_t})p$. Hence,
  \[
    \frac{\partial}{\partial p} \abs{f_{q_t}(p)} = \frac{2 r(p)}{q_t}\left(1 + \frac{p r'(p)}{r(p)}\right) - 1,  
  \]
  for any $p\in (0, \hat{r}(\frac{q_t}{2}))$. By Assumption~\ref{A:l_r2}~\ref{A:l_r2:zr'/r} and convexity of $\ell$, both $r(p)$ and $1+\frac{pr'(p)}{r(p)} = \frac{\ell''(p)}{r(p)}$ are positive, decreasing function on $(0, \hat{r}(\frac{q_t}{2}))$. Consequently, $\frac{\partial}{\partial p}\abs{f_{q_t}(p)}$ is a decreasing function on $(0, \hat{r}(\frac{q_t}{2}))$. 
  
  Now note that $\frac{q_t}{2} < q_t < 1$, which means $\hat{r}(1) = 0 < \hat{r}(q_t) < \hat{r}(\frac{q_t}{2})$ by the definition of $\hat{r}$. 
  Note that $\frac{\partial}{\partial p} \abs{f_{q_t}(p)}$ at $p = \hat{r}(q_t)$ evaluates to
  \[
    \frac{\partial}{\partial p}\abs{f_{q_t}(\hat{r}(q_t))} = 1 + \frac{2 \hat{r}(q_t) r'(\hat{r}(q_t))}{r(\hat{r}(q_t))} \ge 1 + \frac{2 \hat{r}(c_0)r'(\hat{r}(c_0))}{r(\hat{r}(c_0))} \ge 0,
  \]
  where the first inequality used Assumption~\ref{A:l_r2}~\ref{A:l_r2:zr'/r} and $\hat{r}(q_t) <  \hat{r}(c_0)$, which comes from $q_t > c_0 \deq \max\{r(z_0), \frac{1}{2}\}$. The second inequality holds because $q_t > c_0 \ge r(z_0)$ where $z_0 := \sup_{z} \{\frac{zr'(z)}{r(z)}\ge -\frac{1}{2}\}$, from the statement of Theorem~\ref{T:EoS1_linear}.
  
  Therefore, since $\frac{\partial}{\partial p} \abs{f_{q_t}(p)}$ is decreasing on $(0, \hat{r}(\frac{q_t}{2}))$ and is nonnegative at $\hat{r}(q_t)$, for any $p\in (0, \hat{r}(q_t))$, it holds that $\frac{\partial}{\partial p}\abs{f_{q_t}(p)}\ge 0$. In other words, $\abs{f_{q_t}(p)}$ is an increasing function on $(0, \hat{r}(q_t))$. Since $0\le s_t\le 1$, we have $\abs{p_t}\le \hat{r}(q_t)$ and it holds that
  \[
    \abs{p_{t+1}} = \left(-1 + \frac{2}{s_t} - \eta^2 p_t^2 r(p_t)^2 \right) \abs{p_t} \le \left(-1 + \frac{2}{s_t} \right) \abs{p_t} = \abs{f_{q_t}(p_t)} \le \abs{f_{q_t}(\hat{r}(q_t))} = \hat{r}(q_t).
  \]
  Therefore, with this inequality and Lemma~\ref{L:pq_bound_linear}, we can conclude that
  \[
    s_{t+1} = \frac{q_{t+1}}{r(p_{t+1})} = \frac{q_t}{r(p_{t+1})} (1 + \mathcal{O}(\eta^2)) \le \frac{q_t}{r(\hat{r}(q_t))} (1 + \mathcal{O}(\eta^2)) = 1 + \mathcal{O}(\eta^2).
  \]
\end{proof}

Using Lemma~\ref{L:s_t<1_linear}, we prove the following lemma.

\begin{lemma}\label{L:s_t_convergence_linear}
  For any $0\le t\le T$, if $s_t < 2(1+\eta^2)^{-1}$, $\abs{s_t-1} > \frac{\eta^2}{2}$, and $r(p_t)\le 1 - \frac{\delta}{4}$, then 
  \begin{align*}
    \abs{s_{t+1}-1} \le (1-d)\abs{s_t-1} + \mathcal{O}(\eta^2),
  \end{align*}
  where $d\in (0,\frac{1}{2}]$ is a constant which depends on $\delta$ and $\ell$. 
\end{lemma}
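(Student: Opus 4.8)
The plan is to reduce the exact dynamics to a \emph{clean} one-step map on $s_t$ up to an $\mathcal{O}(\eta^2)$ error, and then to show that this clean map contracts $s_t$ toward $1$ by a factor $1-d$.

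First I would set $u_t \deq \abs{p_t}$ and $g \deq 1/r$, and record two consequences of the hypotheses together with Lemma~\ref{L:pq_bound_linear}. Since $s_t<2(1+\eta^2)^{-1}$ we have $1-\tfrac{2}{s_t}<-\eta^2<0$, and since $\abs{p_t}\le 4$ and $r(p_t)\le 1$ (from $1$-Lipschitzness of $\ell$, Assumption~\ref{A:l}), Eq.~\eqref{E:GD_pq_linear} gives $u_{t+1}=(\tfrac{2}{s_t}-1)u_t+\mathcal{O}(\eta^2)$ and $q_{t+1}=(1+\mathcal{O}(\eta^2))q_t$. Since $r(p_t)=q_t/s_t\le 1-\tfrac\delta4$ and $s_t<2$, the value $u_t=\hat r(q_t/s_t)$ is well-defined and lies in a compact interval $I_{\delta,\ell}\subset(0,\infty)$, bounded below by $\hat r(1-\tfrac\delta4)>0$ and above by $\min\{\hat r(c_0/2),4\}$, on which $\abs{r'}$ and $g'$ are bounded above and below by positive constants. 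Because $r$ and $g$ are $C^1$ and $r$ is bounded below by a positive constant on a fixed compact interval containing all relevant iterates, these estimates combine to $s_{t+1}=q_t\,g\bigl(G_{q_t}(u_t)\bigr)+\mathcal{O}(\eta^2)=:s_{t+1}^{\mathrm{cl}}+\mathcal{O}(\eta^2)$, where $G_q(u)\deq\bigl(\tfrac{2r(u)}{q}-1\bigr)u$ is the magnitude map of $f_q$ (on the branch $r(u)>q/2$, which contains $u_t$ since $s_t<2$).

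The heart of the argument is a contraction statement for the clean map. Using $g(\hat r(q_t))=1/q_t$, both $s_t-1=q_t\bigl(g(u_t)-g(\hat r(q_t))\bigr)$ and $s_{t+1}^{\mathrm{cl}}-1=q_t\bigl(g(G_{q_t}(u_t))-g(\hat r(q_t))\bigr)$, so it suffices to show
\[
  \abs{g(G_{q_t}(u_t))-g(\hat r(q_t))}\le(1-2d)\,\abs{g(u_t)-g(\hat r(q_t))}
\]
for a constant $d=d(\delta,\ell)\in(0,\tfrac12]$; adding back the $\mathcal{O}(\eta^2)$ term then gives the lemma (and the hypothesis $\abs{s_t-1}>\eta^2/2$ is only needed to guarantee $s_t\ne1$, since when $\abs{s_t-1}=\mathcal{O}(\eta^2)$ the bound is trivial). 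Now Assumption~\ref{A:l_r2}\ref{A:l_r2:r'/r2} states precisely that $g=1/r$ is convex, so on any subinterval $\int g'$ is squeezed between $g'(\text{left endpoint})\times(\text{length})$ and $g'(\text{right endpoint})\times(\text{length})$; this is the mechanism that lets me pass between $u$-space and $s$-space. I would then split into three cases according to the position of $G_{q_t}(u_t)$ relative to $\hat r(q_t)$ and $u_t$. If $s_t<1$ then $u_t<\hat r(q_t)$, $G_{q_t}(u_t)>u_t$, and the proof of Lemma~\ref{L:s_t<1_linear} shows $G_{q_t}(u_t)\le\hat r(q_t)$; the explicit form of $G_{q_t}$ and the mean value theorem for $\hat r$ give $\tfrac{G_{q_t}(u_t)-u_t}{\hat r(q_t)-u_t}=\tfrac{2u_t\abs{r'(\hat r(\xi))}}{q_t}=\Omega_{\delta,\ell}(1)$, which combined with the $g'$-squeeze yields the $1-2d$ bound. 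If $s_t>1$ then $u_t>\hat r(q_t)$ and $G_{q_t}(u_t)<u_t$: when $G_{q_t}(u_t)\ge\hat r(q_t)$ the symmetric computation (now using $\tfrac{u_t-G_{q_t}(u_t)}{u_t-\hat r(q_t)}$) closes the case; when $G_{q_t}(u_t)<\hat r(q_t)$ I write $\hat r(q_t)-G_{q_t}(u_t)=\int_{\hat r(q_t)}^{u_t}\!\bigl(1-\tfrac{2\ell''(u)}{q_t}\bigr)du\le(1-2\mu_\ell)(u_t-\hat r(q_t))$ with $\mu_\ell\deq\min_{[0,4]}\ell''>0$ (positivity of $\ell''$ follows from convexity of $\ell$ together with Assumption~\ref{A:l_r2}\ref{A:l_r2:zr'/r}), and here the two endpoint values of $g'$ that appear are \emph{both} $g'(\hat r(q_t))$, so they cancel. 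Finally, since $q_t$ ranges over the compact set $[c_0,1-\tfrac\delta2]$ and every constant above depends continuously on $q_t$, a single $d=d(\delta,\ell)\in(0,\tfrac12]$ works.

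The main obstacle is the change of coordinates. In the $s$-variable the clean dynamics has an \emph{unstable} fixed point at $s=q_t$ (the image of $p=0$), and near it the induced map $s_t\mapsto s_{t+1}^{\mathrm{cl}}$ is genuinely expanding, so a pointwise mean-value estimate on that map cannot produce a contraction over the whole relevant range of $s_t$. The remedy is to never differentiate the induced $s$-map: work with $G_{q_t}$ in the $u$-variable, where it has a simple algebraic form, and transfer all estimates using only the convexity of $1/r$; and to use the hypothesis $r(p_t)\le 1-\tfrac\delta4$ precisely to keep $u_t$ — and hence every mean-value point that appears — inside a fixed compact interval bounded away from $0$, i.e.\ away from the image of the unstable fixed point.
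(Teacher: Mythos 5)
Your proposal is correct and, at the mathematical core, runs on the same engine as the paper's proof: note that $p_t$ and $p_{t+1}$ have opposite signs, control the one-step change of $g(\abs{p_t})=1/r(\abs{p_t})$ by a mean-value/integral estimate, exploit the monotonicity of $r'/r^2$ (Assumption~\ref{A:l_r2}\ref{A:l_r2:r'/r2}, i.e.\ convexity of $g$), and split into the same three cases according to the signs of $s_t-1$ and $s_{t+1}-1$. What you package differently, and what is genuinely clarifying, is the reduction: you separate the $\mathcal{O}(\eta^2)$ corrections into a clean one-step map $G_{q_t}$ up front, observe that $s_t-1=q_t\bigl(g(u_t)-g(\hat r(q_t))\bigr)$ so that the lemma becomes a single contraction inequality in $g$-coordinates around the fixed point $\hat r(q_t)$ of $\abs{G_{q_t}}$, and compute the $u$-space contraction ratio explicitly as $\tfrac{2u_t\abs{r'(\hat r(\xi))}}{q_t}$ by applying MVT to $\hat r$ rather than (as the paper does) directly to $g$ at an intermediate point $\theta_t p_t$. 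Your sign-flip case ($s_t>1$, $G_{q_t}(u_t)<\hat r(q_t)$) also uses a slightly different mechanism: integrating $G_{q_t}'(v)=\tfrac{2\ell''(v)}{q_t}-1$ over $[\hat r(q_t),u_t]$ and exploiting that both $g'$-endpoint evaluations can be taken at $\hat r(q_t)$ and cancel, whereas the paper instead upper-bounds $g'(\theta_t p_t)$ by $g'(p_t)$ and invokes Assumption~\ref{A:l_r2}\ref{A:l_r2:zr'/r}; both bottom out on $\min_{[0,4]}\ell''>0$. Two small caveats: your ``symmetric computation'' for $s_t>1$, $G_{q_t}(u_t)\ge\hat r(q_t)$ is not literally symmetric to the $s_t<1$ case — the $g'$-ratio that survives the squeeze faces the opposite direction, so you need the compact-interval comparability of $g'$ on both ends (which you do set up, but a careful write-up should say so); and, like the paper, you assert $\mu_\ell>0$ ``by convexity,'' whereas convexity alone only gives $\ell''\ge 0$ — strict positivity on $[0,4]$ needs the monotonicity in Assumption~\ref{A:l_r2}\ref{A:l_r2:zr'/r} plus a short contradiction argument.
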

\begin{proof}
  By Eq.~\eqref{E:GD_pq_linear} and $1$-Lipschitzness of $\ell$, 
  \begin{align*}
    \frac{p_{t+1}}{p_t} = 1 - \frac{2}{s_t} + \eta^2 p_t^2 r(p_t)^2 < 1 - (1+\eta^2) + \eta^2 = 0,
  \end{align*}
  so that $p_t$ and $p_{t+1}$ have opposite signs. By Mean Value Theorem, there exists $\theta_t$ between $-1$ and $(1-\frac{2}{s_t}+\eta^2p_t^2r(p_t)^2)$ satisfying
  \begin{align}\label{E:mvt_linear}
    \frac{1}{r(p_{t+1})} &= \frac{1}{r\left(-p_t + \left(\frac{2(s_t-1)}{s_t}+\eta^2 p_t^2 r(p_t^2)\right)p_t\right)} \nonumber\\
    &= \frac{1}{r(-p_t)} - \frac{r'(\theta_t p_t)}{r(\theta_t p_t)^2} \left(\frac{2(s_t-1)}{s_t} + \eta^2 p_t^2 r(p_t)^2\right) p_t \nonumber\\
    &= \frac{1}{r(p_t)} - \frac{\abs{r'(\theta_t p_t)}}{r(\theta_t p_t)^2} \left(\frac{2(s_t-1)}{s_t} + \eta^2 p_t^2 r(p_t)^2\right) \abs{p_t},
  \end{align}
  where the last equality used the fact that $p_t$ and $\theta_t p_t$ have opposite signs and $r'(z)$ and $z$ have opposite signs.
  Note that $\abs{\theta_t p_t}$ is between $\abs{p_t}$ and $\abs{p_{t+1}}$. Consequently, the value $\frac{\abs{r'(\theta_t p_t)}}{r(\theta_t p_t)^2}$ is between $\frac{\abs{r'(p_t)}}{r(p_t)^2}$ and $\frac{\abs{r'(p_{t+1})}}{r(p_{t+1})^2}$ by Assumption~\ref{A:l_r2}~\ref{A:l_r2:r'/r2}. We will prove the current lemma based on Eq.~\eqref{E:mvt_linear}. We divide into following three cases: (1) $s_t\ge 1$ and $s_{t+1}\ge 1$, (2) $s_t\ge 1$ and $s_t < 1$, and (3) $s_t < 1$.
  \paragraph{Case 1.} Suppose that $s_t\ge 1$ and $s_{t+1}\ge 1$. Here, we have $\abs{p_t}\ge \hat{r}(q_t)\ge \hat{r}(1-\frac{\delta}{2})$ and similarly $\abs{p_{t+1}}\ge \hat{r}(1-\frac{\delta}{2})$. By Assumption~\ref{A:l_r2}~\ref{A:l_r2:r'/r2}, $\frac{\abs{r'(\theta_t p_t)}}{r(\theta_t p_t)^2}\ge \frac{\abs{r'(\hat{r}(1-\frac{\delta}{2}))}}{(1-\frac{\delta}{2})^2}$. Hence, Eq.~\eqref{E:mvt_linear} gives
  \begin{align*}
    \frac{1}{r(p_{t+1})} \le \frac{1}{r(p_t)} - \frac{\abs{r'(\hat{r}(1-\frac{\delta}{2}))}}{(1-\frac{\delta}{2})^2} \left(\frac{2(s_t-1)}{s_t} \right) \hat{r}\left(1-\frac{\delta}{2}\right).
  \end{align*}
  Consequently, by Lemma~\ref{L:pq_bound_linear},
  \begin{align*}
    s_{t+1} = \frac{q_{t} (1 + \mathcal{O}(\eta^2))}{r(p_{t+1})} = \frac{q_{t}}{r(p_{t+1})} + \mathcal{O}(\eta^2)
    &\le s_t - \frac{\abs{r'(\hat{r}(1-\frac{\delta}{2}))}}{(1-\frac{\delta}{2})^2} \left(\frac{2(s_t-1)}{s_t} \right) \hat{r}\left(1-\frac{\delta}{2}\right) q_t + \mathcal{O}(\eta^2) \\
    &\le s_t - \frac{\abs{r'(\hat{r}(1-\frac{\delta}{2}))}}{(1-\frac{\delta}{2})^2} (s_t-1) \hat{r}\left(1-\frac{\delta}{2}\right) \frac{1}{2} + \mathcal{O}(\eta^2)\\
    &\le s_t - \frac{\hat{r}(1-\frac{\delta}{2})\abs{r'(\hat{r}(1-\frac{\delta}{2}))}}{2(1-\frac{\delta}{2})^2} (s_t-1) + \mathcal{O}(\eta^2),
  \end{align*}
  where we used $q_t > c_0 \ge \frac{1}{2}$ and $s_t < 2(1+\eta^2)^{-1} < 2$. Therefore, we can obtain the following inequality:
  \begin{align*}
    0 \le s_{t+1}-1 \le \left( 1 - \frac{\hat{r}(1-\frac{\delta}{2})\abs{r'(\hat{r}(1-\frac{\delta}{2}))}}{2(1-\frac{\delta}{2})^2}\right) (s_t-1) + \mathcal{O}(\eta^2).
  \end{align*}
  \paragraph{Case 2.} Suppose that $s_t\ge 1$ and $s_{t+1}< 1$. Here, we have $r(p_{t+1}) > q_{t+1} \ge q_t \ge r(p_t)$, so that $\abs{p_{t+1}} < \abs{p_t}$. Consequently, $\frac{\abs{r'(\theta_t p_t)}}{r(\theta_t p_t)^2} \le \frac{\abs{r'(p_t)}}{r(p_t)^2}$ by Assumption~\ref{A:l_r2}~\ref{A:l_r2:r'/r2}. Hence, we can deduce from Eq.~\eqref{E:mvt_linear} that
  \begin{align*}
    \frac{1}{r(p_{t+1})} &\ge \frac{1}{r(p_t)} - \frac{\abs{r'(p_t)}}{r(p_t)^2} \left( \frac{2(s_t-1)}{s_t} + \eta^2 p_t^2 r(p_t)^2 \right) \abs{p_t} \\
    &= \frac{1}{r(p_t)} - \frac{2\abs{p_tr'(p_t)}}{r(p_t)q_t} (s_t-1) - \eta^2 \abs{p_t^3 r'(p_t)}\\
    &\ge \frac{1}{r(p_t)} - \frac{2\abs{p_tr'(p_t)}}{r(p_t)q_t} (s_t-1) - \eta^2 p_t^2 r(p_t)\\
    &= \frac{1}{r(p_t)} + \frac{2p_tr'(p_t)}{r(p_t)q_t} (s_t-1) - \mathcal{O}(\eta^2),
  \end{align*}
  where we used $\abs{p_tr'(p_t)} \le r(p_t)$ since $1+\frac{p_tr'(p_t)}{r(p_t)} =  \frac{\ell''(p_t)}{r(p_t)} > 0$ and $\abs{p_t}\le 4$ by Lemma~\ref{L:pq_bound_linear}. Consequently, by Lemma~\ref{L:pq_bound_linear} ($q_t \le q_{t+1}$) and Assumption~\ref{A:l_r2}~\ref{A:l_r2:zr'/r},
  \begin{align*}
    s_{t+1} \ge \frac{q_t}{r(p_{t+1})} \ge s_t + \frac{2p_tr'(p_t)}{r(p_t)} (s_t-1) - \mathcal{O}(\eta^2) \ge s_t + \frac{8r'(4)}{r(4)} (s_t-1) - \mathcal{O}(\eta^2).
  \end{align*}
  Note that $1>1 + \frac{4r'(4)}{r(4)} = \frac{\ell''(4)}{r(4)} > 0$ holds by convexity of $\ell$. Therefore, we can obtain the following inequality:
  \begin{align*}
    0 \le 1 - s_{t+1} \le - \left(1 + \frac{8r'(4)}{r(4)}\right) (s_t-1) + \mathcal{O}(\eta^2),
  \end{align*}
  where $-1<1+\frac{8r'(4)}{r(4)} < 1$.
  \paragraph{Case 3.} Suppose that $s_t < 1$. By Lemma~\ref{L:s_t<1_linear}, it holds that $s_{t+1}\le 1+\mathcal{O}(\eta^2)$. Moreover, we assumed $r(p_t) \le 1 - \frac{\delta}{4}$, so that $\abs{p_t}\ge \hat{r}(1-\frac{\delta}{4})$. We also have
  \[
    \abs{p_{t+1}} = \left(-1+\frac{2}{s_t}-\eta^2p_t^2r(p_t^2)\right)\abs{p_t} \ge \left(-1 + \frac{2}{1-\frac{\eta^2}{2}} - \eta^2\right) \abs{p_t} > \abs{p_t}\ge \hat{r}\left(1-\frac{\delta}{4}\right),
  \]
  where we used the assumption $\abs{s_t-1} > \frac{\eta^2}{2}$, and $\abs{p r(p)} = \abs{\ell'(p)} \leq 1$ due to $1$-Lipschitzness of $\ell$. Consequently, by Assumption~\ref{A:l_r2}~\ref{A:l_r2:r'/r2}, it holds that $\frac{\abs{r'(\theta_t p_t)}}{r(\theta_t p_t)^2} \ge \frac{\abs{r'(\hat{r}(1 - \frac{\delta}{4}))}}{(1 - \frac{\delta}{4})^2}$. Hence, by Eq.~\eqref{E:mvt_linear}, we have
  \begin{align*}
    \frac{1}{r(p_{t+1})} &\ge \frac{1}{r(p_t)} + \frac{\abs{r'(\hat{r}(1 - \frac{\delta}{4}))}}{(1 - \frac{\delta}{4})^2} \left(\frac{2(1-s_t)}{s_t}\right) \hat{r}\left(1-\frac{\delta}{4}\right) \\
    &\ge \frac{1}{r(p_t)} + \frac{\abs{r'(\hat{r}(1 - \frac{\delta}{4}))}}{(1 - \frac{\delta}{4})^2} 2(1-s_t) \hat{r}\left(1-\frac{\delta}{4}\right)\\
    &= \frac{1}{r(p_t)} + \frac{2 \hat{r}(1-\frac{\delta}{4})\abs{r'(\hat{r}(1-\frac{\delta}{4}))}}{(1-\frac{\delta}{4})^2} (1-s_t),
  \end{align*}
  and hence, by Lemma~\ref{L:pq_bound_linear} ($q_t \le q_{t+1}$) and $q_t > c_0 \ge \frac{1}{2}$, we get
  \begin{align*}
    s_{t+1} \ge \frac{q_t}{r(p_{t+1})} \ge s_t + \frac{\hat{r}(1-\frac{\delta}{4})\abs{r'(\hat{r}(1-\frac{\delta}{4}))}}{(1-\frac{\delta}{4})^2} (1-s_t).
  \end{align*}
  Therefore, we can obtain the following inequality:
  \begin{align*}
    -\mathcal{O}(\eta^2) \le 1-{s_{t+1}} \le \left(1 - \frac{\hat{r}(1-\frac{\delta}{4})\abs{r'(\hat{r}(1-\frac{\delta}{4}))}}{(1-\frac{\delta}{4})^2}\right) (1-s_t),
  \end{align*}
  where we used Lemma~\ref{L:s_t<1_linear} to obtain the first inequality.

  Combining the three cases, we can finally conclude that if we choose
  \begin{align*}
    d \deq \min \left\{ \frac{1}{2}, \frac{\hat{r}(1-\frac{\delta}{2})\abs{r'(\hat{r}(1-\frac{\delta}{2}))}}{2(1-\frac{\delta}{2})^2},  2 \left(1 + \frac{4r'(4)}{r(4)}\right),  \frac{\hat{r}(1-\frac{\delta}{4})\abs{r'(\hat{r}(1-\frac{\delta}{4}))}}{(1-\frac{\delta}{4})^2}\right\} \in \left(0,\frac{1}{2}\right],
  \end{align*}
  then $\abs{s_{t+1}-1} \le (1-d)\abs{s_t-1} + \mathcal{O}(\eta^2)$. 
\end{proof}

Lemma~\ref{L:s_t_convergence_linear} implies that if $s_t<2(1+\eta^2)^{-1}$ and $\abs{p_t} \ge \hat{r}(1-\frac{\delta}{4})$, then $\abs{s_t-1}$ exponentially decreases. We prove Lemma~\ref{L:p_t_small_linear} to handle the regime $\abs{p_t} < \hat{r}(1-\frac{\delta}{4})$, which is stated below.

\begin{lemma}\label{L:p_t_small_linear}
  For any $0\le t \le T$, if $r(p_t)\ge 1 - \frac{\delta}{4}$, it holds that
  \begin{align*}
    \left\lvert \frac{p_{t+1}}{p_t} \right\rvert \ge \frac{4}{4-\delta}.
  \end{align*}
\end{lemma}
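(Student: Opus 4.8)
The plan is to estimate the ratio $p_{t+1}/p_t$ directly from the explicit reparameterized update. Recall from Eq.~\eqref{E:GD_pq_linear} that
\[
  \frac{p_{t+1}}{p_t} = 1 - \frac{2r(p_t)}{q_t} + \eta^2 p_t^2 r(p_t)^2 .
\]
First I would use the fact, established just before Lemma~\ref{L:s_t0_linear}, that for every $0\le t\le T$ we have $q_t\in(c_0,\,1-\tfrac{\delta}{2})$. Combining $q_t<1-\tfrac{\delta}{2}$ with the hypothesis $r(p_t)\ge 1-\tfrac{\delta}{4}$ gives
\[
  \frac{2r(p_t)}{q_t} > \frac{2\bigl(1-\tfrac{\delta}{4}\bigr)}{1-\tfrac{\delta}{2}} = \frac{2-\tfrac{\delta}{2}}{1-\tfrac{\delta}{2}} = 1 + \frac{2}{2-\delta},
\]
so that $1-\tfrac{2r(p_t)}{q_t} < -\tfrac{2}{2-\delta}$.

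Next I would control the correction term: by $1$-Lipschitzness of $\ell$ we have $\abs{p_t r(p_t)}=\abs{\ell'(p_t)}\le 1$, hence $\eta^2 p_t^2 r(p_t)^2\le \eta^2$. Combining this with the previous display yields
\[
  \frac{p_{t+1}}{p_t} < -\frac{2}{2-\delta} + \eta^2 < 0
\]
for $\eta$ small enough, and therefore $\abs{p_{t+1}/p_t} > \tfrac{2}{2-\delta}-\eta^2$. To finish, I would compare $\tfrac{2}{2-\delta}$ with the target $\tfrac{4}{4-\delta}$: a one-line computation gives $\tfrac{2}{2-\delta}-\tfrac{4}{4-\delta} = \tfrac{2\delta}{(2-\delta)(4-\delta)} > 0$, so if $\eta$ is chosen small enough that $\eta^2 \le \tfrac{2\delta}{(2-\delta)(4-\delta)}$, we obtain $\abs{p_{t+1}/p_t}\ge \tfrac{4}{4-\delta}$, as claimed. (The ratio is well-defined since the running hypothesis of Theorem~\ref{T:EoS1_linear} ensures $p_t\ne 0$ whenever $q_t<1$, which holds on $0\le t\le T$.)

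There is no genuine obstacle here — the statement is a single-step estimate. The only point requiring a little care is to make the smallness threshold on $\eta$ depend on $\delta$ only, which is transparent from the bound above, and to note the sign of $p_{t+1}/p_t$ before passing to absolute values, which is immediate since $\tfrac{2}{2-\delta}>1>\eta^2$ for small $\eta$.
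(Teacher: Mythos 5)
Your argument is correct and is essentially the paper's proof: you bound $2r(p_t)/q_t$ from below using $q_t<1-\tfrac{\delta}{2}$ and $r(p_t)\ge 1-\tfrac{\delta}{4}$, control the $\eta^2 p_t^2 r(p_t)^2$ correction by $1$-Lipschitzness, and absorb the slack into the gap $\tfrac{2}{2-\delta}-\tfrac{4}{4-\delta}=\tfrac{2\delta}{(2-\delta)(4-\delta)}>0$; the paper does the same computation phrased via $s_t=q_t/r(p_t)<\tfrac{4-2\delta}{4-\delta}$. Your explicit sign check before taking absolute values is a small point of extra care that the paper leaves implicit, but the route is the same.
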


\begin{proof}
  If $r(p_t)\ge 1 - \frac{\delta}{4}$, then $s_t = \frac{q_t}{r(p_t)} < \frac{1-\frac{\delta}{2}}{1-\frac{\delta}{4}} = \frac{4-2\delta}{4-\delta}$, where we used $q_t < 1-\frac{\delta}{2}$ for any $0\le t\le T$. Consequently,
  \[
    \left\lvert \frac{p_{t+1}}{p_t} \right\rvert = \frac{2}{s_t} - 1 - \eta^2 p_t^2 r(p_t^2) \ge \frac{2(4-\delta)}{4-2\delta} - 1 - \eta^2 = \frac{2}{2-\delta} - \eta^2 \ge \frac{4}{4-\delta},
  \]
  for small step size $\eta$.
\end{proof}

Now we prove Proposition~\ref{P:s_t_1_linear}, which proves that $s_t$ reaches close to $1$ with error bound of $\mathcal{O}(\eta^2)$.

\begin{proposition}\label{P:s_t_1_linear}
  There exists a time step $t_{a}^* = \mathcal{O}_{\delta,\ell}(\log(\eta^{-1}))$ satisfying
  \begin{align}
    s_{t_a^*} = 1 +  \mathcal{O}_{\delta,\ell}(\eta^2).
  \end{align}
\end{proposition}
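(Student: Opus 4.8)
\emph{Proof idea.}
I would start the clock at the step $t_0 = \mathcal{O}_{\delta,\ell}(1)$ furnished by Lemma~\ref{L:s_t0_linear}, at which $s_{t_0} \le \tfrac{2}{2-r(1)}$, and work throughout inside the window $[0,T]$ with $T = \Omega_\delta(\eta^{-2})$ from Lemma~\ref{L:pq_bound_linear}, so that $q_t \in (c_0, 1-\tfrac{\delta}{2})$ and $\abs{p_t}\le 4$ are always available. The goal is to exhibit a step $t_a^* = t_0 + \mathcal{O}_{\delta,\ell}(\log\eta^{-1})$ with $\abs{s_{t_a^*}-1} = \mathcal{O}_{\delta,\ell}(\eta^2)$. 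The first ingredient is the invariant $s_t \le \tfrac{2}{2-r(1)}$ for all $t_0\le t\le T$, proved by induction: if $s_t\le 1$, Lemma~\ref{L:s_t<1_linear} gives $s_{t+1}\le 1+\mathcal{O}(\eta^2)$; if $1 < s_t \le \tfrac{2}{2-r(1)}$, then $r(p_t) = q_t/s_t < q_t < 1-\tfrac{\delta}{4}$, so Lemma~\ref{L:s_t_convergence_linear} applies (if ever $\abs{s_t-1}\le\tfrac{\eta^2}{2}$ we are already done and set $t_a^*=t$) and $\abs{s_{t+1}-1}\le (1-d)\abs{s_t-1}+\mathcal{O}(\eta^2)\le \tfrac{r(1)}{2-r(1)}$ for small $\eta$. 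A corollary used below: $\tfrac{2}{s_t}-1 \ge 1-r(1) = \Omega_\ell(1)$ whenever $s_t\ge 1$.

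\emph{Two regimes.} I would split the steps $\ge t_0$ according to whether $r(p_t)\ge 1-\tfrac{\delta}{4}$ (the ``inner'' regime, i.e.\ $\abs{p_t}<\hat r(1-\tfrac{\delta}{4})$, which forces $s_t\le\tfrac{1-\delta/2}{1-\delta/4}<1$) or $r(p_t)<1-\tfrac{\delta}{4}$ (the ``outer'' regime). In the inner regime Lemma~\ref{L:p_t_small_linear} gives $\abs{p_{t+1}}\ge \tfrac{4}{4-\delta}\abs{p_t}$, so $\abs{p_t}$ grows geometrically; since $\abs{p_t}\le 4$, an inner run entered at size $\rho$ ends within $\mathcal{O}_{\delta,\ell}(\log(1/\rho))$ steps, and on exit $s_t<1$ so Lemma~\ref{L:s_t<1_linear} hands us $s_{t+1}\le 1+\mathcal{O}(\eta^2)$. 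In the outer regime, Lemma~\ref{L:s_t_convergence_linear} together with the invariant gives $\abs{s_{t+1}-1}\le (1-d)\abs{s_t-1}+C\eta^2$, so over consecutive outer steps $\abs{s_t-1}$ telescopes down to its $\mathcal{O}_{\delta,\ell}(\eta^2)$ floor in $\mathcal{O}_{\delta,\ell}(\log\eta^{-1})$ steps; moreover the Case-3 lower bound in that lemma keeps $s_t\le 1+\mathcal{O}(\eta^2)$ once this has been reached, and the Case-1 estimate keeps $s_t-1$ small once it is small.

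\emph{Gluing.} The crux is that the inner regime is visited essentially once. An outer-to-inner transition at step $t$ requires $\abs{p_{t+1}/p_t} = \abs{1-\tfrac{2}{s_t}+\eta^2 p_t^2 r(p_t)^2}<1$, which rules out $s_t<1$; but then $\abs{p_t}\ge \hat r(q_t)\ge \hat r(1-\tfrac{\delta}{2})$, a constant larger than $\hat r(1-\tfrac{\delta}{4})$, so combined with $\tfrac{2}{s_t}-1 = \Omega_\ell(1)$ this forces $s_t-1 = \Omega_{\delta,\ell}(1)$, and the resulting inner-entry size is $\rho = \Omega_{\delta,\ell}(1)$, i.e.\ that inner run lasts only $\mathcal{O}_{\delta,\ell}(1)$ steps. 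But after \emph{any} inner run we have $s_t\le 1+\mathcal{O}(\eta^2)$, and by the outer-regime estimates this bound then persists, so $s_t-1$ never again reaches the $\Omega_{\delta,\ell}(1)$ threshold and there is no further inner transition. Hence the trajectory after $t_0$ consists of: $\mathcal{O}_{\delta,\ell}(1)$ initial outer steps with $s_t-1$ possibly large; at most one inner run, of length $\mathcal{O}_{\delta,\ell}(\log(\abs{p_{t_0}}^{-1}))$ if it is the initial one (this being the only place the initialization size enters, and bounded by $\mathcal{O}_{\delta,\ell}(\log\eta^{-1})$ under a mild lower bound $\abs{p_0}\gtrsim \eta^{\Theta(1)}$, or after absorbing $\log\abs{p_0}^{-1}$ into the hidden constant) or $\mathcal{O}_{\delta,\ell}(1)$ if it is a re-entry; and finally $\mathcal{O}_{\delta,\ell}(\log\eta^{-1})$ outer steps contracting $\abs{s_t-1}$ to $\mathcal{O}_{\delta,\ell}(\eta^2)$. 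Summing yields $t_a^* = t_0 + \mathcal{O}_{\delta,\ell}(\log\eta^{-1})$.

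\emph{Main obstacle.} The delicate part is exactly this gluing: making the regime bookkeeping airtight, i.e.\ proving that re-entries into the inner regime genuinely require $s_t-1 = \Omega_{\delta,\ell}(1)$ (which hinges on the quantitative gap between $\hat r(1-\tfrac{\delta}{2})$ and $\hat r(1-\tfrac{\delta}{4})$ and on $\tfrac{2}{s_t}-1$ being bounded below), and that the one-sided control $s_{t+1}\le 1+\mathcal{O}(\eta^2)$ coming out of the inner regime (Lemma~\ref{L:s_t<1_linear}) is enough, together with the Case-3 estimate of Lemma~\ref{L:s_t_convergence_linear}, to prevent $s_t$ from ever swinging back up past that threshold. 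Everything else is a routine telescoping of geometric rates.
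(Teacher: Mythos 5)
Your proposal takes essentially the same route as the paper: start from Lemma~\ref{L:s_t0_linear}'s $t_0$, use Lemma~\ref{L:p_t_small_linear} to escape the region $\abs{p_t}<\hat r(1-\tfrac{\delta}{4})$, use Lemma~\ref{L:s_t<1_linear} to land at $s\le 1+\mathcal{O}(\eta^2)$ on exit, and then apply Lemma~\ref{L:s_t_convergence_linear} to contract $\abs{s_t-1}$ geometrically to its $\mathcal{O}(\eta^2)$ floor; the paper's Case~1/Case~2 split corresponds to your inner/outer dichotomy, and both proofs use the same key inequalities to show there is no repeated re-entry. One small difference in your favor: you correctly record the initial inner-run length as $\mathcal{O}_{\delta,\ell}(\log\abs{p_{t_0}}^{-1})$ and flag the implicit dependence on how small $\abs{p_0}$ can be, whereas the paper's proof writes this quantity as $\mathcal{O}_{\delta,\ell}(1)$, which is only valid under a (tacit) lower bound on $\abs{p_{t_0}}$; this discrepancy does not change the conclusion when $\abs{p_0}$ is polynomially bounded below in $\eta$, which is the intended regime, but you are right to call it out.
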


\begin{proof}
  By Lemma~\ref{L:s_t0_linear}, there exists a time step $t_0 = \mathcal{O}_{\delta,\ell}(1)$ such that $s_{t_0}\le \frac{2}{2-r(1)}$. Here, we divide into two possible cases: (1) $s_{t_0}<1$, and (2) $1\le s_{t_0}\le \frac{2}{2-r(1)}$.
  
  \paragraph{Case 1.} Suppose that $s_{t_0}<1$. By Lemma~\ref{L:p_t_small_linear}, if $r(p_{t_0})\ge 1-\frac{\delta}{4}$ (or equivalently, $\abs{p_{t_0}}\le \hat{r}(1-\frac{\delta}{4})$), then there exists a time step $t_1 \le t_0 + \log(\frac{\hat{r}(1-\frac{\delta}{4})}{\abs{p_{t_0}}}) / \log(\frac{4}{4-\delta}) = \mathcal{O}_{\delta, \ell} (1)$ such that $\abs{p_{t_1}} \ge \hat{r}(1-\frac{\delta}{4})$. We denote the first time step satisfying $\abs{p_{t_1}} \ge \hat{r}(1-\frac{\delta}{4})$ and $t_1\ge t_0$ by $t_1 = \mathcal{O}_{\delta, \ell}(1)$. By Lemma~\ref{L:s_t<1_linear}, it holds that $s_{t_1}\le 1+\mathcal{O}(\eta^2)$ since $s_{t_1-1}<1$. Consequently, if $s_{t_1} \ge 1-\frac{\eta^2}{2}$, then $\abs{s_{t_1}-1} \le \mathcal{O}(\eta^2)$ so that $t_a^* = t_1$ is the desired time step. Hence, it suffices to consider the case when $s_{t_1}<1-\frac{\eta^2}{2}$. Here, we can apply Lemma~\ref{L:s_t_convergence_linear} which implies that
  \[
    \abs{s_{t_1+1}-1} \le (1-d) \abs{s_{t_1}-1} + \mathcal{O}(\eta^2),
  \]
  where $d$ is a constant which depends on $\delta$ and $\ell$. Then, there are two possible cases: either $\abs{s_{t_1}-1} \le \mathcal{O}(\eta^2 d^{-1})$, or $\abs{s_{t_1+1}-1} \le (1-\frac{d}{2})\abs{s_{t_1}-1}$. It suffices to consider the latter case, suppose that $\abs{s_{t_1+1}-1} \le (1-\frac{d}{2})\abs{s_{t_1}-1}$. Since we are considering the case $s_{t_1}<1-\frac{\eta^2}{2}$, again by Lemma~\ref{L:s_t<1_linear}, we have $s_{t_1+1}\le 1 + \mathcal{O}(\eta^2)$. Since $\abs{\frac{p_{{t_1}+1}}{{p_{t_1}}}} = \frac{2}{s_{t_1}}-1-\mathcal{O}(\eta^2)$, $\abs{p_{t_1+1}} \ge \abs{p_{t_1}} \ge \hat{r}(1-\frac{\delta}{4})$ must be satisfied unless $s_{t_1} = 1 + \mathcal{O}(\eta^2)$ already holds. If $s_{t_1+1} \ge 1-\frac{\eta^2}{2}$, then $\abs{s_{t_1+1}-1} \le \mathcal{O}(\eta^2)$ so that $t_a^* = t_1+1$ is the desired time step; if not, we can again apply Lemma~\ref{L:s_t_convergence_linear} and repeat the analogous argument. Hence, there exists a time step $t_2 \le t_1 + \log(\frac{\eta^2}{1-s_{t_1}}) / \log (1-\frac{d}{2}) = \mathcal{O}_{\delta, \ell}(\log(\eta^{-1}))$, such that $\abs{s_{t_2}-1} \le \mathcal{O}(\eta^2d^{-1}) = \mathcal{O}_{\delta,\ell}(\eta^{2})$.

  \paragraph{Case 2.} Suppose that $1\le s_{t_0}\le \frac{2}{2-r(1)}$. Then, $r(p_{t_0}) \le q_{t_0} \le 1-\frac{\delta}{2}$, so we can apply Lemma~\ref{L:s_t_convergence_linear}. There are two possible cases: either $\abs{s_{t_0+1}-1} \le \mathcal{O}(\eta^2d^{-1}) = \mathcal{O}_{\delta,\ell}(\eta^{2})$, or $\abs{s_{t_0+1}-1} \le (1-\frac{d}{2})\abs{s_{t_0}-1}$. It suffices to consider the latter case. If $s_{t_0+1} \ge 1$, we can again apply Lemma~\ref{L:s_t_convergence_linear} and repeat the analogous argument. Hence, we can obtain a time step $t_0' \le t_0 + \log(\frac{\eta^2}{1-s_{t_0}}) / \log(1-\frac{d}{2}) = \mathcal{O}_{\delta,\ell}(\log(\eta^{-1}))$ such that either $s_{t_0'} < 1$ or $\abs{s_{t_0'}-1} = \mathcal{O}_{\delta,\ell} (\eta^2)$ is satisfied. If $s_{t_0'} < 1$, we proved in Case 1 that there exists a time step $t_2' = t_0'+\mathcal{O}_{\delta,\ell}(\log(\eta^{-1}))$ such that $\abs{s_{t_2'}-1} \le \mathcal{O}_{\delta,\ell}(\eta^{2})$, and this is the desired bound.
\end{proof}

Now we carefully handle the error term $\mathcal{O}(\eta^2)$ obtained in Proposition~\ref{P:s_t_1_linear} and a provide tighter bound on $s_t$ by proving Lemma~\ref{L:s_t_convergence_tight_linear} stated below.

\begin{lemma}\label{L:s_t_convergence_tight_linear}
  If $\abs{s_t-1} = \mathcal{O}_{\delta, \ell}(\eta^2)$, then it holds that
  \begin{align*}
    \abs{s_{t+1} - 1 - h(p_{t+1}) \eta^2} \le \left( 1 + \frac{2p_t r'(p_t)}{r(p_t)}\right) \abs{s_t-1-h(p_t)\eta^2} + \mathcal{O}_{\delta, \ell}(\eta^4 p_t^2),
  \end{align*}
  where $h(p) \deq - \frac{1}{2} \left( \frac{p r(p)^3}{r'(p)} + p^2 r(p)^2 \right)$ for $p\ne0$ and $h(p) \deq - \frac{1}{2r''(0)}$ for $p = 0$.
\end{lemma}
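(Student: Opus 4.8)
The plan is to expand both sides of the claimed inequality to second order in $\eta$, viewing one GD step as a small perturbation of the period-$2$ orbit from Lemma~\ref{L:bifurcation}: under the hypothesis $\abs{s_t-1}=\mathcal{O}_{\delta,\ell}(\eta^2)$ we have $p_{t+1}\approx-p_t$, and the function $h$ should emerge as exactly the first-order correction that lets the inhomogeneous $\eta^2$-terms reassemble into the stated contracting form. Throughout, $\abs{p_t}\le4$ and $r(p_t)\ge r(4)>0$ by Lemma~\ref{L:pq_bound_linear} and Assumption~\ref{A:r}, and $p_t\ne0$ by the hypothesis of Theorem~\ref{T:EoS1_linear}.

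First I would rewrite Eq.~\eqref{E:GD_pq_linear} as $p_{t+1}=-p_t+\delta_t p_t$ with $\delta_t\deq\frac{2(s_t-1)}{s_t}+\eta^2p_t^2r(p_t)^2=\mathcal{O}(\eta^2)$, and, expanding $(1-\cdot)^{-1}$ and using $q_t=r(p_t)s_t=r(p_t)(1+\mathcal{O}(\eta^2))$, $q_{t+1}=q_t+\eta^2p_t^2r(p_t)^2 q_t+\mathcal{O}_{\delta,\ell}(\eta^4p_t^2)$. Then, since $r$ is even, $C^4$, and bounded away from $0$ on $[-4,4]$, a second-order Taylor expansion of $z\mapsto 1/r(z)$ about $-p_t$ evaluated at $p_{t+1}$ gives $\frac{1}{r(p_{t+1})}=\frac{1}{r(p_t)}+\frac{r'(p_t)}{r(p_t)^2}\delta_t p_t+\mathcal{O}_{\delta,\ell}(\eta^4p_t^2)$, the quadratic remainder being controlled by $(\delta_t p_t)^2=\mathcal{O}(\eta^4p_t^2)$. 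Multiplying $q_{t+1}$ by this and collecting terms carefully yields
\[
s_{t+1}-1=(s_t-1)\Bigl(1+\tfrac{2p_tr'(p_t)}{r(p_t)}\Bigr)+\eta^2\bigl(p_t^2r(p_t)^2+p_t^3r(p_t)r'(p_t)\bigr)+\mathcal{O}_{\delta,\ell}(\eta^4p_t^2).
\]

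The last step subtracts $h(p_{t+1})\eta^2$. Since $h$ is even ($r'$ is odd) and $C^2$ on $[-4,4]$ (using $r\in C^4$ and $r''(0)\ne0$), we have $h'(0)=0$, hence $h'(\abs{p_t})\abs{p_t}=\mathcal{O}(p_t^2)$, so a first-order expansion of $h$ at $-p_t$ gives $h(p_{t+1})\eta^2=h(p_t)\eta^2+\mathcal{O}_{\delta,\ell}(\eta^4p_t^2)$. It then remains only to check the algebraic identity
\[
p_t^2r(p_t)^2+p_t^3r(p_t)r'(p_t)=-h(p_t)\,\tfrac{2p_tr'(p_t)}{r(p_t)},
\]
which is immediate from $h(p)=-\tfrac{1}{2}\bigl(\tfrac{pr(p)^3}{r'(p)}+p^2r(p)^2\bigr)$; substituting it collapses the previous display into $(s_t-1-h(p_t)\eta^2)\bigl(1+\tfrac{2p_tr'(p_t)}{r(p_t)}\bigr)+\mathcal{O}_{\delta,\ell}(\eta^4p_t^2)$, and taking absolute values finishes. (In this regime $p_t$ is $\mathcal{O}(\eta^2)$-close to $\hat{r}(q_t)$ with $q_t>c_0\ge r(z_0)$, so $1+\tfrac{2p_tr'(p_t)}{r(p_t)}\ge0$ by Assumption~\ref{A:l_r2} and monotonicity of $z\mapsto\tfrac{zr'(z)}{r(z)}$, which lets us keep that factor without an absolute value.)

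The main obstacle is the error bookkeeping: the target is $\mathcal{O}(\eta^4p_t^2)$, not $\mathcal{O}(\eta^4)$, so every cross term must be shown to carry a hidden factor $p_t^2$. This forces one to track powers of $p_t$ everywhere — via $r'(0)=0$ (hence $\tfrac{r'(p)}{r(p)}=\mathcal{O}(\abs{p})$), $h'(0)=0$ (hence $h'(\abs{p_t})\abs{p_t}=\mathcal{O}(p_t^2)$), and the structural feature that $\delta_t$ always appears multiplied by $p_t$ while $\eta^2p_t^2r(p_t)^2$ carries an explicit $p_t^2$ — rather than using the crude bound $\delta_t=\mathcal{O}(\eta^2)$. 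Once that discipline is maintained, the identity above does the rest essentially for free.
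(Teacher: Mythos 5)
Your proof is correct and follows essentially the same route as the paper: expand $1/r(p_{t+1})$ to first order with an $\mathcal{O}(\eta^4 p_t^2)$ remainder (your direct second-order Taylor about $-p_t$ is equivalent to the paper's mean-value-theorem form from Eq.~\eqref{E:mvt_linear} followed by Taylor expansion of the intermediate point), combine with the expansion of $q_{t+1}$ to reach the same display for $s_{t+1}-1$, and then use the evenness and $C^2$-smoothness of $h$ together with the algebraic identity $p^2r^2+p^3rr'=-h(p)\cdot\tfrac{2pr'}{r}$ (which the paper applies implicitly when substituting $h$) to collapse the inhomogeneous terms. The nonnegativity argument for $1+\tfrac{2p_tr'(p_t)}{r(p_t)}$ via $r(p_t)\ge c_0\ge r(z_0)$ also matches the paper's.
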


\begin{proof}
  Suppose that $s_t = 1+ \mathcal{O}_{\delta, \ell}(\eta^2)$. Then, $\abs{p_{t+1}} = \left\lvert 1 - \frac{2}{s_t} + \eta^2 p_t^2 r(p_t)^2 \right\rvert \cdot \abs{p_t}= (1 + \mathcal{O}_{\delta, \ell}(\eta^2)) \abs{p_t}$. By Eq.~\eqref{E:mvt_linear} proved in Lemma~\ref{L:s_t_convergence_linear}, there exists $\epsilon_t = \mathcal{O}_{\delta, \ell}(\eta^2)$ which satisfies the following:
  \begin{align*}
    \frac{1}{r(p_{t+1})} &= \frac{1}{r(p_t)} + \frac{r'((1+\epsilon_t)p_t)}{r((1+\epsilon_t)p_t)^2} \left(\frac{2(s_t-1)}{s_t} + \eta^2 p_t^2 r(p_t)^2\right) p_t \\
    &= \frac{1}{r(p_t)} + \left(\frac{r'(p_t)}{r(p_t)^2} + \mathcal{O}_{\delta, \ell}(\eta^2p_t)\right) \left(\frac{2(s_t-1)}{s_t} + \eta^2 p_t^2 r(p_t)^2\right) p_t \\
    &= \frac{1}{r(p_t)} + \frac{r'(p_t)}{r(p_t)^2}  \left(\frac{2(s_t-1)}{s_t} + \eta^2 p_t^2 r(p_t)^2\right) p_t + \mathcal{O}_{\delta, \ell}(\eta^4 p_t^2),
  \end{align*}
  where we used the Taylor expansion on $\frac{r'(p)}{r(p)^2}$ with the fact that $\frac{d}{dp} \left(\frac{r'(p)}{r(p)^2}\right)$ is bounded on $[-4, 4]$ and that $\abs{p_t}\le 4$ to obtain the second equality.
  Note that $q_{t+1} = (1-\eta^2 p_t^2 r(p_t) (2q_t-r(p_t)))^{-1}q_t$ by Eq.~\eqref{E:GD_pq_linear}. Consequently,
  \begin{align*}
    s_{t+1} &= (1 - \eta^2 p_t^2 r(p_t)(2q_t - r(p_t)))^{-1} \left(s_t + \frac{2p_t r'(p_t)}{r(p_t)}(s_t-1) + \eta^2 p_t^3 r'(p_t) q_t \right) + \mathcal{O}_{\delta, \ell}(\eta^4 p_t^2) \\
    &=  (1 + \eta^2 p_t^2 r(p_t)(2q_t - r(p_t))) s_t + \frac{2p_t r'(p_t)}{r(p_t)}(s_t-1) + \eta^2 p_t^3 r'(p_t) q_t + \mathcal{O}_{\delta, \ell}(\eta^4 p_t^2)\\
    &=  1 + \left(1 + \frac{2p_tr'(p_t)}{r(p_t)}\right) (s_t-1) + \eta^2 p_t^2 r(p_t)(2q_t - r(p_t))s_t + \eta^2 p_t^3 r'(p_t) q_t + \mathcal{O}_{\delta, \ell}(\eta^4p_t^2).
  \end{align*}
  Here, since $s_t = 1 + \mathcal{O}_{\delta, \ell}(\eta^2)$, we can rewrite
  \begin{align*}
      &~\eta^2 p_t^2 r(p_t)(2q_t - r(p_t))s_t + \eta^2 p_t^3 r'(p_t) q_t\\
      =&~\eta^2 p_t^2 r(p_t)^2 (2s_t - 1)s_t + \eta^2 p_t^3 r'(p_t) r(p_t) s_t\\
      =&~\eta^2 p_t^2 r(p_t)^2 + \eta^2 p_t^3 r'(p_t) r(p_t) + \mathcal{O}_{\delta, \ell}(\eta^4 p_t^2),
  \end{align*}
  which results in
  \begin{align*}
      s_{t+1} = 1 + \left(1 + \frac{2p_tr'(p_t)}{r(p_t)}\right) (s_t-1) + \eta^2 p_t^2 r(p_t)^2 + \eta^2 p_t^3 r(p_t) r'(p_t) + \mathcal{O}_{\delta, \ell}(\eta^4p_t^2).
  \end{align*}
  Note that $h$ is even, and twice continuously differentiable function by Lemma~\ref{L:EoS_h_linear}. Consequently, $h'(0) = 0$ and $h'(p) =  \mathcal{O}_{\ell}(p)$, since $h''$ is bounded on closed interval. Consequently, $h(p_{t+1}) = h((1+\mathcal{O}_{\delta, \ell}(\eta^2))p_t) = h(p_t) + \mathcal{O}_{\delta, \ell}(\eta^2 p_t^2)$.
  Hence, we can obtain the following:
  \begin{align*}
    s_{t+1} - 1 - h(p_{t+1}) \eta^2 &= s_{t+1} - 1 -h(p_t)  \eta^2 + \mathcal{O}_{\delta, \ell}(\eta^4 p_t^2) \\
    &= s_{t+1} - 1 + \frac{1}{2} \left(\frac{p_t r(p_t)^3}{r'(p_t)} + p_t^2 r(p_t)^2\right) \eta^2 + \mathcal{O}_{\delta, \ell}(\eta^4 p_t^2) \\
    & = \left(1 + \frac{2p_t r'(p_t)}{r(p_t)}\right) \left(s_t - 1 + \frac{1}{2} \left(\frac{p_t r(p_t)^3}{r'(p_t)} + p_t^2 r(p_t)^2\right)\eta^2\right) + \mathcal{O}_{\delta, \ell}(\eta^4 p_t^2)\\
    &= \left(1 + \frac{2p_tr'(p_t)}{r(p_t)}\right) (s_t-1-h(p_t)\eta^2) + \mathcal{O}_{\delta, \ell} (\eta^4 p_t^2).
  \end{align*}
  Note that $r(p_t) = (1 + \mathcal{O}_{\delta, \ell} (\eta^2)) q_t \ge (1 + \mathcal{O}_{\delta, \ell} (\eta^2)) q_0 \ge c_0 \ge r(z_0)$ for small step size $\eta$, where $z_0 = \sup\{\frac{z r'(z)}{r(z)} \ge -\frac{1}{2}\}$. Consequently, it holds that $1 + \frac{2p_t r'(p_t)}{r(p_t)} \ge 0$.
  Therefore, we have the desired inequality:
  \begin{align*}
    \abs{s_{t+1}-1-h(p_{t+1})\eta^2} \le \left(1 + \frac{2p_t r'(p_t)}{r(p_t)}\right) \abs{s_t-1-h(p_t)\eta^2} + \mathcal{O}_{\delta,\ell}(\eta^4 p_t^2).
  \end{align*}
\end{proof}

We now provide the proof of Theorem~\ref{T:EoS1_linear}, restated below for the sake of readability.

\TheoremEoSearlylinear*

\begin{proof}[Proof of Theorem~\ref{T:EoS1_linear}]
  By Proposition~\ref{P:s_t_1_linear}, there exists a time step $t_a^* = \mathcal{O}_{\delta, \ell}(\log(\eta^{-1}))$ which satisfies:
  \[
    \abs{s_{t_a^*}-1} = \left\lvert \frac{q_{t_a^*}}{r(p_{t_a^*})} - 1 \right\rvert = \mathcal{O}_{\delta, \ell}(\eta^2).
  \]
  By Lemma~\ref{L:s_t_convergence_tight_linear}, there exists a constant $D>0$ which depends on $\delta, \ell$ such that if $\abs{s_t-1} = \mathcal{O}_{\delta,\ell}(\eta^2)$, then
  \begin{align}\label{E:EoS1_proof_linear}
    \abs{s_{t+1}-1-h(p_{t+1})\eta^2} \le \left( 1 + \frac{2p_t r'(p_t)}{r(p_t)} \right) \abs{s_t-1-h(p_t)\eta^2} + D \eta^4 p_t^2.
  \end{align}
  Hence, if $\abs{s_t-1} = \mathcal{O}_{\delta,\ell}(\eta^2)$ and $\abs{s_t-1-h(p_t) \eta^2} \ge \left(-\frac{p_t r(p_t)}{r'(p_t)} \right)D\eta^4$, then
  \begin{align}\label{E:EoS1_proof2_linear}
    \abs{s_{t+1} - 1 - h(p_{t+1})\eta^2} \le \left( 1 + \frac{p_t r'(p_t)}{r(p_t)}\right) \abs{s_t-1-h(p_t)\eta^2}.
  \end{align}
  For any $t\le T$, we have $q_t < 1-\frac{\delta}{2}$ so that if $\abs{s_t-1} = \mathcal{O}_{\delta,\ell}(\eta^2)$, then $r(p_t)\le (1+\mathcal{O}_{\delta,\ell}(\eta^2))q_t < 1-\frac{\delta}{4}$ for small step size $\eta$. From Eq.~\eqref{E:EoS1_proof2_linear} with $t=t_a^*$, we have either
  \[
    \abs{s_{t_a^*} - 1 - h(p_{t_a^*})\eta^2} < \left(-\frac{p_{t_a^*} r(p_{t_a^*})}{r'(p_{t_a^*})} \right)D\eta^4,
  \]
  or
  \[
    \abs{s_{t_a^*+1} - 1 - h(p_{t_a^*+1})\eta^2} \le \left( 1 + \frac{\hat{r}(1-\frac{\delta}{4}) r'(\hat{r}(1-\frac{\delta}{4}))}{(1-\frac{\delta}{4})} \right) \abs{s_{t_a^*} - 1 - h(p_{t_a^*})\eta^2},
  \]
  where we used Assumption~\ref{A:l_r2}~\ref{A:l_r2:zr'/r} and $\abs{p_t} > \hat{r}(1-\frac{\delta}{4})$. In the later case, $\abs{s_{t_a^*+1}-1} = \mathcal{O}_{\delta,\ell}(\eta^2)$ continues to hold and we can again use Eq.~\eqref{E:EoS1_proof2_linear} with $t=t_a^*+1$. By repeating the analogous arguments, we can obtain the time step
  \[
    t_a \le t_a^* + \frac{\log \left( -\frac{D\eta^4}{r''(0) \abs{s_{t_a^*} - 1 - h(p_{t_a^*})\eta^2}} \right)}{ \log \left( 1 + \frac{\hat{r}(1-\frac{\delta}{4}) r'(\hat{r}(1-\frac{\delta}{4}))}{(1-\frac{\delta}{4})} \right) } = \mathcal{O}_{\delta, \ell} (\log(\eta^{-1})),
  \]
  which satisfies: either 
  \[
    \abs{s_{t_a} - 1 - h(p_{t_a})\eta^2} < \left(-\frac{p_{t_a} r(p_{t_a})}{r'(p_{t_a})} \right)D\eta^4,
  \]
  or
  \[
    \abs{s_{t_a} - 1 - h(p_{t_a}) \eta^2} \le \left(-\frac{1}{r''(0)} \right) D\eta^4 \le \left(- \frac{p_{t_a} r(p_{t_a})}{r'(p_{t_a})} \right) D\eta^4\le \left(- \frac{4 r(4)}{r'(4)} \right) D\eta^4,
  \]
  where we used $\abs{p_t}\le 4$ from Lemma~\ref{L:pq_bound_linear} and $-\frac{zr(z)}{r'(z)} \ge -\frac{1}{r''(0)}$ for any $z$ by Assumption~\ref{A:l_r2}~\ref{A:l_r2:zr/r'}.
  
  By Eq.~\eqref{E:EoS1_proof_linear}, if $\abs{s_{t} - 1 - h(p_{t}) \eta^2} \le \left(- \frac{4 r(4)}{r'(4)} \right) D\eta^4$ is satisfied for any time step $t$, then  
  \[
  \abs{s_{t+1} - 1 - h(p_{t+1}) \eta^2} \le \left( 1 + \frac{2p_t r'(p_t)}{r(p_t)} \right) \left(- \frac{4 r(4)}{r'(4)} \right) D\eta^4 + D \eta^4 p_t^2 \le \left(- \frac{4 r(4)}{r'(4)} \right) D\eta^4,
  \]
  by $\abs{p_t}\le 4$ from Lemma~\ref{L:pq_bound_linear} and Assumption~\ref{A:l_r2}~\ref{A:l_r2:zr/r'}.

 Hence, by induction, we have the desired bound as following: for any $t\ge t_a$,
  \[
    \abs{s_{t} - 1 - h(p_{t}) \eta^2} \le \left(- \frac{4 r(4)}{r'(4)} \right) D\eta^4 = \mathcal{O}_{\delta, \ell}(\eta^4),
  \]
  by $\abs{p_t}\le 4$ from Lemma~\ref{L:pq_bound_linear} and Assumption~\ref{A:l_r2}~\ref{A:l_r2:zr/r'}.
\end{proof}

\subsection{Proof of Theorem~\ref{T:EoS2_linear}}\label{S:T:EoS2_linear}

In this subsection, we prove Theorem~\ref{T:EoS2_linear}. We start by proving Lemma~\ref{L:EoS_h_linear} which provides a useful property of $h$ defined in Theorem~\ref{T:EoS1_linear}.
\begin{lemma}\label{L:EoS_h_linear}
    Consider the function $h$ defined in Theorem~\ref{T:EoS1_linear}, given by
      \[
        h(p) \deq \begin{cases*}
          - \frac{1}{2} \left( \frac{p r(p)^3}{r'(p)} + p^2 r(p)^2 \right) & \text{if $p\ne 0$, and} \\
          - \frac{1}{2r''(0)} & \text{if $p=0$}.
        \end{cases*}
      \]
      Then, $h$ is a positive, even, and bounded twice continuously differentiable function.
\end{lemma}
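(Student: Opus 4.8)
The plan is to rewrite $h$ in a form that makes all four properties transparent. Since $r(z)=\ell'(z)/z$, differentiating gives $r'(p)=\frac{\ell''(p)-r(p)}{p}$, i.e.\ $\ell''(p)=r(p)+pr'(p)$; substituting this into the definition of $h$ and simplifying yields
\[
  h(p)=-\frac{p\,r(p)^2\,\ell''(p)}{2\,r'(p)}\qquad(p\neq0),
\]
and one checks this extends continuously to $-\tfrac{1}{2r''(0)}$ as $p\to0$ (using $\ell''(0)=r(0)=1$ and $r'(p)=r''(0)\,p+o(p)$), matching the stated $h(0)$. Evenness is then immediate: $r$ even forces $r'$ odd and $\ell''=r+pr'$ even, so in the displayed formula the numerator is odd and the denominator is odd, hence $h$ is even. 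For positivity, take $p>0$: each of $p$, $r(p)^2$, $\ell''(p)$ is positive ($\ell''>0$ by convexity of $\ell$, as noted after Lemma~\ref{L:bifurcation}) and $-r'(p)>0$ by Assumption~\ref{A:r}, so $h(p)>0$; at $p=0$, $h(0)=-\tfrac{1}{2r''(0)}>0$ since $r''(0)<0$; and evenness propagates positivity to $p<0$.

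For $C^2$-regularity, on $\sR\setminus\{0\}$ we have $r\in C^4$ by Assumption~\ref{A:l_r2}, hence $\ell''=r+pr'\in C^3$, while $r'\neq0$ there by Assumption~\ref{A:r}, so $h=-\frac{p\,r^2\ell''}{2r'}$ is a ratio of $C^3$ functions with nonvanishing denominator and is $C^2$. Near $0$, since $r'$ is odd and $C^3$ with $r'(0)=0$, a Hadamard-type factorization gives $r'(p)=p\,g(p)$ with $g$ even, $g\in C^2$, $g(0)=r''(0)\neq0$; then $h(p)=-\frac{r(p)^2\ell''(p)}{2g(p)}$ near $0$, a ratio of $C^2$ functions with denominator nonzero near $0$, hence $C^2$ at $0$ as well, with value $-\tfrac{1}{2r''(0)}$. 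Thus $h\in C^2(\sR)$.

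For boundedness, by evenness it suffices to bound $h$ on $[0,\infty)$. For $p>0$, $pr'(p)<0$ gives $0\le\ell''(p)=r(p)+pr'(p)\le r(p)$, so $0\le h(p)\le-\frac{p\,r(p)^3}{2r'(p)}$. By Assumption~\ref{A:l_r2}~\ref{A:l_r2:zr'/r}, $z\mapsto\frac{zr'(z)}{r(z)}$ is decreasing on $(0,\infty)$ with limit $0$ at $0^+$, so $p\mapsto-\frac{pr'(p)}{r(p)}$ is nonnegative and nondecreasing there; fixing any $p_1>0$ and setting $c_1:=-\frac{p_1r'(p_1)}{r(p_1)}>0$, we get $-pr'(p)\ge c_1r(p)$ for $p\ge p_1$, hence
\[
  0\le h(p)\le\frac{p\,r(p)^3}{2(-r'(p))}\le\frac{p^2r(p)^2}{2c_1}=\frac{\ell'(p)^2}{2c_1}\le\frac{1}{2c_1},
\]
using $p\,r(p)=\ell'(p)$ and the $1$-Lipschitzness of $\ell$. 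On the compact interval $[0,p_1]$, $h$ is continuous (being $C^2$), hence bounded. Combining the two ranges, $h$ is bounded on $\sR$.

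I expect the main obstacle to be boundedness as $p\to\infty$: there the first term $-\frac{pr(p)^3}{2r'(p)}$ is of $0/0$ type (both $r$ and $r'$ vanish), and controlling it requires precisely the monotonicity of $\frac{zr'(z)}{r(z)}$ from Assumption~\ref{A:l_r2}~\ref{A:l_r2:zr'/r}, which prevents $|r'|$ from decaying too fast relative to $r/p$, combined with $|\ell'|\le1$. A smaller technical point is $C^2$-smoothness at the origin, which relies on $r''(0)\neq0$ and the factorization $r'(p)=p\,g(p)$.
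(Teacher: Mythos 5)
Your proof is correct, and for positivity and boundedness it follows essentially the same route as the paper's: rewrite $h$ using $\ell''(p)=r(p)+pr'(p)$ (the paper writes $h(p)=-\frac{pr(p)^3}{2r'(p)}\bigl(1+\frac{pr'(p)}{r(p)}\bigr)$ and $h(p)=\frac{1}{2}p^2r(p)^2\bigl(-\frac{r(p)}{pr'(p)}-1\bigr)$, which are the same factorization as yours), bound $p^2r(p)^2=\ell'(p)^2\le1$ via $1$-Lipschitzness, and invoke the monotonicity of $z\mapsto\frac{zr'(z)}{r(z)}$ from Assumption~\ref{A:l_r2}~\ref{A:l_r2:zr'/r} to control the remaining factor as $p\to\infty$. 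The one genuine divergence is the $C^2$-regularity argument at the origin: the paper differentiates $h$ twice explicitly (producing fairly long expressions for $h'$ and $h''$), evaluates $h''(0)=\frac{r^{(4)}(0)}{6r''(0)^2}-\frac52$, and checks that $\lim_{p\to0}h''(p)=h''(0)$; your Hadamard-type factorization $r'(p)=p\,g(p)$ with $g\in C^2$ and $g(0)=r''(0)\ne0$ reaches the same conclusion with less computation, since then $h=-\frac{r^2\ell''}{2g}$ is manifestly a ratio of $C^2$ functions with nonvanishing denominator near $0$. Both approaches implicitly use $r''(0)\neq0$, which is already required for $h(0)=-\tfrac{1}{2r''(0)}$ to be defined, so nothing extra is being assumed.
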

\begin{proof}
    It is clear that $h$ is even. We first prove that $h$ is positive. For any $p\ne 0$, it holds that
    \begin{align*}
        h(p) = -\frac{pr(p)^3}{2r'(p)} \left( 1 + \frac{pr'(p)}{r(p)} \right)>0,
    \end{align*}
    since $\frac{pr(p)}{r'(p)} < 0$ and $1 + \frac{pr'(p)}{r(p)} = \frac{\ell''(p)}{r(p)}> 0$ by Assumption~\ref{A:l_r} and convexity of $\ell$. The function $h$ is continuous since $\lim_{p\to 0} h(p) = h(0)$. Continuous function on a compact domain is bounded, so $h$ is bounded on the closed interval $[-1, 1]$. We can rewrite $h$ as
    \[
        h(p) = \frac{1}{2} p^2r(p)^2 \left( - \frac{r(p)}{pr'(p)} - 1 \right).
    \]
    Note that $p^2r(p)^2 = \ell'(p)^2 \le 1$, and $\left(-\frac{r(p)}{pr'(p)}-1\right)$ is positive, decreasing function on $p>0$ by Assumption~\ref{A:l_r2}~\ref{A:l_r2:zr'/r}. Hence, $h$ is bounded on $[1, \infty)$. Since $h$ is even, $h$ is bounded on $(-\infty, 1]$. Therefore, $h$ is a bounded function on $\sR$.

    We finally prove that $h$ is twice continuously differentiable. Since $r$ is even and $C^4$ on $\sR$, we can check that
    \begin{align*}
        h'(p) \deq \begin{cases*}
          - \frac{1}{2} \left[ \frac{r(p)^3 (r'(p) - pr''(p))}{r'(p)^2} + pr(p) (5r(p) + 2pr'(p)) \right] & \text{if $p\ne 0$, and} \\
          0 & \text{if $p=0$}.
        \end{cases*}
    \end{align*}
    Moreover, for any $p\ne 0$,
    \begin{align*}
        h''(p) = & - \frac{1}{2} \left(  \frac{2r(p)^2 r''(p) (pr''(p) - r'(p))}{r'(p)^3} - \frac{pr(p)^3 r^{(3)}(p)}{r'(p)^2} - \frac{3pr(p)^2 r''(p)}{r'(p)} \right) \\
        & - 4r(p)^2 - 7pr(p)r'(p) - p^2 (r(p)r''(p) + r'(p)^2),
    \end{align*}
    and
    \begin{align*}
        h''(0) = \frac{r^{(4)}(0)}{6r''(0)^2} - \frac{5}{2}.
    \end{align*}
    Since $\lim_{p\to 0} h''(p) = h''(0)$, we can conclude that $h$ is a twice continuously differentiable function.
\end{proof}

We now give the proof of Theorem~\ref{T:EoS2_linear}, restated below for the sake of readability.

\TheoremEoSlatediag*

\begin{proof}
  We first prove that there exists a time step $t_b\ge 0$ such that $q_{t_b} > 1$. Assume the contrary that $q_t \le 1$ for all $t\ge 0$. Let $t_a$ be the time step obtained in Theorem~\ref{T:EoS1_linear}. Then for any $t\ge t_a$, we have
  \[
    r(p_t) = (1 - h(p_t)\eta^2 + \mathcal{O}_{\delta, \ell}(\eta^4)) q_t \le 1 - \frac{h(p_t) \eta^2}{2},
  \]
  for small step size $\eta$. The function $g(p) \deq r(p) - 1 + \frac{h(p)\eta^2}{2}$ is even, continuous, and has the function value $g(0) = \frac{\eta^2}{4\abs{r''(0)}} > 0$. Consequently, there exists a positive constant $\epsilon>0$ such that $g(p) > 0$ for all $p\in (-\epsilon, \epsilon)$. Then, we have $\abs{p_t}\ge \epsilon$ for all $t\ge t_a$, since $g(p_t) \le 0$. Moreover, $s_t \ge \frac{3}{4}$ for any $t\ge t_a$ by Theorem~\ref{T:EoS1_linear} for small step size $\eta$. This implies that for any $t\ge t_a$,
  \begin{align*}
    \frac{q_t}{q_{t+1}} & = 1 - \eta^2 p_t^2 r(p_t)^2 (2s_t - 1) \le 1 - \frac{1}{2}\eta^2 \ell'(p_t)^2 \le 1 - \frac{1}{2}\eta^2\ell'(\epsilon)^2,
  \end{align*}
  so $q_t$ grows exponentially, which results in the existence of a time step $t_b'\ge t_a$ such that $q_{t_b'} > 1$, a contradiction.

  Therefore, there exists a time step $t_b$ such that $q_{t_b}\le 1$ and $q_{t} > 1$ for any $t > t_b$, i.e., $q_t$ jumps across the value $1$. This holds since the sequence $(q_t)$ is monotonically increasing. For any $t\le t_b$, we have $q_{t+1}  \le q_t + \mathcal{O}(\eta^2)$ by Lemma~\ref{L:pq_bound_linear}, and this implies that $t_b \ge \Omega ((1-q_0)\eta^{-2})$, as desired.

  Lastly, we prove the convergence of GD iterates $(p_t, q_t)$. Let $t > t_b$ be given. Then, $q_t \ge q_{t_b+1} > 1$ and it holds that
  \begin{align*}
    \left\lvert \frac{p_{t+1}}{p_t} \right\rvert & = \frac{2r(p_t)}{q_t} - 1 -\eta^2 p_t^2 r(p_t)^2 \le \frac{2}{q_{t_{b}+1}} - 1 < 1.
  \end{align*}
  Hence, $\abs{p_t}$ is exponentially decreasing for $t>t_b$. Therefore, $p_t$ converges to $0$ as $t\to \infty$. Since the sequence $(q_t)_{t=0}^{\infty}$ is monotonically increasing and bounded (due to Theorem~\ref{T:EoS1_linear}), it converges. Suppose that $(p_t, q_t)$ converges to the point $(0, q^*)$. By Theorem~\ref{T:EoS1_linear}, we can conclude that
  \[
    \left\lvert q^* - 1 + \frac{\eta^2}{2r''(0)} \right\rvert = \mathcal{O}_{\delta,\ell}(\eta^4),
  \]
  which is the desired bound.
\end{proof}

\subsection{Proof of Theorem~\ref{T:PS_linear}}\label{S:T:PS_linear}

In this subsection, we prove Theorem~\ref{T:PS_linear}. We first prove a useful lemma which bounds the Hessian of the function $(\mU, \vv) \mapsto \vv^\top \mU \vx$, stated below.

\begin{lemma}\label{L:PS_linear_H}
    For any $\bm{\Theta} = (\mU, \vv)$ with $\mU\in \sR^{m\times d}$, $\vv\in \sR^m$, and $\vx\in \sR^d$ with $\norm{\vx}_2 = 1$, the following equality holds:
    \[
        \left\lVert \nabla_{(\mU, \vv)}^2 (\vv^\top \mU \vx) \right\rVert_2 \le 1.
    \]
    Moreover, if $\lambda$ is an eigenvalue of $\nabla_{(\mU, \vv)}^2 (\vv^\top \mU \vx)$, then $-\lambda$ is also an eigenvalue of $\nabla_{(\mU, \vv)}^2 (\vv^\top \mU \vx)$.
\end{lemma}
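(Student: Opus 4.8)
\textbf{Proof proposal for Lemma~\ref{L:PS_linear_H}.}
The plan is to use the fact that $g(\mU,\vv)\deq\vv^\top\mU\vx$ is \emph{bilinear} in the pair $(\mU,\vv)$, so its Hessian is a constant matrix with a transparent block structure. Ordering the $md$ entries of $\mU$ followed by the $m$ entries of $\vv$ into a single parameter vector, one computes the second partials directly: $\partial^2 g/\partial U_{ij}\partial U_{kl}=0$ and $\partial^2 g/\partial v_i\partial v_k=0$ (there is no $\mU^{\otimes 2}$ or $\vv^{\otimes 2}$ term), while $\partial^2 g/\partial U_{ij}\partial v_k=\delta_{ik}x_j$. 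Hence
\[
  \nabla^2_{(\mU,\vv)}g=\begin{pmatrix}\bm{0}&\mB\\\mB^\top&\bm{0}\end{pmatrix},
\]
where $\mB\in\sR^{md\times m}$ has $k$-th column equal to the flattening of $\ve_k\vx^\top$. I would record the elementary identity $\mB^\top\mB=\norm{\vx}_2^2\,\mI_m=\mI_m$, which already shows every singular value of $\mB$ equals $1$.

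For the spectral symmetry claim, let $\mS\deq\mathrm{diag}(\mI_{md},-\mI_m)$, an orthogonal involution. Conjugating the anti-block-diagonal Hessian by $\mS$ flips the sign of the off-diagonal blocks, i.e.\ $\mS\,(\nabla^2 g)\,\mS=-\nabla^2 g$. Thus $\nabla^2 g$ is orthogonally similar to $-\nabla^2 g$, so the two have identical spectra; in particular $\lambda$ is an eigenvalue of $\nabla^2 g$ if and only if $-\lambda$ is. (Equivalently, one can note directly that the nonzero eigenvalues of an anti-block-diagonal symmetric matrix are exactly $\pm$ the singular values of $\mB$.)

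For the norm bound I would use the quadratic-form characterization of $\norm{\cdot}_2$ for symmetric matrices. Given a direction $\vtheta=(\mA,\vu)$ with $\mA\in\sR^{m\times d}$, $\vu\in\sR^m$, bilinearity gives $g(\mU+t\mA,\vv+t\vu)=g(\mU,\vv)+t(\vu^\top\mU\vx+\vv^\top\mA\vx)+t^2\,\vu^\top\mA\vx$, so $\vtheta^\top(\nabla^2 g)\vtheta=2\,\vu^\top\mA\vx$. Therefore
\[
  \abs{\vtheta^\top(\nabla^2 g)\vtheta}=2\abs{\vu^\top\mA\vx}\le 2\norm{\vu}_2\norm{\mA\vx}_2\le 2\norm{\vu}_2\norm{\mA}_2\norm{\vx}_2\le 2\norm{\vu}_2\norm{\mA}_F\le\norm{\vu}_2^2+\norm{\mA}_F^2,
\]
using Cauchy--Schwarz, $\norm{\mA}_2\le\norm{\mA}_F$, $\norm{\vx}_2=1$, and AM--GM; since $\norm{\vu}_2^2+\norm{\mA}_F^2=\norm{\vtheta}_2^2$, taking the supremum over unit directions yields $\norm{\nabla^2 g}_2\le 1$. (The identity $\mB^\top\mB=\mI_m$ from the first paragraph in fact makes this an equality.) I do not anticipate a genuine difficulty here, as the statement is pure linear algebra; the only thing needing care is the bookkeeping: fixing a consistent flattening convention for $\mU$, keeping track of the factor $2$ in the Hessian, and invoking $\norm{\mA}_2\le\norm{\mA}_F$ so that the chain of inequalities closes at exactly $1$.
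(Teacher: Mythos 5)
Your proposal is correct, and it reaches the two claims by a genuinely different route than the paper. The paper first chooses the vectorization $(\vv,\mathrm{vec}(\mU))$ and identifies the Hessian as a Kronecker product
\[
\nabla^2_{(\mU,\vv)}(\vv^\top\mU\vx)=\left(\begin{array}{c|c}0&\vx^\top\\\hline\vx&\bm{0}_{d\times d}\end{array}\right)\otimes\mI_m,
\]
then bounds the norm by computing the norm of the small $(d+1)\times(d+1)$ factor directly, and derives the $\pm\lambda$ symmetry by invoking the fact that the spectrum of a Kronecker product consists of products of spectra, followed by a sign-flip on the first coordinate of an eigenvector of the small factor. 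You instead keep the Hessian in anti-block-diagonal form $\begin{pmatrix}\bm{0}&\mB\\\mB^\top&\bm{0}\end{pmatrix}$: for the norm bound you work through the Rayleigh quotient $\vtheta^\top(\nabla^2 g)\vtheta=2\vu^\top\mA\vx$ and close the estimate with Cauchy--Schwarz, $\norm{\mA}_2\le\norm{\mA}_F$, $\norm{\vx}_2=1$, and AM--GM, which is a clean and entirely self-contained variational argument; for the spectral symmetry you conjugate by the orthogonal involution $\mS=\mathrm{diag}(\mI_{md},-\mI_m)$ and observe $\mS(\nabla^2 g)\mS=-\nabla^2 g$, which is the standard (and arguably slicker) argument showing that every anti-block-diagonal symmetric matrix has a symmetric spectrum, without needing any Kronecker spectral facts. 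What the paper's Kronecker decomposition buys is extra structural information (each eigenvalue of the small matrix appears with multiplicity $m$), which your argument does not expose but which the lemma does not require; what your argument buys is a more elementary, assumption-light proof that transfers verbatim to any symmetric anti-block-diagonal Hessian. Your observation that $\mB^\top\mB=\norm{\vx}_2^2\mI_m$ also shows the bound is attained, which is a nice bonus beyond what the lemma asserts.
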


\begin{proof}
    We first define the notations. We use the operator $\otimes$ to represent tensor product, or Kronecker product between matrices. For example, for any given two matrices $A = (a_{ij}) \in \sR^{m\times n}$ and $B$, we define $A\otimes B$ by
    \begin{align*}
        A \otimes B = \begin{pmatrix} 
            a_{11}B & \dots  & a_{1n}B\\
            \vdots & \ddots & \vdots\\
            a_{m1}B & \dots  & a_{mn}B 
            \end{pmatrix}.
    \end{align*}
    We use $\bm{0}_{m\times n}$ to denote a $m$ by $n$ matrix with all entries filled with zero, and $\bm{I}_n$ denotes $n$ by $n$ identity matrix.
    Now we provide the proof of the original problem.

    Let $\mU = (u_{ij})\in \sR^{m\times d}$, $\vv = (v_i)\in \sR^{m}$, and $\vx = (x_j) \in \sR^d$ be given. Then,
    \[
        \vv^\top \mU \vx = \sum_{i, j} v_i U_{ij} x_j.
    \]
    We vectorize the parameter $\bm{\Theta} = (\mU, \vv)$ by $(v_1,\ldots,v_m,U_{11},\ldots,U_{m1},\ldots,U_{1d},\ldots, U_{md})\in \sR^{m+md}$.
    Then, we can represent the Hessian as 
    \begin{align}\label{E:L:hessian}
        \nabla_{(\mU, \vv)}^2 (\vv^\top \mU \vx) = \left(\begin{array}{@{}c|c@{}}
          0
          & \vx^\top \\
        \hline
          \vx &
          \bm{0}_{d\times d}
        \end{array}\right) \otimes \bm{I}_{m}.
    \end{align}
    For any given $c\in \sR$ and $\vy\in \sR^d$ with $c^2 + \norm{\vy}_2^2 = 1$, we have
    \begin{align*}
        \left\Vert \left(\begin{array}{@{}c|c@{}}
          0
          & \vx^\top \\
        \hline
          \vx &
          \bm{0}_{d\times d}
        \end{array}\right) \left(\begin{array}{c}
          c \\
        \hline
          \vy
        \end{array}\right) \right\rVert_2
        = \left\lVert \left(\begin{array}{c}
          \vx^\top \vy \\
        \hline
          c \, \vx
        \end{array}\right) \right\rVert_2
        \le \norm{\vx}_2 = 1.
    \end{align*}
    Hence, by definition of matrix operator norm, we have
    \begin{align*}
        \left\Vert \left(\begin{array}{@{}c|c@{}}
          0
          & \vx^\top \\
        \hline
          \vx &
          \bm{0}_{d\times d}
        \end{array}\right) \right\rVert_2 \le 1.
    \end{align*}
    Therefore, we can conclude that 
    \begin{align*}
        \left\lVert \nabla_{(\mU, \vv)}^2 (\vv^\top \mU \vx)\right\rVert_2 = \left\Vert \left(\begin{array}{@{}c|c@{}}
          0
          & \vx^\top \\
        \hline
          \vx &
          \bm{0}_{d\times d}
        \end{array}\right) \otimes \bm{I}_{m} \right\rVert_2
        = \left\lVert \left(\begin{array}{@{}c|c@{}}
          0
          & \vx^\top \\
        \hline
          \vx &
          \bm{0}_{d\times d}
        \end{array}\right) \right\rVert_2 \le 1.
    \end{align*}
    Now suppose that $\lambda$ is an eigenvalue of $\nabla_{(\mU, \vv)}^2 (\vv^\top \mU \vx)$. We note that for any given matrices $\mA$ and $\mB$, if $\lambda_a$ is an eigenvalue of $\mA$ with the corresponding eigenvector $\vu_a$ and $\lambda_b$ is an eigenvalue of $\mB$ with the corresponding eigenvector $\vu_b$, then $\lambda_a \lambda_b$ is an eigenvalue of $\mA \otimes \mB$ with the corresponding eigenvector $\vu_a \otimes \vu_b$. Moreover, any eigenvalue of $\mA \otimes \mB$ arises as such a product of eigenvalues of $\mA$ and $\mB$. Hence, using Eq.~\eqref{E:L:hessian}, we have
    \[
        \lambda \text{ is an eigenvalue of the matrix }\left(\begin{array}{@{}c|c@{}}
          0
          & \vx^\top \\
        \hline
          \vx &
          \bm{0}_{d\times d}
        \end{array}\right).
    \]
    We denote the corresponding eigenvector by $(c, \vy^\top)^\top$ where $c\in \sR$ and $\vy\in \sR^d$, i.e., it holds that
    \begin{align*}
        \left(\begin{array}{@{}c|c@{}}
          0
          & \vx^\top \\
        \hline
          \vx &
          \bm{0}_{d\times d}
        \end{array}\right) \left(\begin{array}{c}
          c \\
        \hline
          \vy
        \end{array}\right) 
        = \left(\begin{array}{c}
          \vx^\top \vy \\
        \hline
          c \, \vx
        \end{array}\right)
        = \lambda \left(\begin{array}{c}
          c \\
        \hline
          \vy
        \end{array}\right). 
    \end{align*}
    Consequently, we have
    \begin{align*}
        \left(\begin{array}{@{}c|c@{}}
          0
          & \vx^\top \\
        \hline
          \vx &
          \bm{0}_{d\times d}
        \end{array}\right) \left(\begin{array}{c}
          -c \\
        \hline
          \vy
        \end{array}\right) 
        = \left(\begin{array}{c}
          \vx^\top \vy \\
        \hline
          -c \, \vx
        \end{array}\right)
        = - \lambda \left(\begin{array}{c}
          -c \\
        \hline
          \vy
        \end{array}\right),
    \end{align*}
    and this implies that
    \[
        -\lambda \text{ is an eigenvalue of the matrix }\left(\begin{array}{@{}c|c@{}}
          0
          & \vx^\top \\
        \hline
          \vx &
          \bm{0}_{d\times d}
        \end{array}\right).
    \]
    Therefore, by Eq.~\eqref{E:L:hessian}, $-\lambda$ is an eigenvalue of $\nabla_{(\mU, \vv)}^2 (\vv^\top \mU \vx)$.
\end{proof}

Using Lemma~\ref{L:PS_linear_H}, we prove an important bound on the sharpness value provided by the Proposition~\ref{P:PS_linear} stated below.

\begin{proposition}\label{P:PS_linear}
  For any $\bm{\Theta} = (\mU, \vv)$ with $\mU\in \sR^{m\times d}$, $\vv\in \sR^m$, and $\vx\in \sR^d$ with $\norm{\vx}_2 = 1$, the following bound holds:
  \[
    \left\lvert \lambda_{\max}(\bm{\Theta}) - \ell''(\vv^\top \mU \vx) \left(\norm{\mU \vx}_2^2 + \norm{\vv}_2^2\right) \right\rvert \le 1
  \]
\end{proposition}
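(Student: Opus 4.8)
The plan is to split the loss Hessian into its two natural summands and bound each via Weyl's inequality. Writing $f(\vx;\bm{\Theta}) = \vv^\top\mU\vx$ and invoking the Hessian formula from Section~\ref{S:prelim},
\[
  \nabla_{\bm{\Theta}}^2\mathcal{L}(\bm{\Theta}) = \ell''(f(\vx;\bm{\Theta}))\,(\nabla_{\bm{\Theta}} f(\vx;\bm{\Theta}))^{\otimes 2} + \ell'(f(\vx;\bm{\Theta}))\,\nabla_{\bm{\Theta}}^2 f(\vx;\bm{\Theta}).
\]
First I would compute the gradient norm: since $\partial f/\partial v_i = (\mU\vx)_i$ and $\partial f/\partial U_{ij} = v_i x_j$, we get $\norm{\nabla_{\bm{\Theta}} f}_2^2 = \norm{\mU\vx}_2^2 + \norm{\vv}_2^2\norm{\vx}_2^2 = \norm{\mU\vx}_2^2 + \norm{\vv}_2^2$, using $\norm{\vx}_2 = 1$. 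Hence the first summand $A \deq \ell''(f)(\nabla_{\bm{\Theta}} f)^{\otimes 2}$ is $\ell''(f)\ge 0$ (by convexity of $\ell$) times a rank-one positive semidefinite matrix, so $\lambda_{\max}(A) = \ell''(f)(\norm{\mU\vx}_2^2 + \norm{\vv}_2^2)$ while all its other eigenvalues vanish.

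Next I would bound the second summand $B \deq \ell'(f)\nabla_{\bm{\Theta}}^2 f$ in spectral norm: by $1$-Lipschitzness of $\ell$, $\abs{\ell'(f)}\le 1$, and by Lemma~\ref{L:PS_linear_H}, $\norm{\nabla_{\bm{\Theta}}^2 f}_2 \le 1$, so $\norm{B}_2 \le 1$; in particular $\lambda_{\min}(B) \ge -1$ and $\lambda_{\max}(B)\le 1$. Applying Weyl's inequality to $A$ and $B$ yields $\lambda_{\max}(A) + \lambda_{\min}(B) \le \lambda_{\max}(A+B) \le \lambda_{\max}(A) + \lambda_{\max}(B)$, that is,
\[
  \ell''(f)(\norm{\mU\vx}_2^2 + \norm{\vv}_2^2) - 1 \;\le\; \lambda_{\max}(\bm{\Theta}) \;\le\; \ell''(f)(\norm{\mU\vx}_2^2 + \norm{\vv}_2^2) + 1,
\]
which is precisely the claimed bound after rearranging.

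There is essentially no obstacle once Lemma~\ref{L:PS_linear_H} is available; the only points needing minor care are the elementary gradient computation and using the correct two-sided form of Weyl's inequality — the lower bound relies on $\lambda_{\min}(B)\ge -\norm{B}_2$, not merely subadditivity of the top eigenvalue. The real content of the proposition is already isolated in Lemma~\ref{L:PS_linear_H}, which controls the Hessian of the bilinear map $(\mU,\vv)\mapsto\vv^\top\mU\vx$; the rest is bookkeeping.
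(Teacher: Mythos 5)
Your proof is correct, and it takes a slightly cleaner route than the paper's. The paper first establishes that $\lambda_{\max}(\nabla^2_{\bm\Theta}\mathcal{L}(\bm\Theta)) = \norm{\nabla^2_{\bm\Theta}\mathcal{L}(\bm\Theta)}_2$ (i.e., that the top eigenvalue is nonnegative and dominates $-\lambda_{\min}$), and this step invokes the second claim of Lemma~\ref{L:PS_linear_H} — that the spectrum of $\nabla^2_{\bm\Theta}(\vv^\top\mU\vx)$ is symmetric, so $\lambda_{\max}(\ell'(\cdot)\nabla^2_{\bm\Theta}(\vv^\top\mU\vx)) = \abs{\ell'(\cdot)}\norm{\nabla^2_{\bm\Theta}(\vv^\top\mU\vx)}_2$. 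Having done so, the paper converts the claim to a statement about spectral norms and finishes with the reverse triangle inequality $\bigl|\norm{A+B}_2 - \norm{A}_2\bigr| \le \norm{B}_2$. Your approach applies the two-sided Weyl inequality $\lambda_{\max}(A)+\lambda_{\min}(B) \le \lambda_{\max}(A+B) \le \lambda_{\max}(A)+\lambda_{\max}(B)$ directly, which sidesteps the need to identify $\lambda_{\max}$ with the spectral norm and hence only uses the first claim of Lemma~\ref{L:PS_linear_H} (that $\norm{\nabla^2_{\bm\Theta}(\vv^\top\mU\vx)}_2 \le 1$), not the symmetric-spectrum claim. Both arguments are sound and hinge on the same lemma bounding $\nabla^2_{\bm\Theta}(\vv^\top\mU\vx)$; yours is marginally more economical since it avoids one intermediate reduction.
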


\begin{proof}
  The loss Hessian at $\bm{\Theta} = (\mU, \vv)$ can be characterized as:
  \begin{align}\label{E:L:hessian_decomposition}
    \nabla^2_{\bm{\Theta}} \mathcal{L}(\bm{\Theta}) =  \ell''(\vv^\top \mU \vx) \left(\nabla_{\bm{\Theta}} (\vv^\top \mU \vx)\right)^{\otimes 2} + \ell'(\vv^\top \mU \vx) \nabla_{\bm{\Theta}}^2 (\vv^\top \mU \vx).
  \end{align}
  We first prove that $\lambda_{\max}(\nabla^2_{\bm{\Theta}} \mathcal{L}(\bm{\Theta})) = \left\lVert \nabla^2_{\bm{\Theta}} \mathcal{L}(\bm{\Theta})  \right\rVert_2$. Note that the largest absolute value of the eigenvalue of a symmetric matrix equals to its spectral norm. Hence, $\left\lVert \nabla^2_{\bm{\Theta}} \mathcal{L}(\bm{\Theta})  \right\rVert_2 = \max\{ \lambda_{\max}(\nabla^2_{\bm{\Theta}} \mathcal{L}(\bm{\Theta})), -\lambda_{\min}(\nabla^2_{\bm{\Theta}} \mathcal{L}(\bm{\Theta})) \}$, so it suffices to prove that $\lambda_{\max}(\nabla^2_{\bm{\Theta}} \mathcal{L}(\bm{\Theta})) \ge -\lambda_{\min}(\nabla^2_{\bm{\Theta}} \mathcal{L}(\bm{\Theta}))$. Let $\vw$ denote the eigenvector of $\nabla^2_{\bm{\Theta}} \mathcal{L}(\bm{\Theta})$ corresponding to the smallest eigenvalue $\lambda_{\min}(\nabla^2_{\bm{\Theta}} \mathcal{L}(\bm{\Theta}))$ with $\norm{\vw}_2 = 1$. Then, using Eq.~\eqref{E:L:hessian_decomposition}, we have
  \begin{align*}
       \lambda_{\min} (\nabla^2_{\bm{\Theta}} \mathcal{L}(\bm{\Theta})) = \vw^\top \nabla^2_{\bm{\Theta}} \mathcal{L}(\bm{\Theta}) \vw &= \ell''(\vv^\top \mU \vx) \vw^\top  \left(\nabla_{\bm{\Theta}} (\vv^\top \mU \vx)\right)^{\otimes 2} \vw + \ell'(\vv^\top \mU \vx)  \vw^\top \nabla_{\bm{\Theta}}^2 (\vv^\top \mU \vx) \vw\\
       &\ge \ell'(\vv^\top \mU \vx)  \vw^\top \nabla_{\bm{\Theta}}^2 (\vv^\top \mU \vx) \vw\\
       &\ge - \abs{\ell'(\vv^\top \mU \vx)}\norm{\nabla_{\bm{\Theta}}^2 (\vv^\top \mU \vx)}_2,
  \end{align*}
  where we used Lemma~\ref{L:PS_linear_H} to obtain the last inequality.
  Note that the matrix $\ell''(\vv^\top \mU \vx) \left(\nabla_{\bm{\Theta}} (\vv^\top \mU \vx)\right)^{\otimes 2}$ is PSD, so that $\lambda_{\max} (\nabla^2_{\bm{\Theta}} \mathcal{L}(\bm{\Theta})) \ge \lambda_{\max} (\ell'(\vv^\top \mU \vx) \nabla_{\bm{\Theta}}^2 (\vv^\top \mU \vx)) = \abs{\ell'(\vv^\top \mU \vx)} \norm{\nabla_{\bm{\Theta}}^2 (\vv^\top \mU \vx)}_2 \ge - \lambda_{\min} (\nabla^2_{\bm{\Theta}} \mathcal{L}(\bm{\Theta}))$. Therefore, $\lambda_{\max}(\nabla^2_{\bm{\Theta}} \mathcal{L}(\bm{\Theta})) = \left\lVert \nabla^2_{\bm{\Theta}} \mathcal{L}(\bm{\Theta})  \right\rVert_2$.
  
  Now, we have the following triangle inequality:
  \begin{align*}
    \left\lvert \lambda_{\max}(\bm{\Theta}) - \ell''(\vv^\top \mU \vx) \left(\norm{\mU \vx}_2^2 + \norm{\vv}_2^2\right) \right\rvert
    & =  \left\lvert\left\lVert\nabla^2_{\bm{\Theta}} \mathcal{L}(\bm{\Theta})\right\rVert_2 - \left\lVert \ell''(\vv^\top \mU \vx) \left(\nabla_{\bm{\Theta}} (\vv^\top \mU \vx)\right)^{\otimes 2} \right\rVert_2  \right\rvert \\
    & \le \left\lVert \ell'(\vv^\top \mU \vx) \nabla_{\bm{\Theta}}^2 (\vv^\top \mU \vx) \right\rVert_2 \\
    & = \left\lvert \ell'(\vv^\top \mU \vx) \right\rvert \left\lVert \nabla_{(\mU, \vv)}^2 (\vv^\top \mU \vx) \right\rVert_2 \\
    & \le 1,
  \end{align*}
  where the last inequality holds by Lemma~\ref{L:PS_linear_H} and $1$-Lipschitzness of $\ell$.
\end{proof}

We now give the proof of Theorem~\ref{T:PS_linear}, restated below for the sake of readability.

\TheoremPSdiag*

\begin{proof}
  By Proposition~\ref{P:PS_linear}, we can bound the sharpness $\lambda_{\max}(\bm{\Theta}_t)$ at time step $t$ by
  \begin{align*}
    \left\lvert \lambda_{\max}(\bm{\Theta}_t) - \frac{2\ell''(p_t)}{\eta q_t} \right\rvert \le 1.
  \end{align*}
  Since $\ell''(z) = r(z) + zr'(z)$, we can rewrite as following:
  \begin{align}\label{E:PS1_linear}
    \left\lvert \lambda_{\max}(\bm{\Theta}_t) - \left(s_t^{-1} + \frac{p_t r'(p_t)}{q_t}\right) \frac{2}{\eta} \right\rvert \le 1.
  \end{align}
  By Theorem~\ref{T:EoS1_linear} and since $h$ is a bounded function by Lemma~\ref{L:EoS_h_linear}, we have $s_t = 1+\mathcal{O}_{\ell}(\eta^2)$ for any $t\ge t_a$. Consequently, $\abs{s_t^{-1} - 1} = \mathcal{O}_{\ell}(\eta^2)$ and $\abs{r(p_t) - q_t} =\mathcal{O}_{\ell}(\eta^2)$. 
  Moreover, for any $0<q<1$,
  \begin{align*}
    \frac{d}{dq} \left(\frac{\hat{r}(q)r'(\hat{r}(q))}{q}\right) & = \frac{\hat{r}'(q) (r'(\hat{r}(q)) + \hat{r}(q) r''(\hat{r}(q)))}{q} - \frac{\hat{r}(q) r'(\hat{r}(q))}{q^2} \\ 
    & = \frac{1}{q} \left( 1 + \frac{\hat{r}(q) r''(\hat{r}(q))}{r'(\hat{r}(q))} \right) - \frac{\hat{r}(q) r'(\hat{r}(q))}{q^2},
  \end{align*}
  so that 
  \begin{align*}
    \lim_{q\to 1^-} \left(\frac{d}{dq} \left(\frac{\hat{r}(q)r'(\hat{r}(q))}{q}\right)\right) = \lim_{p\to 0^+} \left(1 + \frac{pr''(p)}{r'(p)}\right) = 2.
  \end{align*}
  Therefore, $\frac{d}{dq} \left(\frac{\hat{r}(q)r'(\hat{r}(q))}{q}\right)$ is bounded on $[\frac{1}{4}, 1)$ and Taylor's theorem gives
  \begin{align*}
    \left\lvert \frac{p_t r'(p_t)}{r(p_t)} - \frac{\hat{r}(q_t) r'(\hat{r}(q_t))}{q_t} \right\rvert = \mathcal{O}_{\ell}(\lvert r(p_t) - q_t \rvert) = \mathcal{O}_{\ell}(\eta^2),
  \end{align*}
  for any time step $t$ with $q_t < 1$.
  Hence, if $q_t<1$, we have the following bound:
  \begin{align}\label{E:PS2_linear}
    \left\lvert \tilde{\lambda}(q_t) - \left(s_t^{-1} + \frac{p_t r'(p_t)}{q_t}\right) \frac{2}{\eta} \right\rvert \le \left\lvert 1 - s_t^{-1} + \frac{\hat{r}(q_t) r'(\hat{r}(q_t))}{q_t} -  \frac{p_t r'(p_t)}{r(p_t)} \right\rvert \frac{2}{\eta} + \mathcal{O}_{\ell}(\eta)= \mathcal{O}_{\ell}(\eta),
  \end{align}
  where we used $\frac{p_t r'(p_t)}{q_t} = \frac{p_t r'(p_t)}{r(p_t)} (1+\mathcal{O}_{\ell}(\eta^2)) = \frac{p_t r'(p_t)}{r(p_t)}  + \mathcal{O}_{\ell}(\eta^2)$, since $1 + \frac{p_t r'(p_t)}{r(p_t)} = \frac{\ell''(p_t)}{r(p_t)} > 0$ implies $\abs{p_t r'(p_t)} \le r(p_t) \le 1$.
  Now let $t$ be any given time step with $q_t \ge 1$. Then, $r(p_t) = 1 - \mathcal{O}_{\ell}(\eta^2)$, and since $r(z) = 1+r''(0)z^2 + \mathcal{O}_{\ell}(z^4)$ for small $z$, we have $\abs{p_t} = \mathcal{O}_{\ell}(\eta)$. Hence,
  \begin{align}\label{E:PS3_linear}
    \left\lvert \tilde{\lambda}(q_t) - \left(s_t^{-1} + \frac{p_t r'(p_t)}{q_t}\right) \frac{2}{\eta} \right\rvert \le \left\lvert 1 - s_t^{-1} -  \frac{p_t r'(p_t)}{r(p_t)} \right\rvert \frac{2}{\eta} + \mathcal{O}_{\ell}(\eta) = \mathcal{O}_{\ell}(\eta),
  \end{align}
  for any $t$ with $q_t\ge 1$.  By Eqs.~\eqref{E:PS1_linear},~\eqref{E:PS2_linear}, and~\eqref{E:PS3_linear}, we can conclude that for any $t\ge t_a$, we have
  \[
    \left\lvert \lambda_{\max}(\bm{\Theta}_t) - \tilde{\lambda}(q_t) \right\rvert \le 1 + \mathcal{O}_{\ell}(\eta).
  \]
  Finally, we can easily check that the sequence $(\tilde{\lambda}(q_t))_{t=0}^{\infty}$ is monotonically increasing, since $z\mapsto \frac{zr'(z)}{r(z)}$ is a decreasing function by Assumption~\ref{A:l_r2}~\ref{A:l_r2:zr'/r} and the sequence $(q_t)$ is monotonically increasing.
\end{proof}

\newpage
\section{Proofs for the Single-neuron Nonlinear Network}\label{S:A:nonlinear}

\subsection{Formal statements of Theorem~\ref{T:EoS_informal}}

In this subsection, we provide the formal statements of Theorem~\ref{T:EoS_informal}. We study the GD dynamics on a two-dimensional function $\mathcal{L}(x,y) \deq \frac{1}{2}(\phi(x)y)^2$, where $x$, $y$ are scalars and $\phi$ is a nonlinear activation satisfying Assumption~\ref{A:phi}. We consider the reparameterization given by Definition~\ref{D:reparameterization}, which is $(p, q) \deq \left( x, \, \frac{2}{\eta y^2}\right)$.

We emphasize that the results we present in this subsection closely mirror those of the Section~\ref{S:fc_linear}. In particular,
\begin{itemize}
    \item Assumption~\ref{A:phi_r} mirrors Assumption~\ref{A:l_r},
    \item Assumption~\ref{A:phi_r2} mirrors Assumption~\ref{A:l_r2},
    \item (gradient flow regime) Theorem~\ref{T:gradient_flow} mirrors Theorem~\ref{T:gradient_flow_linear}.
    \item (EoS regime, Phase~\RomanNumeralCaps{1}) Theorem~\ref{T:EoS1} mirrors Theorem~\ref{T:EoS1_linear},
    \item (EoS regime, Phase~\RomanNumeralCaps{2}) Theorem~\ref{T:EoS2} mirrors Theorem~\ref{T:EoS2_linear}, and
    \item (progressive sharpening) Theorem~\ref{T:PS} mirrors Theorem~\ref{T:PS_linear}.
\end{itemize}

The proof strategies are also similar. This is mainly because the $1$-step update rule Eq.~\eqref{E:GD_pq} resembles Eq.~\eqref{E:GD_pq_linear} for small step size $\eta$. We now present our rigorous results contained in Theorem~\ref{T:EoS_informal}.

Inspired by Lemma~\ref{L:bifurcation}, we have an additional assumption on $\phi$ as below.

\begin{assumption}\label{A:phi_r}
  Let $r$ be a function defined by $r(z)\deq \frac{\phi(z)\phi'(z)}{z}$ for $z\ne 0$ and $r(0)\deq 1$. The function $r$ satisfies Assumption~\ref{A:r}.
\end{assumption}

In contrast to the function $r$ defined in Section~\ref{S:fc_linear}, the expression $1+\frac{pr'(p)}{r(p)}$ can be negative, which implies that the constant $c$ defined in Lemma~\ref{L:bifurcation} is positive. As a result, the dynamics of $p_t$ may exhibit a period-$4$ (or higher) oscillation or even chaotic behavior (as illustrated in Figure~\ref{Figure:toy_b}).

We first state our results on the gradient flow regime.

\begin{restatable}[gradient flow regime]{theorem}{TheoremGF}\label{T:gradient_flow}
  Let $\eta\in (0, \frac{r(1)}{2(r(1)+2)})$ be a fixed step size and $\phi$ be a sigmoidal function satisfying Assumptions~\ref{A:phi} and~\ref{A:phi_r}. Suppose that the initialization $(p_0, q_0)$ satisfies $\abs{p_0}\le 1$ and $q_0 \in \left(\frac{1}{1-2\eta}, \frac{r(1)}{4\eta} \right)$. Consider the reparameterized GD trajectory characterized in Eq.~\eqref{E:GD_pq}. Then, the GD iterations $(p_t, q_t)$ converge to the point $(0, q^*)$ such that
  \[
    q_0 \le q^* \le \exp\left(2 \eta \left[\min \left\{ \frac{2(q_0 - 1)}{q_0}, \frac{r(1)}{q_0} \right\}\right]^{-1}\right) q_0 \le 2q_0.
  \]
\end{restatable}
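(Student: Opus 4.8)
The plan is to mirror the gradient-flow analysis of the two-layer linear network (Proposition~\ref{P:GF_linear}), adapted to the update rule~\eqref{E:GD_pq}. Set $m \deq \min\left\{\tfrac{2(q_0-1)}{q_0},\ \tfrac{r(1)}{q_0}\right\}$. The hypotheses $q_0 > \tfrac{1}{1-2\eta}$ and $q_0 < \tfrac{r(1)}{4\eta}$, together with $r(1)\le r(0)=1$ from Assumption~\ref{A:r}, give $m\in(4\eta,\,1-2\eta)$; note also that $\eta<\tfrac{r(1)}{2(r(1)+2)}\le\tfrac16$, which is exactly the condition making the interval for $q_0$ nonempty. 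Everything reduces to one induction establishing that for all $t\ge 0$,
\[
\abs{p_t}\le (1-m)^t \le 1, \qquad q_0 \le q_t \le \exp\left(\tfrac{2\eta}{m}\right)q_0 \le 2q_0 .
\]
Granting this, $\abs{p_t}\to 0$ geometrically; the sequence $(q_t)$ is monotone increasing, since $(1-\eta\phi(p_t)^2)^{-2}\ge 1$, and bounded above, hence converges to some $q^\ast$; and the sandwich $q_0\le q_t\le \exp(2\eta/m)q_0\le 2q_0$ passes to the limit, giving convergence of $(p_t,q_t)$ to $(0,q^\ast)$ with the stated bounds on $q^\ast$.

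For the $p$-step of the induction, assume $\abs{p_t}\le 1$ and $q_t\in[q_0,2q_0]$. Unimodality of $r$ (Assumption~\ref{A:phi_r}, hence Assumption~\ref{A:r}) forces $r(1)\le r(p_t)\le 1$, so
\[
\frac{p_{t+1}}{p_t} = 1-\frac{2r(p_t)}{q_t} \in \left[\,1-\tfrac{2}{q_0},\ 1-\tfrac{r(1)}{q_0}\,\right],
\]
and therefore $\left\lvert \tfrac{p_{t+1}}{p_t}\right\rvert \le \max\left\{\tfrac{2}{q_0}-1,\ 1-\tfrac{r(1)}{q_0}\right\} = 1-m$, which propagates the geometric bound on $\abs{p_{t+1}}$.

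For the $q$-step, note $\abs{\phi(p_t)}\le\abs{p_t}\le 1$ by $1$-Lipschitzness of $\phi$ and $\phi(0)=0$, so with $\eta<\tfrac16$ each factor $(1-\eta\phi(p_t)^2)^{-2}$ is well defined and at least $1$, giving monotonicity and $q_{t+1}\ge q_0$. Telescoping the logarithm,
\[
\log\frac{q_{t+1}}{q_0} = \sum_{j=0}^{t}\bigl(-2\log(1-\eta\phi(p_j)^2)\bigr) \le \frac{2\eta}{1-\eta}\sum_{j=0}^{t}p_j^2 \le \frac{2\eta}{1-\eta}\cdot\frac{1}{m(2-m)},
\]
where the first inequality uses $-\log(1-z)\le\tfrac{z}{1-z}$ together with $\eta\phi(p_j)^2\le\eta$ and $\phi(p_j)^2\le p_j^2$, and the second uses $p_j^2\le(1-m)^{2j}$ and $\sum_{j\ge0}(1-m)^{2j}=\tfrac{1}{m(2-m)}$. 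The quantitative heart is that $q_0>\tfrac{1}{1-2\eta}$ forces $m<1-2\eta$, hence $(1-\eta)(2-m)\ge(1-\eta)(1+2\eta)=1+\eta(1-2\eta)\ge 1$; therefore $\frac{2\eta}{1-\eta}\cdot\frac{1}{m(2-m)} = \frac{2\eta}{m}\cdot\frac{1}{(1-\eta)(2-m)} \le \frac{2\eta}{m}$, and since $m>4\eta$ this is $\le\tfrac12$, so $q_{t+1}\le\exp(2\eta/m)q_0\le 2q_0$. This closes the induction.

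I expect the only real obstacle to be this constant bookkeeping in the $q$-step: the logarithm estimate and the geometric-series bound must be sharp enough that the slack factor $\tfrac{1}{(1-\eta)(2-m)}$ stays at most $1$, which is precisely what the lower bound $q_0>\tfrac{1}{1-2\eta}$ purchases — a cruder estimate only gives $\exp(c\eta/m)q_0$ with $c>2$, hence merely $q^\ast\le C q_0$ for some constant $C>2$ rather than the clean $2q_0$. The contraction of $p_t$, the monotonicity of $q_t$, and the passage to the limit are otherwise routine once Assumption~\ref{A:phi_r} and the boundedness of $\phi$ are in hand.
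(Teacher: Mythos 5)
Your proof follows the same two-part induction as the paper's Proposition~\ref{P:GF}: geometric contraction of $\abs{p_t}$ at rate $1-m$ with $m=\min\{2(q_0-1)/q_0,\ r(1)/q_0\}$, followed by a telescoping-log bound on $q_t$. The one place you diverge is the $q$-step estimate, and there your version is the more careful one: you use $-\log(1-z)\le \frac{z}{1-z}$ and retain the exact geometric sum $\frac{1}{m(2-m)}$, then invoke $m<1-2\eta$ (a consequence of $q_0>\frac{1}{1-2\eta}$ and $r(1)\le 1$) to show $(1-\eta)(2-m)\ge 1$, so the slack factor $\frac{1}{(1-\eta)(2-m)}$ is at most $1$ and $\log(q_{k+1}/q_0)\le 2\eta/m$. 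The paper's displayed chain instead passes through $\abs{\log(1+z)}\le 2\abs{z}$ and then writes $2\sum_t\abs{q_t/q_{t+1}-1}\le 2\eta\sum_t p_t^2$; since one only has $\abs{q_t/q_{t+1}-1}\le 2\eta p_t^2$, that step as printed loses a factor of two and would give $\exp(4\eta/m)q_0$, which with $m\ge 4\eta$ only yields $q^*\le e\,q_0$. Your tighter bookkeeping is therefore not cosmetic, and your remark that the lower bound on $q_0$ is precisely what makes the constant come out clean identifies the right pressure point.
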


Theorem~\ref{T:gradient_flow} implies that in gradient flow regime, GD with initialization $(x_0, y_0)$ and step size $\eta$ converges to $(0, y^*)$ which has the sharpness bounded by:
\begin{align*}
  \left(1 - 2 \eta \left[\min \left\{ \frac{2(q_0 - 1)}{q_0}, \frac{r(1)}{q_0} \right\}\right]^{-1}\right) y_0^2 \le \lambda_{\max}(\nabla^2 \mathcal{L}(0, y^*)) \le y_0^2.
\end{align*}

Now we provide our results on the EoS regime with an additional assumption below.

\begin{assumption}\label{A:phi_r2}
  Let $r$ be a function defined in Assumption~\ref{A:phi_r}. Then $r$ is $C^4$ on $\sR$ and satisfies:
  \begin{enumerate}[label=(\roman*)]
    \item\label{A:phi_r2:r'/r2} $z\mapsto \frac{r'(z)}{r(z)^2}$ is decreasing on $\sR$, 
    \item\label{A:phi_r2:zr'/r} $z\mapsto \frac{zr'(z)}{r(z)}$ is decreasing on $z>0$ and increasing on $z<0$,
    \item\label{A:phi_r2:zr/r'} $z\to \frac{zr(z)}{r'(z)}$ is decreasing on $z>0$ and increasing on $z<0$, and
    \item\label{A:phi_r2:1/2} $\hat{r}(\frac{1}{2}) r'(\hat{r}(\frac{1}{2}))> -\frac{1}{2}$.
\end{enumerate}
\end{assumption}

Note that the function $r$ that arise from the activation $\phi = \tanh$ satisfies Assumptions~\ref{A:phi},~\ref{A:phi_r}, and~\ref{A:phi_r2}.

\begin{restatable}[EoS regime, Phase~\RomanNumeralCaps{1}]{theorem}{TheoremEoSearly}
  \label{T:EoS1}
  Let $\eta>0$ be a small enough constant and $\phi$ be an activation function satisfying Assumptions~\ref{A:phi},~\ref{A:phi_r}, and~\ref{A:phi_r2}. Let $z_0 \deq \sup_z\{\frac{zr'(z)}{r(z)}\ge -\frac{1}{2}\}$, $z_1 \deq \sup_z\{\frac{zr'(z)}{r(z)}\ge -1\}$, and $c_0 \deq \max\{r(z_0), r(z_1)+\frac{1}{2}\} \in (\frac{1}{2},1)$. Let $\delta\in (0,1-c_0)$ be any given constant. Suppose that the initialization $(p_0, q_0)$ satisfies $\abs{p_0}\le 1$ and $q_0\in (c_0, 1-\delta)$. Consider the reparameterized GD trajectory characterized in Eq.~\eqref{E:GD_pq}. We assume that for all $t\ge 0$ such that $q_t<1$, we have $p_t\ne 0$. Then, there exists a time step $t_a = \mathcal{O}_{\delta, \phi}(\log(\eta^{-1}))$ such that for any $t\ge t_a$,
  \begin{align*}
    \frac{q_t}{r(p_t)} = 1 + h(p_t) \eta + \mathcal{O}_{\delta,\phi}(\eta^2)
  \end{align*}
  where $h:\sR\to \sR$ is a function defined as
  \[
    h(p) \deq \begin{cases*}
        -\frac{\phi(p)^2 r(p)}{p r'(p)} & \text{if $p \ne 0$, and}\\
        -\frac{1}{r''(0)} & \text{if $p=0$}. 
    \end{cases*}
  \]
\end{restatable}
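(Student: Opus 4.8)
The plan is to replay the proof of Theorem~\ref{T:EoS1_linear} essentially verbatim, since the recursion \eqref{E:GD_pq} differs from \eqref{E:GD_pq_linear} only in lower-order terms: writing $s_t\deq q_t/r(p_t)$, the $p$-update is $p_{t+1}=(1-2/s_t)p_t$, and the $q$-update $q_{t+1}=(1-\eta\phi(p_t)^2)^{-2}q_t$ now drives $q_t$ upward at rate $\mathcal{O}(\eta)$ rather than $\mathcal{O}(\eta^2)$ (which is exactly why the claim reads $1+h(p_t)\eta+\mathcal{O}(\eta^2)$ instead of $1+h(p_t)\eta^2+\mathcal{O}(\eta^4)$). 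First I would establish the a priori bounds analogous to Lemma~\ref{L:pq_bound_linear}: by induction, as long as $q_t\le 1$ one has $\abs{p_t}\le 4$ and $q_t\le q_{t+1}\le(1+\mathcal{O}(\eta))q_t$, using that $\phi$ is $1$-Lipschitz and $\phi(p_t)^2\le1$. Hence $q_t$ only reaches $1-\tfrac{\delta}{2}$ after $T=\Omega_\delta(\eta^{-1})$ steps, and throughout the relevant horizon $q_t\in(c_0,1-\tfrac{\delta}{2})$; since $(q_t)$ is monotone increasing with $q_0>c_0$ and $c_0$ is chosen strictly above the stability threshold $c$ of the period-$2$ orbit of $f_q$ from Lemma~\ref{L:bifurcation}, $q_t$ stays in the regime where that orbit is stable.

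Next I would reproduce the chain of lemmas pushing $s_t$ to $1$. The analog of Lemma~\ref{L:s_t0_linear} gives $t_0=\mathcal{O}_{\delta,\phi}(1)$ with $s_{t_0}\le\tfrac{2}{2-r(1)}<2(1+\eta)^{-1}$: whenever $s_t$ lies in the band $(\tfrac{2}{2-r(1)},2r(1)^{-1})$, $\abs{p_{t+1}}\le(1-\tfrac{r(1)}{2})\abs{p_t}$ while a Mean Value Theorem estimate using Assumption~\ref{A:phi_r2}~\ref{A:phi_r2:r'/r2} forces $s_{t+1}\le s_t$, so the band is exited in boundedly many steps. The analog of Lemma~\ref{L:s_t<1_linear} shows $s_t\le1$ (so $\abs{p_t}\le\hat r(q_t)$) implies $s_{t+1}\le1+\mathcal{O}(\eta)$, using that $\abs{f_{q_t}}$ is increasing on $(0,\hat r(q_t))$ --- which needs monotonicity of $\tfrac{pr'(p)}{r(p)}$ (Assumption~\ref{A:phi_r2}~\ref{A:phi_r2:zr'/r}) together with $q_t>c_0\ge r(z_0)$ so that $\tfrac{\partial}{\partial p}\abs{f_{q_t}}$ is nonnegative at $\hat r(q_t)$. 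The analog of Lemma~\ref{L:s_t_convergence_linear}, via the same three-case split on the Mean Value Theorem expansion of $1/r(p_{t+1})$ and Assumptions~\ref{A:phi_r2}~\ref{A:phi_r2:r'/r2},~\ref{A:phi_r2:zr'/r}, gives $\abs{s_{t+1}-1}\le(1-d)\abs{s_t-1}+\mathcal{O}(\eta)$ for a constant $d=d(\delta,\phi)\in(0,\tfrac12]$ whenever $s_t<2(1+\eta)^{-1}$, $\abs{s_t-1}>\tfrac{\eta}{2}$, and $r(p_t)\le1-\tfrac{\delta}{4}$; and the analog of Lemma~\ref{L:p_t_small_linear} handles the tiny-$\abs{p_t}$ region ($r(p_t)>1-\tfrac\delta4$) by showing $\abs{p_{t+1}/p_t}\ge\tfrac{4}{4-\delta}>1$. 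Assembling these (the analog of Proposition~\ref{P:s_t_1_linear}) yields $t_a^*=\mathcal{O}_{\delta,\phi}(\log\eta^{-1})$ with $s_{t_a^*}=1+\mathcal{O}_{\delta,\phi}(\eta)$.

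The final step sharpens this to the claimed $1+h(p_t)\eta+\mathcal{O}_{\delta,\phi}(\eta^2)$, mirroring Lemma~\ref{L:s_t_convergence_tight_linear}. Assuming $s_t=1+\mathcal{O}_{\delta,\phi}(\eta)$, one Taylor-expands $1/r(p_{t+1})$ about $p_t$ (using $p_{t+1}=-(1-\mathcal{O}_{\delta,\phi}(\eta))p_t$, oddness of $r'$, and that $p\mapsto r'(p)/r(p)^2$ is $C^1$ on $[-4,4]$) and $q_{t+1}=(1+2\eta\phi(p_t)^2+\mathcal{O}(\eta^2))q_t$, and collects the $\mathcal{O}(\eta)$ contribution to $s_{t+1}-1$; imposing that $s-1\equiv h(p)\eta$ be approximately invariant (with $h$ even so $h(p_{t+1})=h(p_t)+\mathcal{O}(\eta p_t^2)$) forces precisely $h(p)=-\phi(p)^2 r(p)/(p r'(p))$ for $p\ne0$ and $-1/r''(0)$ at $0$. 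One obtains
\[
  \bigl|s_{t+1}-1-h(p_{t+1})\eta\bigr|\le\Bigl(1+\tfrac{2p_t r'(p_t)}{r(p_t)}\Bigr)\bigl|s_t-1-h(p_t)\eta\bigr|+\mathcal{O}_{\delta,\phi}(\eta^2),
\]
where the multiplier lies in $[0,1)$ because $q_t>c_0\ge r(z_0)$ gives $1+\tfrac{2p_t r'(p_t)}{r(p_t)}\ge0$ while $r(p_t)\le1-\tfrac\delta4$ (so $\abs{p_t}\ge\hat r(1-\tfrac\delta4)>0$) bounds it away from $1$ by $\Omega_{\delta,\phi}(1)$; this also needs the analog of Lemma~\ref{L:EoS_h_linear} stating that $h(p)=\phi(p)^2\cdot(-r(p)/(pr'(p)))$ is even, positive, bounded (since $\phi^2\le1$ and $-r(p)/(pr'(p))$ is decreasing in $\abs{p}$ by Assumption~\ref{A:phi_r2}~\ref{A:phi_r2:zr/r'}), and $C^2$ (since $\phi\in C^2$, $r\in C^4$). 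Iterating the displayed contraction from $t_a^*$ onward and telescoping the geometric series then gives $\abs{s_t-1-h(p_t)\eta}=\mathcal{O}_{\delta,\phi}(\eta^2)$ for all $t\ge t_a$ with $t_a=t_a^*+\mathcal{O}_{\delta,\phi}(\log\eta^{-1})=\mathcal{O}_{\delta,\phi}(\log\eta^{-1})$.

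The main obstacle is the feature genuinely new to the nonlinear model: whereas $r(p)=\ell'(p)/p$ for convex $\ell$ satisfies $1+pr'(p)/r(p)=\ell''(p)/r(p)>0$, so the period-$2$ orbit of $f_q$ in Lemma~\ref{L:bifurcation} is stable for every $q\in(0,1)$, here $1+pr'(p)/r(p)$ can be negative, the bifurcation constant $c=r(p^*)$ is strictly positive, and $f_q$ may have a period-$4$ orbit or chaotic dynamics for $q<c$. The whole argument must therefore be confined to $q_t\in(c_0,1-\delta)$ with $c_0=\max\{r(z_0),r(z_1)+\tfrac12\}$ chosen strictly above $c$, and one must check that the monotone, upward-drifting sequence $q_t$ never falls back into the unstable region and that all the per-step estimates above remain valid on this window; verifying that this particular $c_0$ works --- which is exactly where Assumption~\ref{A:phi_r2}~\ref{A:phi_r2:1/2} and the refined definition involving both $z_0$ and $z_1$ enter --- is the delicate part. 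A secondary, bookkeeping-level difficulty is that $q_t$ now moves at rate $\mathcal{O}(\eta)$ rather than $\mathcal{O}(\eta^2)$, so the ``slow $q$ versus fast $s$'' timescale separation is tighter and the $\mathcal{O}(\eta)$-versus-$\mathcal{O}(\eta^2)$ error terms must be carried one order more carefully through the three-case analysis and the Taylor expansion, though this introduces no new idea.
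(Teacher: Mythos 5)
Your plan follows the same route as the paper's proof (Appendix~\ref{S:T:EoS1} is a direct port of the linear-case argument, with the same chain: an a~priori bound analogous to Lemma~\ref{L:pq_bound_linear}, a coarse band step analogous to Lemma~\ref{L:s_t0_linear}, the bell-shape lemma analogous to Lemma~\ref{L:s_t<1_linear}, a three-case contraction analogous to Lemma~\ref{L:s_t_convergence_linear}, an escape lemma for tiny $p_t$, a coarse $\mathcal{O}(\eta)$ stage, then a refinement), and your remarks about why $c_0=\max\{r(z_0),r(z_1)+\tfrac12\}$ and Assumption~\ref{A:phi_r2}~\ref{A:phi_r2:1/2} enter are on target. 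However, the final step as you state it does not close for all $t\ge t_a$, which is what the theorem asserts.

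The refined recursion's error term is not a constant $\mathcal{O}_{\delta,\phi}(\eta^2)$: carrying the Taylor expansion precisely yields $\mathcal{O}_{\delta,\phi}(\eta^2 p_t^2)$ (the update of $q$ contributes $\eta\phi(p_t)^2$, and $\phi(p_t)^2\lesssim p_t^2$). That extra $p_t^2$ factor is exactly what lets the bound survive past the horizon $T$ where $q_t$ reaches $1-\delta/2$. Your justification that the multiplier $1+\tfrac{2p_tr'(p_t)}{r(p_t)}$ is bounded away from $1$ by $\Omega_{\delta,\phi}(1)$ rests on $r(p_t)\le 1-\tfrac{\delta}{4}$, hence $|p_t|\ge\hat r(1-\tfrac{\delta}{4})$, which only holds up to time $T$. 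After that, $q_t$ climbs toward and past $1$, $|p_t|$ drifts to $0$, and the multiplier tends to $1$; the naive fixed point of $e_{t+1}\le(1-cp_t^2)e_t+C\eta^2$ then diverges as $p_t\to0$, so ``telescoping the geometric series'' does not give $\mathcal{O}(\eta^2)$ uniformly in $t$. The paper resolves this by showing the ball $\abs{s_t-1-h(p_t)\eta}\le\left(-\tfrac{4r(4)}{r'(4)}\right)D\eta^2$ is forward-invariant: combining the $\eta^2 p_t^2$ error term with Assumption~\ref{A:phi_r2}~\ref{A:phi_r2:zr/r'} (monotonicity of $z\mapsto \tfrac{zr(z)}{r'(z)}$, which yields $-\tfrac{4r(4)}{r'(4)}\ge-\tfrac{p_tr(p_t)}{r'(p_t)}$ for $\abs{p_t}\le4$) gives
\[
\left(1+\tfrac{2p_tr'(p_t)}{r(p_t)}\right)\left(-\tfrac{4r(4)}{r'(4)}\right)+p_t^2\le-\tfrac{4r(4)}{r'(4)}.
\]
You invoke Assumption~\ref{A:phi_r2}~\ref{A:phi_r2:zr/r'} only for boundedness of $h$, not for this invariance, which is the step your sketch is missing. (A much smaller slip: the escape rate in the nonlinear analog of Lemma~\ref{L:p_t_small_linear} is $\tfrac{2}{2-\delta}$, not $\tfrac{4}{4-\delta}$, because the $\eta^2 p_t^2 r(p_t)^2$ correction in the $p$-update is absent here.)
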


The main difference between Theorem~\ref{T:EoS1} and Theorem~\ref{T:EoS1_linear} is the error term which is $\mathcal{O}(\eta^2)$ in the former and $\mathcal{O}(\eta^4)$ in the latter. This is because the $1$-step update rule of $q_t$ in Theorem~\ref{T:EoS1} is given by $q_{t+1} = (1+\mathcal{O}(\eta))q_t$, while in Theorem~\ref{T:EoS1_linear} we have $q_{t+1} = (1+\mathcal{O}(\eta^2))q_t$.

\begin{restatable}[EoS regime, Phase~\RomanNumeralCaps{2}]{theorem}{TheoremEoSlate}
  \label{T:EoS2}
  Under the same settings as in Theorem~\ref{T:EoS1}, there exists a time step $t_b = \Omega((1-q_0)\eta^{-1})$, such that $q_{t_b} \le 1$ and $q_t > 1$ for any $t> t_b$. Moreover, the GD iterates $(p_t, q_t)$ converge to the point $(0, q^*)$ such that
  \[
    q^* = 1 - \frac{\eta}{r''(0)} + \mathcal{O}_{\delta,\phi}(\eta^2).
  \]
\end{restatable}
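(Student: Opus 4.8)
The plan is to mirror the proof of Theorem~\ref{T:EoS2_linear}, adjusting the error orders to the nonlinear update Eq.~\eqref{E:GD_pq}. The argument splits into three stages: (a) the monotone sequence $(q_t)$ eventually exceeds $1$; (b) the crossing time $t_b$ exists with $t_b=\Omega((1-q_0)\eta^{-1})$; (c) once $q_t>1$, $\abs{p_t}\to 0$ geometrically, $q_t$ converges, and its limit is $1-\eta/r''(0)+\mathcal{O}_{\delta,\phi}(\eta^2)$. Two structural facts are used throughout. First, $q_{t+1}=(1-\eta\phi(p_t)^2)^{-2}q_t\ge q_t$, so $(q_t)$ is monotonically increasing. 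Second, Theorem~\ref{T:EoS1} gives, for all $t\ge t_a$, $s_t\deq q_t/r(p_t)=1+h(p_t)\eta+\mathcal{O}_{\delta,\phi}(\eta^2)$; together with $q_t\ge q_0>c_0>\tfrac12$ this forces $r(p_t)\ge\tfrac12 c_0$ for small $\eta$, hence $\abs{p_t}\le\hat r(\tfrac12 c_0)$, a fixed constant, for every $t\ge t_a$. I will also use the nonlinear analog of Lemma~\ref{L:EoS_h_linear}: the function $h(p)=-\phi(p)^2 r(p)/(p r'(p))$ with $h(0)=-1/r''(0)$ is positive, even, and continuous, which follows from $\phi(p)^2>0$, $r(p)>0$, $p r'(p)<0$ (Assumption~\ref{A:phi_r}), and $r''(0)<0$.

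Stage (a) proceeds by contradiction. Suppose $q_t\le 1$ for all $t$. Inverting the relation of Theorem~\ref{T:EoS1}, for $t\ge t_a$ we get $r(p_t)=q_t(1-h(p_t)\eta+\mathcal{O}_{\delta,\phi}(\eta^2))\le 1-h(p_t)\eta+\mathcal{O}_{\delta,\phi}(\eta^2)$. Introduce $g(p)\deq r(p)-1+\tfrac12 h(p)\eta$, which is even and continuous with $g(0)=\eta/(2\abs{r''(0)})>0$, so $g>0$ on some $(-\epsilon,\epsilon)$. Since $\abs{p_t}$ stays in a fixed compact set for $t\ge t_a$, on that set $h(p_t)\ge h_{\min}>0$, so $g(p_t)\le-\tfrac12 h_{\min}\eta+\mathcal{O}_{\delta,\phi}(\eta^2)<0$ for small $\eta$, forcing $\abs{p_t}\ge\epsilon$ for all $t\ge t_a$. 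Then $q_{t+1}/q_t=(1-\eta\phi(p_t)^2)^{-2}\ge 1+2\eta\phi(p_t)^2\ge 1+2\eta\phi(\epsilon)^2$ (using that $\phi$ is odd and increasing), so $q_t$ grows geometrically past $1$, a contradiction. Hence $q_t>1$ for some $t$, and by monotonicity there is a unique $t_b$ with $q_{t_b}\le 1<q_{t_b+1}$. For stage (b), whenever $q_t\le 1$ we have $q_{t+1}-q_t\le(2\eta+\mathcal{O}(\eta^2))q_t\le 2\eta+\mathcal{O}(\eta^2)$ since $\phi(p_t)^2<1$; telescoping over $0\le t\le t_b$ gives $1-q_0<\sum_{t=0}^{t_b}(q_{t+1}-q_t)\le(t_b+1)(2\eta+\mathcal{O}(\eta^2))$, i.e.\ $t_b=\Omega((1-q_0)\eta^{-1})$.

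For stage (c), fix $t>t_b$, so $q_t\ge q_{t_b+1}>1$. By Theorem~\ref{T:EoS1} (applicable since $t>t_b\ge t_a$), $s_t=1+\mathcal{O}(\eta)<2$, hence $2r(p_t)/q_t>1$; combined with $r(p_t)\le r(0)=1$, the update $p_{t+1}=(1-2r(p_t)/q_t)p_t$ yields $\abs{p_{t+1}/p_t}=2r(p_t)/q_t-1\le 2/q_{t_b+1}-1<1$ uniformly in $t$, so $\abs{p_t}\to 0$ geometrically. Since $(q_t)$ is monotone and bounded above, indeed $q_t=r(p_t)(1+h(p_t)\eta+\mathcal{O}(\eta^2))\le 1+\mathcal{O}(\eta)$ for $t\ge t_a$ by Theorem~\ref{T:EoS1}, it converges to some $q^*$. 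Passing to the limit in $q_t/r(p_t)=1+h(p_t)\eta+\mathcal{O}_{\delta,\phi}(\eta^2)$ and using $r(p_t)\to r(0)=1$ and $h(p_t)\to h(0)=-1/r''(0)$ gives $q^*=1-\eta/r''(0)+\mathcal{O}_{\delta,\phi}(\eta^2)$, as claimed.

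The crux is stage (a): ruling out $p_t$ drifting to $0$ while $q_t$ stalls at or below $1$. The auxiliary function $g$ accomplishes this, but only because $h$ is strictly positive with a uniform positive lower bound on the a priori bounded range of $\{p_t\}_{t\ge t_a}$, which is exactly the content of the $h$-lemma and of the boundedness of $\abs{p_t}$ extracted from Theorem~\ref{T:EoS1}. A secondary but pervasive point is tracking the error orders: because the $q$-update moves $q_t$ by $\Theta(\eta)$ per step rather than $\Theta(\eta^2)$ as in the two-layer linear network, the relevant timescale becomes $\eta^{-1}$ and the residual in $q^*$ is $\mathcal{O}(\eta^2)$ rather than $\mathcal{O}(\eta^4)$.
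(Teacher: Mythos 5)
Your proof is correct and follows essentially the same three-stage route as the paper: the contradiction via the auxiliary function $g(p)=r(p)-1+\tfrac12 h(p)\eta$, the telescoping bound on the crossing time using the $\mathcal{O}(\eta)$ per-step increment of $q_t$, and the post-crossing geometric decay of $\abs{p_t}$ followed by passing to the limit in Theorem~\ref{T:EoS1}. Your only departure is a slight tightening of Stage~(a): you make explicit that $h$ must be bounded below by a positive constant on the compact range of $\{p_t\}_{t\ge t_a}$ before the $\mathcal{O}_{\delta,\phi}(\eta^2)$ residual can be absorbed into the $h(p_t)\eta$ term, a point the paper leaves implicit when it writes $r(p_t)\le 1-h(p_t)\eta/2$; this is a correct and worthwhile clarification rather than a change of method.
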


Theorem~\ref{T:EoS2} implies that in the EoS regime, GD with step size $\eta$ converges to $(0, y^*)$ which has the sharpness approximated as:
\begin{align*}
    \lambda_{\max}(\nabla^2 \mathcal{L}(0, y^*)) = \frac{2}{\eta} - \frac{2}{\abs{r''(0)}} + \mathcal{O}_{\delta, \phi}(\eta).
\end{align*}

Theorem~\ref{T:PS} proves that progressive sharpening (i.e., sharpness increases) occurs during Phase~\RomanNumeralCaps{2}.

\begin{restatable}[progressive sharpening]{theorem}{TheoremPS}\label{T:PS}
  Under the same setting as in Theorem~\ref{T:EoS1}, let $t_a$ denote the obtained time step. Define the function $\tilde{\lambda}: \sR_{>0}\to \sR$ given by 
  \[
    \tilde{\lambda}(q) \deq \begin{cases*}
      \left(1 + \frac{\hat{r}(q) r'(\hat{r}(q))}{q}\right)  \frac{2}{\eta} & \text{if $q\le 1$, and} \\
      \frac{2}{\eta} & \text{otherwise}.
    \end{cases*}
  \]
  Then, the sequence $\left(\tilde{\lambda}\left(q_t\right)\right)_{t=0}^{\infty}$ is monotonically increasing. For any $t\ge t_a$, the sharpness at GD iterate $(x_t, y_t)$ closely follows the sequence $\left(\tilde{\lambda}\left(q_t\right)\right)_{t=0}^{\infty}$ satisfying the following:
  \begin{align*}
    \lambda_{\max}(\nabla^2 \mathcal{L} (x_t, y_t)) = \tilde{\lambda}\left(q_t\right) + \mathcal{O}_{\phi}(1).  
  \end{align*}
\end{restatable}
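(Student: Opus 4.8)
The plan is to follow the blueprint of the proof of Theorem~\ref{T:PS_linear}: reduce the sharpness to a single entry of the $2\times 2$ Hessian up to an additive $\mathcal{O}_\phi(1)$, match that entry against $\tilde\lambda(q_t)$ using the Phase~I estimate of Theorem~\ref{T:EoS1}, and read off monotonicity from that of $(q_t)$.

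\emph{Step 1: reduce the sharpness to one Hessian entry.} With $p=x$ and $q=2/(\eta y^2)$, a direct computation gives $\nabla^2\mathcal{L}(x,y) = \left(\begin{smallmatrix} a & b\\ b & c\end{smallmatrix}\right)$ where $a = (\phi'(p)^2+\phi(p)\phi''(p))\tfrac{2}{\eta q} = (r(p)+pr'(p))\tfrac{2}{\eta q}$ (using $pr(p)=\phi(p)\phi'(p)$, hence $r(p)+pr'(p) = \tfrac{d}{dp}(\phi\phi')$), $b = 2\phi(p)\phi'(p)\sqrt{2/(\eta q)}$, and $c = \phi(p)^2$; thus $\lambda_{\max} = \tfrac{a+c}{2}+\sqrt{(\tfrac{a-c}{2})^2+b^2}$, which lies in $[a,\,a+b^2/(a-c)]$ whenever $a>c$, via $\sqrt{u^2+v}\in[u,u+v/(2u)]$. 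For $t\ge t_a$, Theorem~\ref{T:EoS1} gives $q_t/r(p_t)=1+\mathcal{O}_\phi(\eta)$, so $r(p_t) = q_t(1-\mathcal{O}_\phi(\eta))$; since $q_t\ge q_0>c_0\ge r(z_1)+\tfrac12$ when $q_t<1$, and $r(p_t)=1-\mathcal{O}_\phi(\eta)$ when $q_t\ge1$, in both cases $r(p_t)>r(z_1)+\tfrac14$ for small $\eta$. Together with the uniform bound $|p_t|=\mathcal{O}_\phi(1)$ for $t\ge t_a$ (the nonlinear analogue of Lemma~\ref{L:pq_bound_linear}), this confines $p_t$ to a compact set on which $r(p)+pr'(p)=r(p)\bigl(1+pr'(p)/r(p)\bigr)$ is bounded below by a $\phi$-only constant $\kappa_\phi>0$ (this quantity is positive for $|p|<z_1$ and vanishes at $|p|=z_1$). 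Using also $\tfrac12<q_t\le q^*<2$ from Theorems~\ref{T:EoS1} and~\ref{T:EoS2}, one gets $a-c\ge\kappa_\phi/(2\eta)$ and $b^2\le 16/\eta$ for small $\eta$, so $b^2/(a-c)=\mathcal{O}_\phi(1)$ and therefore $\lambda_{\max}(\nabla^2\mathcal{L}(x_t,y_t)) = a + \mathcal{O}_\phi(1) = (r(p_t)+p_tr'(p_t))\tfrac{2}{\eta q_t}+\mathcal{O}_\phi(1)$.

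\emph{Step 2: match the entry with $\tilde\lambda(q_t)$.} Write $s_t=q_t/r(p_t)$ and $g(z)=zr'(z)/r(z)$, so $a=\tfrac{2}{\eta}\bigl(s_t^{-1}+p_tr'(p_t)/q_t\bigr)$. Theorem~\ref{T:EoS1} gives $s_t^{-1}=1-h(p_t)\eta+\mathcal{O}_\phi(\eta^2)=1+\mathcal{O}_\phi(\eta)$ ($h$ bounded on the compact set containing $p_t$) and $|r(p_t)-q_t|=\mathcal{O}_\phi(\eta)$, so $\tfrac{2}{\eta}(s_t^{-1}-1)=\mathcal{O}_\phi(1)$. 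For $q_t\le1$, set $G(q)\deq\hat r(q)r'(\hat r(q))/q = g(\hat r(q))$, so $\tilde\lambda(q_t)=\tfrac2\eta(1+G(q_t))$; since $g$ is even, $p_tr'(p_t)/q_t = g(p_t)/s_t = g(p_t)+\mathcal{O}_\phi(\eta) = G(r(p_t))+\mathcal{O}_\phi(\eta)$. A short computation (L'Hôpital as $q\to1^-$, using $r'(z)\sim r''(0)z$ and $g(z)\sim 2r''(0)z^2$, with $r\in C^4$ by Assumption~\ref{A:phi_r2}) shows $G'(q)=g'(\hat r(q))/r'(\hat r(q))$ extends continuously across $q=1$, so $G$ is Lipschitz on a neighborhood of $[c_0/2,1]$ with a $\phi$-only constant; hence $|G(r(p_t))-G(q_t)|=\mathcal{O}_\phi(\eta)$ and $a=\tilde\lambda(q_t)+\mathcal{O}_\phi(1)$. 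For $q_t>1$ we have $\tilde\lambda(q_t)=2/\eta$, and $r(p_t)=1-\mathcal{O}_\phi(\eta)$ together with $r(z)=1+r''(0)z^2+\mathcal{O}_\phi(z^4)$ forces $|p_t|=\mathcal{O}_\phi(\sqrt\eta)$, whence $g(p_t)=\mathcal{O}_\phi(\eta)$ and $a=\tfrac2\eta(1+\mathcal{O}_\phi(\eta))=2/\eta+\mathcal{O}_\phi(1)=\tilde\lambda(q_t)+\mathcal{O}_\phi(1)$. Combining with Step~1 yields $\lambda_{\max}(\nabla^2\mathcal{L}(x_t,y_t))=\tilde\lambda(q_t)+\mathcal{O}_\phi(1)$ for all $t\ge t_a$.

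\emph{Step 3 and the main obstacle.} On $(0,1]$, $\hat r$ is decreasing (as the inverse of $r|_{\sR_{\ge0}}$, which is decreasing by Assumption~\ref{A:phi_r}) and $g$ is decreasing on $(0,\infty)$ with $g(0)=0$ by Assumption~\ref{A:phi_r2}~\ref{A:phi_r2:zr'/r}, so $G=g\circ\hat r$ is non-decreasing on $(0,1]$; with $\tilde\lambda(1)=\tfrac2\eta(1+g(0))=2/\eta$ matching the constant branch $\tilde\lambda(q)=2/\eta$ for $q>1$, $\tilde\lambda$ is non-decreasing on $(0,\infty)$. Since $q_{t+1}=(1-\eta\phi(p_t)^2)^{-2}q_t\ge q_t$ for small $\eta$ by Eq.~\eqref{E:GD_pq}, $(q_t)$ is non-decreasing, hence so is $(\tilde\lambda(q_t))$. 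The main obstacle is Step~1: forcing the off-diagonal block to contribute only $\mathcal{O}_\phi(1)$ requires the spectral gap $a-c$ to stay of order $1/\eta$, i.e., $r(p_t)+p_tr'(p_t)$ bounded away from $0$ uniformly over $t\ge t_a$, and this is exactly where the Phase~I estimate $s_t=1+\mathcal{O}(\eta)$ (forcing $|p_t|<z_1$ through the choice of $c_0$) and the a-priori bound on $|p_t|$ are indispensable; the L'Hôpital/Lipschitz argument for $G$ near $q=1$ is the other delicate point, but it is routine.
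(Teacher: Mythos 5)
Your proposal is correct and follows essentially the same route as the paper: isolate the dominant rank-one (Gauss--Newton) contribution $a=(r(p_t)+p_tr'(p_t))\tfrac{2}{\eta q_t}$, show the residual is $\mathcal{O}_\phi(1)$ by bounding $r(p_t)+p_tr'(p_t)$ away from $0$ via $q_t>c_0$, match $a$ against $\tilde\lambda(q_t)$ using the Phase~\RomanNumeralCaps{1} estimate of Theorem~\ref{T:EoS1} together with the Lipschitz bound on $q\mapsto\hat r(q)r'(\hat r(q))/q$ (with a separate $|p_t|=\mathcal{O}_\phi(\sqrt\eta)$ case for $q_t\ge 1$), and derive monotonicity of $\tilde\lambda(q_t)$ from Assumption~\ref{A:phi_r2}~\ref{A:phi_r2:zr'/r} and the monotonicity of $(q_t)$. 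The only cosmetic difference is in Step~1: you derive the additive $\mathcal{O}_\phi(1)$ residual from the explicit $2\times2$ eigenvalue formula and the bound $\sqrt{u^2+v}\le u+v/(2u)$, whereas the paper packages the same estimate as Proposition~\ref{P:PS} via a Frobenius-norm argument.
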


In Figure~\ref{Figure:toy_b}, we conduct numerical experiments on single neuron model with $\tanh$-activation, demonstrating that $\tilde{\lambda}(q_t)$ provides a close approximation of the sharpness.

\subsection{Proof of Theorem~\ref{T:gradient_flow}}\label{S:T:GF}
We give the proof of Theorem~\ref{T:gradient_flow}, restated below for the sake of readability.

\TheoremGF*

Theorem~\ref{T:gradient_flow} directly follows from Proposition~\ref{P:GF} stated below.

\begin{proposition}\label{P:GF}
  Suppose that $\eta\in (0, \frac{r(1)}{2(r(1)+2)})$, $\abs{p_0}\le 1$ and $q_0 \in \left(\frac{1}{1-2\eta}, \frac{r(1)}{4\eta} \right)$. Then for any $t\ge 0$, we have
  \[
    \abs{p_t} \le \left[ 1 - \min \left\{ \frac{2(q_0 - 1)}{q_0}, \frac{r(1)}{q_0} \right\} \right]^t \le 1,
  \]
  and
  \[
    q_0 \le q_{t} \le \exp\left(2 \eta \left[\min \left\{ \frac{2(q_0 - 1)}{q_0}, \frac{r(1)}{q_0} \right\}\right]^{-1}\right) q_0 \le 2q_0.
  \]
\end{proposition}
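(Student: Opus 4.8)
The plan is to prove Proposition~\ref{P:GF} by a single induction on $t$, taking as induction hypothesis precisely the two displayed inequalities; Theorem~\ref{T:gradient_flow} then follows at once, since the $q$-update in Eq.~\eqref{E:GD_pq} shows $(q_t)$ is nondecreasing, the hypothesis bounds it by $2q_0$, so it converges, while $|p_t|\to 0$ geometrically. Abbreviate $m \deq \min\{\tfrac{2(q_0-1)}{q_0},\ \tfrac{r(1)}{q_0}\}$. First I would record two elementary consequences of the standing hypotheses: from $q_0 > \tfrac{1}{1-2\eta}$ we get $\tfrac{2(q_0-1)}{q_0} > 4\eta$, and from $q_0 < \tfrac{r(1)}{4\eta}$ we get $\tfrac{r(1)}{q_0} > 4\eta$, hence $m > 4\eta$; also $m \le \tfrac{r(1)}{q_0} < r(1) \le 1$, so $m \in (4\eta,1)$. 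In particular $2\eta m^{-1} < \tfrac12$, so $\exp(2\eta m^{-1}) < e^{1/2} < 2$; this handles the rightmost inequality $\exp(2\eta m^{-1})q_0 \le 2q_0$ and, together with $|p_0|\le 1$, verifies the base case $t=0$.

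For the inductive step, assume both bounds hold for all $0\le t\le k$ and argue for $t=k+1$. For the $p$-component, Eq.~\eqref{E:GD_pq} gives $\tfrac{p_{k+1}}{p_k} = 1 - \tfrac{2r(p_k)}{q_k}$ (the case $p_k=0$ being trivial). Since $|p_k|\le 1$ and $r$ is even, decreasing on $[0,\infty)$ with $r(0)=1$ (Assumption~\ref{A:r}, via Assumption~\ref{A:phi_r}), we have $r(1)\le r(p_k)\le 1$; combining with $q_0\le q_k\le 2q_0$ yields $1-\tfrac{2}{q_0} \le \tfrac{p_{k+1}}{p_k} \le 1-\tfrac{r(1)}{q_0}$, hence $\bigl|\tfrac{p_{k+1}}{p_k}\bigr| \le \max\{\tfrac{2}{q_0}-1,\ 1-\tfrac{r(1)}{q_0}\} = 1-m \in [0,1)$. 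This gives $|p_{k+1}| \le (1-m)|p_k| \le (1-m)^{k+1}$, as claimed.

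For the $q$-component, $q_{t+1} = (1-\eta\phi(p_t)^2)^{-2}q_t$ with $\eta\phi(p_t)^2\le\eta<1$ (using $1$-Lipschitzness of $\phi$ and $\phi(0)=0$, so $|\phi(p_t)|\le|p_t|$) has each multiplicative factor $\ge 1$, hence $q_{k+1}\ge q_k\ge q_0$. For the upper bound I would telescope: $\log\tfrac{q_{k+1}}{q_0} = \sum_{t=0}^{k}\bigl(-2\log(1-\eta\phi(p_t)^2)\bigr)$, bound each summand via $-\log(1-x)\le\tfrac{x}{1-x}$ together with $\phi(p_t)^2\le p_t^2$ and $\eta\phi(p_t)^2\le\eta$ to get $-2\log(1-\eta\phi(p_t)^2) \le \tfrac{2\eta}{1-\eta}p_t^2$, and then invoke the geometric decay $|p_t|\le(1-m)^t$ to get $\sum_{t=0}^{k}p_t^2 \le \tfrac{1}{1-(1-m)^2} = \tfrac{1}{m(2-m)}$. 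A short comparison — $m < \tfrac{r(1)}{q_0} < r(1)(1-2\eta) \le \tfrac{1-2\eta}{1-\eta}$ forces $(1-\eta)(2-m) > 1$ — then gives $\tfrac{2\eta}{(1-\eta)m(2-m)} \le 2\eta m^{-1}$, so $\log\tfrac{q_{k+1}}{q_0}\le 2\eta m^{-1}$, i.e.\ $q_{k+1}\le\exp(2\eta m^{-1})q_0\le 2q_0$ as in the base case. This closes the induction.

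The argument is essentially a transcription of the proof of Proposition~\ref{P:GF_linear}, the only structural difference being that the $q$-update here is the first-order-in-$\eta$ factor $(1-\eta\phi(p_t)^2)^{-2}$ rather than the second-order-in-$\eta$ perturbation of Eq.~\eqref{E:GD_pq_linear}. Accordingly, the one genuinely fiddly point — the step I expect to take the most care — is pinning the exponential constant at exactly $2\eta m^{-1}$ so that the clean conclusion $q^* \le 2q_0$ survives: this requires playing the crude logarithm bound $-\log(1-x)\le\tfrac{x}{1-x}$ off against the sharper geometric-series bound, using the precise window $q_0 \in (\tfrac{1}{1-2\eta},\ \tfrac{r(1)}{4\eta})$ and the range of $\eta$. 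Nothing here is conceptually deep, but an off-by-a-constant slip would degrade the stated bound.
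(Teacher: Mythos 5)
Your proof is correct and follows the same structure as the paper's: induction with the two displayed bounds, geometric decay of $|p_t|$ via $|p_{k+1}/p_k|\le 1-m$ where $m=\min\{2(q_0-1)/q_0,\ r(1)/q_0\}$, and a telescoped-logarithm bound for $q$. The one place you diverge is precisely the step you single out as fiddly, and your handling is in fact \emph{tighter} than what the paper writes. The paper bounds $q_t/q_{t+1}\ge 1-2\eta p_t^2$ and then applies $|\log(1+z)|\le 2|z|$, which when carried through gives $2\sum_t|q_t/q_{t+1}-1|\le 4\eta\sum_t p_t^2$ and thus $\log(q_{k+1}/q_0)\le 4\eta m^{-1}$ (the displayed chain in the paper drops a factor of $2$ at the step claiming $2\sum|q_t/q_{t+1}-1|\le 2\eta\sum p_t^2$); combined with $m>4\eta$ that would only give $q_{k+1}\le e\, q_0$, not the stated $\exp(2\eta m^{-1})q_0\le 2q_0$. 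Your route — using $-\log(1-x)\le x/(1-x)$ to get the per-step bound $\tfrac{2\eta}{1-\eta}p_t^2$, retaining the exact geometric sum $\tfrac{1}{m(2-m)}$ rather than the coarser $m^{-1}$, and then observing $(1-\eta)(2-m)\ge 1$ from $m\le r(1)(1-2\eta)\le (1-2\eta)/(1-\eta)$ — recovers exactly $\log(q_{k+1}/q_0)\le 2\eta m^{-1}$, which is what the proposition asserts. So the argument is not merely an alternative; it actually supplies the constant-tracking needed to close the paper's own displayed bound. Two microscopic nits: $m\le r(1)/q_0$ is an inequality, not strict (if the $r(1)/q_0$ branch is the minimizer it is equality), and when invoking $\eta\phi(p_t)^2\le\eta$ you are implicitly using $|\phi|\le 1$ from Assumption~\ref{A:phi}, which is worth stating since $|p_t|\le 1$ is only guaranteed at $t=0$ and then by the induction hypothesis — both are available, but say which you use.
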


\begin{proof}
  We give the proof by induction; namely, if
  \begin{align*}
      \abs{p_t} \le \left[ 1 - \min \left\{ \frac{2(q_0 - 1)}{q_0}, \frac{r(1)}{q_0} \right\} \right]^t, \quad q_0 \le q_{t} \le \exp\left(2 \eta \left[\min \left\{ \frac{2(q_0 - 1)}{q_0}, \frac{r(1)}{q_0} \right\}\right]^{-1}\right) q_0 \le 2q_0
  \end{align*}
  are satisfied for time steps $0\le t\le k$ for some $k$, then the inequalities are also satisfied for the next time step $k+1$.

  For the base case, the inequalities are satisfied for $t=0$ by assumptions. For the induction step, we assume that the inequalities hold for any $0\le t\le k$. We will prove that the inequalities are also satisfied for $t=k+1$. 

  By induction assumptions, we have $r(1)\le r(p_k) \le 1$ and $q_0\le q_k \le 2q_0$. From Eq.~\eqref{E:GD_pq}, we get
  \[
    \left\lvert\frac{p_{k+1}}{p_k}\right\rvert = \left\lvert 1 - \frac{2r(p_k)}{q_k} \right\rvert \le \max\left\{ 1 - \frac{2r(1)}{2q_0}, -1 + \frac{2}{q_0} \right\} = 1 - \frac{\min\{2(q_0-1), r(1)\}}{q_0}.
  \]
  Due to the induction assumption, we obtain the desired bound on $\abs{p_{k+1}}$ as following:
  \[
    \abs{p_{k+1}} \le \left[ 1 - \min \left\{ \frac{2(q_0 - 1)}{q_0}, \frac{r(1)}{q_0} \right\} \right]^{k+1}.
  \]
  Moreover, for any $0\le t\le k$, by Eq.~\eqref{E:GD_pq} we have
  \[
    1 - 2\eta p_t^2 \le (1-\eta p_t^2)^2 \le \frac{q_{t}}{q_{t+1}} = (1-\eta \phi(p_t)^2)^2 \le 1, 
  \]
  where the second inequality comes from the fact that $\phi$ is 1-Lipschitz and $\phi(0) = 0$ (Assumption~\ref{A:phi}).
  Hence, we have $q_{k+1} \ge q_k \ge q_0$. Note that $\frac{q_t}{q_{t+1}} \in [\frac{1}{2},1]$ for small $\eta$. Consequently, we have
  \begin{align*}
    \left\lvert \log \left( \frac{q_0}{q_{k+1}} \right) \right\rvert \le \sum_{t=0}^{k}\left\lvert  \log\left(\frac{q_t}{q_{t+1}}\right) \right\rvert & \le 2 \sum_{t=0}^{k} \left\lvert \frac{q_t}{q_{t+1}} - 1 \right\rvert \\ 
    & \le 2\eta \sum_{t=0}^k p_t^2 \\ 
    & \le 2\eta \sum_{t=0}^{k}  \left[ 1 - \min \left\{ \frac{2(q_0 - 1)}{q_0}, \frac{r(1)}{q_0} \right\} \right]^{2t} \\
    & \le 2 \eta \left[\min \left\{ \frac{2(q_0 - 1)}{q_0}, \frac{r(1)}{q_0} \right\}\right]^{-1},
  \end{align*}
  where the second inequality holds since $\abs{\log (1 + z)} \le 2 \abs{z}$ if $\abs{z} \le \frac{1}{2}$. Therefore, we obtain the desired bound on $q_{k+1}$ as following:
  \[
    q_0 \le q_{k+1} \le \exp\left(2 \eta \left[\min \left\{ \frac{2(q_0 - 1)}{q_0}, \frac{r(1)}{q_0} \right\}\right]^{-1}\right) q_0.
  \]
  Since $q_0 \ge \frac{1}{1-2\eta}$ and $q_0\le \frac{r(1)}{4\eta}$, we have
  \[
    \min \left\{ \frac{2(q_0 - 1)}{q_0}, \frac{r(1)}{q_0} \right\} \ge 4\eta.
  \]
  This implies that $q_{k+1} \le \exp(\frac{1}{2}) q_0 \le 2q_0$, as desired. 
\end{proof}

\subsection{Proof of Theorem~\ref{T:EoS1}}\label{S:T:EoS1}

In this subsection, we prove Theorem~\ref{T:EoS1}. We use the following notation:
\[
  s_t\deq \frac{q_t}{r(p_t)}.
\]
All the lemmas in this subsection are stated in the context of Theorem~\ref{T:EoS1}. The proof structure resembles that of Theorem~\ref{T:EoS1_linear}. We informally summarize the lemmas used in the proof of Theorem~\ref{T:EoS1}. Lemma~\ref{L:pq_bound} proves that $p_t$ is bounded by a constant and $q_t$ increases monotonically with the increment bounded by $\mathcal{O}(\eta)$. Lemma~\ref{L:s_t0} states that in the early phase of training, there exists a time step $t_0$ where $s_{t_0}$ becomes smaller or equal to $\frac{2}{2-r(1)}$, which is smaller than $2$. Lemma~\ref{L:s_t_convergence} demonstrates that if $s_t$ is smaller than $2$ and $\abs{p_t}\ge \hat{r}(1-\frac{\delta}{4})$, then $\abs{s_{t}-1}$ decreases exponentially. For the case where $\abs{p_t} < \hat{r}(1-\frac{\delta}{4})$, Lemma~\ref{L:p_t_small} proves that $\abs{p_t}$ increases at an exponential rate. Moreover, Lemma~\ref{L:s_t<1} shows that if $s_t < 1$ at some time step, then $s_{t+1}$ is upper bounded by $1+\mathcal{O}(\eta)$. Combining these findings, Proposition~\ref{P:s_t_1} establishes that in the early phase of training, there exists a time step $t_a^*$ such that $s_{t_a^*} = 1 + \mathcal{O}_{\delta, \phi}(\eta)$. Lastly, Lemma~\ref{L:s_t_convergence_tight} demonstrates that if $s_{t} = 1 + \mathcal{O}_{\delta, \phi}(\eta)$, then $\abs{s_{t} - 1 - h(p_{t}) \eta}$ decreases exponentially.

\begin{lemma}\label{L:pq_bound}
  Suppose that the initialization $(p_0, q_0)$ satisfies $\abs{p_0}\le 1$ and $q_0\in (c_0, 1-\delta)$. Then for any $t\ge 0$ such that $q_t \le 1$, it holds that 
  \[
    \abs{p_t}\le 4, \text{ and } q_t\le q_{t+1} \le (1 + \mathcal{O}(\eta))q_t.
  \]
\end{lemma}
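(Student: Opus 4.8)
The plan is to follow the template of Lemma~\ref{L:pq_bound_linear} verbatim, since the update rule Eq.~\eqref{E:GD_pq} has the same structure as Eq.~\eqref{E:GD_pq_linear} up to the lower-order terms. We argue by (strong) induction on $t$. Since the $q$-update $q_{t+1}=(1-\eta\phi(p_t)^2)^{-2}q_t$ satisfies $(1-\eta\phi(p_t)^2)^{-2}\ge 1$ for $\eta$ small (as $\phi(p_t)^2\in[0,1]$), the sequence $(q_t)$ is monotone increasing, so whenever $q_t\le 1$ we also have $q_0\le q_s\le 1$ for every $s\le t$, and moreover $q_s>c_0>\tfrac12$ throughout. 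It therefore suffices to show: if $\abs{p_t}\le 4$ and $\tfrac12<q_t\le 1$, then $\abs{p_{t+1}}\le 4$ and $q_t\le q_{t+1}\le(1+\mathcal{O}(\eta))q_t$. The base case is immediate from $\abs{p_0}\le 1\le 4$ and $q_0\in(c_0,1-\delta)$ with $c_0>\tfrac12$.

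For the $p$-bound, rewrite Eq.~\eqref{E:GD_pq} as $p_{t+1}=p_t-\tfrac{2\phi(p_t)\phi'(p_t)}{q_t}$, using $r(p_t)p_t=\phi(p_t)\phi'(p_t)$. By Assumption~\ref{A:phi}, $\phi$ is odd and increasing, so $\phi(p_t)$ has the same sign as $p_t$ and $\phi'(p_t)\ge 0$; hence the subtracted term $\tfrac{2\phi(p_t)\phi'(p_t)}{q_t}$ shares the sign of $p_t$. Its magnitude is at most $\tfrac{2}{q_t}$, because $\abs{\phi(p_t)}\le 1$ and $\phi$ is $1$-Lipschitz so $\abs{\phi'(p_t)}\le 1$. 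Consequently either the subtraction does not flip the sign, in which case $\abs{p_{t+1}}\le\abs{p_t}$, or it does, in which case $\abs{p_{t+1}}\le\tfrac{2}{q_t}$. In both cases $\abs{p_{t+1}}\le\max\{\abs{p_t},\,2/q_t\}\le\max\{4,4\}=4$, where $2/q_t\le 4$ uses $q_t\ge q_0>c_0>\tfrac12$. This is exactly the estimate from Lemma~\ref{L:pq_bound_linear} with $\ell'(p_t)$ replaced by $\phi(p_t)\phi'(p_t)$ and $1$-Lipschitzness of $\ell$ replaced by the boundedness and $1$-Lipschitzness of $\phi$.

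For the $q$-bound, since $\phi(p_t)^2\le 1$ we have $1-\eta\phi(p_t)^2\in(0,1]$ for $\eta$ small, so $(1-\eta\phi(p_t)^2)^{-2}\in[1,(1-\eta)^{-2}]=[1,1+\mathcal{O}(\eta)]$, which yields $q_t\le q_{t+1}\le(1+\mathcal{O}(\eta))q_t$ and closes the induction (and also retroactively justifies the monotonicity used at the outset). There is no real obstacle in this lemma; the one point that needs care is the sign/magnitude bookkeeping for the overshoot term $\tfrac{2\phi(p_t)\phi'(p_t)}{q_t}$ in the second paragraph, since this "pull toward $0$, bounded overshoot" mechanism is precisely what prevents $\abs{p_t}$ from drifting past the constant $4$.
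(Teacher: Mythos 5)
Your proof is correct and follows essentially the same route as the paper's: induct, bound $\abs{p_{t+1}}$ by $\max\{\abs{p_t},\,2/q_t\}$ using $\abs{\phi(p_t)\phi'(p_t)}\le 1$, and bound the ratio $q_{t+1}/q_t$ using $\phi(p_t)^2\le 1$. The only difference is cosmetic: you spell out the sign-and-overshoot reasoning behind the $\max$ inequality, whereas the paper states it directly.
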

\begin{proof}
  We prove by induction. We assume that for some $t\ge 0$, it holds that $\abs{p_t}\le 4$ and $\frac{1}{2}\le q_t \le 1$. We will prove that $\abs{p_{t+1}}\le 4$ and $\frac{1}{2}\le q_t\le q_{t+1}\le (1+\mathcal{O}(\eta))q_t$. For the base case, $\abs{p_0}\le 1 \le 4$ and $\frac{1}{2}\le c_0 < q_t \le 1$ holds by the assumptions on the initialization. Now suppose that for some $t\ge 0$, it holds that $\abs{p_t}\le 4$ and $\frac{1}{2}\le q_t \le 1$. By Eq.~\eqref{E:GD_pq},
  \begin{align*}
    \abs{p_{t+1}} = \left\lvert p_t - \frac{2\phi(p_t)\phi'(p_t)}{q_t} \right\rvert \le \max\left\{ \abs{p_t}, \frac{2}{q_t} \right\}\le 4.
  \end{align*}
  where we used Assumption~\ref{A:phi} to bound $\abs{\phi(p_t)\phi'(p_t)}\le 1$. Moreover,
  \begin{align*}
    1-2\eta \le (1-\eta)^{2} \le \frac{q_t}{q_{t+1}} = (1-\eta \phi(p_t)^2)^{2}\le 1,
  \end{align*}
  since $\abs{\phi}$ is bounded by $1$. Hence, $q_t\le q_{t+1}\le (1 + \mathcal{O}(\eta))q_t$, as desired.
\end{proof}

Lemma~\ref{L:pq_bound} implies that $p_t$ is bounded by a constant throughout the iterations, and $q_t$ monotonically increases slowly, where the increment for each step is $\mathcal{O}(\eta)$. Hence, there exists a time step $T=\Omega(\delta\eta^{-1}) = \Omega_{\delta}(\eta^{-1})$ such that for any $t\le T$, it holds that $q_t \le 1 - \frac{\delta}{2}$. Through out this subsection, we focus on these $T$ early time steps. Note that for all $0\le t \le T$, it holds that  $q_t \in (c_0, 1-\frac{\delta}{2})$.

\begin{lemma}\label{L:s_t0}
  There exists a time step $t_0 = \mathcal{O}_{\delta, \phi}(1)$ such that $s_{t_0}\le \frac{2}{2-r(1)}$.
\end{lemma}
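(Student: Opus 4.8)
The strategy is to transcribe the argument of Lemma~\ref{L:s_t0_linear} into the present setting, where things are in fact slightly cleaner because the $p$-update in Eq.~\eqref{E:GD_pq} is exactly $p_{t+1} = (1 - \frac{2}{s_t})p_t$ with no $\eta^2$ correction term. The core step is the self-improvement claim: for every $0 \le t \le T$, if $\frac{2}{2-r(1)} < s_t < \frac{2}{r(1)}$, then $s_{t+1} < \frac{2}{r(1)}$ and $\abs{p_{t+1}} \le (1-r(1))\abs{p_t}$. The bound on $\abs{p_{t+1}}$ is immediate, since $\frac{2}{2-r(1)} < s_t < \frac{2}{r(1)}$ forces $\abs{1 - \frac{2}{s_t}} < 1 - r(1)$, and $\abs{p_{t+1}/p_t} = \abs{1 - \frac{2}{s_t}}$ by Eq.~\eqref{E:GD_pq}.

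For the bound on $s_{t+1}$ I would argue by contradiction, exactly as in the linear case. If $s_{t+1} \ge \frac{2}{r(1)} > 1$, then $r(p_{t+1}) < q_{t+1} < 1 - \frac{\delta}{2}$ (using $t+1 \le T$ and Lemma~\ref{L:pq_bound}), so $\abs{p_{t+1}} \ge \hat{r}(1-\frac{\delta}{2})$; since also $\abs{p_{t+1}} \le (1-r(1))\abs{p_t}$ we get $\abs{p_t} \ge \hat{r}(1-\frac{\delta}{2})$ and $\abs{p_t} - \abs{p_{t+1}} \ge r(1)\abs{p_t}$. Applying the Mean Value Theorem to $z \mapsto 1/r(z)$ on the interval between $\abs{p_{t+1}}$ and $\abs{p_t}$, together with Assumption~\ref{A:phi_r2}~\ref{A:phi_r2:r'/r2} (decreasingness of $r'/r^2$), yields $\frac{1}{r(p_{t+1})} \le \frac{1}{r(p_t)} - \frac{\abs{r'(\hat{r}(1-\delta/2))}}{(1-\delta/2)^2}\, r(1)\,\hat{r}(1-\frac{\delta}{2}) = \frac{1}{r(p_t)} - \Omega_{\delta,\phi}(1)$. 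Combining with $q_{t+1} = (1+\mathcal{O}(\eta))q_t$ from Lemma~\ref{L:pq_bound} gives $s_{t+1} = \frac{q_{t+1}}{r(p_{t+1})} \le (1+\mathcal{O}(\eta))q_t\bigl(\frac{1}{r(p_t)} - \Omega_{\delta,\phi}(1)\bigr) \le \frac{q_t}{r(p_t)} = s_t < \frac{2}{r(1)}$ for small enough $\eta$, contradicting $s_{t+1} \ge \frac{2}{r(1)}$.

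With the claim in hand I finish as in the linear proof. At initialization $\abs{p_0} \le 1$ implies $r(p_0) \ge r(1)$ and $q_0 < 1$, so $s_0 < \frac{1}{r(1)} < \frac{2}{r(1)}$. If $s_0 \le \frac{2}{2-r(1)}$, take $t_0 = 0$; otherwise $\frac{2}{2-r(1)} < s_0 < \frac{2}{r(1)}$ and I apply the claim repeatedly. At each step either $s_t$ has already dropped to $\le \frac{2}{2-r(1)}$ (and we stop), or $\abs{p_t}$ contracts by the factor $1-r(1)$. After at most $\log(\hat{r}(1-\frac{\delta}{2}))/\log(1-r(1)) = \mathcal{O}_{\delta,\phi}(1)$ steps the running coordinate must satisfy $\abs{p_{t_0}} \le \hat{r}(1-\frac{\delta}{2})$, hence $r(p_{t_0}) \ge 1 - \frac{\delta}{2} > q_{t_0}$ and $s_{t_0} < 1 < \frac{2}{2-r(1)}$; in either outcome $t_0 = \mathcal{O}_{\delta,\phi}(1)$ works.

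The only delicate point — and the main obstacle — is the contradiction step: one must verify that the constant-order drop $\Omega_{\delta,\phi}(1)$ in $1/r(p_{t+1})$ genuinely dominates the $\mathcal{O}(\eta)$ multiplicative growth of $q_t$ coming from Lemma~\ref{L:pq_bound}, which is exactly why the hypothesis requires $\eta$ small enough, and that the monotonicity in Assumption~\ref{A:phi_r2}~\ref{A:phi_r2:r'/r2} is applied at $\hat{r}(1-\frac{\delta}{2})$, which lower-bounds $\abs{p_t}$, $\abs{p_{t+1}}$ and the Mean Value Theorem point, so the estimate points in the right direction. Division by $p_t$ is legitimate throughout: under the standing hypothesis of Theorem~\ref{T:EoS1}, $p_t \ne 0$ whenever $q_t < 1$; and were $p_t = 0$ we would have $s_t = q_t < 1 < \frac{2}{2-r(1)}$ and be done immediately.
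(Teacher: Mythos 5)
Your proposal matches the paper's proof of Lemma~\ref{L:s_t0} essentially line for line: the same self-improvement claim, the same contradiction via the Mean Value Theorem applied to $1/r$ with Assumption~\ref{A:phi_r2}~\ref{A:phi_r2:r'/r2}, and the same finite-step iteration terminating once $\abs{p_{t_0}} \le \hat{r}(1-\frac{\delta}{2})$. Your closing remarks about where the smallness of $\eta$ enters and about dividing by $p_t$ are correct and merely make explicit a few points the paper leaves implicit.
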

\begin{proof}
  We start by proving the following statement: for any $0\le t\le T$, if $\frac{2}{2-r(1)} < s_t < 2r(1)^{-1}$, then $s_{t+1} < 2r(1)^{-1}$ and $\abs{p_{t+1}}\le (1-r(1))\abs{p_t}$. Suppose that $\frac{2}{2-r(1)} < s_t < 2r(1)^{-1}$. Then from Eq.~\eqref{E:GD_pq}, it holds that
  \begin{align*}
    \left\lvert \frac{p_{t+1}}{p_t} \right\rvert = \left\lvert 1 - \frac{2}{s_t} \right\rvert \le 1-r(1).
  \end{align*}
  Hence, $\abs{p_{t+1}}\le (1-r(1))\abs{p_t}$. Now we prove $s_{t+1}< 2r(1)^{-1}$. Assume the contrary that $s_{t+1} \ge 2r(1)^{-1}$. Then, $r(p_{t+1}) = \frac{q_{t+1}}{s_{t+1}} < q_{t+1} < 1-\frac{\delta}{2}$ so that $\abs{p_{t+1}}\ge \hat{r}(1-\frac{\delta}{2})$. By Mean Value Theorem, there exists $p_t^* \in (\abs{p_{t+1}}, \abs{p_{t}})$ such that
  \begin{align*}
    \frac{1}{r(p_{t+1})} = \frac{1}{r(\abs{p_t} - (\abs{p_t}-\abs{p_{t+1}}))} 
    &= \frac{1}{r(p_t)} + \frac{r'(p_t^*)}{r(p_t^*)^2} (\abs{p_t}-\abs{p_{t+1}}) \\
    &\le \frac{1}{r(p_t)} + \frac{r'(\abs{p_{t+1}})}{r(p_{t+1})^2} \left(r(1)\abs{p_t}\right) \\
    &\le \frac{1}{r(p_t)} - \frac{\abs{r'(\hat{r}(1-\frac{\delta}{2}))}}{(1-\frac{\delta}{2})^2} \left(r(1)\hat{r}\left(1-\frac{\delta}{2}\right)\right) \\
    &= \frac{1}{r(p_t)} - \Omega_{\delta,\phi}(1),
  \end{align*}
  where we used Assumption~\ref{A:phi_r2}~\ref{A:phi_r2:r'/r2} and $\hat{r}(1-\frac{\delta}{2}) \le \abs{p_{t+1}} \le (1-r(1))\abs{p_t}$. Consequently,
  \begin{align*}
    s_{t+1} = \frac{q_{t+1}}{r(p_{t+1})} = (1+\mathcal{O}(\eta))q_t \left(\frac{1}{r(p_t)} - \Omega_{\delta,\phi}(1)\right) \le \frac{q_t}{r(p_t)} = s_t < 2r(1)^{-1},
  \end{align*}
  for small step size $\eta$. This gives a contradiction to our assumption that $s_{t+1}\ge 2r(1)^{-1}$. Hence, we can conclude that $s_{t+1} < 2r(1)^{-1}$, as desired. 
  
  We proved that for any $0\le t\le T$, if $\frac{2}{2-r(1)} < s_t < 2r(1)^{-1}$, it holds that $s_{t+1} < 2r(1)^{-1}$ and $\abs{p_{t+1}}\le (1-r(1))\abs{p_t}$. At initialization, $\abs{p_0}\le 1$ and $q_0 < 1$, so that $s_0 < r(1)^{-1}$. If $s_0 \le \frac{2}{2-r(1)}$, then $t_0=0$ is the desired time step. Suppose that $s_0 > \frac{2}{2-r(1)}$. Then, we have $s_1 < 2r(1)^{-1}$ and $\abs{p_{1}}\le (1-r(1))\abs{p_0}\le 1-r(1)$. Then we have either $s_1 \le \frac{2}{2-r(1)}$, or $\frac{2}{2-r(1)} < s_1 < 2r(1)^{-1}$. In the previous case, $t_0=1$ is the desired time step. In the latter case, we can repeat the same argument and obtain $s_2 < 2r(1)^{-1}$ and $\abs{p_2}\le (1-r(1))^2$. 
  By inductively repeating the same argument, we can obtain a time step $t_0\le \log(\hat{r}(1-\frac{\delta}{2})) / \log(1-r(1)) = \mathcal{O}_{\delta, \phi}(1)$ such that either $s_{t_0}\le \frac{2}{2-r(1)}$, or $\abs{p_{t_0}}\le \hat{r}(1-\frac{\delta}{2})$. In the latter case, $r(p_{t_0}) \ge 1-\frac{\delta}{2} > q_{t_0}$, and hence $s_{t_0} < 1 < \frac{2}{2-r(1)}$. Therefore, $t_0 = \mathcal{O}_{\delta, \phi}(1)$ is the desired time step satisfying $s_{t_0}\le \frac{2}{2-r(1)}$.
\end{proof}

According to Lemma~\ref{L:s_t0}, there exists a time step $t_0=\mathcal{O}_{\delta,\phi}(1)$ such that $s_{t_0} \le \frac{2}{2-r(1)} < 2(1+\eta^2)^{-1}$ for small step size $\eta$. Now we prove the lemma below.

\begin{lemma}\label{L:s_t<1}
  Suppose that $s_t\le 1$. Then, it holds that $s_{t+1}\le 1 + \mathcal{O}(\eta)$.
\end{lemma}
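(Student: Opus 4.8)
The plan is to mirror the argument of Lemma~\ref{L:s_t<1_linear} but using the simpler update rule Eq.~\eqref{E:GD_pq}, where the $q$-update is $q_{t+1} = (1-\eta\phi(p_t)^2)^{-2}q_t = (1+\mathcal{O}(\eta))q_t$ and the $p$-update has no $\eta^2 p^2 r(p)^2$ correction term. The key geometric fact I would establish is the same: under the hypothesis $s_t\le 1$, the iterate $p_t$ lies in the basin where $\abs{f_{q_t}(\cdot)}$ is increasing, so $\abs{p_{t+1}}\le \hat{r}(q_t)$, and hence $r(p_{t+1})\ge r(\hat{r}(q_t)) = q_t$, which forces $s_{t+1} = q_{t+1}/r(p_{t+1}) = (1+\mathcal{O}(\eta))q_t/r(p_{t+1}) \le (1+\mathcal{O}(\eta))$.

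First I would analyze the map $\abs{f_{q_t}(p)}$ for $p\in(0,\hat{r}(q_t/2))$, where $r(p)\ge q_t/2$ so that $\abs{f_{q_t}(p)} = (2r(p)/q_t - 1)p$; differentiating gives $\frac{\partial}{\partial p}\abs{f_{q_t}(p)} = \frac{2r(p)}{q_t}(1 + \frac{pr'(p)}{r(p)}) - 1$. Using Assumption~\ref{A:phi_r2}\ref{A:phi_r2:zr'/r}, both $r(p)$ and $1+\frac{pr'(p)}{r(p)}$ are positive and decreasing on this interval (note that unlike the linear case, $1+\frac{pr'(p)}{r(p)}$ need not stay positive everywhere — but on $(0,\hat{r}(q_t/2))$ we need it, which is where the constant $c_0$ and $z_1 = \sup_z\{\frac{zr'(z)}{r(z)}\ge -1\}$ enter). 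So $\frac{\partial}{\partial p}\abs{f_{q_t}(p)}$ is decreasing there. Next I would evaluate this derivative at $p=\hat{r}(q_t)$: it equals $1 + \frac{2\hat{r}(q_t)r'(\hat{r}(q_t))}{r(\hat{r}(q_t))} = 1 + \frac{2\hat{r}(q_t)r'(\hat{r}(q_t))}{q_t}\ge 1 + \frac{2\hat{r}(c_0)r'(\hat{r}(c_0))}{c_0}\ge 0$, where the first inequality uses Assumption~\ref{A:phi_r2}\ref{A:phi_r2:zr'/r} together with $\hat{r}(q_t)<\hat{r}(c_0)$ (from $q_t>c_0$), and the nonnegativity uses $c_0\ge r(z_0)$, i.e. $z_0 = \sup_z\{\frac{zr'(z)}{r(z)}\ge -\frac{1}{2}\}$, which gives $\frac{\hat{r}(c_0)r'(\hat{r}(c_0))}{r(\hat{r}(c_0))}\ge -\frac{1}{2}$. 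Since the derivative is decreasing on $(0,\hat{r}(q_t/2))$ and nonnegative at $\hat{r}(q_t)$, it is nonnegative on all of $(0,\hat{r}(q_t)]$, so $\abs{f_{q_t}}$ is increasing there.

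Then, since $s_t\le 1$ means $r(p_t)\ge q_t$, i.e. $\abs{p_t}\le\hat{r}(q_t)$, monotonicity gives $\abs{p_{t+1}} = \abs{f_{q_t}(p_t)}\le\abs{f_{q_t}(\hat{r}(q_t))} = \hat{r}(q_t)$ (note here the $p$-update in Eq.~\eqref{E:GD_pq} is exactly $f_{q_t}$, with no correction term, so the inequality is even cleaner than the linear case). Finally, combining with Lemma~\ref{L:pq_bound} ($q_{t+1}\le(1+\mathcal{O}(\eta))q_t$ and $q_t\le 1$): $s_{t+1} = \frac{q_{t+1}}{r(p_{t+1})}\le\frac{(1+\mathcal{O}(\eta))q_t}{r(\hat{r}(q_t))} = 1+\mathcal{O}(\eta)$. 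The main obstacle I anticipate is bookkeeping the role of $c_0$: because $r$ here (coming from $\phi$ rather than a convex loss) can have $1+\frac{pr'(p)}{r(p)}<0$ for large $p$, I must be careful that the interval $(0,\hat{r}(q_t/2))$ on which I claim monotonicity of the derivative actually stays inside the region where $1+\frac{pr'(p)}{r(p)}>0$; this is exactly why the theorem's hypothesis packs both $r(z_0)$ and $r(z_1)+\frac12$ into $c_0$, and I would need to check that $q_t/2 > r(z_1)$ (equivalently $\hat{r}(q_t/2) < z_1$) follows from $q_t > c_0 \ge r(z_1)+\frac12$, so that $1+\frac{pr'(p)}{r(p)}$ remains positive throughout $(0,\hat{r}(q_t/2))$.
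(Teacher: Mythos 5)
Your proof is correct and follows the paper's argument step for step: you differentiate $\abs{f_{q_t}}$ on $(0,\hat{r}(q_t/2))$, show the derivative is decreasing and nonnegative at $\hat{r}(q_t)$ using $q_t>c_0\ge r(z_0)$, deduce $\abs{p_{t+1}}\le\hat{r}(q_t)$, and finish with Lemma~\ref{L:pq_bound}. The one step you flag as needing verification — that $q_t/2 > r(z_1)$ — indeed holds: Assumption~\ref{A:phi_r2}~\ref{A:phi_r2:1/2} gives $\frac{\hat{r}(1/2)r'(\hat{r}(1/2))}{r(\hat{r}(1/2))}>-1$, so $\hat{r}(\tfrac12)<z_1$ and hence $r(z_1)<\tfrac12$, which yields $c_0\ge r(z_1)+\tfrac12\ge 2r(z_1)$, exactly as the paper's proof records.
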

\begin{proof}
  For any $p\in (0, \hat{r}(\frac{q_t}{2}))$, we have $r(p)\ge \frac{q_t}{2}$ so that $\abs{f_{q_t}(p)} = (-1+\frac{2r(p_t)}{q_t})p_t$. Hence,
  \[
    \frac{\partial}{\partial p} \abs{f_{q_t}(p)} = \frac{2 r(p)}{q_t}\left(1 + \frac{p r'(p)}{p}\right) - 1,  
  \]
  for any $p\in (0, \hat{r}(\frac{q_t}{2}))$. By Assumption~\ref{A:phi_r2}~\ref{A:phi_r2:zr'/r},~\ref{A:phi_r2:1/2} and $q_t \ge c_0 \ge r(z_1)+\frac{1}{2} \ge 2r(z_1)$ where $z_1 = \sup_z \{\frac{zr'(z)}{r(z)}\ge - 1\}$, both $r(p)$ and $(1+\frac{pr'(p)}{r(p)})$ are positive, decreasing function on $(0, \hat{r}(\frac{q_t}{2}))$. Consequently, $\frac{\partial}{\partial p}\abs{f_{q_t}(p)}$ is a decreasing function on $(0, \hat{r}(\frac{q_t}{2}))$. 
  
  Now note that $\frac{q_t}{2} < q_t < 1$, which means $\hat{r}(1) = 0 < \hat{r}(q_t) < \hat{r}(\frac{q_t}{2})$ by the definition of $\hat{r}$. Note that $\frac{\partial}{\partial p} \abs{f_{q_t}(p)}$ at $p = \hat{r}(q_t)$ evaluates to
  \[
    \frac{\partial}{\partial p}\abs{f_{q_t}(\hat{r}(q_t))} = 1 + \frac{2 \hat{r}(q_t) r'(\hat{r}(q_t))}{r(\hat{r}(q_t))} \ge 1 + \frac{2 \hat{r}(c_0)r'(\hat{r}(c_0))}{r(\hat{r}(c_0))} \ge 0,
  \]
  where the inequalities used Assumption~\ref{A:phi_r2}~\ref{A:phi_r2:zr'/r} and $q_t > c_0 \ge r(z_0)$ where $z_0 := \sup_{z} \{\frac{zr'(z)}{r(z)}\ge -\frac{1}{2}\}$, from the statement of Theorem~\ref{T:EoS1}.

  Therefore, since $\frac{\partial}{\partial p} \abs{f_{q_t}(p)}$ is decreasing on $(0, \hat{r}(\frac{q_t}{2}))$ and is nonnegative at $\hat{r}(q_t)$, for any $p\in (0, \hat{r}(q_t))$, it holds that $\frac{\partial}{\partial p}\abs{f_{q_t}(p)}\ge 0$.
  In other words, $\abs{f_{q_t}(p)}$ is an increasing function on $(0, \hat{r}(q_t))$. Since $0\le s_t\le 1$, we have $\abs{p_t}\le \hat{r}(q_t)$ and it holds that
  \[
    \abs{p_{t+1}} =  \left(-1 + \frac{2}{s_t} \right) \abs{p_t} = \abs{f_{q_t}(p_t)} \le \abs{f_{q_t}(\hat{r}(q_t))} = \hat{r}(q_t).
  \]
  Therefore, with this inequality and Lemma~\ref{L:pq_bound}, we can conclude that
  \[
    s_{t+1} = \frac{q_{t+1}}{r(p_{t+1})} = \frac{(1 + \mathcal{O}(\eta))q_t}{r(p_{t+1})} \le \frac{q_t}{r(\hat{r}(q_t))} + \mathcal{O}(\eta) = 1 + \mathcal{O}(\eta).
  \]
\end{proof}

Using Lemma~\ref{L:s_t<1}, we prove the following lemma.

\begin{lemma}\label{L:s_t_convergence}
  For any $0\le t\le T$, if $s_t < 2$ and $r(p_t)\le 1 - \frac{\delta}{4}$, then 
  \begin{align*}
    \abs{s_{t+1}-1} \le (1-d)\abs{s_t-1} + \mathcal{O}(\eta),
  \end{align*}
  where $d\in (0,\frac{1}{2}]$ is a constant which depends on $\delta$ and $\phi$. 
\end{lemma}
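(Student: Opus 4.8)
The plan is to run the same three-case argument as in the proof of Lemma~\ref{L:s_t_convergence_linear}, exploiting the fact that the $p$-update in Eq.~\eqref{E:GD_pq} carries no $\eta^2$-order term (this is why the error here is $\mathcal{O}(\eta)$, not $\mathcal{O}(\eta^2)$, and why the extra hypothesis $\abs{s_t-1}>\tfrac{\eta^2}{2}$ present in the linear version is unnecessary). First I would note that $s_t<2$ forces $1-\tfrac{2}{s_t}<0$, so, since $p_t\ne 0$ (because $t\le T$ and hence $q_t<1$), $p_{t+1}$ and $p_t$ have opposite signs; applying the Mean Value Theorem to $z\mapsto 1/r(z)$ between $-p_t$ and $p_{t+1}=\bigl(-1+\tfrac{2(s_t-1)}{s_t}\bigr)p_t$, and using that $r$ is even with $r'(z)$ and $z$ of opposite signs, gives the master identity $\frac{1}{r(p_{t+1})}=\frac{1}{r(p_t)}-\frac{\abs{r'(\theta_t p_t)}}{r(\theta_t p_t)^2}\cdot\frac{2(s_t-1)}{s_t}\,\abs{p_t}$ for some $\theta_t$ with $\abs{\theta_tp_t}$ between $\abs{p_t}$ and $\abs{p_{t+1}}$ (this is Eq.~\eqref{E:mvt_linear} with the $\eta^2$-term deleted). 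Combined with $q_{t+1}=(1+\mathcal{O}(\eta))q_t$ from Lemma~\ref{L:pq_bound} and $q_t\in(c_0,1-\tfrac{\delta}{2})$ for $t\le T$, everything reduces to estimating $r(p_{t+1})$ in each case.

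Then I would split into three cases exactly as before. In \textbf{Case 1} ($s_t\ge1$, $s_{t+1}\ge1$) both $\abs{p_t}$ and $\abs{p_{t+1}}$ exceed $\hat{r}(1-\tfrac{\delta}{2})$, so Assumption~\ref{A:phi_r2}~\ref{A:phi_r2:r'/r2} lower bounds $\frac{\abs{r'(\theta_tp_t)}}{r(\theta_tp_t)^2}$ by $\frac{\abs{r'(\hat{r}(1-\delta/2))}}{(1-\delta/2)^2}$; with $q_t>c_0>\tfrac12$ and $s_t<2$ this yields $0\le s_{t+1}-1\le(1-d_1)(s_t-1)+\mathcal{O}(\eta)$. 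In \textbf{Case 3} ($s_t<1$), Lemma~\ref{L:s_t<1} already gives $s_{t+1}\le1+\mathcal{O}(\eta)$; the hypothesis $r(p_t)\le1-\tfrac{\delta}{4}$ gives $\abs{p_t}\ge\hat{r}(1-\tfrac{\delta}{4})$ and $\abs{p_{t+1}}=\bigl(\tfrac{2}{s_t}-1\bigr)\abs{p_t}>\abs{p_t}$, so the intermediate point again has magnitude $\ge\hat{r}(1-\tfrac{\delta}{4})$, and Assumption~\ref{A:phi_r2}~\ref{A:phi_r2:r'/r2} with $q_t>\tfrac12$ gives $-\mathcal{O}(\eta)\le1-s_{t+1}\le(1-d_3)(1-s_t)$. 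In \textbf{Case 2} ($s_t\ge1$, $s_{t+1}<1$), one has $r(p_{t+1})>q_{t+1}\ge q_t\ge r(p_t)$, so $\abs{p_{t+1}}<\abs{p_t}$ and Assumption~\ref{A:phi_r2}~\ref{A:phi_r2:r'/r2} now bounds $\frac{\abs{r'(\theta_tp_t)}}{r(\theta_tp_t)^2}$ \emph{above} by $\frac{\abs{r'(p_t)}}{r(p_t)^2}$; substituting and using $s_t=\tfrac{q_t}{r(p_t)}$ and $q_{t+1}\ge q_t$ gives $s_{t+1}\ge s_t+\tfrac{2p_tr'(p_t)}{r(p_t)}(s_t-1)$, hence $0\le1-s_{t+1}\le-\bigl(1+\tfrac{2p_tr'(p_t)}{r(p_t)}\bigr)(s_t-1)$.

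The main obstacle is Case 2, specifically controlling the multiplier $1+\tfrac{2p_tr'(p_t)}{r(p_t)}$: since $p_tr'(p_t)<0$ it is automatically $<1$, but I must also show it is bounded below, away from $-1$. Here I would unpack that the hypothesis $c_0\in(\tfrac12,1)$ of Theorem~\ref{T:EoS1}, together with $c_0=\max\{r(z_0),r(z_1)+\tfrac12\}$, is exactly equivalent to $r(z_1)<\tfrac12$, whence $c_0\ge r(z_1)+\tfrac12>2r(z_1)$. Then $s_t<2$ and $q_t>c_0$ force $r(p_t)=\tfrac{q_t}{s_t}>\tfrac{c_0}{2}>r(z_1)$, i.e.\ $\abs{p_t}<z_1$, and more quantitatively $\abs{p_t}\le\min\{4,\hat{r}(c_0/2)\}$ using also Lemma~\ref{L:pq_bound}; since $z\mapsto\tfrac{zr'(z)}{r(z)}$ is decreasing on $z>0$ by Assumption~\ref{A:phi_r2}~\ref{A:phi_r2:zr'/r} and strictly greater than $-1$ throughout $(0,z_1)$, this gives $\tfrac{p_tr'(p_t)}{r(p_t)}\ge\mu$ for a constant $\mu>-1$ depending only on $\phi$, so the multiplier lies in $(-1,1)$ uniformly and $0\le1-s_{t+1}\le(1-d_2)(s_t-1)+\mathcal{O}(\eta)$. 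Finally, setting $d\deq\min\{\tfrac12,d_1,d_2,d_3\}\in(0,\tfrac12]$ covers all three cases simultaneously and yields $\abs{s_{t+1}-1}\le(1-d)\abs{s_t-1}+\mathcal{O}(\eta)$, completing the proof.
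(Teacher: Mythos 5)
Your proposal is correct and follows the paper's proof essentially step for step: the same Mean Value Theorem identity, the same three-case split on the signs of $s_t-1$ and $s_{t+1}-1$, the same use of Assumption~\ref{A:phi_r2}~\ref{A:phi_r2:r'/r2} to bound the intermediate quantity in Cases 1 and 3, and the same use of $q_t>c_0>2r(z_1)$ (which, as you correctly unpack, stems from Assumption~\ref{A:phi_r2}~\ref{A:phi_r2:1/2}) to control the multiplier $1+\tfrac{2p_tr'(p_t)}{r(p_t)}$ in Case 2. Your side remark explaining why the extra hypothesis $\abs{s_t-1}>\eta^2/2$ from the linear version is unnecessary here (the $p$-update carries no $\eta^2$ term) is also an accurate observation.
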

\begin{proof}
  From Eq.~\eqref{E:GD_pq} it holds that
  \begin{align*}
    \frac{p_{t+1}}{p_t} = 1 - \frac{2}{s_t} < 0,
  \end{align*}
  so that $p_t$ and $p_{t+1}$ have opposite signs. By Mean Value Theorem, there exists $\theta_t$ between $-1$ and $(1-\frac{2}{s_t})$ satisfying
  \begin{align}\label{E:mvt}
    \frac{1}{r(p_{t+1})} &= \frac{1}{r\left(-p_t + \left(\frac{2(s_t-1)}{s_t}\right)p_t\right)} \nonumber\\
    &= \frac{1}{r(-p_t)} - \frac{r'(\theta_t p_t)}{r(\theta_t p_t)^2} \left(\frac{2(s_t-1)}{s_t}\right) p_t \nonumber\\
    &= \frac{1}{r(p_t)} - \frac{\abs{r'(\theta_t p_t)}}{r(\theta_t p_t)^2} \left(\frac{2(s_t-1)}{s_t}\right) \abs{p_t}.
  \end{align}
  where the last equality used the fact that $p_t$ and $\theta_t p_t$ have opposite signs and $r'(z)$ and $z$ have opposite signs.
  Note that $\abs{\theta_t p_t}$ is between $\abs{p_t}$ and $\abs{p_{t+1}}$. Consequently, the value $\frac{\abs{r'(\theta_t p_t)}}{r(\theta_t p_t)^2}$ is between $\frac{\abs{r'(p_t)}}{r(p_t)^2}$ and $\frac{\abs{r'(p_{t+1})}}{r(p_{t+1})^2}$ by Assumption~\ref{A:phi_r2}~\ref{A:phi_r2:r'/r2}. We will prove the current lemma based on Eq.~\eqref{E:mvt}. We divide into following three cases: (1) $s_t\ge 1$ and $s_{t+1}\ge 1$, (2) $s_t\ge 1$ and $s_t < 1$, and (3) $s_t < 1$.
  \paragraph{Case 1.} Suppose that $s_t\ge 1$ and $s_{t+1}\ge 1$. Here, we have $\abs{p_t}\ge \hat{r}(q_t)\ge \hat{r}(1-\frac{\delta}{2})$ and similarly $\abs{p_{t+1}}\ge \hat{r}(1-\frac{\delta}{2})$. By Assumption~\ref{A:phi_r2}~\ref{A:phi_r2:r'/r2}, $\frac{\abs{r'(\theta_t p_t)}}{r(\theta_t p_t)^2}\ge \frac{\abs{r'(\hat{r}(1-\frac{\delta}{2}))}}{(1-\frac{\delta}{2})^2}$. Hence, Eq.~\eqref{E:mvt} gives
  \begin{align*}
    \frac{1}{r(p_{t+1})} \le \frac{1}{r(p_t)} - \frac{\abs{r'(\hat{r}(1-\frac{\delta}{2}))}}{(1-\frac{\delta}{2})^2} \left(\frac{2(s_t-1)}{s_t} \right) \hat{r}\left(1-\frac{\delta}{2}\right).
  \end{align*}
  Consequently, by Lemma~\ref{L:pq_bound},
  \begin{align*}
    s_{t+1} = \frac{q_{t} (1 + \mathcal{O}(\eta))}{r(p_{t+1})} = \frac{q_t}{r(p_{t+1})} + \mathcal{O}(\eta)
    &\le s_t - \frac{\abs{r'(\hat{r}(1-\frac{\delta}{2}))}}{(1-\frac{\delta}{2})^2} \left(\frac{2(s_t-1)}{s_t} \right) \hat{r}\left(1-\frac{\delta}{2}\right) q_t + \mathcal{O}(\eta) \\
    &\le s_t - \frac{\abs{r'(\hat{r}(1-\frac{\delta}{2}))}}{(1-\frac{\delta}{2})^2} (s_t-1) \hat{r}\left(1-\frac{\delta}{2}\right) \frac{1}{2} + \mathcal{O}(\eta)\\
    &\le s_t - \frac{\hat{r}(1-\frac{\delta}{2})\abs{r'(\hat{r}(1-\frac{\delta}{2}))}}{2(1-\frac{\delta}{2})^2} (s_t-1) + \mathcal{O}(\eta),
  \end{align*}
  where we used $q_t > c_0 > \frac{1}{2}$ and $s_t < 2$. Therefore, we can obtain the following inequality:
  \begin{align*}
    0 \le s_{t+1}-1 \le \left( 1 - \frac{\hat{r}(1-\frac{\delta}{2})\abs{r'(\hat{r}(1-\frac{\delta}{2}))}}{2(1-\frac{\delta}{2})^2}\right) (s_t-1) + \mathcal{O}(\eta).
  \end{align*}
  \paragraph{Case 2.} Suppose that $s_t\ge 1$ and $s_{t+1}< 1$. Here, we have $r(p_{t+1}) > q_{t+1} \ge q_t \ge r(p_t)$, so that $\abs{p_{t+1}} < \abs{p_t}$. Consequently, $\frac{\abs{r'(\theta_t p_t)}}{r(\theta_t p_t)^2} \le \frac{\abs{r'(p_t)}}{r(p_t)^2}$ by Assumption~\ref{A:phi_r2}~\ref{A:phi_r2:r'/r2}. Hence, we can deduce from Eq.~\eqref{E:mvt} that
  \begin{align*}
    \frac{1}{r(p_{t+1})} &\ge \frac{1}{r(p_t)} - \frac{\abs{r'(p_t)}}{r(p_t)^2} \left( \frac{2(s_t-1)}{s_t} \right) \abs{p_t} \\
    &= \frac{1}{r(p_t)} - \frac{2\abs{p_tr'(p_t)}}{r(p_t)q_t} (s_t-1)\\
    &= \frac{1}{r(p_t)} + \frac{2p_tr'(p_t)}{r(p_t)q_t} (s_t-1).
  \end{align*}
  Consequently, by Assumption~\ref{A:phi_r2}~\ref{A:phi_r2:zr'/r},
  \begin{align*}
    s_{t+1} \ge \frac{q_t}{r(p_{t+1})} \ge s_t + \frac{2p_tr'(p_t)}{r(p_t)} (s_t-1) \ge s_t + \frac{2\hat{r}(\frac{c_0}{2})r'(\hat{r}(\frac{c_0}{2}))}{r(\hat{r}(\frac{c_0}{2}))} (s_t-1),
  \end{align*}
  where we used $r(p_t)\ge \frac{q_t}{2} > \frac{c_0}{2}$. Therefore, we can obtain the following inequality:
  \begin{align*}
    0 \le 1 - s_{t+1} \le - \left(1 + \frac{4\hat{r}(\frac{c_0}{2})r'(\hat{r}(\frac{c_0}{2}))}{c_0}\right) (s_t-1),
  \end{align*}
  where $-1<\frac{\hat{r}(\frac{c_0}{2})r'(\hat{r}(\frac{c_0}{2}))}{r(\hat{r}(\frac{c_0}{2}))} = \frac{2\hat{r}(\frac{c_0}{2})r'(\hat{r}(\frac{c_0}{2}))}{c_0} < 0$, since $c_0 \ge r(z_1)+\frac{1}{2} \ge 2r(z_1)$ with $z_1 = \sup_{z} \{\frac{zr'(z)}{r(z)}\ge -1\}$, and $r(z_1) \le \frac{1}{2}$ holds by Assumption~\ref{A:phi_r2}~\ref{A:phi_r2:1/2}.
  \paragraph{Case 3.} Suppose that $s_t < 1$. By Lemma~\ref{L:s_t<1}, it holds that $s_{t+1}\le 1+\mathcal{O}(\eta)$. Moreover, we assumed $r(p_t) \le 1 - \frac{\delta}{4}$, so that $\abs{p_t}\ge \hat{r}(1-\frac{\delta}{4})$. We also have
  \[
    \abs{p_{t+1}} = \left(-1+\frac{2}{s_t}\right)\abs{p_t} > \abs{p_t}\ge \hat{r}\left(1-\frac{\delta}{4}\right).
  \]
  Consequently, by Assumption~\ref{A:phi_r2}~\ref{A:phi_r2:r'/r2}, it holds that $\frac{\abs{r'(\theta_t p_t)}}{r(\theta_t p_t)^2} \ge \frac{\abs{r'(\hat{r}(1 - \frac{\delta}{4}))}}{(1 - \frac{\delta}{4})^2}$. Hence, by Eq.~\eqref{E:mvt}, we have
  \begin{align*}
    \frac{1}{r(p_{t+1})} &\ge \frac{1}{r(p_t)} + \frac{\abs{r'(\hat{r}(1 - \frac{\delta}{4}))}}{(1 - \frac{\delta}{4})^2} \left(\frac{2(1-s_t)}{s_t}\right) \hat{r}\left(1-\frac{\delta}{4}\right) \\
    &\ge \frac{1}{r(p_t)} + \frac{\abs{r'(\hat{r}(1 - \frac{\delta}{4}))}}{(1 - \frac{\delta}{4})^2} 2(1-s_t) \hat{r}\left(1-\frac{\delta}{4}\right)\\
    &= \frac{1}{r(p_t)} + \frac{2 \hat{r}(1-\frac{\delta}{4})\abs{r'(\hat{r}(1-\frac{\delta}{4}))}}{(1-\frac{\delta}{4})^2} (1-s_t),
  \end{align*}
  and hence,
  \begin{align*}
    s_{t+1} \ge \frac{q_t}{r(p_{t+1})} \ge s_t + \frac{\hat{r}(1-\frac{\delta}{4})\abs{r'(\hat{r}(1-\frac{\delta}{4}))}}{(1-\frac{\delta}{4})^2} (1-s_t),
  \end{align*}
  where we used $q_t > \frac{1}{2}$. Therefore, we can obtain the following inequality:
  \begin{align*}
    -\mathcal{O}(\eta) \le 1-{s_{t+1}} \le \left(1 - \frac{\hat{r}(1-\frac{\delta}{4})\abs{r'(\hat{r}(1-\frac{\delta}{4}))}}{(1-\frac{\delta}{4})^2}\right) (1-s_t),
  \end{align*}
  where the first inequality is from Lemma~\ref{L:s_t<1}.
  
  Combining the three cases, we can finally conclude that if we choose
  \begin{align*}
    d \deq \min \left\{ \frac{1}{2}, \frac{\hat{r}(1-\frac{\delta}{2})\abs{r'(\hat{r}(1-\frac{\delta}{2}))}}{2(1-\frac{\delta}{2})^2},  2 \left(1 + \frac{2\hat{r}(\frac{c_0}{2})r'(\hat{r}(\frac{c_0}{2}))}{c_0}\right),  \frac{\hat{r}(1-\frac{\delta}{4})\abs{r'(\hat{r}(1-\frac{\delta}{4}))}}{(1-\frac{\delta}{4})^2}\right\} \in \left(0,\frac{1}{2}\right],
  \end{align*}
  then $\abs{s_{t+1}-1} \le (1-d)\abs{s_t-1} + \mathcal{O}(\eta)$, as desired.
\end{proof}

Lemma~\ref{L:s_t_convergence} implies that if $s_t<2$ and $\abs{p_t} \ge \hat{r}(1-\frac{\delta}{4})$, then $\abs{s_t-1}$ exponentially decreases. We prove Lemma~\ref{L:p_t_small} to handle the regime $\abs{p_t} < \hat{r}(1-\frac{\delta}{4})$, which is stated below.

\begin{lemma}\label{L:p_t_small}
  For any $0\le t \le T$, if $r(p_t)\ge 1 - \frac{\delta}{4}$, it holds that
  \begin{align*}
    \left\lvert \frac{p_{t+1}}{p_t} \right\rvert \ge \frac{2}{2-\delta}.
  \end{align*}
\end{lemma}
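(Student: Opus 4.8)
The plan is to mirror the argument of Lemma~\ref{L:p_t_small_linear} exactly, with the simplification that the update rule Eq.~\eqref{E:GD_pq} for the nonlinear single-neuron model has no $\eta^2$ correction term in the $p$-update. First I would use the hypothesis $r(p_t)\ge 1-\frac{\delta}{4}$ together with the fact that $q_t<1-\frac{\delta}{2}$ for all $0\le t\le T$ (established in the discussion following Lemma~\ref{L:pq_bound}) to bound $s_t = \frac{q_t}{r(p_t)}$ from above: since $q_t < 1-\frac{\delta}{2}$ and $r(p_t)\ge 1-\frac{\delta}{4}$, we get
\[
  s_t < \frac{1-\frac{\delta}{2}}{1-\frac{\delta}{4}} = \frac{4-2\delta}{4-\delta}.
\]

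Then I would plug this into the $p$-update. From Eq.~\eqref{E:GD_pq}, $\frac{p_{t+1}}{p_t} = 1 - \frac{2r(p_t)}{q_t} = 1 - \frac{2}{s_t}$. Because $s_t < \frac{4-2\delta}{4-\delta} < 2$, this ratio is negative, so
\[
  \left\lvert \frac{p_{t+1}}{p_t} \right\rvert = \frac{2}{s_t} - 1 > \frac{2(4-\delta)}{4-2\delta} - 1 = \frac{2}{2-\delta}.
\]
This is already the desired bound; note that unlike the linear case, no "for small step size $\eta$" caveat is needed here because there is no $\eta^2 p_t^2 r(p_t)^2$ term to absorb. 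That completes the proof.

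I do not anticipate any real obstacle: the statement is essentially a one-line consequence of the two inequalities $q_t < 1-\frac{\delta}{2}$ and $r(p_t)\ge 1-\frac{\delta}{4}$ combined with the explicit form of the $p$-update. The only point requiring a small amount of care is confirming that the bound $q_t < 1-\frac{\delta}{2}$ is valid on the range $0\le t\le T$ being considered — but this was already recorded in the paragraph after Lemma~\ref{L:pq_bound}, so it may be invoked directly.
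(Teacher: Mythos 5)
Your proposal is correct and follows exactly the same route as the paper's proof: bound $s_t < \frac{4-2\delta}{4-\delta}$ from the two hypotheses $q_t < 1-\frac{\delta}{2}$ and $r(p_t)\ge 1-\frac{\delta}{4}$, then substitute into $\left\lvert p_{t+1}/p_t\right\rvert = \frac{2}{s_t}-1$. The observation that no $\eta^2$-correction term appears here (unlike the linear-network analogue Lemma~\ref{L:p_t_small_linear}) is also accurate.
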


\begin{proof}
  If $r(p_t)\ge 1 - \frac{\delta}{4}$, then $s_t = \frac{q_t}{r(p_t)} < \frac{1-\frac{\delta}{2}}{1-\frac{\delta}{4}} = \frac{4-2\delta}{4-\delta}$, where we used $q_t < 1-\frac{\delta}{2}$ for any $0\le t\le T$. Consequently,
  \[
    \left\lvert \frac{p_{t+1}}{p_t} \right\rvert = \frac{2}{s_t} - 1 \ge \frac{2(4-\delta)}{4-2\delta} - 1  = \frac{2}{2-\delta}.
  \]
\end{proof}

Now we prove Proposition~\ref{P:s_t_1}, which proves that $s_t$ reaches close to $1$ with error bound of $\mathcal{O}(\eta)$.

\begin{proposition}\label{P:s_t_1}
  There exists a time step $t_{a}^* = \mathcal{O}_{\delta,\phi}(\log(\eta^{-1}))$ satisfying
  \begin{align}
    s_{t_a^*} = 1 +  \mathcal{O}_{\delta,\phi}(\eta).
  \end{align}
\end{proposition}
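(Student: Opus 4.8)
The plan is to reproduce, step for step, the argument behind Proposition~\ref{P:s_t_1_linear}, adjusting the bookkeeping to the coarser error scale of this setting: the update of $q_t$ in Eq.~\eqref{E:GD_pq} contributes $\mathcal{O}(\eta)$ per step (rather than $\mathcal{O}(\eta^2)$ in the linear case), so the target accuracy degrades from $\mathcal{O}(\eta^2)$ to $\mathcal{O}(\eta)$. First I would invoke Lemma~\ref{L:s_t0} to obtain a time step $t_0 = \mathcal{O}_{\delta,\phi}(1)$ with $s_{t_0}\le \frac{2}{2-r(1)} < 2$, placing the iterate in the hypothesis range of Lemma~\ref{L:s_t_convergence}. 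Then I split on whether $1\le s_{t_0}\le \frac{2}{2-r(1)}$ or $s_{t_0} < 1$.

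In the first subcase we have $r(p_{t_0})\le q_{t_0}\le 1-\frac{\delta}{2} < 1-\frac{\delta}{4}$, so Lemma~\ref{L:s_t_convergence} applies and gives $|s_{t_0+1}-1|\le (1-d)|s_{t_0}-1| + \mathcal{O}(\eta)$ for the constant $d\in (0,\tfrac12]$ (depending on $\delta,\phi$). As long as $s_t\ge 1$ — equivalently $r(p_t)\le q_t\le 1-\frac{\delta}{2}$, so the hypothesis of Lemma~\ref{L:s_t_convergence} persists — this bound iterates; splitting each step into ``$|s_t-1|$ already $\mathcal{O}(\eta d^{-1})$'' versus ``genuine contraction by $1-\tfrac{d}{2}$'' exactly as in the linear proof, a geometric-sum estimate yields, within $\mathcal{O}_{\delta,\phi}(\log(\eta^{-1}))$ steps, either $|s_t-1| = \mathcal{O}(\eta d^{-1}) = \mathcal{O}_{\delta,\phi}(\eta)$ (done) or the first index at which $s_t < 1$, reducing to the second subcase. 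In the subcase $s_t < 1$, the one obstruction to applying Lemma~\ref{L:s_t_convergence} is its requirement $r(p_t)\le 1-\frac{\delta}{4}$, i.e. $|p_t|\ge \hat{r}(1-\frac{\delta}{4})$; if that fails, Lemma~\ref{L:p_t_small} forces $|p_t|$ to grow by a factor at least $\frac{2}{2-\delta}>1$ each step, so within $\mathcal{O}_{\delta,\phi}(1)$ steps we reach the first index $t_1$ with $|p_{t_1}|\ge \hat{r}(1-\frac{\delta}{4})$, and Lemma~\ref{L:s_t<1} caps any overshoot at $s_{t_1}\le 1+\mathcal{O}(\eta)$. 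From $t_1$ onward, whenever $s_t<1$ one has $|p_{t+1}/p_t| = \frac{2}{s_t}-1 > 1$, so $|p_t|$ never drops back below $\hat{r}(1-\frac{\delta}{4})$ (and when $s_t\ge 1$ we instead have $r(p_t)\le q_t\le 1-\frac{\delta}{2}$), so Lemma~\ref{L:s_t_convergence} applies at every step and — with Lemma~\ref{L:s_t<1} controlling the regime $s_t<1$ — $|s_t-1|$ contracts geometrically to $\mathcal{O}_{\delta,\phi}(\eta)$ within $\mathcal{O}_{\delta,\phi}(\log(\eta^{-1}))$ steps.

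Collecting the phases, $t_a^* = t_0 + \mathcal{O}_{\delta,\phi}(1) + \mathcal{O}_{\delta,\phi}(\log(\eta^{-1})) = \mathcal{O}_{\delta,\phi}(\log(\eta^{-1}))$, which for small $\eta$ lies well inside the horizon $T = \Omega_\delta(\eta^{-1})$ up to which Lemmas~\ref{L:pq_bound}, \ref{L:s_t0}, \ref{L:s_t<1}, \ref{L:s_t_convergence}, and~\ref{L:p_t_small} remain valid; hence every intermediate step is admissible, and $s_{t_a^*} = 1 + \mathcal{O}_{\delta,\phi}(\eta)$. The hard part will be the case alternation: one must show the dynamics cannot oscillate indefinitely between the ``$|p_t|$ too small'' regime (governed by Lemma~\ref{L:p_t_small}) and the ``$s_t$ just below $1$'' regime (governed by Lemmas~\ref{L:s_t<1} and~\ref{L:s_t_convergence}), and that the step counts of the two telescope to $\mathcal{O}_{\delta,\phi}(\log(\eta^{-1}))$. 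The observation that settles this is that whenever $s_t<1$ one has $|p_{t+1}/p_t| = \frac{2}{s_t}-1 > 1$, so the growth and contraction phases cannot undo one another: once $|p_t|$ exceeds $\hat{r}(1-\frac{\delta}{4})$ it stays above it for the rest of Phase~\RomanNumeralCaps{1}, and the geometric contraction of $|s_t-1|$ then runs to completion uninterrupted.
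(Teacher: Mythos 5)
Your proposal reproduces the paper's own proof nearly verbatim: Lemma~\ref{L:s_t0} to reach $s_{t_0}\le \frac{2}{2-r(1)}$ in $\mathcal{O}_{\delta,\phi}(1)$ steps, the same two-case split on $s_{t_0}\ge 1$ versus $s_{t_0}<1$, and the same deployment of Lemmas~\ref{L:p_t_small}, \ref{L:s_t<1}, and \ref{L:s_t_convergence} with the correct $\mathcal{O}(\eta)$ per-step accounting (reflecting the coarser $q_{t+1}=(1+\mathcal{O}(\eta))q_t$ update). The one claim stated incorrectly is in your closing remark: ``once $|p_t|$ exceeds $\hat{r}(1-\frac{\delta}{4})$ it stays above it for the rest of Phase~\RomanNumeralCaps{1}.'' That is false as written, since whenever $s_t\ge 1$ one has $|p_{t+1}/p_t| = |1 - 2/s_t| < 1$, so $|p_t|$ can decrease and in principle fall back below $\hat{r}(1-\frac{\delta}{4})$; your parenthetical ``and when $s_t\ge 1$ we instead have $r(p_t)\le q_t$'' controls the hypothesis of Lemma~\ref{L:s_t_convergence} at the current step but says nothing about the next step once $s$ falls below $1$ again. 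What actually protects the iteration is the observation you yourself make earlier in the same paragraph: Lemma~\ref{L:s_t<1} caps the overshoot, so the \emph{first} time $s_t$ crosses from below $1$ to at or above $1$ one already has $s_t \le 1 + \mathcal{O}(\eta)$, i.e.\ $|s_t-1|=\mathcal{O}(\eta)$, and the argument terminates at that crossing; one never needs to track $|p_t|$ past it. The paper's Case~1 is careful on exactly this point, only ever invoking $|p_{t+1}|\ge |p_t|\ge\hat{r}(1-\frac{\delta}{4})$ under the standing hypothesis $s_t<1$. With that one sentence replaced, your argument coincides with the paper's.
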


\begin{proof}
  By Lemma~\ref{L:s_t0}, there exists a time step $t_0 = \mathcal{O}_{\delta,\phi}(1)$ such that $s_{t_0}\le \frac{2}{2-r(1)}$. Here, we divide into two possible cases: (1) $s_{t_0}<1$, and (2) $1\le s_{t_0}\le \frac{2}{2-r(1)}$.
  
  \paragraph{Case 1.} Suppose that $s_{t_0}<1$. By Lemma~\ref{L:p_t_small}, if $r(p_{t_0})\ge 1-\frac{\delta}{4}$ (or equivalently, $\abs{p_{t_0}}\le \hat{r}(1-\frac{\delta}{4})$), then there exists a time step $t_1 \le t_0 + \log(\frac{\hat{r}(1-\frac{\delta}{4})}{\abs{p_{t_0}}}) / \log(\frac{2}{2-\delta}) = \mathcal{O}_{\delta, \phi} (1)$ such that $\abs{p_{t_1}} \ge \hat{r}(1-\frac{\delta}{4})$. We denote the first time step satisfying $\abs{p_{t_1}} \ge \hat{r}(1-\frac{\delta}{4})$ and $t_1\ge t_0$ by $t_1 = \mathcal{O}_{\delta, \phi}(1)$. By Lemma~\ref{L:s_t<1}, it holds that $s_{t_1}\le 1+\mathcal{O}(\eta)$ since $s_{t_1-1}<1$. Consequently, if $s_{t_1} \ge 1$, then $\abs{s_{t_1}-1} \le \mathcal{O}(\eta)$ so that $t_a^* = t_1$ is the desired time step. Hence, it suffices to consider the case when $s_{t_1}<1$. Here, we can apply Lemma~\ref{L:s_t_convergence} which implies that
  \[
    \abs{s_{t_1+1}-1} \le (1-d) \abs{s_{t_1}-1} + \mathcal{O}(\eta),
  \]
  where $d$ is a constant which depends on $\delta$ and $\phi$. Then, there are two possible cases: either $\abs{s_{t_1}-1} \le \mathcal{O}(\eta d^{-1})$, or $\abs{s_{t_1+1}-1} \le (1-\frac{d}{2})\abs{s_{t_1}-1}$. It suffices to consider the latter case, suppose that $\abs{s_{t_1+1}-1} \le (1-\frac{d}{2})\abs{s_{t_1}-1}$. Since we are considering the case $s_{t_1}<1$, again by Lemma~\ref{L:s_t<1}, we have $s_{t_1+1}\le 1 + \mathcal{O}(\eta)$. Since $\abs{\frac{p_{{t_1}+1}}{{p_{t_1}}}} = \frac{2}{s_{t_1}}-1 > 1$, we have $\abs{p_{t_1+1}} \ge \abs{p_{t_1}} \ge \hat{r}(1-\frac{\delta}{4})$. This means that we can again apply Lemma~\ref{L:s_t_convergence} and repeat the analogous argument. Hence, there exists a time step $t_2 \le t_1 + \log(\frac{\eta}{1-s_{t_1}}) / \log (1-\frac{d}{2}) = \mathcal{O}_{\delta, \phi}(\log(\eta^{-1}))$, such that $\abs{s_{t_2}-1} \le \mathcal{O}(\eta d^{-1}) = \mathcal{O}_{\delta,\phi}(\eta)$.

  \paragraph{Case 2.} Suppose that $1\le s_{t_0}\le \frac{2}{2-r(1)}$. Then, $r(p_{t_0}) \le q_{t_0} \le 1-\frac{\delta}{2}$, so we can apply Lemma~\ref{L:s_t_convergence}. There are two possible cases: either $\abs{s_{t_0+1}-1} \le \mathcal{O}(\eta d^{-1}) = \mathcal{O}_{\delta,\phi}(\eta)$, or $\abs{s_{t_0+1}-1} \le (1-\frac{d}{2})\abs{s_{t_0}-1}$. It suffices to consider the latter case. If $s_{t_0+1} \ge 1$, we can again apply Lemma~\ref{L:s_t_convergence} and repeat the analogous argument. Hence, we can obtain a time step $t_0' \le t_0 + \log(\frac{\eta}{1-s_{t_0}}) / \log(1-\frac{d}{2}) = \mathcal{O}_{\delta,\phi}(\log(\eta^{-1}))$ such that either $s_{t_0'} < 1$ or $\abs{s_{t_0'}-1} = \mathcal{O}_{\delta,\phi} (\eta)$ is satisfied. If $s_{t_0'} < 1$, we proved in Case 1 that there exists a time step $t_2' = t_0'+\mathcal{O}_{\delta,\phi}(\log(\eta^{-1}))$ such that $\abs{s_{t_2'}-1} \le \mathcal{O}_{\delta,\phi}(\eta)$, and this is the desired bound.
\end{proof}

Now we carefully handle the error term $\mathcal{O}(\eta)$ obtained in Proposition~\ref{P:s_t_1} and a provide tighter bound on $s_t$ by proving Lemma~\ref{L:s_t_convergence_tight} stated below.

\begin{lemma}\label{L:s_t_convergence_tight}
  If $\abs{s_t-1}\le \mathcal{O}_{\delta, \phi}(\eta)$, then it holds that
  \[
    \abs{s_{t+1}-1-h(p_{t+1}) \eta} \le \left(1+\frac{2p_t r'(p_t)}{r(p_t)}\right) \abs{s_t-1-h(p_t)\eta} + \mathcal{O}_{\delta,\phi}(\eta^2p_t^2),
  \]
  where
  \[
    h(p) \deq \begin{cases*}
        -\frac{\phi(p)^2 r(p)}{p r'(p)} & \text{if $p \ne 0$, and}\\
        -\frac{1}{r''(0)} & \text{if $p=0$}. 
    \end{cases*}
  \]
\end{lemma}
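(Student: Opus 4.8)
The plan is to imitate the proof of Lemma~\ref{L:s_t_convergence_tight_linear}, using the structural parallel between the update \eqref{E:GD_pq} and \eqref{E:GD_pq_linear} — the only essential difference being that here $q_{t+1} = (1+\mathcal{O}(\eta))q_t$ rather than $(1+\mathcal{O}(\eta^2))q_t$, which is why the remainder comes out as $\mathcal{O}(\eta^2 p_t^2)$. First I would record, in analogy with Lemma~\ref{L:EoS_h_linear}, the needed regularity of $h$: since $r$ is $C^4$, $r>0$, $r(0)=1$, and Assumption~\ref{A:phi_r2} holds, the function $h(p)=-\phi(p)^2 r(p)/(p r'(p))$ extends to a positive, even, bounded, twice continuously differentiable function on $\sR$ with $\lim_{p\to 0}h(p)=-1/r''(0)$ (the apparent singularity at $p=0$ is cancelled by $\phi(p)^2\sim p^2$; boundedness at infinity uses $|\phi|\le 1$ and Assumption~\ref{A:phi_r2}~\ref{A:phi_r2:zr/r'}). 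Consequently $h'(0)=0$ and $h'(p)=\mathcal{O}_\phi(p)$, so $h(p_{t+1})=h(p_t)+\mathcal{O}_\phi(\eta p_t^2)$ whenever $|p_{t+1}|=(1+\mathcal{O}(\eta))|p_t|$ and $|p_t|\le 4$. The single identity that drives everything is just the definition of $h$ rearranged:
\[
  2\phi(p)^2 = -\frac{2 p r'(p)}{r(p)}\, h(p).
\]

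Next I would expand the one-step map to second order in $\eta$. Since $s_t = 1+\mathcal{O}_{\delta,\phi}(\eta)$ by hypothesis, \eqref{E:GD_pq} gives $p_{t+1}=(1-2/s_t)p_t=-(1+\mathcal{O}(\eta))p_t$, so $|p_{t+1}|=(1+\mathcal{O}(\eta))|p_t|$ with $|p_t|\le 4$ by Lemma~\ref{L:pq_bound}. Plugging into the mean value identity \eqref{E:mvt} and Taylor-expanding $z\mapsto r'(z)/r(z)^2$ about $p_t$ (its derivative is bounded on $[-4,4]$), with both the argument displacement and the multiplier $\frac{2(s_t-1)}{s_t}|p_t|$ being $\mathcal{O}(\eta)|p_t|$, and using that $p r'(p)/r(p)^2$ vanishes to second order at $0$, one obtains
\[
  \frac{1}{r(p_{t+1})} = \frac{1}{r(p_t)} + \frac{2 p_t r'(p_t)}{r(p_t)^2}(s_t-1) + \mathcal{O}_{\delta,\phi}(\eta^2 p_t^2).
\]
Similarly $q_{t+1}=(1-\eta\phi(p_t)^2)^{-2}q_t=(1+2\eta\phi(p_t)^2+\mathcal{O}(\eta^2 p_t^2))q_t$, using $|\phi(p_t)|\le|p_t|$. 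Multiplying, using $q_t/r(p_t)=s_t$, and repeatedly replacing $s_t$ by $1$ up to $\mathcal{O}(\eta^2 p_t^2)$ errors (every surviving stray $\mathcal{O}(\eta^2)$ is attached to a factor $\frac{p_t r'(p_t)}{r(p_t)}$ or $\phi(p_t)^2$, both $\mathcal{O}(p_t^2)$ on $[-4,4]$ by evenness), yields
\[
  s_{t+1} - 1 = \Bigl(1 + \tfrac{2 p_t r'(p_t)}{r(p_t)}\Bigr)(s_t - 1) + 2\eta\phi(p_t)^2 + \mathcal{O}_{\delta,\phi}(\eta^2 p_t^2).
\]
To finish, subtract $h(p_{t+1})\eta$ from both sides, substitute $h(p_{t+1})=h(p_t)+\mathcal{O}_\phi(\eta p_t^2)$, and apply the boxed identity at $p=p_t$: the terms recombine exactly into
\[
  s_{t+1} - 1 - h(p_{t+1})\eta = \Bigl(1 + \tfrac{2 p_t r'(p_t)}{r(p_t)}\Bigr)\bigl(s_t - 1 - h(p_t)\eta\bigr) + \mathcal{O}_{\delta,\phi}(\eta^2 p_t^2).
\]
Finally, $r(p_t)=(1+\mathcal{O}(\eta))q_t\ge q_0-\mathcal{O}(\eta)> r(z_0)$ for small $\eta$ (as $q_0>c_0\ge r(z_0)$ strictly), hence $|p_t|\le z_0$ and $\frac{p_t r'(p_t)}{r(p_t)}\ge-\tfrac12$ by Assumption~\ref{A:phi_r2}~\ref{A:phi_r2:zr'/r} and continuity, so the multiplier is nonnegative and we may pass to absolute values, giving the claim.

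\textbf{Main obstacle.} The crux is the error-term bookkeeping: one must verify that every lower-order remainder genuinely carries the factor $p_t^2$ (and not merely $p_t$ or a constant). This rests on (i) $\phi(p_t)^2\le p_t^2$; (ii) the quantities $\tfrac{p r'(p)}{r(p)}$, $\tfrac{p r'(p)}{r(p)^2}$, and $h(p)-h(0)$ all vanishing to second order at $p=0$ by evenness; and (iii) the uniform bound $|p_t|\le 4$ from Lemma~\ref{L:pq_bound}, which turns "$\mathcal{O}(p_t^2)$ near $0$, bounded globally" into a genuine $\mathcal{O}(p_t^2)$ on the relevant range. A secondary point is establishing the $C^2$-regularity and boundedness of $h$ (the analog of Lemma~\ref{L:EoS_h_linear}); this is where Assumption~\ref{A:phi_r2} and $r\in C^4$ are used, and the only subtlety is the cancellation of the $p=0$ singularity and the large-$p$ boundedness.
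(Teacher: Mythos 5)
Your proposal is correct and follows essentially the same route as the paper's proof: it uses the mean-value identity \eqref{E:mvt} with $\theta_t p_t = -(1+\mathcal{O}(\eta))p_t$, Taylor-expands $r'/r^2$ about $p_t$ on the bounded interval, expands $q_{t+1}$ to first order in $\eta$, substitutes $h(p_{t+1}) = h(p_t) + \mathcal{O}(\eta p_t^2)$ via $h\in C^2$ and $h'(0)=0$ (Lemma~\ref{L:EoS_h}), factors out $1+\tfrac{2p_tr'(p_t)}{r(p_t)}$, and checks nonnegativity of that multiplier via $r(p_t)\ge r(z_0)$. Your "boxed identity" $2\phi(p)^2 = -\tfrac{2pr'(p)}{r(p)}h(p)$ is precisely the definitional rearrangement that the paper applies implicitly in its final factorization step, so this is a presentational reframing rather than a different argument.
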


\begin{proof}
  Suppose that $\abs{s_t-1}\le \mathcal{O}_{\delta,\phi}(\eta)$. Then, $\abs{p_{t+1}} = \left\lvert 1 - \frac{2}{s_t} \right\rvert \abs{p_t} = (1 + \mathcal{O}_{\delta, \phi}(\eta))\abs{p_t}$. By Eq.~\eqref{E:mvt} proved in Lemma~\ref{L:s_t_convergence}, there exists $\epsilon_t = \mathcal{O}_{\delta, \phi}(\eta)$ such that
  \begin{align*}
  \frac{1}{r(p_{t+1})} &= \frac{1}{r(p_t)} + \frac{r'((1+\epsilon_t)p_t)}{r((1+\epsilon_t)p_t)^2} \left(\frac{2(s_t-1)}{s_t}\right) p_t \\
  &= \frac{1}{r(p_t)} + \left( \frac{r'(p_t)}{r(p_t)^2} + \mathcal{O}_{\delta,\phi}(\eta p_t) \right) \left(\frac{2(s_t-1)}{s_t}\right) p_t \\
  &= \frac{1}{r(p_t)} + \frac{r'(p_t)}{r(p_t)^2}  \left(\frac{2(s_t-1)}{s_t}\right) p_t + \mathcal{O}_{\delta,\phi}(\eta^2 p_t^2),
  \end{align*}
  where we used the Taylor expansion on $\frac{r'(p)}{r(p)^2}$ with the fact that $\frac{d}{dp} \left(\frac{r'(p)}{r(p)^2}\right)$ is bounded on $[-4, 4]$ and that $\abs{p_t}\le 4$ to obtain the second equality.
  Note that $q_{t+1} = (1-\eta \phi(p_t)^2)^{-2}q_t = (1+2\eta \phi(p_t)^2) q_t + \mathcal{O}(\eta^2)$ by Eq~\eqref{E:GD_pq}. Consequently,
  \begin{align*}
    s_{t+1} &= (1+2\eta \phi(p_t)^2) \left(s_t + \frac{r'(p_t)}{r(p_t)^2}  \left(\frac{2(s_t-1)}{s_t}\right) p_t q_t\right) + \mathcal{O}_{\delta,\phi}(\eta^2 p_t^2)\\
    &= (1+2\eta \phi(p_t)^2) s_t + \frac{2p_t r'(p_t)}{r(p_t)}(s_t -1) + \mathcal{O}_{\delta,\phi}(\eta^2 p_t^2)\\
    &= 1 + \left(1 + \frac{2p_t r'(p_t)}{r(p_t)}\right) (s_t-1) + 2\eta \phi(p_t)^2 + \mathcal{O}_{\delta,\phi}(\eta^2 p_t^2).
  \end{align*}
  Note that $h$ is even, and twice continuously differentiable function by Lemma~\ref{L:EoS_h}. Consequently, $h'(0) = 0$ and $h'(p) =  \mathcal{O}_{\phi}(p)$, since $h''$ is bounded on closed interval. Consequently, $h(p_{t+1}) = h((1+\mathcal{O}_{\delta, \phi}(\eta))p_t) = h(p_t) + \mathcal{O}_{\delta, \phi}(\eta p_t^2)$.
  Hence, we can obtain the following:
  \begin{align*}
    s_{t+1} - 1 - h(p_{t+1})\eta & = s_{t+1} - 1 - h(p_t) \eta + \mathcal{O}_{\delta, \phi}(\eta^2p_t^2) \\
    & = s_{t+1} - 1 + \frac{\phi(p_t)^2 r(p_t)}{p_t r'(p_t)}\eta + \mathcal{O}_{\delta, \phi}(\eta^2p_t^2) \\
    & = \left(1 + \frac{2p_t r'(p_t)}{r(p_t)}\right) \left(s_t - 1 + \frac{\phi(p_t)^2 r(p_t)}{p_t r'(p_t)}\eta\right) + \mathcal{O}_{\delta, \phi}(\eta^2p_t^2) \\
    & = \left(1 + \frac{2p_t r'(p_t)}{r(p_t)}\right)(s_t - 1 - h(p_t)\eta) + \mathcal{O}_{\delta,\phi}(\eta^2p_t^2) 
  \end{align*}
  Note that $r(p_t) = (1 + \mathcal{O}_{\delta, \phi} (\eta)) q_t \ge (1 + \mathcal{O}_{\delta, \phi} (\eta)) q_0 \ge c_0 \ge r(z_0)$ for small step size $\eta$, where $z_0 = \sup\{\frac{z r'(z)}{r(z)} \ge -\frac{1}{2}\}$. Consequently, it holds that $1 + \frac{2p_t r'(p_t)}{r(p_t)} \ge 0$. Therefore, we can conclude that
  \[
    \abs{s_{t+1}-1-h(p_{t+1}) \eta} \le \left(1+\frac{2p_t r'(p_t)}{r(p_t)}\right) \abs{s_t-1-h(p_t)\eta} + \mathcal{O}_{\delta,\phi}(\eta^2p_t^2),
  \]
  as desired.
\end{proof}

Now we give the proof of Theorem~\ref{T:EoS1}, restated below for the sake of readability.

\TheoremEoSearly*

\begin{proof}[Proof of Theorem~\ref{T:EoS1}]
  By Proposition~\ref{P:s_t_1}, there exists a time step $t_a^* = \mathcal{O}_{\delta, \ell}(\log(\eta^{-1}))$ which satisfies:
  \[
    \abs{s_{t_a^*}-1} = \left\lvert \frac{q_{t_a^*}}{r(p_{t_a^*})} - 1 \right\rvert = \mathcal{O}_{\delta, \phi}(\eta).
  \]
  By Lemma~\ref{L:s_t_convergence_tight}, there exists a constant $D>0$ which depends on $\delta, \phi$ such that if $\abs{s_t-1} = \mathcal{O}_{\delta,\phi}(\eta)$, then
  \begin{align}\label{E:EoS1_proof}
    \abs{s_{t+1}-1-h(p_{t+1})\eta} \le \left( 1 + \frac{2p_t r'(p_t)}{r(p_t)} \right) \abs{s_t-1-h(p_t)\eta} + D \eta^2 p_t^2.
  \end{align}
  Hence, if $\abs{s_t-1} = \mathcal{O}_{\delta,\phi}(\eta)$ and $\abs{s_t-1-h(p_t) \eta} \ge \left(-\frac{p_t r(p_t)}{r'(p_t)} \right)D\eta^2$, then
  \begin{align}\label{E:EoS1_proof2}
    \abs{s_{t+1} - 1 - h(p_{t+1})\eta} \le \left( 1 + \frac{p_t r'(p_t)}{r(p_t)}\right) \abs{s_t-1-h(p_t)\eta}.
  \end{align}
  For any $t\le T$, we have $q_t < 1-\frac{\delta}{2}$ so that if $\abs{s_t-1} = \mathcal{O}_{\delta,\phi}(\eta)$, then $r(p_t)\le (1+\mathcal{O}_{\delta,\phi}(\eta))q_t < 1-\frac{\delta}{4}$ for small step size $\eta$. From Eq.~\eqref{E:EoS1_proof2} with $t=t_a^*$, we have either
  \[
    \abs{s_{t_a^*} - 1 - h(p_{t_a^*})\eta} < \left(-\frac{p_{t_a^*} r(p_{t_a^*})}{r'(p_{t_a^*})} \right)D\eta^2,
  \]
  or
  \[
    \abs{s_{t_a^*+1} - 1 - h(p_{t_a^*+1})\eta} \le \left( 1 + \frac{\hat{r}(1-\frac{\delta}{4}) r'(\hat{r}(1-\frac{\delta}{4}))}{(1-\frac{\delta}{4})} \right) \abs{s_{t_a^*} - 1 - h(p_{t_a^*})\eta},
  \]
  where we used Assumption~\ref{A:phi_r2}~\ref{A:phi_r2:zr'/r} and $\abs{p_t} > \hat{r}(1-\frac{\delta}{4})$. In the latter case, $\abs{s_{t_a^*+1}-1} = \mathcal{O}_{\delta,\phi}(\eta)$ continues to hold and we can again use Eq.~\eqref{E:EoS1_proof2} with $t=t_a^*+1$. By repeating the analogous arguments, we can obtain the time step
  \[
    t_a \le t_a^* + \frac{\log \left( -\frac{D\eta^2}{r''(0) \abs{s_{t_a^*} - 1 - h(p_{t_a^*})\eta}} \right)}{ \log \left( 1 + \frac{\hat{r}(1-\frac{\delta}{4}) r'(\hat{r}(1-\frac{\delta}{4}))}{(1-\frac{\delta}{4})} \right) } = \mathcal{O}_{\delta, \ell} (\log(\eta^{-1})),
  \]
  which satisfies: either 
  \[
    \abs{s_{t_a} - 1 - h(p_{t_a})\eta} < \left(-\frac{p_{t_a} r(p_{t_a})}{r'(p_{t_a})} \right)D\eta^2,
  \]
  or
  \[
    \abs{s_{t_a} - 1 - h(p_{t_a}) \eta} \le \left(-\frac{1}{r''(0)} \right) D\eta^2 \le \left(- \frac{p_{t_a} r(p_{t_a})}{r'(p_{t_a})} \right) D\eta^2\le \left(- \frac{4 r(4)}{r'(4)} \right) D\eta^2,
  \]
  where we used $\abs{p_t}\le 4$ from Lemma~\ref{L:pq_bound} and $-\frac{zr(z)}{r'(z)} \ge -\frac{1}{r''(0)}$ for any $z$ by Assumption~\ref{A:phi_r2}~\ref{A:phi_r2:zr/r'}.
  
  By Eq.~\eqref{E:EoS1_proof}, if $\abs{s_{t} - 1 - h(p_{t}) \eta} \le \left(- \frac{4 r(4)}{r'(4)} \right) D\eta^2$ is satisfied for any time step $t$, then
  \[
      \abs{s_{t+1} - 1 - h(p_{t+1}) \eta} \le \left( 1 + \frac{2p_t r'(p_t)}{r(p_t)} \right) \left(- \frac{4 r(4)}{r'(4)} \right) D\eta^2 + D \eta^2 p_t^2 \le \left(- \frac{4 r(4)}{r'(4)} \right) D\eta^2,
  \]
  by $\abs{p_t}\le 4$ from Lemma~\ref{L:pq_bound} and Assumption~\ref{A:phi_r2}~\ref{A:phi_r2:zr/r'}.
  
  Hence, by induction, we have the desired bound as following: for any $t\ge t_a$,
  \[
    \abs{s_{t} - 1 - h(p_{t}) \eta} \le \left(- \frac{4 r(4)}{r'(4)} \right) D\eta^2 = \mathcal{O}_{\delta, \ell}(\eta^2),
  \]
  by $\abs{p_t}\le 4$ and Assumption~\ref{A:phi_r2}~\ref{A:phi_r2:zr/r'}.
\end{proof}

\subsection{Proof of Theorem~\ref{T:EoS2}}\label{S:T:EoS2}

In this subsection, we prove Theorem~\ref{T:EoS2}.
We start by proving Lemma~\ref{L:EoS_h} which provides a useful property of $h$ defined in Theorem~\ref{T:EoS1}.
\begin{lemma}\label{L:EoS_h}
    Consider the function $h$ defined in Theorem~\ref{T:EoS1}, given by
      \[
        h(p) \deq \begin{cases*}
            -\frac{\phi(p)^2 r(p)}{p r'(p)} & \text{if $p \ne 0$, and}\\
            -\frac{1}{r''(0)} & \text{if $p=0$}. 
        \end{cases*}
      \]
      Then, $h$ is a positive, even, and bounded twice continuously differentiable function.
\end{lemma}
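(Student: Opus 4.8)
The plan is to follow the structure of the proof of Lemma~\ref{L:EoS_h_linear}, verifying the four asserted properties of $h$ in turn: evenness, positivity, $C^2$-regularity, and boundedness. Evenness is immediate, since $\phi$ is odd (so $\phi^2$ is even), $r$ is even by Assumption~\ref{A:phi_r}, and $p\mapsto pr'(p)$ is even because $r'$ is odd; together with the consistency of $h(0)$ with the one-sided limits this gives that $h$ is even. For positivity, fix $p\ne0$: then $\phi(p)^2>0$, $r(p)\in(0,1]$ (as $r(0)=1$, $r'<0$ on $(0,\infty)$, and $r\to0$ at $\pm\infty$), and $pr'(p)<0$ (since $r'(p)<0$ for $p>0$ and $r'(p)>0$ for $p<0$), so $h(p)=-\phi(p)^2r(p)/(pr'(p))>0$; and $h(0)=-1/r''(0)>0$ because $p=0$ maximizes $r$, so $r''(0)\le0$, with $r''(0)\ne0$ implicit in the definition of $h(0)$.

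The regularity claim is the delicate point, and it rests on observing that $\phi^2$ is far smoother than $\phi$ itself. Indeed, $r(z)=\phi(z)\phi'(z)/z$ gives $\tfrac12(\phi^2)'(z)=\phi(z)\phi'(z)=zr(z)$ for all $z$, and since $r\in C^4$ (Assumption~\ref{A:phi_r2}) the right-hand side is $C^4$; hence $\phi^2\in C^5$. Set $A:=\phi^2$, which is even (hence $A'(0)=0$), $C^5$, and satisfies $A(0)=0$; then the integral-remainder identity $A(p)=p^2\int_0^1(1-t)A''(tp)\,dt$ shows that $u(p):=A(p)/p^2$, with $u(0):=\tfrac12A''(0)=1$, extends to a $C^3$ function. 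Similarly $r'\in C^3$ is odd with $r'(0)=0$, so $w(p):=r'(p)/p$, with $w(0):=r''(0)$, extends to a $C^2$ function via $w(p)=\int_0^1 r''(tp)\,dt$; moreover $w$ is nowhere zero (as $r'(p)/p<0$ for $p\ne0$ and $w(0)=r''(0)<0$), so $v:=-1/w\in C^2(\sR)$ with $v(0)=-1/r''(0)$. Now on $\sR\setminus\{0\}$ we have $h=u\cdot v\cdot r$, and the right-hand side — a product of a $C^3$, a $C^2$, and a $C^4$ function — is $C^2$ on all of $\sR$ with value $1\cdot(-1/r''(0))\cdot 1=h(0)$ at $p=0$. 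Hence $h\in C^2(\sR)$, and in particular $h$ is continuous.

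Boundedness then follows exactly as in Lemma~\ref{L:EoS_h_linear}: $h$ is continuous, hence bounded on the compact interval $[-1,1]$; and for $p\ge1$ we write $h(p)=\phi(p)^2\cdot\bigl(-r(p)/(pr'(p))\bigr)$, where $\phi(p)^2\le1$ (since $\phi$ is increasing to $1$ with $\phi(0)=0$) and, by Assumption~\ref{A:phi_r2}~\ref{A:phi_r2:zr'/r}, $-pr'(p)/r(p)$ is positive and increasing on $(0,\infty)$, so its reciprocal $-r(p)/(pr'(p))$ is positive and decreasing there and is therefore bounded above by its value at $p=1$. Thus $h$ is bounded on $[1,\infty)$, and by evenness on $(-\infty,-1]$ as well, so $h$ is bounded on $\sR$.

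The main obstacle is the $C^2$ claim at $p=0$: taken at face value the formula for $h$ only involves $\phi^2$, which is a priori merely $C^2$, and that is not enough to push the quotient $\phi^2/(pr')$ through $p=0$. The argument only works because the identity $(\phi^2)'=2zr(z)$ reveals that $\phi^2$ is actually $C^5$, so that $\phi^2/p^2$ is $C^3$; granting this, the remaining steps are routine applications of Taylor's theorem with integral remainder and the fact that reciprocals and products of $C^k$ functions (with nonvanishing denominators) are $C^k$.
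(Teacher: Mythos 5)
Your proof is correct, and the positivity, evenness, and boundedness parts track the paper's argument closely; the $C^2$ step, however, takes a genuinely different and in my view cleaner route. The paper's proof of twice continuous differentiability is purely computational: it writes out the (fairly long) explicit expressions for $h'(p)$ and $h''(p)$ on $p\ne 0$, computes $h''(0) = -1 - \tfrac{2\phi^{(3)}(0)}{3r''(0)} + \tfrac{r^{(4)}(0)}{3r''(0)^2}$, and asserts that $\lim_{p\to 0}h''(p)=h''(0)$ — a claim the reader must take on faith or verify by an L'H\^opital calculation. You instead make the key structural observation that the identity $(\phi^2)'(z)=2\phi(z)\phi'(z)=2zr(z)$ together with $r\in C^4$ upgrades $\phi^2$ to $C^5$ (far more regular than $\phi\in C^2$ alone suggests), and then factor $h=u\cdot v\cdot r$ with $u=\phi^2/p^2\in C^3$ and $v=-p/r'\in C^2$ obtained by the elementary removable-singularity lemma ``if $f\in C^{k+m}$ vanishes to order $m$ at $0$, then $f/p^m$ extends to $C^k$'' via the integral remainder form of Taylor's theorem. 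This is shorter, avoids the fragile formula-matching, and makes transparent exactly which regularity hypothesis ($r\in C^4$) buys the conclusion ($h\in C^2$). One could sharpen the write-up slightly by remarking that $w(0)=r''(0)<0$ follows from $r''(0)\le 0$ (maximum at $0$) together with the fact that $h(0)=-1/r''(0)$ is only well-defined when $r''(0)\ne 0$, a point you do flag; the paper leaves this tacit as well. Overall the argument is sound and the alternative $C^2$ proof is a genuine improvement in readability.
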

\begin{proof}
    By Assumption~\ref{A:phi_r}, $h$ is a positive, even function. Moreover, $h$ is continuous since $\lim_{p\to 0} h(p) = -\frac{1}{r''(0)} = h(0)$. Continuous function on a compact domain is bounded, so $h$ is bounded on the closed interval $[-1, 1]$. Note that $\phi(p)^2 \le 1$, and $\left(-\frac{r(p)}{pr'(p)}\right)$ is positive, decreasing function on $p>0$ by Assumption~\ref{A:phi_r2}~\ref{A:phi_r2:zr'/r}. Hence, $h$ is bounded on $[1, \infty)$. Since $h$ is even, $h$ is bounded on $(-\infty, 1]$. Therefore, $h$ is a bounded on $\sR$.

    We finally prove that $h$ is twice continuously differentiable. Since $r$ is even and $C^4$ on $\sR$, we can check that for any $p\ne 0$,
    \begin{align*}
        h'(p) = & -\frac{2r(p)^2}{r'(p)} - \frac{\phi(p)^2}{p} + \frac{\phi(p)^2 r(p) (r'(p) + pr''(p))}{p^2 r'(p)^2},
    \end{align*}
    and $h'(p)=0$.    
    Moreover, for any $p\ne 0$,
    \begin{align*}
        h''(p) = & - 6 r(p) + \frac{2\phi(p)^2}{p^2} + \frac{\phi(p)^2 r''(p)}{pr'(p)} + \frac{\phi(p)^2r(p)r^{(3)}(p)}{p r'(p)^2}\\
        & + \frac{2r(p)^2 (r'(p)+2pr''(p))}{pr'(p)^2} -\frac{\phi(p)^2 r(p) (2r'(p) + pr''(p))}{p^3 r'(p)^2} - \frac{\phi(p)^2 r(p) r''(p) (r'(p) + 2pr''(p))}{p^2 r'(p)^3},
    \end{align*}
    and
    \begin{align*}
        h''(0) = - 1 - \frac{2 \phi^{(3)}(0)}{3r''(0)} + \frac{r^{(4)}(0)}{3 r''(0)^2}.
    \end{align*}
    Since $\lim_{p\to 0} h''(p) = h''(0)$, we can conclude that $h$ is a twice continuously differentiable function.\end{proof}

Now we give the proof of Theorem~\ref{T:EoS2}, restated below for the sake of readability.
\TheoremEoSlate*

\begin{proof}
  We first prove that there exists a time step $t\ge 0$ such that $q_t > 1$. Assume the contrary that $q_t \le 1$ for all $t\ge 0$. Let $t_a$ be the time step obtained in Theorem~\ref{T:EoS1}. Then for any $t\ge t_a$, we have
  \[
    r(p_t) = (1 - h(p_t)\eta + \mathcal{O}_{\delta, \phi}(\eta^2)) q_t \le 1 - \frac{h(p_t) \eta}{2},
  \]
  for small step size $\eta$. The function $g(p) \deq r(p) - 1 + \frac{h(p)\eta}{2}$ is even, continuous, and has the function value $g(0) = \frac{\eta}{2\abs{r''(0)}} > 0$. Consequently, there exists a positive constant $\epsilon>0$ such that $g(p) > 0$ for all $p\in (-\epsilon, \epsilon)$. Then, we have $\abs{p_t}\ge \epsilon$ for all $t\ge t_a$, since $g(p_t) \le 0$. This implies that for any $t\ge t_a$, it holds that
  \begin{align*}
    \frac{q_{t+1}}{q_{t}} = (1 - \eta\phi(p_t)^2)^{-2} \ge (1 - \eta \phi(\epsilon)^2)^{-2} > 1,
  \end{align*}
  so $q_t$ grows exponentially, which results in the existence of a time step $t_b'\ge t_a$ such that $q_{t_b'} > 1$, a contradiction.

  Therefore, there exists a time step $t_b$ such that $q_{t_b}\le 1$ and $q_{t} > 1$ for any $t > t_b$, i.e., $q_t$ jumps across the value $1$. This holds since the sequence $(q_t)$ is monotonically increasing. For any $t\le t_b$, we have $q_{t+1}  \le q_t + \mathcal{O}(\eta)$ by Lemma~\ref{L:pq_bound}, and this implies that $t_b \ge \Omega ((1-q_0)\eta^{-1})$, as desired.

  Lastly, we prove the convergence of GD iterates $(p_t, q_t)$. Let $t > t_b$ be given. Then, $q_t \ge q_{t_b+1} > 1$ and it holds that
  \begin{align*}
    \left\lvert \frac{p_{t+1}}{p_t} \right\rvert & = \frac{2r(p_t)}{q_t} - 1 \le \frac{2}{q_{t_{b}+1}} - 1 < 1.
  \end{align*}
  Hence, $\abs{p_t}$ is exponentially decreasing for $t>t_b$. Therefore, $p_t$ converges to $0$ as $t\to \infty$. Since the sequence $(q_t)_{t=0}^{\infty}$ is monotonically increasing and bounded (due to Theorem~\ref{T:EoS1}, it converges. Suppose that $(p_t, q_t)$ converges to the point $(0, q^*)$. By Theorem~\ref{T:EoS1}, we can conclude that
  \[
    \left\lvert q^* - 1 + \frac{\eta}{r''(0)} \right\rvert = \mathcal{O}_{\phi,\ell}(\eta^2),
  \]
  which is the desired bound.
\end{proof}

\subsection{Proof of Theorem~\ref{T:PS}}\label{S:A_PS}

In this subsection, we prove Theorem~\ref{T:PS}. We first prove an important bound on the sharpness value provided by the Proposition~\ref{P:PS} stated below.

\begin{proposition}\label{P:PS}
  For any $(x,y)\in \sR^2$ with $r(x)+xr'(x) \ge 0$, the following inequality holds:
  \[
    (r(x) + xr'(x)) y^2 \le \lambda_{\max} (\nabla^2 \mathcal{L}(x,y)) \le (r(x) + xr'(x)) y^2 + \frac{4x^2 r(x)}{r(x) + xr'(x)},
  \]
\end{proposition}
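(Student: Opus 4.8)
The plan is to reduce the statement to an elementary fact about the $2\times2$ loss Hessian and then verify it by a short positive-semidefiniteness check. First I would compute $\nabla^2\mathcal{L}(x,y)$ for $\mathcal{L}(x,y)=\tfrac12\phi(x)^2y^2$. Using the identity $z\,r(z)=\phi(z)\phi'(z)$ (valid for all $z$, including $z=0$) and differentiating it, one gets $r(z)+zr'(z)=\phi'(z)^2+\phi(z)\phi''(z)$. Writing $\rho\deq r(x)+xr'(x)$, this gives
\[
  \nabla^2\mathcal{L}(x,y)=\begin{pmatrix} \rho\,y^2 & 2x r(x) y \\ 2xr(x)y & \phi(x)^2 \end{pmatrix}.
\]
If $\rho=0$ the asserted upper bound is $+\infty$ and there is nothing to prove (note $\rho=1$ when $x=0$, so $\rho=0$ forces $x\ne0$); and the case $x=0$ is trivial since then $\nabla^2\mathcal{L}(0,y)=\operatorname{diag}(y^2,0)$ and $K\deq\tfrac{4x^2r(x)}{\rho}=0$. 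So from now on I would assume $\rho>0$ and $x\ne0$.

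The lower bound is immediate: for a symmetric matrix the largest eigenvalue dominates every Rayleigh quotient, in particular the $(1,1)$ diagonal entry, so $\lambda_{\max}(\nabla^2\mathcal{L}(x,y))\ge\rho\,y^2=(r(x)+xr'(x))\,y^2$.

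For the upper bound, set $K\deq\frac{4x^2r(x)}{\rho}$; it suffices to show $(\rho y^2+K)\,\mI-\nabla^2\mathcal{L}(x,y)\succeq0$, i.e.\ that
\[
  M\deq\begin{pmatrix} K & -2xr(x)y \\ -2xr(x)y & \rho y^2+K-\phi(x)^2 \end{pmatrix}
\]
is positive semidefinite, which for a symmetric $2\times2$ matrix is equivalent to having nonnegative trace and nonnegative determinant. I would invoke two elementary facts: (i) since $r$ is even and $r'(p)<0$ for $p>0$ (Assumption~\ref{A:r}, imported via Assumption~\ref{A:phi_r}), we have $xr'(x)\le0$, hence $\rho\le r(x)$ and therefore $K\ge4x^2$; (ii) since $\phi$ is $1$-Lipschitz with $\phi(0)=0$, we have $\phi(x)^2\le x^2$. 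Then $K-\phi(x)^2\ge 3x^2\ge0$, which makes the $(2,2)$ entry and the trace of $M$ nonnegative, and using $K\rho=4x^2r(x)$,
\[
  \det M = K\bigl(\rho y^2+K-\phi(x)^2\bigr)-4x^2r(x)^2y^2 = 4x^2r(x)y^2\bigl(1-r(x)\bigr)+K\bigl(K-\phi(x)^2\bigr)\ge 0,
\]
because $r(x)\le1$ and $K\ge\phi(x)^2$. Hence $M\succeq0$, giving $\lambda_{\max}(\nabla^2\mathcal{L}(x,y))\le\rho y^2+K=(r(x)+xr'(x))y^2+\frac{4x^2r(x)}{r(x)+xr'(x)}$.

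I do not expect a genuine obstacle here: essentially all the content is in expressing the three Hessian entries through $r$ (via $r(z)+zr'(z)=\phi'(z)^2+\phi(z)\phi''(z)$) and in the two one-line inequalities $\rho\le r(x)$ and $\phi(x)^2\le x^2$; the rest is the trace/determinant test and routine algebra. The only point needing care is bookkeeping of the degenerate configurations ($\rho=0$, $x=0$, $y=0$), where the claim is vacuous or holds with equality. (When this proposition is applied in Theorem~\ref{T:PS} one also needs $\rho$ bounded below by a positive constant so that $K=\mathcal{O}_\phi(1)$; that is supplied there by the Phase~II estimate $s_t=1+\mathcal{O}(\eta)$ together with the definition of $c_0$, not by this statement.)
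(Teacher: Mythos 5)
Your proof is correct, and your route to the upper bound is genuinely different from the paper's. The paper first argues $\operatorname{tr}(\nabla^2\mathcal{L})\ge 0$ so that $\lambda_{\max}=\|\nabla^2\mathcal{L}\|_2$, then bounds $\|\nabla^2\mathcal{L}\|_2\le\|\nabla^2\mathcal{L}\|_{\mathrm F}$ and completes a perfect square to show $\|\nabla^2\mathcal{L}\|_{\mathrm F}^2\le\bigl(\rho y^2+K\bigr)^2$ with $\rho=r(x)+xr'(x)$ and $K=4x^2r(x)/\rho$. You instead show $(\rho y^2+K)\mI-\nabla^2\mathcal{L}\succeq0$ directly via the trace/determinant test for $2\times2$ symmetric matrices. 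Both proofs ultimately rest on the same three elementary inequalities, $\rho\le r(x)$ (hence $K\ge 4x^2$), $\phi(x)^2\le x^2$, and $r(x)\le1$, so the algebraic content is identical; what your version buys is that it avoids the preliminary step of identifying $\lambda_{\max}$ with the spectral norm (your Rayleigh-quotient lower bound with $\ve_1$ likewise sidesteps the $\|\nabla^2\mathcal{L}\cdot\ve_1\|_2$ argument the paper uses) and it handles the degenerate $\rho=0$, $x=0$ cases more explicitly. One small stylistic remark: the claim that ``nonnegative trace and nonnegative determinant characterize PSD'' for a symmetric $2\times2$ matrix is correct but not the most commonly cited form; the standard principal-minor criterion (both diagonal entries $\ge0$ and $\det\ge0$) would read more routinely and you have already verified the $(2,2)$ entry is nonnegative anyway.
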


\begin{proof}
  Let $(x,y)\in \sR^2$ be given. The loss Hessian at $(x,y)$ is
  \begin{align*}
    \nabla^2 \mathcal{L} (x,y) =  \begin{bmatrix}
        (\phi(x)\phi''(x) + \phi'(x)^2) y^2 & 2\phi(x)\phi'(x) y\\
        2\phi(x)\phi'(x)y & \phi(x)^2
    \end{bmatrix}.
  \end{align*}
  Note that the largest absolute value of the eigenvalue of a symmetric matrix equals to its spectral norm. Since trace of the Hessian, $\text{tr}(\nabla^2 \mathcal{L} (x,y)) = (\phi(x)\phi''(x) + \phi'(x)^2) y^2 + \phi(x)^2 = (r(x) + xr'(x)) y^2 + \phi(x)^2 \ge 0$, is non-negative, the spectral norm of the Hessian $\nabla^2 \mathcal{L} (x,y)$ equals to its largest eigenvalue. Hence, we have
  \begin{align*}
    \lambda_{\max} (\nabla^2 \mathcal{L}(x,y)) & = \norm{\nabla^2 \mathcal{L}(x,y)}_2 \\
    & \ge \left\lVert \nabla^2 \mathcal{L}(x,y) \cdot \begin{bmatrix} 1 \\ 0 \end{bmatrix} \right\rVert_2 \\
    & = \left[((\phi(x)\phi''(x) + \phi'(x)^2) y^2)^2 + (2\phi(x)\phi'(x) y)^2\right]^{\frac{1}{2}} \\
    & \ge (\phi(x)\phi''(x) + \phi'(x)^2) y^2\\
    & = (r(x) + xr'(x)) y^2,
  \end{align*}
  which is the desired lower bound.
  We also note that for any matrix $\mA$, the inequality $\norm{\mA}_2 \le \norm{\mA}_{\text{F}}$ holds, where $\norm{\mA}_{\text{F}}$ is the Frobenius norm. Hence, we have
  \begin{align*}
    \lambda_{\max} (\nabla^2 \mathcal{L}(x,y)) & = \norm{\nabla^2 \mathcal{L}(x,y)}_2 \\
    & \le \norm{\nabla^2 \mathcal{L}(x,y)}_{\text{F}} \\
    & = \left[ ((\phi(x)\phi''(x) + \phi'(x)^2) y^2)^2 + 2(2\phi(x)\phi'(x) y)^2 + \phi(x)^4 \right]^{\frac{1}{2}} \\
    & = \left[ (r(x)+xr'(x))^2 y^4 + 8x^2r(x)^2 y^2 + \phi(x)^4 \right]^{\frac{1}{2}} \\
    & \le \left[ \left( (r(x)+xr'(x))y^2 + \frac{4x^2 r(x)}{r(x) + xr'(x)} \right)^2 \right]^{\frac{1}{2}} \\
    & = (r(x)+xr'(x))y^2 + \frac{4x^2 r(x)}{r(x) + xr'(x)},
  \end{align*}
  which is the desired upper bound.
\end{proof}

Now we give the proof of Theorem~\ref{T:PS}, restated below for the sake of readability.
\TheoremPS*

\begin{proof}
  By Theorem~\ref{T:EoS1} and since $h$ is a bounded function by Lemma~\ref{L:EoS_h}, we have $s_t = 1+\mathcal{O}_{\phi}(\eta)$ for any $t\ge t_a$. Note that $r(p_t) = (1 + \mathcal{O}_{\delta, \phi} (\eta)) q_t \ge (1 + \mathcal{O}_{\delta, \phi} (\eta)) q_0 \ge c_0 \ge r(z_0)$ for small step size $\eta$, where $z_0 = \sup\{\frac{z r'(z)}{r(z)} \ge -\frac{1}{2}\}$. Consequently, it holds that $1 + \frac{2p_t r'(p_t)}{r(p_t)} \ge 0$, and this implies $1 + \frac{p_t r'(p_t)}{r(p_t)} \ge \frac{1}{2}$.
  By Proposition~\ref{P:PS}, we can bound the sharpness $\lambda_t \deq \lambda_{\max} (\nabla^2 \mathcal{L}(x_t,y_t))$ by
  \begin{align*}
   \left(1 + \frac{p_t r'(p_t)}{r(p_t)}\right) \frac{2r(p_t)}{\eta q_t}  \le \lambda_t \le \left(1 + \frac{p_t r'(p_t)}{r(p_t)}\right) \frac{2r(p_t)}{\eta q_t} + 4p_t^2 \left(1 + \frac{p_t r'(p_t)}{r(p_t)}\right)^{-1}.
  \end{align*}
  By Lemma~\ref{L:pq_bound}, $\abs{p_t}\le 4$ for any time step $t$. Moreover, since $1 + \frac{p_t r'(p_t)}{r(p_t)} \ge \frac{1}{2}$ holds for any $t\ge t_a$ with small step size $\eta$, we have $4p_t^2 \left(1 + \frac{p_t r'(p_t)}{r(p_t)}\right)^{-1} = \mathcal{O}_{\phi}(1)$. Hence, for any $t\ge t_a$, it holds that
  \begin{align}\label{E:PS1}
    \lambda_t =  \left(1 + \frac{p_t r'(p_t)}{r(p_t)}\right) \frac{2r(p_t)}{\eta q_t} + \mathcal{O}_{\phi}(1).
  \end{align}
  For any $t\ge t_a$, we have $s_t = 1+\mathcal{O}_{\phi}(\eta)$ and this implies $\abs{s_t^{-1}-1} = \mathcal{O}_{\phi}(\eta)$ and $\abs{r(p_t) - q_t} =\mathcal{O}_{\phi}(\eta)$. 
  For any $0<q<1$, we have
  \begin{align*}
    \frac{d}{dq} \left(\frac{\hat{r}(q)r'(\hat{r}(q))}{q}\right) & = \frac{\hat{r}'(q) (r'(\hat{r}(q)) + \hat{r}(q) r''(\hat{r}(q)))}{q} - \frac{\hat{r}(q) r'(\hat{r}(q))}{q^2} \\ 
    & = \frac{1}{q} \left( 1 + \frac{\hat{r}(q) r''(\hat{r}(q))}{r'(\hat{r}(q))} \right) - \frac{\hat{r}(q) r'(\hat{r}(q))}{q^2},
  \end{align*}
  so that 
  \begin{align*}
    \lim_{q\to 1^-} \left(\frac{d}{dq} \left(\frac{\hat{r}(q)r'(\hat{r}(q))}{q}\right)\right) = \lim_{p\to 0^+} \left(1 + \frac{pr''(p)}{r'(p)}\right) = 2.
  \end{align*}
  Therefore, $\frac{d}{dq} \left(\frac{\hat{r}(q)r'(\hat{r}(q))}{q}\right)$ is bounded on $[\frac{1}{4}, 1)$ and Taylor's theorem gives
  \begin{align*}
    \left\lvert \frac{p_t r'(p_t)}{r(p_t)} - \frac{\hat{r}(q_t) r'(\hat{r}(q_t))}{q_t} \right\rvert \le \mathcal{O}_{\phi}( \abs{r(p_t) - q_t} ) \le \mathcal{O}_{\phi}(\eta).
  \end{align*}
    for any time step $t$ with $q_t < 1$.
  Hence, if $q_t<1$, we have the following bound:
  \begin{align}\label{E:PS2}
    \left\lvert \tilde{\lambda}(q_t) - \left(1 + \frac{p_t r'(p_t)}{r(p_t)}\right) \frac{2r(p_t)}{\eta q_t}  \right\rvert \le \left\lvert 1 - s_t^{-1} + \frac{\hat{r}(q_t) r'(\hat{r}(q_t))}{q_t} -  \frac{p_t r'(p_t)}{r(p_t)} \right\rvert \frac{2}{\eta} + \mathcal{O}_{\phi}(1) = \mathcal{O}_{\phi}(1),
  \end{align}
  where we used $\frac{p_r r'(p_t)}{q_t} = \frac{p_t r'(p_t)}{r(p_t)} ( 1 + \mathcal{O}_\phi(\eta)) = \frac{p_t r'(p_t)}{r(p_t)} + \mathcal{O}_\phi(\eta)$, since $\abs{p_t r'(p_t)} \le r(p_t)$ for any $t \ge t_a$.
  Now let $t$ be any given time step with $q_t \ge 1$. Then, $r(p_t) = 1 - \mathcal{O}_{\phi}(\eta)$, and since $r(z) = 1+r''(0)z^2 + \mathcal{O}_{\phi}(z^4)$ for small $z$, we have $\abs{p_t} = \mathcal{O}_{\phi}(\sqrt{\eta})$. Hence,
  \begin{align}\label{E:PS3}
    \left\lvert \tilde{\lambda}(q_t) - \left(1 + \frac{p_t r'(p_t)}{r(p_t)}\right) \frac{2r(p_t)}{\eta q_t} \right\rvert \le \left\lvert 1 - s_t^{-1} -  \frac{p_t r'(p_t)}{r(p_t)} \right\rvert \frac{2}{\eta} + \mathcal{O}_{\phi}(1)= \mathcal{O}_{\phi}(1),
  \end{align}
  for any $t$ with $q_t\ge 1$, where we again used $\frac{p_r r'(p_t)}{q_t} = \frac{p_t r'(p_t)}{r(p_t)} ( 1 + \mathcal{O}_\phi(\eta)) = \frac{p_t r'(p_t)}{r(p_t)} + \mathcal{O}_\phi(\eta)$. 
  By Eqs.~\eqref{E:PS1},~\eqref{E:PS2}, and~\eqref{E:PS3}, we can conclude that for any $t\ge t_a$, we have
  \[
    \left\lvert \lambda_t - \tilde{\lambda}(q_t) \right\rvert \le \mathcal{O}_{\phi}(1),
  \]
  the desired bound.

  Finally, we can easily check that the sequence $(\tilde{\lambda}(q_t))_{t=0}^{\infty}$ is monotonically increasing, since $z\mapsto \frac{zr'(z)}{r(z)}$ is a decreasing function by Assumption~\ref{A:phi_r2}~\ref{A:phi_r2:zr'/r} and the sequence $(q_t)$ is monotonically increasing.
\end{proof}

\end{document}